\crefname{section}{Sec.}{Secs.}
\Crefname{section}{Section}{Sections}
\Crefname{table}{Table}{Tables}
\crefname{table}{Tab.}{Tabs.}
\Crefname{appendixproposition}{Proposition}{Propositions}
\crefname{appendixproposition}{Proposition}{Propositions}
\Crefname{appendixcorollary}{Corollary}{Corollaries}
\crefname{appendixcorollary}{Corollary}{Corollaries}
\Crefname{appendixlemma}{Lemma}{Lemmas}
\crefname{appendixlemma}{Lemma}{Lemmas}
\Crefname{appendixtheorem}{Theorem}{Theorems}
\crefname{appendixtheorem}{Theorem}{Theorems}
\Crefname{assumption}{Assumption}{Assumptions}
\crefname{assumption}{Assumption}{Assumptions}
\theoremstyle{plain}
\newtheorem{theorem}{Theorem}
\newtheorem{proposition}{Proposition}
\newtheorem{definition}{Definition}
\newtheorem{assumption}{Assumption}
\newtheorem{appendixtheorem}{Theorem}
\newtheorem{appendixlemma}{Lemma}
\newtheorem{appendixproposition}{Proposition}
\newtheorem{appendixcorollary}{Corollary}
\def\mymodel{Heterophilous Stochastic Block Models}
\def\myM{HSBM}
\icmltitlerunning{Understanding Heterophily for Graph Neural Networks}
\begin{document}

\twocolumn[
\icmltitle{Understanding Heterophily for Graph Neural Networks}

% It is OKAY to include author information, even for blind
% submissions: the style file will automatically remove it for you
% unless you've provided the [accepted] option to the icml2024
% package.

% List of affiliations: The first argument should be a (short)
% identifier you will use later to specify author affiliations
% Academic affiliations should list Department, University, City, Region, Country
% Industry affiliations should list Company, City, Region, Country

% You can specify symbols, otherwise they are numbered in order.
% Ideally, you should not use this facility. Affiliations will be numbered
% in order of appearance and this is the preferred way.
\icmlsetsymbol{equal}{*}

\begin{icmlauthorlist}
\icmlauthor{Junfu Wang}{1,2,3}
\icmlauthor{Yuanfang Guo}{1,2}
\icmlauthor{Liang Yang}{4}
\icmlauthor{Yunhong Wang}{2}
\end{icmlauthorlist}

\icmlaffiliation{1}{State Key Laboratory of Software Development Environment, Beihang University, Beijing, China}
\icmlaffiliation{2}{School of Computer Science and Engineering, Beihang University, Beijing, China}
\icmlaffiliation{3}{Shen Yuan Honors College, Beihang University, Beijing, China}
\icmlaffiliation{4}{School of Artificial Intelligence, Hebei University of Technology, Tianjin, China}

\icmlcorrespondingauthor{Yuanfang Guo}{andyguo@buaa.edu.cn}

% You may provide any keywords that you
% find helpful for describing your paper; these are used to populate
% the "keywords" metadata in the PDF but will not be shown in the document
\icmlkeywords{Machine Learning, ICML}

\vskip 0.3in
]

% this must go after the closing bracket ] following \twocolumn[ ...

% This command actually creates the footnote in the first column
% listing the affiliations and the copyright notice.
% The command takes one argument, which is text to display at the start of the footnote.
% The \icmlEqualContribution command is standard text for equal contribution.
% Remove it (just {}) if you do not need this facility.

\printAffiliationsAndNotice{}  % leave blank if no need to mention equal contribution
% \printAffiliationsAndNotice{\icmlEqualContribution} % otherwise use the standard text.

\begin{abstract}
Graphs with heterophily have been regarded as challenging scenarios for Graph Neural Networks (GNNs), where nodes are connected with dissimilar neighbors through various patterns.
In this paper, we present theoretical understandings of heterophily for GNNs by incorporating the graph convolution (GC) operations into fully connected networks via the proposed Heterophilous Stochastic Block Models (HSBM), a general random graph model that can accommodate diverse heterophily patterns.
Our theoretical investigation comprehensively analyze the impact of heterophily from three critical aspects.
Firstly, for the impact of different heterophily patterns, we show that the separability gains are determined by two factors, i.e., the Euclidean distance of the neighborhood distributions and $\sqrt{\mathbb{E}\left[\operatorname{deg}\right]}$, where $\mathbb{E}\left[\operatorname{deg}\right]$ is the averaged node degree.
Secondly, we show that the neighborhood inconsistency has a detrimental impact on separability, which is similar to degrading $\mathbb{E}\left[\operatorname{deg}\right]$ by a specific factor.
Finally, for the impact of stacking multiple layers, we show that the separability gains are determined by the normalized distance of the $l$-powered neighborhood distributions, indicating that nodes still possess separability in various regimes, even when over-smoothing occurs.
Extensive experiments on both synthetic and real-world data verify the effectiveness of our theory.
\end{abstract}

\section{Introduction}
\label{sec:intro}
Graph Neural Networks (GNNs) have demonstrated remarkable superiority in processing the graph-structured data \citep{gcn,graphsage,message-passing,gat,gin,bi-gcn}.
Unfortunately, the majority of GNNs are developed based on the homophily assumption, i.e., the connected nodes typically share similar features or labels. 
However, this assumption is not satisfied in the graphs characterized by heterophily, where nodes are more likely to connect with other dissimilar nodes.
Under such scenarios, typical GNNs may fail to handle the graphs and even perform worse than fully connected networks \citep{geom-gcn}. 
To tackle this heterophily problem, several heterophily-specific GNNs are proposed \citep{H2GCN,FAGCN,DMP,gpr-gnn,GPNN,UGCN,glognn,MWGNN,ha-gat,ordered-gnn,GOAL}.

Recently, researchers \citep{ma2021homophily,acm-gnn} have revisited the impacts of heterophily on typical GNNs. They have revealed that heterophily is not always harmful to the classification results. Both the `good' and `bad' heterophily patterns exist.
Typically, heterophily ratio is introduced to measure the extent of heterophily, which represents the proportion of edges connecting nodes from different classes \citep{geom-gcn}. Then, both theoretical and empirical studies in binary node classification tasks \citep{gpr-gnn,ma2021homophily} have suggested that the relationship between this heterophily ratio and classification performance exhibits a V-shaped trend. This indicates that heterophily patterns with either low or high ratios tend to enhance performance, whereas those exhibiting moderate ratios usually lead to poor results.

However, in the multi-class scenarios, the situation becomes more complex. The heterophily ratio alone is insufficient to fully estimate the impact of heterophily, because distinct heterophily patterns with identical ratios may have different effects. By utilizing the neighborhood (label) distribution of each node, \citet{ma2021homophily} empirically observes that GCN \cite{gcn} can effectively handle the graphs, whose nodes from different classes possess distinguishable neighborhood distributions. Besides, inspired by the gradient computed via SGC \citep{sgc}, \citet{acm-gnn} exploits the similarity of neighborhood distributions to assess the impact of heterophily. Unfortunately, a comprehensive quantitative analysis of the impact of heterophily is still desired. To fill this gap, it is vital to theoretically investigate: \textit{how do different heterophily patterns contribute to the multi-class node classifications?}

In addition to the impact of different heterophily patterns, recent studies \citep{mao2023demystifying,luan2023whendognn} have observed that the inconsistency among the neighborhood distributions of nodes within the same class, i.e. the neighborhood inconsistency, usually exists. In real-world graphs, despite their homophily/heterophily ratios, both the homophilous and heterophilous nodes exist \citep{mao2023demystifying}. Besides, the nodes from the same class may possess different neighborhood distributions \citep{luan2023whendognn}. This observation raises another important, yet unexplored question: \textit{how do different extents of neighborhood inconsistency impact the multi-class node classifications?} 

Another significant challenge for GNNs is over-smoothing \cite{deeperinsights,oversmoothness-relu}, a phenomenon that stacking an infinite number of GNN layers leads to indistinguishable node representations. In the context of heterophily, \citet{yan2022two} have observed that nodes, with high heterophily ratio and high degrees relative to their neighbors, are prone to over-smoothing under certain conditions. However, a critical question remains unaddressed: \textit{how does stacking multiple GNN layers affect multi-class node classifications for graphs with different heterophily patterns?}

In this paper, we theoretically investigate these three important questions on the random graph models, which are widely utilized to analyze the behavior of GNNs for various vital problems \citep{csbm_analysis_ood, csbm_analysis_expressivepower, csbm_analysis_nonlinearity, csbm_analysis_GC, csbm_analysis_oversmoothing}. Since the conventional models \citep{ERM,SBM} sample edges uniformly or only differentiate the edges connecting inter-/intra-class nodes, they lack the capacity to effectively model various heterophily patterns.
To address this limitation, we propose a more general model, named \mymodel\ (\myM). In \myM, edges are sampled based on the blocks/classes of the connected nodes. Thus, nodes within the same class exhibit class-specific neighborhood distributions. This feature allows \myM\ to accommodate diverse heterophily patterns. 

Based on our \myM, we construct an analytical framework to explore the impacts of GNNs on graphs with diverse heterophily patterns. We analyze the variations of the separability between each pair of classes, by applying the graph aggregator operations. Specifically, we utilize GCN \citep{gcn}, one of the most popular GNNs, in our illustrations.
Our theoretical results can be summarized as:
\begin{enumerate}
    \item For the impact of different heterophily patterns, we demonstrate that the separability gain of each pair of classes is determined by two factors, i.e., the Euclidean distance of their respective neighborhood distributions and the square root of the averaged degree $\sqrt{\mathbb{E}\left[\operatorname{deg}\right]}$. This result indicates that the effect of the heterophily pattern on classification requires to be evaluated based on $\sqrt{\mathbb{E}\left[\operatorname{deg}\right]}$, e.g., very similar neighborhood distributions may boost the classification if the average node degree is large. Based on this analysis, we can form the categories of good/mixed/bad heterophily patterns.
    \item For the impact of the neighborhood inconsistency, we reveal that this impact is  detrimental. When a Gaussian  perturbation is employed, this impact is equivalent to reducing the averaged degree by a factor of ${1}/{(1+r\delta^{2})}$, where $r > 0$ is a constant and $\delta^{2}$ is the variance of the topological noise. This result indicates that for a specific heterophily pattern, enhancing the node-wise neighborhood distribution perturbations will result in a decrement in separability, similar to reducing the node degrees.
    \item For the impact of stacking multiple layers, we demonstrate that the separability gains are determined by the normalized distance of the $l$-powered neighborhood distributions under a stronger density assumption. Our result suggests that the nodes still possess separability in various regimes, even when over-smoothing occurs. However, as $l$ approaches infinity, though the relative differences between the nodes are still maintained, their absolute values become exponentially smaller. Therefore, the classification accuracy eventually decreases, due to the \textit{precision limitations} of the floating-point data formats.
\end{enumerate}

Our theoretical results are verified via extensive experiments on both the synthetic and real-world data. These results may serve for practical utilizations and guide the development of innovative methods, as detailed in \cref{appendix:utilizations_and_suggestions}.

\section{Preliminaries}

\textbf{Notations.}
Let $\mathcal{G}=(\boldsymbol{A},\boldsymbol{X})$ be an attributed graph, where $\boldsymbol{A}=[a_{ij}]\in\{0,1\}^{n\times n}$ denotes the adjacency matrix and $\boldsymbol{X}\in\mathbb{R}^{n\times d}$ represents the node features. $n$ and $d$ represent the numbers of nodes and node features, respectively. $a_{ij}$ is $1$ iff there is an incoming edge from node $j$ to node $i$, or $0$ otherwise. $\boldsymbol{D}$ represents the diagonal degree matrix corresponding to the adjacency matrix $\boldsymbol{A}$, where $D_{ii}=\sum_{j}a_{ij}$.
Let $[n]=\{0,\cdots,n-1\}$ be the set of nodes. For each node $i\in[n]$, $\Gamma_i = \left\{j | a_{ij}=1 \right\}$ represents the set of its neighbors. For node classification, $\boldsymbol{Y} = [y_i] \in [c]^{n}$ represents the labels of nodes and $c$ is the number of classes.
% For any matrix $\boldsymbol{B}$, denote $\boldsymbol{B}_{i}$ as the vector of its $i$-th row and ${b}_{ij}$ as the value of its element located in the $i$-th row and $j$-th column.

\textbf{Graphs with Heterophily.}
The metrics of the homophily/heterophily measure the fraction of intra-class edges \citep{geom-gcn,H2GCN,heterophily_benchmark_WWWworkshop22}. Typically, the homophily ratio can be defined from the node-level, i.e.,
  $\mathcal{H}(\mathcal{G}) = \frac{1}{n}\sum_{i\in [n]} \frac{\sum_{j\in\Gamma_i}(y_i = y_j)}{D_{ii}}$,
where $\mathcal{H}(\mathcal{G})\in \left[0,1\right]$ \citep{geom-gcn}.
A high homophily ratio indicates that the graph is with strong homophily, while a graph with strong heterophily has a small homophily ratio. However, the homophily ratio of a certain heterophily pattern does not align with its impacts on multi-class node classifications. Therefore, it is vital to delve deeper into the underlying mechanisms of heterophily.

\textbf{Graph Convolutional Network.}
Graph Convolutional Network (GCN) \citep{gcn}, known as one of the most popular GNN, is employed to analyze the impact of different heterophily patterns in our work. In a graph convolutional layer, nodes update their representations via
\begin{equation}
  \boldsymbol{X}^{(l+1)} = \sigma\left(\tilde{\boldsymbol{A}}\boldsymbol{X}^{(l)}\boldsymbol{W}^{(l)}\right),
  \label{gcn-propagation-s}
\end{equation}
where $\boldsymbol{W}^{(l)}\in\mathbb{R}^{d_{in}^{(l)}\times d_{out}^{(l)}}$ is the parameter matrix of layer $l$, $\tilde{\boldsymbol{A}}=\boldsymbol{D}^{-1}\boldsymbol{A}$, $\boldsymbol{A}$ is the adjacency matrix containing self-loops, and $\boldsymbol{X}^{(0)}=\boldsymbol{X}$. For simplicity, the non-linear function $\sigma(\cdot)$ is neglected in our analysis, as is widely adopted \citep{csbm_analysis_ood,csbm_analysis_oversmoothing}.

\textbf{Contextual Stochastic Block Models.}
Contextual Stochastic Block Models (CSBM) \citep{CSBMs} are typical random graph models, where nodes are randomly sampled by class-specific Gaussian distribution and edges are sampled by Bernoulli distribution, with a parameter of $p$ if the two connected nodes belongs to the same class, or $q$ otherwise. It is extensively employed to analyze the theoretical performance of Graph Neural Networks \citep{csbm_analysis_ood, csbm_analysis_expressivepower, csbm_analysis_nonlinearity, csbm_analysis_GC, csbm_analysis_oversmoothing}. Specifically, for the heterophily issue, \citet{ma2021homophily} presents a preliminary result of GCN, regarding the impact of different $p$ and $q$ on the two-block CSBMs, while \citet{luan2023whendognn} analyzes the performance of GNNs with low-pass/high-pass filters. \citet{mao2023demystifying} analyzes the graphs containing nodes with homophily and heterophily patterns simultaneously. However, these efforts only focus on the simple binary node classifications, which are not general enough to analyze diverse heterophily patterns. The impact of complicated heterophily patterns in the multi-class classifications has not been analyzed yet.

\section{Heterophilous Stochastic Block Models}
\label{sec:hsbm}
In this section, we propose a more general model, named \mymodel\ (\myM), to accommodate diverse heterophily patterns.

In \myM, each node $i\in[n]$ is independently sampled from $c$ blocks/classes, with probability of $\boldsymbol{\eta}=(\eta_0, \cdots, \eta_{c-1})$, where $\eta_k > 0$ and $\sum_k{\eta_k}=1$. The class of node $i$ is denoted as $\varepsilon_i$. Each edge, which connects the nodes $i$ and $j$, is sampled from a Bernoulli distribution with a parameter $m_{\varepsilon_i \varepsilon_j}$. The collected $\mathbf{M}=[m_{ij}]$ is the probability matrix which represents the edge generation probabilities between different classes. For each class $k\in[c]$, $\bar{p}_k=\sum_{t\in[c]}\eta_t m_{kt}$ represents its averaged edge generation probability. Then, $\bar{D}_k = n\bar{p}_k$ is the averaged node degree within class $k$. By denoting $\hat{m}_{kt}=\eta_t m_{kt}/\bar{p}_k$, $\hat{\mathbf{m}}_{k} = \left(\hat{m}_{k0}, \cdots, \hat{m}_{k(c-1)}\right)$ is the neighborhood distribution of nodes within class $k$, where $\hat{m}_{kt}$ represents the proportion of nodes within class $t$ in the neighborhood of a node belonging to class $k$. The collected matrix $\hat{\mathbf{M}}=[\hat{m}_{ij}]$ stands for the neighborhood distribution matrix. Then, by selecting different edge generation probability matrix $\mathbf{M}$, we can generate graphs with different heterophily patterns, i.e., $\hat{\mathbf{M}} = [\hat{m}_{ij}]$.

\textbf{Topological Noise.}
In real-world networks, the neighborhood distributions of nodes from the same class are centered around a specific distribution with certain randomness.
Inspired by this observation, a random node-wise topological noise is introduced in our \myM. Specifically, for each node $i$, its incoming edges are generated according to the distribution $\hat{\mathbf{m}}(i) = \hat{\mathbf{m}}_{\varepsilon_i} + \boldsymbol{\Delta}_i$, where $\boldsymbol{\Delta}_i$ is a random noise. Note that in practice, the generated distribution should be legal, i.e., $[\hat{\mathbf{m}}(i)]_j>0$ and $\sum_{j}[\hat{\mathbf{m}}(i)]_j=1$.

\textbf{Gaussian Node Features.}
Since our intention is to study the effects of topology with different heterophily patterns, by following CSBM, the Gaussian node features are employed. Specifically, in each block/class $k\in[c]$, the features of each node are generated by sampling the Gaussian distributions, i.e., $\mathcal{N}\left(\mathbf{x};\boldsymbol{\mu}_k, \sigma^2\mathbf{I}\right)$. For simplicity, $\boldsymbol{\mu}_k$ is orthometric to each other and has the same length, i.e., for any $k,t\in[c]$ and $k\neq t$, we have $\boldsymbol{\mu}_k^T \boldsymbol{\mu}_t=0$ and $||\boldsymbol{\mu}_k||_2 =||\boldsymbol{\mu}_t||_2$. $\gamma= ||\boldsymbol{\mu}_k-\boldsymbol{\mu}_t||_2$ stands for the distance between two Gaussian means. At last, we denote $\left(\boldsymbol{X}, \boldsymbol{A}\right) = {\rm HSBM}\left(n, c, \sigma, \left\{\boldsymbol{\mu}_k\right\}, \boldsymbol{\eta}, \mathbf{M}, \left\{\boldsymbol{\Delta}_i\right\}\right)$ as the graph data generated by our \myM.

\textbf{Assumptions.}
Our analysis is conducted based on two mild assumptions as follows. 
\begin{assumption}
    $\bar{p}_k\asymp \bar{p}_t$, $\forall k,t\in [c]$.
\label{assumption:1}
\end{assumption}

\begin{assumption}
    $\bar{p}_k=\omega\left(\log^2n/n\right)$, $\forall k\in[c]$.
\label{assumption:2}
\end{assumption}

\cref{assumption:1} claims that nodes within different classes possess similar averaged degrees. This assumption allows us to approximate $\bar{D}\approx \bar{D}_k$, to make our results more clear and easy to understand. Here, $\bar{D}$ denotes the averaged node degree of the graph. Note that many literatures also implicitly or explicitly assume that $\bar{D} = \bar{D}_k$. For example, the averaged degrees of different classes are the same when utilizing CSBM to generate graphs, where each class possesses identical number of nodes \citep{csbm_analysis_GC,csbm_analysis_oversmoothing}. Our assumption allows the averaged degree of different classes to possess the same order of magnitude, which covers a broader regime. Also, we observe that this assumption is frequently satisfied in real-world, as confirmed in \cref{appendix:dataset_statistics}. 
\cref{assumption:2} states that the graph is not too sparse. It establishes some numerical properties of graphs, e.g., the node degree and number of nodes within each class both concentrate to their corresponding expectations. Note that this assumption is similar to those in the literatures, e.g., \citep{csbm_analysis_ood,csbm_analysis_oversmoothing}.

\section{Theoretical Results}

Our analysis is established on the multi-class node classifications. Its primary objective is to accurately categorize samples into their respective classes, i.e., distinguishing them from samples belonging to other classes. Then, the classifier is required to establish a total number of $\binom{c}{2}$ classification boundaries to separate samples from different categories. By minimizing the error rate, the optimal classification boundaries are established based on posterior probabilities. Specifically, $\forall k, t \in [c]$, $k < t$, the condition $\mathbb{P}[y=k \mid \mathbf{x}] = \mathbb{P}[y=t \mid \mathbf{x}]$ is established as the optimal classification boundary, which is constructed by the Bayes classifier. 

In this section, we introduce our analytical framework and present the theoretical results. We firstly define the two-class separability for each ideal boundary. Then, we analyze the variations of separability, when applying GC operations on graphs with different heterophily patterns. Note that though our analysis is based on the ideal Bayes classifier, the analyzed results can be achieved via MLPs, as detailed in \cref{appendix:instance_of_Bayes_classifier}. All the proofs can be found in \cref{appendix:A}.

\subsection{Setting up the Baseline}

For each node $i\in [n]$, $\zeta_i(k)$ represents the event $\mathbb{P}[y=\varepsilon_i \mid \boldsymbol{X}_i] \geq \mathbb{P}[y=k \mid \boldsymbol{X}_i]$, which indicates whether node $i$ is more likely to be classified into its ground truth class $\varepsilon_i$ rather than class $k$. Then, the two-class separability between the classes $t$ and $k$ is defined.

\begin{definition}
    $\forall t, k \in [c]$ and $t\neq k$, the separability between class $t$ and class $k$ is defined as 
    \begin{equation}
        S(t,k) = \frac{\eta_t}{\eta_t+\eta_k} \mathbb{E}_{i\in\mathcal{C}_t} \mathbb{P}\left[\zeta_i(k)\right] + \frac{\eta_k}{\eta_t+\eta_k} \mathbb{E}_{i\in\mathcal{C}_k} \mathbb{P}\left[\zeta_i(t)\right],
        \label{eq:s_tk}
    \end{equation}
    where $\mathbb{E}_{i\in\mathcal{C}_t} \mathbb{P}\left[\zeta_i(k)\right]$ represents the fraction of nodes within class $t$ that can be correctly classified, regarding to class $k$.
    \label{definition:separability}
\end{definition}

The two-class separability $S(t,k)$ can be regarded as the expected accuracy, when only the classifications for nodes within these two classes are considered. It assesses the effectiveness of the boundary $\mathbb{P}[y=k \mid \mathbf{x}] = \mathbb{P}[y=t \mid \mathbf{x}]$ for the corresponding two-class subtask. Apparently, $S(t,k)=S(k,t)$ holds for \cref{eq:s_tk}. Note that, in addition to its impact on the two-class subtask, it also holds significance for the overall classification by setting bounds for the minimum error rate, as illustrated in \cref{appendix:significance_of_two_classes_separability}. Then, we proceed to establish the separability of node features $\boldsymbol{X}$.

\begin{theorem}
    Given $\left(\boldsymbol{X}, \boldsymbol{A}\right) = {\rm HSBM}\left(n, c, \sigma, \left\{\boldsymbol{\mu}_k\right\}, \boldsymbol{\eta},\right.$ $\left.\mathbf{M}, \left\{\boldsymbol{\Delta}_i\right\}\right)$, two properties over data $\boldsymbol{X}$ can be obtained.
    \begin{enumerate}
        \item $\forall t,k \in[c]$ and $t\neq k$, 
        \begin{equation}
            \mathbb{E}_{i\in\mathcal{C}_t} \mathbb{P}\left[\zeta_i(k)\right]=\Phi\left(\frac{\gamma}{2\sigma} +\frac{\sigma}{\gamma}\ln\left(\frac{\eta_{t}}{\eta_{k}}\right)\right),
            \label{eq:partial_s_tk}
        \end{equation}
        % \begin{equation}
        %     \mathbb{E}_{i\in\mathcal{C}_t} \mathbb{P}\left[\zeta_i(k)\right]=\Phi\left(\frac{||\mu_k-\mu_t||_2}{2\sigma}\right),
        % \end{equation}
        % \begin{equation}
        %     \mathbb{E}_{i\in\mathcal{C}_t} \mathbb{P}\left[\zeta_i(k)\right]=\Phi\left(\frac{\sqrt{\bar{D}}||\hat{\mathbf{m}}_{k}-\hat{\mathbf{m}}_{t}||_S}{2\sigma}\right),
        % \end{equation}
        % \begin{equation}
        %     \mathbb{E}_{i\in\mathcal{C}_t} \mathbb{P}\left[\zeta_i(k)\right]=\Phi\left(\frac{||\mu_k-\mu_t||_2}{2\sigma} \frac{\sqrt{\bar{D}}||\hat{\mathbf{m}}_{k}-\hat{\mathbf{m}}_{t}||_S}{||\mu_k-\mu_t||_2}\right),
        % \end{equation}
        % \begin{equation}
        %     \mathbb{E}_{i\in\mathcal{C}_t} \mathbb{P}\left[\zeta_i(k)\right]=\Phi\left(\frac{||\mu_k-\mu_t||_2}{2\sigma} F_{tk}\right),
        % \end{equation}
        where $\Phi(\cdot)$ is the cumulative distribution function of the standard Gaussian distribution.
        \item Without loss of generality, let $\eta_t \geq \eta_k$. Then, $S(t,k)$ is a monotonically increasing function with respect to both $\frac{\gamma}{\sigma}$ and $\frac{\eta_t}{\eta_k}$.
    \end{enumerate}
    \label{theorem:1}
    \vspace{-0.3cm}
\end{theorem}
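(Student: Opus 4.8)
The plan is to handle the two claims separately: the first reduces to a one-dimensional Gaussian tail computation, and the second follows from a single algebraic identity that makes the relevant derivatives collapse to manifestly-signed expressions.

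For the first claim, I would start from the Bayes posteriors and note that, since the normalizing denominator $\sum_{j}\eta_j\,\mathcal{N}(\mathbf{x};\boldsymbol{\mu}_j,\sigma^2\mathbf{I})$ is common to all classes, the event $\zeta_i(k)$ for a node $i\in\mathcal{C}_t$ is equivalent to $\eta_t\,\mathcal{N}(\mathbf{x};\boldsymbol{\mu}_t,\sigma^2\mathbf{I})\geq\eta_k\,\mathcal{N}(\mathbf{x};\boldsymbol{\mu}_k,\sigma^2\mathbf{I})$; in particular the other $c-2$ classes are irrelevant to this pairwise comparison. Taking logarithms and using $\|\boldsymbol{\mu}_t\|_2=\|\boldsymbol{\mu}_k\|_2$ to cancel the quadratic norm terms, this collapses to the linear condition $\mathbf{x}^{T}(\boldsymbol{\mu}_t-\boldsymbol{\mu}_k)\geq\sigma^2\ln(\eta_k/\eta_t)$. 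Since $\boldsymbol{X}_i\sim\mathcal{N}(\boldsymbol{\mu}_t,\sigma^2\mathbf{I})$, the scalar $Z=\boldsymbol{X}_i^{T}(\boldsymbol{\mu}_t-\boldsymbol{\mu}_k)$ is Gaussian; orthogonality $\boldsymbol{\mu}_t^{T}\boldsymbol{\mu}_k=0$ together with $\|\boldsymbol{\mu}_t-\boldsymbol{\mu}_k\|_2=\gamma$ gives mean $\gamma^2/2$ and variance $\sigma^2\gamma^2$. Standardizing $Z$, writing the tail through $\Phi$, and using $1-\Phi(x)=\Phi(-x)$ then yields exactly \cref{eq:partial_s_tk}. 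Because all nodes in $\mathcal{C}_t$ share the same feature law, the expectation $\mathbb{E}_{i\in\mathcal{C}_t}$ is immediate.

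For the second claim, I would substitute \cref{eq:partial_s_tk} (and its $t\leftrightarrow k$ swap) into \cref{eq:s_tk}, abbreviating $a=\gamma/(2\sigma)$, $b=\sigma/\gamma$, $L=\ln(\eta_t/\eta_k)\geq 0$, and $w=\eta_t/(\eta_t+\eta_k)$, so that $S(t,k)=w\,\Phi(a+bL)+(1-w)\,\Phi(a-bL)$. The crucial observation is the identity $w\,\phi(a+bL)=(1-w)\,\phi(a-bL)$, where $\phi$ is the standard Gaussian density: it follows from $ab=1/2$, which forces $(a+bL)^2-(a-bL)^2=2L$ and hence $\phi(a+bL)/\phi(a-bL)=e^{-L}=\eta_k/\eta_t$, exactly compensating the weight ratio $w/(1-w)=\eta_t/\eta_k$. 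Writing $u=\gamma/\sigma$ so that $a+bL=u/2+L/u$ and $a-bL=u/2-L/u$, differentiating $S$ in $u$ produces the two chain-rule factors $\tfrac{1}{2}\mp L/u^2$; the identity makes the two density-weighted contributions equal, so the $\pm L/u^2$ terms cancel and $\partial S/\partial u$ reduces to a single strictly positive constant. Differentiating $S$ in $\rho=\eta_t/\eta_k$ similarly annihilates the term multiplying $\partial L/\partial\rho$ via the same identity, leaving $\partial S/\partial\rho=\frac{dw}{d\rho}\,[\Phi(a+bL)-\Phi(a-bL)]$, which is nonnegative because $dw/d\rho=(\rho+1)^{-2}>0$ and $L\geq 0$ makes the bracket nonnegative.

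The main obstacle, I expect, is recognizing and exploiting the density-matching identity $w\,\phi(a+bL)=(1-w)\,\phi(a-bL)$. Without it, the derivative with respect to $\gamma/\sigma$ carries the sign-indefinite factor $\tfrac{1}{2}-L/u^2$ and monotonicity is far from obvious; once the identity is in hand, both derivative computations collapse to manifestly-signed quantities and the remainder is routine.
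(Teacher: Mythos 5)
Your proof is correct. For the first claim you follow essentially the paper's route: reduce the posterior comparison to the weighted-likelihood comparison $\eta_t\,p_t(\mathbf{x})\geq\eta_k\,p_k(\mathbf{x})$, linearize using $\|\boldsymbol{\mu}_t\|=\|\boldsymbol{\mu}_k\|$, and standardize a one-dimensional Gaussian; the paper projects onto the two unit directions $\hat{\boldsymbol{\mu}}_t,\hat{\boldsymbol{\mu}}_k$ separately and uses that the difference of two independent standard normals is $\mathcal{N}(0,2)$, while you project onto $\boldsymbol{\mu}_t-\boldsymbol{\mu}_k$ directly, but these are the same computation. For the second claim your argument is genuinely different from, and substantially cleaner than, the paper's. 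The paper proves monotonicity in $\alpha=\gamma/\sigma$ and in $\hat\eta_t$ by two separate integral-comparison arguments: it rewrites the increments of $S$ as integrals of shifted Gaussian densities over matched intervals and establishes pointwise positivity of the integrands ($g_1,g_2$ in their notation), which takes over a page of casework. Your density-matching identity $w\,\phi(a+bL)=(1-w)\,\phi(a-bL)$ — which holds precisely because $ab=1/2$ and which geometrically says the two arguments $a\pm bL$ are reflections about the Bayes threshold in the projected coordinate — makes both partial derivatives collapse: $\partial S/\partial u=w\,\phi(u/2+L/u)>0$ and $\partial S/\partial\rho=(\rho+1)^{-2}\left[\Phi(a+bL)-\Phi(a-bL)\right]\geq 0$. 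The only slip is cosmetic: $\partial S/\partial u$ is not a \emph{constant} (it depends on $u$), but it is manifestly strictly positive, which is all that is needed. Your approach buys brevity and an explanation of \emph{why} the cross terms vanish; the paper's integral argument buys nothing extra here and could profitably be replaced by yours.
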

When $\eta_t=\eta_k$, i.e., the number of nodes within these two classes are identical, $\mathbb{E}_{i\in\mathcal{C}_t} \mathbb{P}\left[\zeta_i(k)\right]= \mathbb{E}_{i\in\mathcal{C}_k} \mathbb{P}\left[\zeta_i(t)\right]= \Phi\left(\frac{\gamma}{2\sigma}\right)$. Notably, both $\mathbb{E}_{i\in\mathcal{C}_t} \mathbb{P}\left[\zeta_i(k)\right]$ and $\mathbb{E}_{i\in\mathcal{C}_k} \mathbb{P}\left[\zeta_i(t)\right]$ are positively correlated with $\frac{\gamma}{\sigma}$. However, in more general cases, where the number of nodes within two classes are imbalanced, i.e., $\eta_t \neq \eta_k$, the situations become more complex. W.l.o.g, let $\eta_t\geq\eta_k$. According to \cref{eq:partial_s_tk}, as $\frac{\gamma}{\sigma}$ increases, $\mathbb{E}_{i\in\mathcal{C}_t} \mathbb{P}\left[\zeta_i(k)\right]$ exhibits a larger increment compared to $\mathbb{E}_{i\in\mathcal{C}_k} \mathbb{P}\left[\zeta_i(t)\right]$. When $\frac{\gamma}{\sigma}$ increases from a small value, the latter may even decrease. This observation reveals that as $\frac{\gamma}{\sigma}$ increases, classes with a larger number of samples experience more significant benefits, while classes with fewer samples benefit less and may even encounter a decrease in the classification accuracy. Besides, the second part of \cref{theorem:1} claims that the separability $S(t,k)$, which represents the overall impact of $\mathbb{E}_{i\in\mathcal{C}_t} \mathbb{P}\left[\zeta_i(k)\right]$ and $\mathbb{E}_{i\in\mathcal{C}_k} \mathbb{P}\left[\zeta_i(t)\right]$, consistently increases as $\frac{\gamma}{\sigma}$ increases. Therefore, we can assess the impact of GC operations by examining the variation of $\frac{\gamma}{\sigma}$.

\subsection{Impact of Heterophily for Graph Convolution}
Then, we incorporate a Graph Convolution (GC) operation and analyze the separability of aggregated features, thereby assessing the impact of different heterophily patterns.

\begin{theorem}
    Given $\left(\boldsymbol{X}, \boldsymbol{A}\right) = {\rm HSBM}\left(n, c, \sigma, \left\{\boldsymbol{\mu}_k\right\}, \boldsymbol{\eta}, \right.$ $\left.\mathbf{M}, \left\{\boldsymbol{0}\right\}\right)$, for categories $t, k\in[c]$ and $t\neq k$, with probability at least $1-1/{\rm Ploy}\left(n\right)$, we have
    \begin{equation}
        \mathbb{E}_{i\in\mathcal{C}_t} \mathbb{P}\left[\zeta_i(k)\right]=\Phi\left(\frac{\gamma}{2\sigma}\frac{F_{t k}}{\varsigma_n} + \frac{\sigma}{\gamma}\frac{\varsigma_n}{F_{t k}} \ln\left(\frac{\eta_{t}}{\eta_k}\right) \right),
        \label{eq:theorem2_S}
    \end{equation}
    over the aggregated features $\tilde{\boldsymbol{X}}=\boldsymbol{D^{-1}AX}$, 
    where $F_{t k}=\frac{1}{\sqrt{2}} ||\sqrt{\bar{D}_{k}}\hat{\mathbf{m}}_{k}-\sqrt{\bar{D}_{t}}\hat{\mathbf{m}}_{t}||$, $\varsigma_n = 1\pm o_n(1)$, and $o_n(1)$ is the error term.
\label{theorem:2}
\end{theorem}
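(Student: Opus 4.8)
The plan is to reduce the statement to \cref{theorem:1} by showing that, after the graph convolution, the aggregated feature $\tilde{\boldsymbol{X}}_i$ of a node $i$ behaves like a Gaussian whose mean and variance are dictated by the neighborhood distribution and the averaged degree of its class. First I would fix a pair $t\neq k$ and condition on the realized topology $\boldsymbol{A}$. Since $\tilde{\boldsymbol{X}}_i = D_{ii}^{-1}\sum_{j\in\Gamma_i}\boldsymbol{X}_j$ is a normalized sum of independent class-conditional Gaussians, it is itself Gaussian: $\tilde{\boldsymbol{X}}_i\mid\boldsymbol{A} \sim \mathcal{N}\!\left(\bar{\boldsymbol{\mu}}_i,\,(\sigma^2/D_{ii})\mathbf{I}\right)$ with $\bar{\boldsymbol{\mu}}_i = \sum_{s\in[c]}\frac{n_s(i)}{D_{ii}}\boldsymbol{\mu}_s$, where $n_s(i)$ counts the neighbors of $i$ lying in class $s$. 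Because \cref{theorem:2} takes $\boldsymbol{\Delta}_i=\boldsymbol{0}$, the population value of this empirical neighborhood distribution is exactly $\hat{\mathbf{m}}_{\varepsilon_i}$, so the entire task is to replace the random $D_{ii}$ and $n_s(i)/D_{ii}$ by $\bar{D}_{\varepsilon_i}$ and $\hat{m}_{\varepsilon_i s}$.

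This is where \cref{assumption:2} enters. Each $n_s(i)$ is a sum of independent Bernoulli edge indicators, so a Bernstein/Chernoff bound gives $n_s(i) = \bar{D}_{\varepsilon_i}\hat{m}_{\varepsilon_i s}(1\pm o_n(1))$ and $D_{ii} = \bar{D}_{\varepsilon_i}(1\pm o_n(1))$; the density condition $\bar{p}_k = \omega(\log^2 n/n)$ makes the deviation terms small enough to survive a union bound over all $n$ nodes, which is exactly what produces the ``with probability at least $1-1/\mathrm{Poly}(n)$'' guarantee. On this high-probability event, $\bar{\boldsymbol{\mu}}_i = \boldsymbol{\nu}_{\varepsilon_i}(1\pm o_n(1))$ with $\boldsymbol{\nu}_t := \sum_s \hat{m}_{ts}\boldsymbol{\mu}_s$, and the conditional covariance is $(\sigma^2/\bar{D}_{\varepsilon_i})(1\pm o_n(1))\,\mathbf{I}$.

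With the aggregated features reduced to two Gaussians $\mathcal{N}(\boldsymbol{\nu}_t,\sigma_t^2\mathbf{I})$ and $\mathcal{N}(\boldsymbol{\nu}_k,\sigma_k^2\mathbf{I})$ (where $\sigma_t^2=\sigma^2/\bar{D}_t$, $\sigma_k^2=\sigma^2/\bar{D}_k$), I would invoke \cref{assumption:1} to equate the two variances up to a $1\pm o_n(1)$ factor, so the Bayes boundary stays linear and the derivation behind \cref{eq:partial_s_tk} applies verbatim after the substitution $\gamma\mapsto\|\boldsymbol{\nu}_t-\boldsymbol{\nu}_k\|$ and $\sigma\mapsto\sigma/\sqrt{\bar{D}}$. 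The remainder is purely algebraic: since the means $\{\boldsymbol{\mu}_s\}$ are orthogonal with common squared norm $\gamma^2/2$, one gets $\|\boldsymbol{\nu}_t-\boldsymbol{\nu}_k\|^2 = \tfrac{\gamma^2}{2}\|\hat{\mathbf{m}}_t-\hat{\mathbf{m}}_k\|^2$, hence the effective ratio equals $\frac{\|\boldsymbol{\nu}_t-\boldsymbol{\nu}_k\|}{\sigma/\sqrt{\bar{D}}} = \frac{\gamma}{\sigma}\cdot\frac{\sqrt{\bar{D}}}{\sqrt{2}}\|\hat{\mathbf{m}}_t-\hat{\mathbf{m}}_k\|$. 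Distributing $\sqrt{\bar{D}}\approx\sqrt{\bar{D}_t}\approx\sqrt{\bar{D}_k}$ inside the norm (again via \cref{assumption:1}) rewrites this factor as $\frac{\gamma}{\sigma}F_{tk}$, and collecting every $1\pm o_n(1)$ discrepancy into the single symbol $\varsigma_n$ yields \cref{eq:theorem2_S}.

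I expect the main obstacle to be the second step: obtaining a concentration of the empirical neighborhood distribution $n_s(i)/D_{ii}$, uniform over all nodes, that is tight enough that its fluctuation perturbs the argument of $\Phi$ only \emph{multiplicatively} by $1\pm o_n(1)$ rather than by an order-one amount. The subtlety is that the relevant scale $\sigma/\sqrt{\bar{D}}$ itself shrinks with the degree, so one must control the neighbor-composition deviation \emph{relative} to this vanishing scale and verify that \cref{assumption:2} forces the relative error to zero. The delicate bookkeeping is to confirm that the joint effect of the degree fluctuation, the neighbor-composition fluctuation, and the unequal class variances all collapse into the one error term $\varsigma_n$.
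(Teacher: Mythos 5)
Your overall architecture is the same as the paper's: Chernoff-type concentration of $D_{ii}$ and of the per-class neighbor counts $n_s(i)$ under \cref{assumption:2}, a union bound giving the $1-1/\mathrm{Poly}(n)$ event, the observation that conditionally on $\boldsymbol{A}$ the aggregated feature is Gaussian with mean $\sum_s \hat m_{\varepsilon_i s}\boldsymbol{\mu}_s(1\pm o_n(1))$ and variance $(\sigma^2/\bar D_{\varepsilon_i})(1\pm o_n(1))$, and then a rerun of the \cref{theorem:1} Bayes computation. You also correctly identify the real technical crux (controlling the neighbor-composition fluctuation relative to the shrinking scale $\sigma/\sqrt{\bar D}$).

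However, there is a genuine gap in your last algebraic step. You first equate the two class variances via \cref{assumption:1}, obtain the gain $\tfrac{1}{\sqrt 2}\sqrt{\bar D}\,\|\hat{\mathbf{m}}_t-\hat{\mathbf{m}}_k\|$, and then claim that ``distributing $\sqrt{\bar D}$ inside the norm'' recovers $F_{tk}=\tfrac{1}{\sqrt 2}\|\sqrt{\bar D_k}\hat{\mathbf{m}}_k-\sqrt{\bar D_t}\hat{\mathbf{m}}_t\|$. These two quantities are \emph{not} equal up to a $1\pm o_n(1)$ factor: writing $B=\sqrt{\bar D_k/\bar D_t}$, one has $\|\sqrt{\bar D_k}\hat{\mathbf{m}}_k-\sqrt{\bar D_t}\hat{\mathbf{m}}_t\|=\sqrt{\bar D_t}\,\|(\hat{\mathbf{m}}_k-\hat{\mathbf{m}}_t)+(B-1)\hat{\mathbf{m}}_k\|$, and the extra term $(B-1)\hat{\mathbf{m}}_k$ is only small in \emph{absolute} terms, not relative to $\|\hat{\mathbf{m}}_k-\hat{\mathbf{m}}_t\|$, which may itself be small or zero (take $\hat{\mathbf{m}}_k=\hat{\mathbf{m}}_t$ with $\bar D_k\neq\bar D_t$: your formula gives gain $0$ while $F_{tk}>0$, and after multiplication by $\sqrt{\bar D_t}$ the discrepancy need not even be $o(1)$). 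Since the whole content of \cref{theorem:2} is whether $F_{tk}$ sits above or below $\varsigma_n=1\pm o_n(1)$, this error cannot be swept into $\varsigma_n$. The degree-weighted form of $F_{tk}$ is produced precisely by \emph{not} equating the variances: the paper carries the class-dependent variances $\sigma^2/\bar D_{\varepsilon_i}$ and $\sigma^2/\bar D_k$ (and the $1/\tilde\sigma_k$ density normalizations) through the Bayes comparison $\tilde\varphi_{\varepsilon_i}>\tilde\varphi_k$, which makes the ratio $B$ multiply only $\hat{\mathbf{m}}_k$ and yields $\|B\hat{\mathbf{m}}_k-\hat{\mathbf{m}}_{\varepsilon_i}\|$ exactly; the quadratic term $\|\tilde{\boldsymbol{x}}\|^2(1-B^2)$ and the $\ln B$ term are then shown to be absorbable, but the factor $B$ inside the norm is retained. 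To close the gap you would need to redo the likelihood-ratio computation with the unequal variances kept explicit rather than reducing to the equal-variance case of \cref{theorem:1}.
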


\cref{theorem:2} illustrates the impact of GC operation on two-class separability. Compared to $\mathbb{E}_{i\in\mathcal{C}_t} \mathbb{P}\left[\zeta_i(k)\right]$ in \cref{theorem:1}, with high probability, this impact is equivalent to scaling $\frac{\gamma}{\sigma}$ by a factor of $\frac{F_{tk}}{\varsigma_n}$. Therefore, according to the monotonicity of $S(t,k)$, when $F_{tk} > \varsigma_n$, the GC operation enhances the separability $S(t,k)$. Otherwise, it degrades the separability. Here, $o_n(1)$ is an error term introduced by sampling, which can be neglected as $n$ increase to a sufficiently large number. Therefore, $F_{tk}$ are the separability gains of the GC operation. They are determined by the Euclidean distance between class-wise neighborhood distributions, which are weighted by the square root of their averaged degree. 
Specifically, according to \cref{assumption:1}, we can assume that nodes within different classes have approximately identical degree, denoted as $\bar{D} \approx \bar{D}_k$. Then, $F_{tk}$ becomes $\sqrt{\bar{D}/{2}}||\hat{\mathbf{m}}_{k}-\hat{\mathbf{m}}_{t}||$, which indicates that the average node degree and the distance between neighborhood distributions possess complementary effects on the separability gains. For example, even though nodes within classes $t$ and $k$ possess similar (not the same) neighborhood distributions, this heterophily pattern may also benefit the classification when the averaged degree is large. Besides, for specific averaged degree, our results firstly reveals that the Euclidean distance of neighborhood distributions can measure the impact of heterophily, while current understanding \cite{ma2021homophily,acm-gnn} lacks the rigorous analysis based quantitative metric.

\begin{definition}%[Heterophily Patterns]
    Given $\left(\boldsymbol{X}, \boldsymbol{A}\right) = {\rm HSBM}\left(n, c, \sigma, \left\{\boldsymbol{\mu}_k\right\}, \boldsymbol{\eta},\right.$ $\left.\mathbf{M}, \left\{\boldsymbol{0}\right\}\right)$, a good heterophily pattern satisfies
    \begin{equation}
        \min_{k\neq t}\frac{1}{\sqrt{2}} ||\sqrt{\bar{D}_{k}}\hat{\mathbf{m}}_{k}-\sqrt{\bar{D}_{t}}\hat{\mathbf{m}}_{t}|| > \varsigma_n, 
    \end{equation}
    and a bad heterophily pattern satisfies
    \begin{equation}
        \max_{k\neq t}\frac{1}{\sqrt{2}} ||\sqrt{\bar{D}_{k}}\hat{\mathbf{m}}_{k}-\sqrt{\bar{D}_{t}}\hat{\mathbf{m}}_{t}|| < \varsigma_n,
    \end{equation}
    where $\varsigma_n = 1\pm o_n(1)$ and $o_n(1)$ is the error term. Otherwise, we get a mixed heterophily pattern.
\label{definition:2}
\end{definition}

Based on this analysis, we characterize
the categories of good/mixed/bad heterophily patterns in \cref{definition:2}.
When applying a GC operation, good heterophily patterns improve the separability of every pair of classes, consequently boosting the overall performance. On the contrary, bad heterophily patterns decrease the separability of all pairs of classes, resulting in lower performances. Besides, mixed heterophily patterns possess varying impacts on the separability of class pairs, enhancing performance for some pairs while reducing it for others, leading to a mixed impact on the separability.

\subsection{Impact of Neighborhood Inconsistency}

In real-world graphs, nodes which belong to the same class, typically exhibit similar yet different neighborhood distributions. Intuitively, the greater the magnitude of these differences is, the more challenging to process the graph becomes. To assess the impact of this neighborhood inconsistency, we introduce the topological noise $\boldsymbol \Delta_i$ for each node $i\in[n]$ within our \myM.

\begin{theorem}
    Given $\left(\boldsymbol{X}, \boldsymbol{A}\right) = {\rm HSBM}\left(n, c, \sigma, \left\{\boldsymbol{\mu}_k\right\}, \boldsymbol{\eta},\right.$ $\left.\mathbf{M}, \left\{\boldsymbol{\Delta}_i\right\}\right)$, with each topological noise $\boldsymbol{\Delta}_i$ sampled independently from $\mathcal{N}\left(\mathbf{0}, \delta \mathbf{I}\right)$, by assuming that $d=c$ and $\forall k\in[c]$, $\bar{D} = \bar{D}_k$, with probability at least $1-1/{\rm Ploy}\left(n\right)$, we have
    \begin{equation}
        \mathbb{E}_{i\in\mathcal{C}_t} \mathbb{P}\left[\zeta_i(k)\right]=\Phi\left(\frac{\gamma}{2\sigma}\frac{F_{t k}^{\prime}}{\varsigma_n} + \frac{\sigma}{\gamma}\frac{\varsigma_n}{F_{t k}^{\prime}} \ln\left(\frac{\eta_{t}}{\eta_k}\right) \right)
    \end{equation}
     over data $\tilde{\boldsymbol{X}}=\boldsymbol{D^{-1}AX}$, where $F_{t k}^{\prime}=\frac{1}{\sqrt{2}} ||\frac{1}{\rho_{k}}\hat{\mathbf{m}}_{k} - \frac{1}{\rho_{t}}\hat{\mathbf{m}}_{t}||$, $\rho_{t}=\sqrt{\frac{\gamma ^2\delta^2}{2\sigma^2} + \frac{1}{{\bar{D}_{t}}}}$, and $\varsigma_n = 1\pm o_n(1)$. 
\label{theorem:3}
\end{theorem}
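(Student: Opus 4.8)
The plan is to extend the argument behind \cref{theorem:2} by folding the topological perturbation into the effective covariance of the aggregated features. Fix a pair $t \neq k$ and a node $i \in \mathcal{C}_t$. Writing each neighbor feature as $\boldsymbol{X}_j = \boldsymbol{\mu}_{\varepsilon_j} + \boldsymbol{g}_j$ with $\boldsymbol{g}_j \sim \mathcal{N}(\mathbf{0},\sigma^2\mathbf{I})$, I would split the aggregated feature $\tilde{\boldsymbol{X}}_i = D_{ii}^{-1}\sum_{j\in\Gamma_i}\boldsymbol{X}_j$ into a class-dependent mean part $\sum_s (N_s/D_{ii})\boldsymbol{\mu}_s$, where $N_s$ counts the class-$s$ neighbors of $i$, and a feature-noise part $D_{ii}^{-1}\sum_j \boldsymbol{g}_j$ of covariance $(\sigma^2/D_{ii})\mathbf{I}$. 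Conditioned on the graph, $\tilde{\boldsymbol{X}}_i$ is exactly Gaussian, so the only genuinely new ingredient relative to \cref{theorem:2} is to track how the per-node noise $\boldsymbol{\Delta}_i$ (with per-coordinate variance $\delta^2$) enters this mean part.

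Next I would use $d=c$ together with the orthogonality $\boldsymbol{\mu}_s^T\boldsymbol{\mu}_{s'}=0$, $\|\boldsymbol{\mu}_s\|=\gamma/\sqrt2$, to pass to the orthonormal basis $\mathbf{e}_s=\boldsymbol{\mu}_s/\|\boldsymbol{\mu}_s\|$, in which the conditional center of $\tilde{\boldsymbol{X}}_i$ is $(\gamma/\sqrt2)(N_s/D_{ii})_s$. By Bernstein/Chernoff concentration under \cref{assumption:2}, the empirical fractions $N_s/D_{ii}$ concentrate on $\hat{m}_{ts}+[\boldsymbol{\Delta}_i]_s$ and $D_{ii}$ on $\bar{D}_t$, uniformly over the $n$ nodes with probability $1-1/\mathrm{Poly}(n)$; these residual fluctuations are precisely what produce the multiplicative error $\varsigma_n=1\pm o_n(1)$. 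The conceptual crux is that $\mathbb{E}_{i\in\mathcal{C}_t}$ averages over independent draws of $\boldsymbol{\Delta}_i$, so the center of $\tilde{\boldsymbol{X}}_i$ wanders across nodes by $(\gamma/\sqrt2)\boldsymbol{\Delta}_i$. Marginalizing this across-node variation inflates the noise seen by the class-$t$ Bayes computation from $(\sigma^2/\bar{D}_t)\mathbf{I}$ to $\bigl(\tfrac{\sigma^2}{\bar{D}_t}+\tfrac{\gamma^2\delta^2}{2}\bigr)\mathbf{I}=\sigma^2\rho_t^2\,\mathbf{I}$ on the subspace $\mathbf{1}^\perp$, which is exactly the subspace containing every discriminative direction $\hat{\mathbf{m}}_t-\hat{\mathbf{m}}_k$ (both neighborhood distributions sum to one). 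Projecting onto $\mathbf{1}^\perp$ also enforces the summation constraint on $\hat{\mathbf{m}}(i)$, and a short truncation estimate shows the event $[\hat{\mathbf{m}}(i)]_s\le 0$ is negligible at leading order in the variance.

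With the assumption $\bar{D}=\bar{D}_k$ for all $k$, every $\rho_t$ coincides, so the pairwise problem is homoscedastic with common covariance $\sigma^2\rho^2\mathbf{I}$ and means $\boldsymbol{\nu}_t=\sum_s\hat{m}_{ts}\boldsymbol{\mu}_s$. I would then reuse the one-dimensional reduction from \cref{theorem:1}: projecting onto the Fisher direction turns the signal into the Mahalanobis separation $\|\boldsymbol{\nu}_t-\boldsymbol{\nu}_k\|_{(\sigma^2\rho^2)^{-1}}=\tfrac{1}{\sigma\rho}\cdot\tfrac{\gamma}{\sqrt2}\|\hat{\mathbf{m}}_t-\hat{\mathbf{m}}_k\|$, while the prior term scales reciprocally. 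Identifying $\tfrac{1}{\sqrt2}\|\tfrac{1}{\rho}\hat{\mathbf{m}}_t-\tfrac{1}{\rho}\hat{\mathbf{m}}_k\|=F_{tk}^{\prime}$ reproduces the claimed $\Phi\bigl(\tfrac{\gamma}{2\sigma}\tfrac{F_{tk}^{\prime}}{\varsigma_n}+\tfrac{\sigma}{\gamma}\tfrac{\varsigma_n}{F_{tk}^{\prime}}\ln\tfrac{\eta_t}{\eta_k}\bigr)$, and setting $\delta=0$ recovers \cref{theorem:2} since then $1/\rho=\sqrt{\bar{D}}$.

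I expect the main obstacle to be the clean separation of the two distinct sources of randomness: the within-node feature noise (Gaussian given the graph, variance $\sigma^2/\bar{D}_t$) versus the across-node topological perturbation, which manifests as extra variance only after $\mathbb{E}_{i\in\mathcal{C}_t}$ is taken. Making this rigorous requires uniform concentration of the neighbor-count vectors $(N_s)_s$ so that each per-node center equals $\hat{\mathbf{m}}_t+\boldsymbol{\Delta}_i$ up to $o_n(1)$, and then arguing that the residual multinomial (class-sampling) fluctuation, of order $\gamma^2/\bar{D}_t$, is absorbed into $\varsigma_n$ relative to the retained terms, so that only $\tfrac{\sigma^2}{\bar{D}_t}$ and $\tfrac{\gamma^2\delta^2}{2}$ survive inside $\rho_t$. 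Checking that this absorption degrades gracefully as $\delta\to0$ back to \cref{theorem:2} is the delicate point.
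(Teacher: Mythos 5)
Your proposal is correct and follows essentially the same route as the paper's proof: decompose $\tilde{\boldsymbol{X}}_i$ into the neighbor-mean part and the feature-noise part, use concentration to replace $N_s/D_{ii}$ by $\hat{m}_{ts}+[\boldsymbol{\Delta}_i]_s$ up to $\varsigma_n$, push $\boldsymbol{\Delta}_i$ through the orthogonal basis of means to get an extra isotropic $\tfrac{\gamma^2\delta^2}{2}\mathbf{I}$ contribution, combine it with $\sigma^2/\bar{D}_t$ into $\sigma^2\rho_t^2$, and finish with the Bayes/Gaussian-CDF reduction of Theorems 1--2. The only cosmetic difference is that you marginalize $\boldsymbol{\Delta}_i$ at the $\mathbb{E}_{i\in\mathcal{C}_t}$ stage and add a $\mathbf{1}^\perp$-projection/truncation remark for the simplex constraint, whereas the paper folds $\boldsymbol{\Delta}_i$ directly into the per-node Gaussian and ignores the constraint; the resulting computation is identical.
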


Here, the standard variance $\delta$ of topological noise can be utilized to measure the extents of neighborhood inconsistency. \cref{theorem:3} reveals that such inconsistency has a detrimental impact on separability, which is equivalent to degrading the averaged degree by a factor of $\frac{1}{1+r\delta^{2}}$, where $r=\frac{\gamma^{2}\bar{D}_k}{2\sigma^2} > 0$. It indicates that for the specific heterophily patterns, a larger $\delta$ results in a more significant reduction in separability. Note that we can directly generalize \cref{definition:2} by utilizing $F_{t k}^{\prime}$ when considering the topological noise.

\begin{figure*}[t]
  \vspace{-0.3cm}
  \subfigure[Example of good heterophily pattern with $a=0.25$. The accuracy of GCN is 89.20.]{
  \vspace{-0.3cm}
    \centering
    \includegraphics[trim=5.5cm 0cm 5cm 0cm, clip, width=0.95\textwidth]{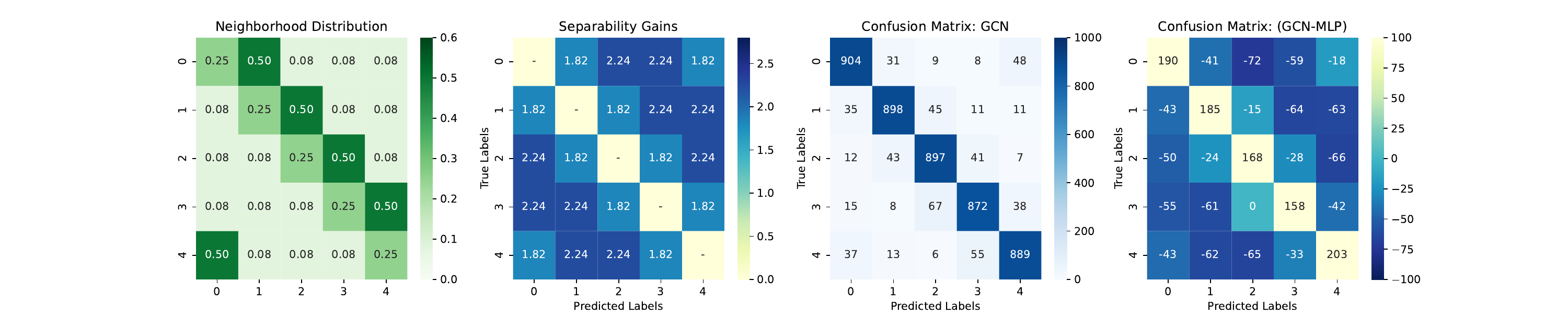}
    \label{fig:good_heterophily}
  }
  \vspace{-0.3cm}
  \subfigure[Example of mixed heterophily pattern with $a=0.2$. The accuracy of GCN is 71.84.]{
    \centering
    \includegraphics[trim=5.5cm 0cm 5cm 0cm, clip, width=0.95\textwidth]{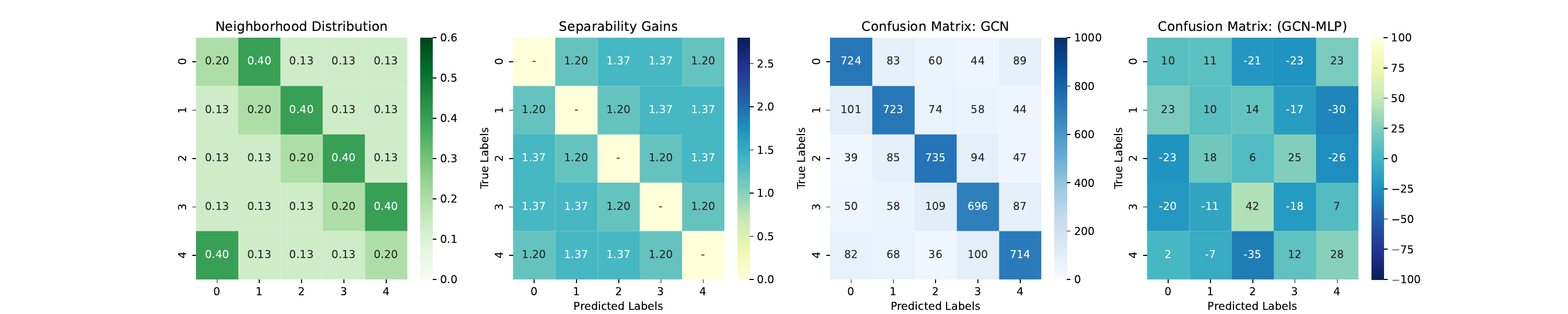}
    \label{fig:mixed_heterophily}
  }
  \vspace{-0.3cm}
  \subfigure[Example of bad heterophily pattern with $a=0.18$. The accuracy of GCN is 59.88.]{
   \centering
    \includegraphics[trim=5.5cm 0cm 5cm 0cm, clip, width=0.95\textwidth]{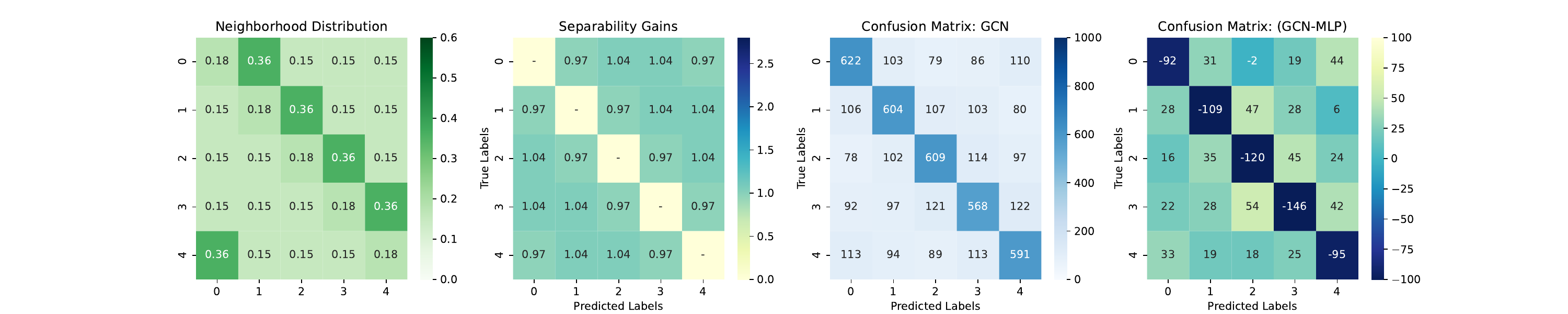}
    \label{fig:bad_heterophily}
  }
   % \vspace{-0.3cm}
  \caption{Examples of different heterophily patterns. The accuracy of MLP is 71.12.}
  \label{fig:examples_of_heterophily_patterns}
   \vspace{-0.3cm}
\end{figure*}

\subsection{Impact of Stacking Multiple Graph Convolutions}

As the number of GC operations increases, the over-smoothing dilemma \citep{deeperinsights,oversmoothness-relu} occurs. It indicates that the features of different nodes tend to converge to the same values, thereby losing their separability. 
% \citet{csbm_analysis_oversmoothing} ignore the correlation of the aggregated features for different nodes and conclude that as the number of layers increases, separability may gradually diminish. 
In this subsection, we present the separability gains of stacking multiple GC operations and introduce a novel and different insight: nodes can still maintain their separability, even in the presence of over-smoothing.
% , because their effective features and noises converge at the same rate.

\begin{theorem}
    Given $\left(\boldsymbol{X}, \boldsymbol{A}\right) = {\rm HSBM}\left(n, c, \sigma, \left\{\boldsymbol{\mu}_k\right\}, \boldsymbol{\eta},\right.$ $\left.\mathbf{M}, \left\{\boldsymbol{0}\right\}\right)$, assume that 
    $\forall k, t\in[c]$, $\bar{D}=\bar{D}_{t}$, $\eta_t=\eta_k$, $\sum_{t}m_{t k}\asymp \sum_{t}m_{k t}$, and $\sum_{t}m_{\varepsilon_i t} m_{t\varepsilon_j}=\omega(\log n^2/n)$. When each class possesses exactly $\frac{n}{c}$ nodes, with probability of at least $1-1/{\rm Ploy}\left(n\right)$, we have
    \begin{equation}
        \mathbb{E}_{i\in\mathcal{C}_t} \mathbb{P}\left[\zeta_i(k)\right]= \Phi\left(\frac{\gamma}{2\sigma}\frac{F_{t k}^{(l)}}{\varsigma_n^{(l)}} + \frac{\sigma}{\gamma}\frac{\varsigma_n^{(l)}}{F_{t k}^{(l)}} \ln\left(\frac{\eta_{t}}{\eta_k}\right) \right),
    \end{equation}
    over data $\boldsymbol{X}^{(l)} = \left(\boldsymbol{D}^{-1}\boldsymbol{A}\right)^{l}\boldsymbol{X}$,
    where 
    \begin{equation}
         F_{t k}^{(l)} = \sqrt{\frac{c||\hat{\mathbf{m}}_{k}^{(l)}-\hat{\mathbf{m}}_{t}^{(l)}||_2^2}{\sum_{k_1,k_2\in[c]}||\hat{\mathbf{m}}_{k_1}^{(l)}-\hat{\mathbf{m}}_{k_2}^{(l)}||_2^2}} \frac{\bar{D}}{\log n}
    \end{equation}
    for $l>1$, $\varsigma_n^{(l)} = 1\pm lo_n(1)$ and $\hat{\mathbf{m}}_{k}^{(l)}$ represents the $l$-hop neighborhood distribution of nodes within class $k$.
\label{theorem:4}
\end{theorem}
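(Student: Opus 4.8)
The plan is to reduce the $l$-layer problem to the same two-Gaussian Bayes comparison that underlies \cref{theorem:1,theorem:2,theorem:3}, and to spend the real effort on the post-aggregation mean and covariance of $\boldsymbol{X}^{(l)}=(\boldsymbol{D}^{-1}\boldsymbol{A})^{l}\boldsymbol{X}$. Since $\sigma(\cdot)$ is dropped, $\boldsymbol{X}^{(l)}_i=\sum_j (\tilde{\boldsymbol{A}}^{l})_{ij}\boldsymbol{X}_j$ with $\tilde{\boldsymbol{A}}=\boldsymbol{D}^{-1}\boldsymbol{A}$ is a Gaussian-preserving linear map of the features. Because the $\boldsymbol{\mu}_k$ are orthogonal with $\|\boldsymbol{\mu}_k\|=\gamma/\sqrt{2}$, I would project every feature onto the $c$-dimensional span of $\{\boldsymbol{\mu}_k\}$; in the induced orthonormal coordinates the $s$-th coordinate of $\boldsymbol{X}^{(l)}_i$ is, up to the scale $\gamma/\sqrt{2}$ and feature noise, exactly the total $l$-step random-walk mass $\sum_{j\in\mathcal{C}_s}(\tilde{\boldsymbol{A}}^{l})_{ij}$ leaving node $i$ into class $s$. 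This reduces the theorem to computing the mean and fluctuation of the vector of class-masses of the $l$-step walk from $i$.

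For the mean, I would show that under the assumptions and the stronger density condition, $\tilde{\boldsymbol{A}}^{l}$ concentrates entrywise around $\bar{\boldsymbol{A}}^{l}$, the expected transition matrix, whose class-to-class block is the neighborhood-distribution matrix $\hat{\mathbf{M}}$. Hence the conditional mean of the class-mass vector for a node in class $t$ is the $t$-th row of $\hat{\mathbf{M}}^{l}$, i.e. $\hat{\mathbf{m}}_t^{(l)}$, so the mean separation between classes $t$ and $k$ is $\tfrac{\gamma}{\sqrt{2}}\|\hat{\mathbf{m}}_t^{(l)}-\hat{\mathbf{m}}_k^{(l)}\|$, the numerator of $F^{(l)}_{tk}$. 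The balancedness (exactly $n/c$ nodes per class, $\eta_t=\eta_k$) together with the near-double-stochasticity $\sum_t m_{tk}\asymp\sum_t m_{kt}$ makes $\bar{\boldsymbol{A}}$ reversible with a uniform stationary vector, which supplies the symmetry across classes needed to treat all pairs with a single common covariance.

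Computing that covariance is the main obstacle. I would argue that the dominant noise is topological rather than featural: the masses $(\tilde{\boldsymbol{A}}^{l})_{i,\mathcal{C}_s}$ fluctuate around $\hat{m}^{(l)}_{ts}$ because $\boldsymbol{A}$ is random, and these fluctuations, pushed through the orthogonal $\boldsymbol{\mu}_s$, produce an approximately isotropic Gaussian perturbation of scale $\tau^{(l)}$. Controlling it requires tracking how the one-step fluctuation $\boldsymbol{E}=\tilde{\boldsymbol{A}}-\bar{\boldsymbol{A}}$ propagates through the expansion of $(\bar{\boldsymbol{A}}+\boldsymbol{E})^{l}$; the density assumption $\sum_t m_{\varepsilon_i t}m_{t\varepsilon_j}=\omega(\log^2 n/n)$ is precisely what makes the two-hop graph dense enough that the spectral norm of $\boldsymbol{E}$ is small relative to the spectral gap of $\bar{\boldsymbol{A}}$, so that $\tilde{\boldsymbol{A}}^{l}$ stays close to $\bar{\boldsymbol{A}}^{l}$ and the leading fluctuation becomes a sum of nearly independent edge indicators amenable to a Bernstein/CLT argument. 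I expect the resulting $\tau^{(l)}$ to be governed by the overall spread $\sum_{k_1,k_2}\|\hat{\mathbf{m}}^{(l)}_{k_1}-\hat{\mathbf{m}}^{(l)}_{k_2}\|^2$ of the $l$-hop distributions, with the $\bar{D}/\log n$ prefactor arising from the degree scaling and the high-probability concentration radius; this is exactly what converts the raw distance into the normalized distance defining $F^{(l)}_{tk}$.

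With mean separation and an (approximately) isotropic covariance in hand, each pair of classes reduces to the two-Gaussian Bayes comparison already solved in \cref{theorem:1}, giving $\mathbb{E}_{i\in\mathcal{C}_t}\mathbb{P}[\zeta_i(k)]=\Phi\!\left(\tfrac{\gamma}{2\sigma}\tfrac{F^{(l)}_{tk}}{\varsigma^{(l)}_n}+\tfrac{\sigma}{\gamma}\tfrac{\varsigma^{(l)}_n}{F^{(l)}_{tk}}\ln(\eta_t/\eta_k)\right)$, with the $l$-fold accumulation of approximation error absorbed into $\varsigma^{(l)}_n=1\pm l\,o_n(1)$. The over-smoothing interpretation then follows immediately: as $l\to\infty$ both $\|\hat{\mathbf{m}}^{(l)}_t-\hat{\mathbf{m}}^{(l)}_k\|$ and the normalizing spread vanish at the same geometric rate, so their ratio, and hence $F^{(l)}_{tk}$, stays bounded away from zero even though the absolute feature gaps shrink exponentially in $l$.
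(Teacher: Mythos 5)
Your reduction to a two-Gaussian Bayes comparison and your treatment of the mean are the same as the paper's: the conditional mean of the class-mass vector concentrates to $\hat{\mathbf{m}}_t^{(l)}$ by entrywise concentration of $\left(\boldsymbol{D}^{-1}\boldsymbol{A}\right)^l$ around its expectation (the paper proves this by induction on $l$, using the stronger density assumption only to get concentration of the number of two-step paths, not a spectral-gap argument). The genuine gap is in your covariance step. You attribute the dominant noise to the randomness of $\boldsymbol{A}$ ("topological rather than featural") and propose to extract $\tau^{(l)}$ from a perturbation expansion of $(\bar{\boldsymbol{A}}+\boldsymbol{E})^l$. Under the density assumption, however, those topological fluctuations are precisely the $l\,o_n(1)$ \emph{relative} errors that get absorbed into $\varsigma_n^{(l)}$; they are not the source of the Gaussian in the argument of $\Phi$. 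The noise that matters is the aggregated \emph{feature} noise $\sum_j(\tilde{\boldsymbol{A}}^l)_{ij}\sigma\mathbf{g}_j$, and the entire difficulty of \cref{theorem:4} — the reason it differs from a naive per-node variance computation — is that this noise is almost perfectly correlated across nodes, since $\operatorname{Cov}\bigl(\boldsymbol{G}_i^{(l)},\boldsymbol{G}_j^{(l)}\bigr)=\boldsymbol{S}_i^T\boldsymbol{S}_j$ with $\boldsymbol{S}=(\boldsymbol{D}^{-1}\boldsymbol{A})^l$ is of the same order as the marginal variances. The paper's key move is to split each $\boldsymbol{G}_i^{(l)}$ into a common component $\bar{\mathbf{g}}^{(l)}$ (which shifts all class centers together and drops out of the pairwise comparison) plus an effectively independent component of standard deviation $\frac{\|\boldsymbol{Q}^{(l)}\|_F}{\sqrt{2}n}$, where $Q^{(l)}_{ij}=\|\boldsymbol{S}_i-\boldsymbol{S}_j\|_2$, and then to show $\|\boldsymbol{Q}^{(l)}\|_F^2\approx\frac{n}{c}\sum_{k_1,k_2}\|\hat{\mathbf{m}}_{k_1}^{(l)}-\hat{\mathbf{m}}_{k_2}^{(l)}\|_2^2$. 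That is where the denominator of $F_{tk}^{(l)}$ comes from; your mechanism would not produce it, and you only recover the right functional form by asserting the answer ("I expect $\tau^{(l)}$ to be governed by the overall spread").

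A secondary issue: your closing claim that numerator and normalizer "vanish at the same geometric rate, so their ratio stays bounded away from zero" is stronger than what is true and than what the paper proves. Different class pairs can decay at different rates (governed by different eigenvalues of $\hat{\mathbf{M}}$), which is why the paper only establishes, in \cref{proposition:1} and under non-singularity of $\hat{\mathbf{M}}$, that each $F_{tk}^{(l)}$ stays positive and that their sum is bounded below — not that every individual ratio is bounded away from zero.
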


Different from the previous analysis \citep{csbm_analysis_oversmoothing}, \cref{theorem:4}, which is more precise, is obtained by fully considering the correlations of the aggregated node features. This theorem requires a stronger density assumption, i.e., $\bar{p}_k=\omega\left(\log n/\sqrt{n}\right)$, which is obtained according to $\sum_{t}m_{\varepsilon_i t} m_{t\varepsilon_j}=\omega(\log n^2/n)$. Note that a more general version of \cref{theorem:4} with less assumptions is provided in \cref{appendixtheorem:detailed_4}. 

\cref{theorem:4} reveals that when applying $l$ GC operations, the separability gains $F^{(l)}_{tk}$ is determined by the normalized distance of the $l$-powered neighborhood distributions. Although the distance $||\hat{\mathbf{m}}_{k}^{(l)}-\hat{\mathbf{m}}_{t}^{(l)}||$ tends to converge to $0$ as $l$ goes to infinity, its normalized version can still maintain a large value through the normalization. 
% Note that the error term $lo_n(1)$ is linear increasing with $l$, which can be decrease to $o_n(1)$ when 

\begin{proposition}
    When $\hat{\mathbf{M}}$ is non-singular, the approximation of $F_{tk}^{(l)}$ in \cref{eq:F(l)_approx} is always larger than $0$, and $\sum_{t, k} F_{tk}^{(l)} > \sqrt{c} \bar{D}/ \log n$.
    \label{proposition:1}
\end{proposition}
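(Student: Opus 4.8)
The plan is to first make the approximation $F_{tk}^{(l)}$ of \cref{eq:F(l)_approx} explicit and reduce both assertions to elementary facts about the vector of pairwise distances. Writing $d_{tk} := \|\hat{\mathbf{m}}_t^{(l)}-\hat{\mathbf{m}}_k^{(l)}\|_2$, the definition in \cref{theorem:4} reads
\begin{equation*}
F_{tk}^{(l)} = \frac{\sqrt{c}\,\bar{D}}{\log n}\cdot\frac{d_{tk}}{\sqrt{\sum_{k_1,k_2\in[c]} d_{k_1 k_2}^2}}.
\end{equation*}
Under the balanced, degree-regular hypotheses of \cref{theorem:4} (i.e.\ $\eta_t=\eta_k$ and $\bar{D}=\bar{D}_t$), the neighborhood-distribution matrix $\hat{\mathbf{M}}$ is row-stochastic and $\hat{\mathbf{m}}_k^{(l)}$ is exactly the $k$-th row of $\hat{\mathbf{M}}^l$. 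The purpose of \cref{eq:F(l)_approx} is to record that the $l$-dependent geometric scale carried by the subdominant mode of $\hat{\mathbf{M}}^l$ cancels between the numerator and the denominator of the normalized ratio, so that the approximation is well defined and does not vanish as $l\to\infty$.

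For the positivity claim, I would argue that non-singularity propagates to every power, since $\det(\hat{\mathbf{M}}^l)=(\det\hat{\mathbf{M}})^l\neq 0$. Hence $\hat{\mathbf{M}}^l$ is non-singular, its rows $\hat{\mathbf{m}}_0^{(l)},\dots,\hat{\mathbf{m}}_{c-1}^{(l)}$ are linearly independent, and in particular pairwise distinct. Therefore $d_{tk}>0$ for every $t\neq k$, so the ratio $c\,d_{tk}^2/\sum_{k_1,k_2} d_{k_1 k_2}^2$ defining $F_{tk}^{(l)}$ has a strictly positive numerator and a strictly positive denominator, whence $F_{tk}^{(l)}>0$; the identical reasoning applies to the approximation in \cref{eq:F(l)_approx}.

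For the summation lower bound, I would factor out the common constant $\sqrt{c}\,\bar{D}/\log n$ and observe that the claim $\sum_{t,k} F_{tk}^{(l)} > \sqrt{c}\,\bar{D}/\log n$ is equivalent to
\begin{equation*}
\sum_{t,k\in[c]} d_{tk} > \sqrt{\sum_{k_1,k_2\in[c]} d_{k_1 k_2}^2},
\end{equation*}
that is, the $\ell_1$ norm of the distance vector $(d_{tk})_{t,k}$ strictly exceeds its $\ell_2$ norm. This follows from $\|v\|_1^2 = \|v\|_2^2 + 2\sum_{i<j}|v_i||v_j| \ge \|v\|_2^2$, with strict inequality whenever $v$ has at least two non-zero entries. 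Since $c\ge 2$ and the symmetry $d_{tk}=d_{kt}$ yields at least two positive off-diagonal terms once $\hat{\mathbf{M}}$ (hence $\hat{\mathbf{M}}^l$) is non-singular, the inequality is strict and the bound holds.

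The main obstacle is not these two elementary inequalities but justifying \cref{eq:F(l)_approx} itself, i.e.\ verifying that the normalization cancels the decay $|\lambda_2(\hat{\mathbf{M}})|^{l}$ that drives over-smoothing, leaving a well-defined, $l$-independent limiting ratio. I would handle this through the spectral decomposition $\hat{\mathbf{M}}^l=\sum_i \lambda_i^l\,u_i w_i^{\top}$: the stationary mode ($\lambda_0=1$, $u_0=\mathbf{1}$) contributes identically to every row, hence cancels in each difference $\hat{\mathbf{m}}_t^{(l)}-\hat{\mathbf{m}}_k^{(l)}$, so for large $l$ the differences are governed by the $\lambda_2^l$ term and this common factor divides out of the ratio. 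The one delicate point is ruling out degeneracy of the subdominant eigenspace (eigenvalue ties or complex-conjugate pairs); here the distinct-rows argument from the non-singularity of $\hat{\mathbf{M}}$ again suffices to keep the approximate ratio strictly positive and bounded away from zero.
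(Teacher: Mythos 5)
Your proof is correct and reaches both conclusions, but the route to positivity is genuinely different from the paper's. For $F_{tk}^{(l)}>0$ you argue qualitatively: $\det(\hat{\mathbf{M}}^l)=(\det\hat{\mathbf{M}})^l\neq 0$, so the rows of $\hat{\mathbf{M}}^l$ are linearly independent, hence pairwise distinct, hence every $d_{tk}=||\hat{\mathbf{m}}_t^{(l)}-\hat{\mathbf{m}}_k^{(l)}||_2$ is strictly positive. The paper instead uses compatibility of the Frobenius norm to get the quantitative lower bound $||\hat{\mathbf{m}}_{k}^{(l)}-\hat{\mathbf{m}}_{t}^{(l)}||_2\geq ||\hat{\mathbf{m}}_{k}-\hat{\mathbf{m}}_{t}||_2\,||\hat{\mathbf{M}}^{-1}||_F^{1-l}>0$, which additionally controls how fast the row gaps can shrink with $l$; your version is shorter and cleaner but yields no rate. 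For the summation bound the two arguments are the same inequality in different clothing: with $x_{tk}=d_{tk}/\sqrt{\sum_{k_1,k_2}d_{k_1k_2}^2}\in[0,1]$ the paper writes $\sum_{t,k}x_{tk}\geq\sum_{t,k}x_{tk}^2=1$, while you invoke the $\ell_1$-versus-$\ell_2$ comparison $||v||_1\geq ||v||_2$; you are in fact more careful about strictness (the paper's displayed chain only establishes $\geq$, and your observation that symmetry forces at least two positive off-diagonal entries is exactly what upgrades it to the claimed $>$). Your final paragraph on the spectral cancellation behind \cref{eq:F(l)_approx} is not needed for this proposition: the approximation formula is taken as given here, being established separately in \cref{appendixtheorem:detailed_4}, so that discussion can be dropped without loss.
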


Specifically, \cref{proposition:1} further demonstrates that there exists various regimes where the separability gains never become $0$, i.e., when $\hat{\mathbf{M}}$ is non-singular, $F^{(l)}_{tk}>0$ holds for every $l$. It reveals that the separability of each pair of classes is larger than $\frac{1}{2}$, i.e., the nodes exhibit a certain degree of separability. Besides, $\sum_{t, k} F_{tk}^{(l)} > \sqrt{c} \bar{D}/ \log n$ further depicts the separability of at least one pair of classes never becomes to $\frac{1}{2}$, which indicates that nodes still possess certain separability as $l$ goes to infinity.

However, although the relative differences between the nodes persist, as $l$ increases, their absolute values become exponentially smaller \citep{oversmoothness-relu}. Therefore, due to the \textit{precision limitations} of the floating-point data formats, these differences eventually cannot be captured by the classifier, which leads to the decrement in the classification accuracy.

\begin{figure*}[t]
\vspace{-0.4cm}
  \subfigure[Results with different heterophily patterns.]{
    \includegraphics[trim=0cm 0cm 0cm 0cm, clip, width=0.49\textwidth]{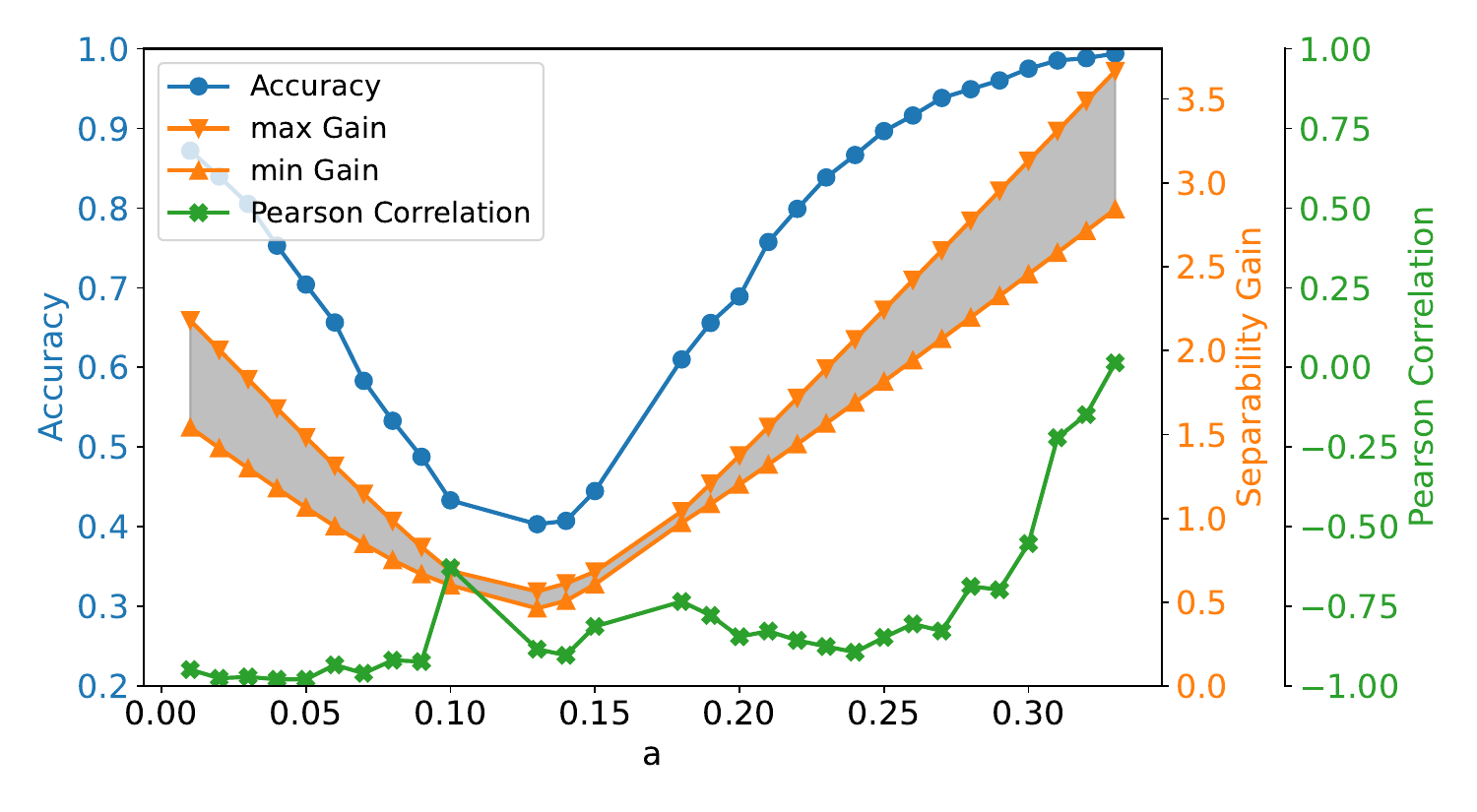}
    \label{fig:heterophily_accs}
  }
  \subfigure[Results with different node degrees.]{
    \includegraphics[trim=0cm 0cm 0cm 0cm, clip, width=0.49\textwidth]{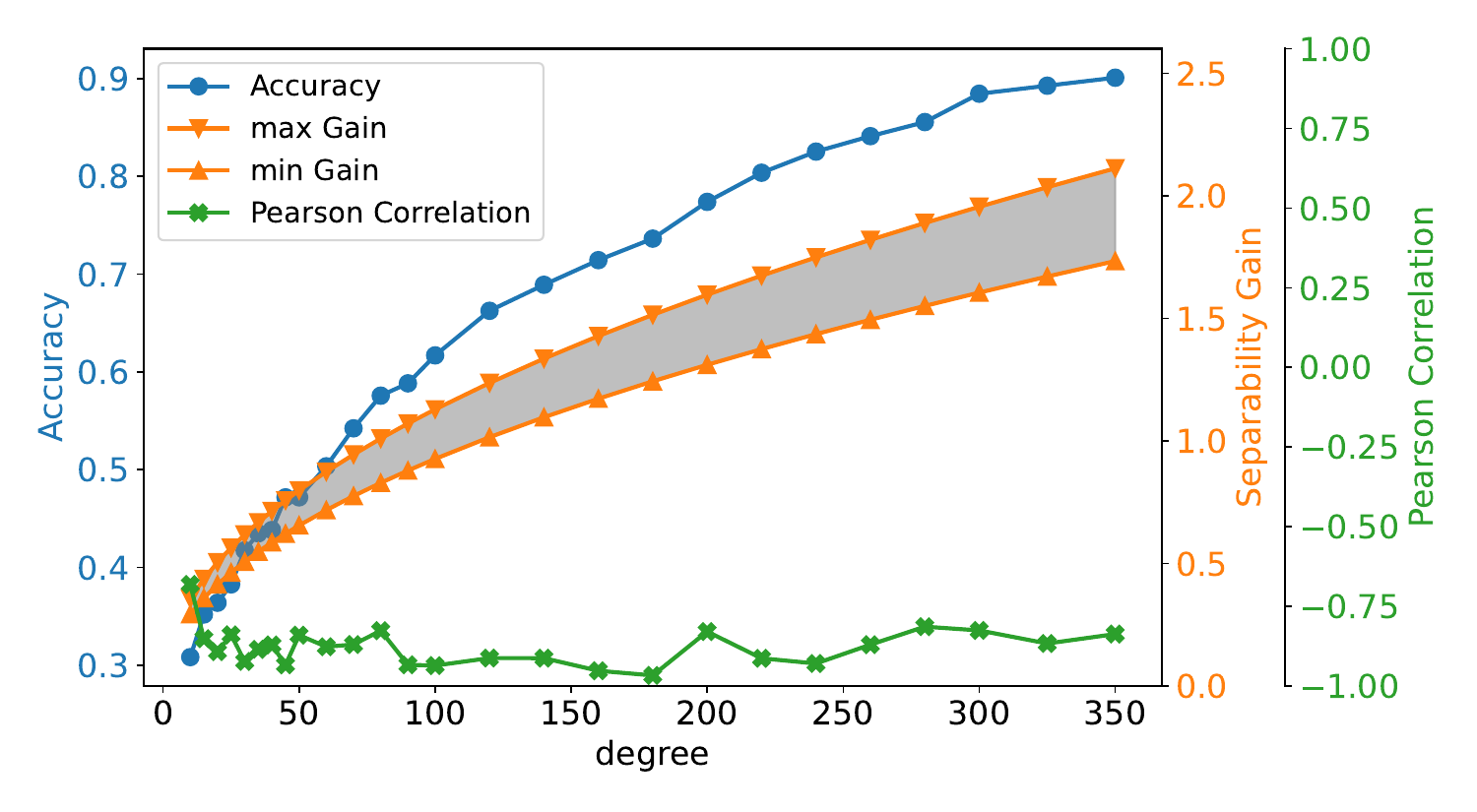}
    \label{fig:degree_accs}
  }
  \vspace{-0.3cm}
  \subfigure[Results with different topological noises.]{
    \includegraphics[trim=0cm 0cm 0cm 0cm, clip, width=0.49\textwidth]{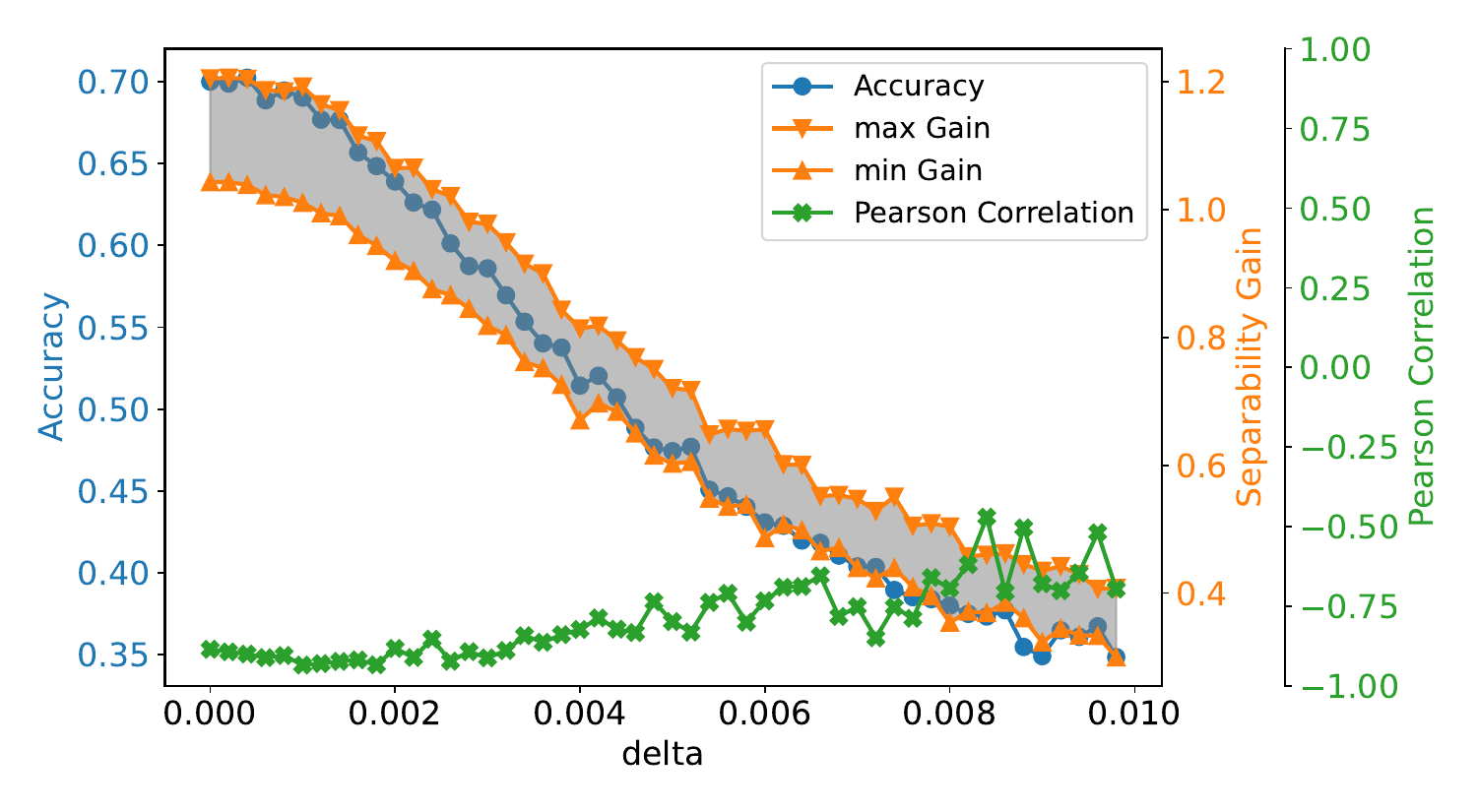}
    \label{fig:noise_accs}
  }
  \subfigure[Results with multiple GC operations.]{
    \includegraphics[trim=0cm 0cm 0cm 0cm, clip, width=0.49\textwidth]{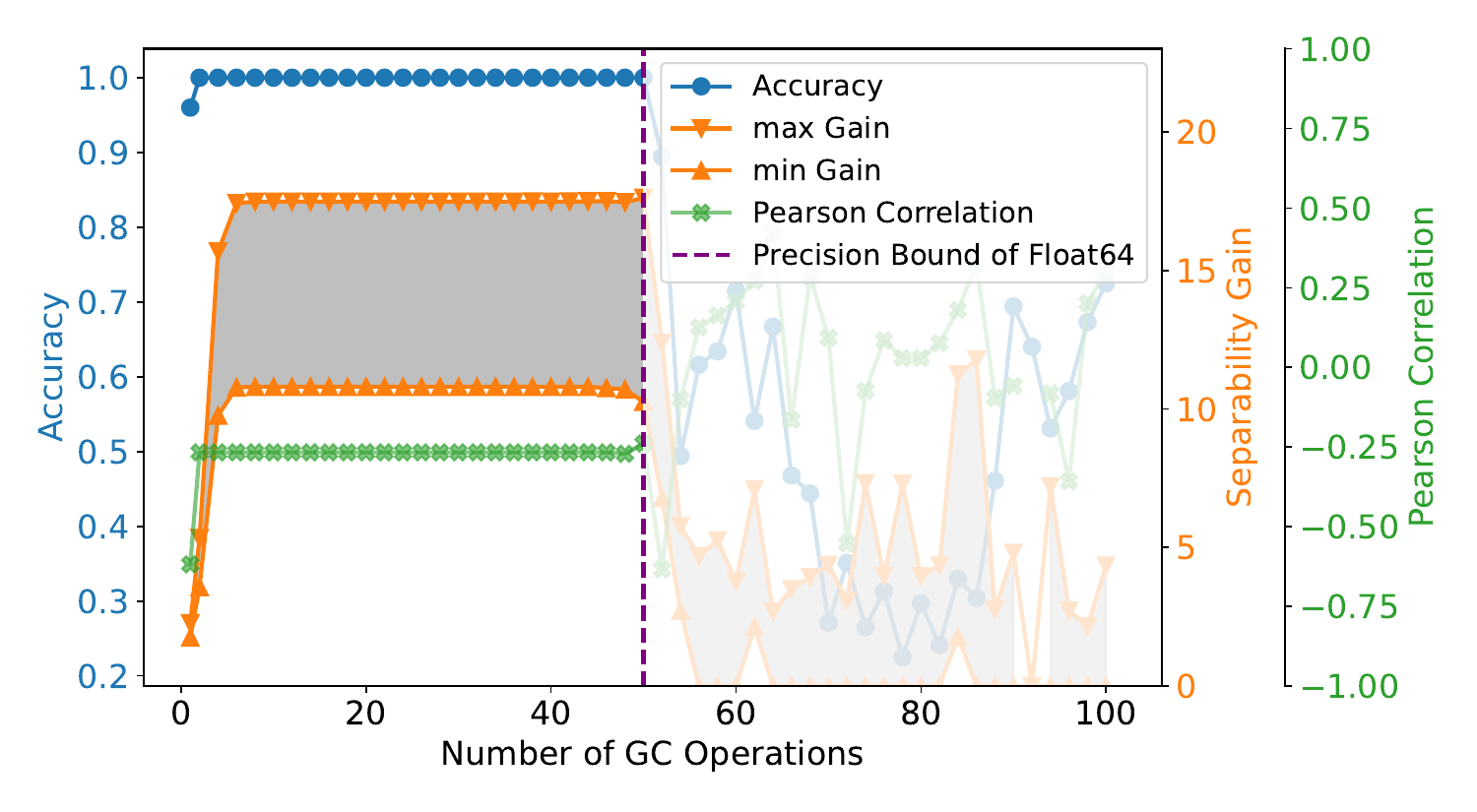}
    \label{fig:layer_accs_fp64}
  }
  \vspace{-0.1cm}
  \caption{Results on synthetic data. (a)-(c) present the results of MLP incorporating one GC operation, while (d) displays the results with multiple GC operations. In each subfigure, the gray region, which is enclosed by the minimum gain and maximum gain curves, represents the area of separability gains. The Pearson Correlation represents the Pearson correlation coefficient between the separability gains and the differences in the confusion matrix with and without the GC operations.}
\vspace{-0.3cm}
\end{figure*}

\section{Experiments}

\subsection{Synthetic Data}
\label{section:synthetic_data}
Here, we verify our theory on the synthetic data generated from our \myM\ model. We set the number of nodes as $n=1000$, the number of classes as $c=5$, and the dimension of node features as $d=5$. For each class $k\in[c]$, its nodes are sampled by a Gaussian distribution with a mean of $\boldsymbol{e}_k$, which is a standard basis vector $[0, . . . , 1, . . . , 0]$ with a $1$ at position $k$, and a standard variance of $0.6$. The averaged node degree for each class is $25$. We employ a family of heterophily patterns, where $\hat{m}_{ij}$ is $a$ when $i=j$, $2a$ when $j=i+1$ and $\frac{1}{3}-a$ otherwise. Besides, results on more heterophily patterns and large-scale graphs can be found in \cref{appendix:more_heterophily_patterns} and \cref{appendix:large_scale_synthetic_graphs}, respectively. A one-layered MLP is employed as the baseline. One or multiple GC operations are incorporated into MLP to obtain the corresponding GCN.

\begin{table*}[t]
\vspace{-0.3cm}
\centering
\caption{Results on the real-world datasets.}
\begin{tabular}{lcccccccc}
    \toprule
                 & Cora         & Chameleon    & Workers     & Actor        & Amazon-ratings        & Squirrel    & Arxiv-year        & Snap-patents         \\ 
    \midrule
    Acc(MLP)     & 77.68        & 54.29        & 76.81        & 35.00        & 50.58       & 36.63       & 36.40       & 31.50       \\ 
    Acc(GCN)     & 90.04        & 70.33        & 78.66        & 30.33        & 48.01       & 60.67       & 42.44       & 34.42       \\
    \midrule
    Max Gain     & 1.7823       & 1.4220       & 1.1508       & 0.0364       & 0.4504      & 0.4101      & 1.6912       & 1.4820       \\
    Min Gain     & 1.4216       & 0.2242       & 1.1508       & 0.0093       & 0.0838      & 0.0846      & 0.4255       & 0.2198       \\
    \midrule
    Type of Het. & Good         & Good         & Good       & Bad          & Mixed       & Mixed       & Mixed       & Mixed        \\
    \bottomrule
\end{tabular}
\label{table:1}
\vspace{-0.3cm}
\end{table*}

\textbf{Impact of Different Heterophily Patterns.}
\cref{fig:heterophily_accs} shows the results with different heterophily patterns as $a\in[0, 0.33]$. As can be observed, the overall change in accuracy aligns with the variation in the region of separability gains, which demonstrates the effectiveness of \cref{theorem:2}. Besides, the Pearson correlation between the separability gains and the differences in the confusion matrix with and without the GC operations are highly negative. It indicates that different heterophily patterns function by altering the pair-wise classification boundaries. These impacts can be captured by the proposed separability gains. 
Specifically, \cref{fig:examples_of_heterophily_patterns} illustrates three typical examples of different heterophily patterns with $a=0.25,0.20,0.18$, respectively. In the good heterophily pattern where $a=0.25$, the separability gains are relatively high for all cases; hence, GCN performs better than MLP in all the two-class subtasks, i.e., all non-diagonal elements of the confusion matrix (GCN-MLP) are negative. On the contrary, in the bad heterophily pattern shown in \cref{fig:bad_heterophily}, the separability gains are all smaller than $\varsigma_n$, leading to the increment of the number of misclassified nodes. Besides, \cref{fig:mixed_heterophily} shows a example of mixed heterophily pattern, wherein the separability between two classes improves for some pairs while deteriorates for others. In summary, although these three examples exhibit similar heterophily patterns, their differences in the distances between the neighborhood distributions lead to distinct impacts on the separability of nodes.

\textbf{Impact of Node Degrees.}
\cref{fig:degree_accs} shows the results with different averaged node degrees when $a=0.20$. The averaged degree is varied within the range of $[5, 350]$ to assess its impact. It is evident that as the averaged degree increases, the separability gains also increase, resulting in an overall improvement of accuracy. During this process, the heterophily pattern with $a=0.2$ transforms from being a bad pattern to becoming a good one, which verifies that the averaged degree possess a complementary effects alongside heterophily patterns, as claimed \cref{theorem:2}.

\textbf{Impact of Neighborhood Inconsistency.}
\cref{fig:noise_accs} shows the results with different extents of neighborhood inconsistency when $a=0.20$, where the standard variance $\delta$ ranges from 0 to 0.01. Note that in practice, the empirical neighborhood distributions may be influenced by sampling, especially when $\delta$ is large. Thus, we utilize the empirical $\hat{\mathbf{M}}$ to calculate the separability gains. As can be observed, as the variance of topological noises increases, the separability gains decrease, leading to a reduction of the overall accuracy, which verifies the results of \cref{theorem:3}.

\textbf{Impact of stacking Multiple GC Operations.}
\cref{fig:layer_accs_fp64} shows the results with stacking multiple GC operations. Here, the precise separability gains in \cref{appendixtheorem:detailed_4} are utilized. As can be observed, as $l$ increases, the variations in separability gains closely mirror the fluctuation in accuracy, which verifies the effectiveness of our analysis. Both of them increase during the initial several GC operations, and then consistently maintain at high values. However, when $l$ approaches $50$, both the separability gains and accuracy experience a sudden decrease. As in a further investigation in \cref{appendix:precision_problem_with_l_GCs}, the aggregated node features encounter a \textit{precision limitation} issue, i.e., the employed float64 data format cannot accurately represent the node features. Specifically, when the relative difference among different node features are smaller than the precision of Float64, the difference cannot be precisely obtained. By increasing the precision of the data format, the point of descent can be postponed. This observation verifies the findings in \cref{theorem:4}.

\subsection{Real World Data}
\label{section:realworld_data}
Here, we verify our theory on eight real-world node classification datasets, i.e., Cora, Chameleon, Workers, Actor, Amazon-ratings, Squirrel, as well as two large-scale datasets, i.e., Arxiv-year and Snap-patents. Their statistics are provided in \cref{appendix:dataset_statistics}. On each dataset, a two-layered MLP is employed as the baseline. We incorporate one GC operation in the second layer of MLP to construct the corresponding GCN. Both MLP and GCN run with the same settings, as provided in \cref{appendix:detailed_settings_of_realworld_data}.
Note that in the real-world datasets, both the node features and topology are highly correlated. Intuitively, this correlation can degrade the variance of the features among the aggregated nodes, like in \cref{appendix:node-features-correlation}. Thus, a lower $\varsigma_n$ is required to understand real-world data results, e.g., $\varsigma_n=0.2$.

As shown in \cref{table:1}, Cora, Chameleon, and Workers exhibit significant separability gains, indicating a good heterophily pattern that enhances the separability among nodes within each class pair. On the contrary, Actor shows minimal separability gains, resulting in reduced distinguishability among nodes, which can be characterized as bad heterophily. Besides, Amazon-ratings, Squirrel, Arxiv-year, and Snap-patents display both substantial and minimal gains, contributing to improved separability in some class pairs while reducing it in others, illustrating mixed heterophily patterns. Therefore, the overall performance of GCN on these two datasets may either surpass or fall behind that of MLP. A more comprehensive discussion can be found in \cref{appendix:visualizations_of_realworld_data}. These results on real-world datasets further validate the effectiveness of our theory.

\section{Conclusion and Future Work}
This paper presents theoretical understandings of the impacts of heterophily for GNNs by incorporating the graph convolution (GC) operations into fully connected networks via the proposed \mymodel\ (\myM). We present three novel insights. Firstly, we show that by applying a GC operation, the separability gains are determined by the Euclidean distance of the neighborhood distributions and the averaged node degree. Secondly, we show that the neighborhood inconsistency has a detrimental impact on separability. Finally, when applying multiple GC operations, we show that the separability gains are positively correlated to the normalized distance of the $l$-powered neighborhood distributions, which indicates that the nodes still possess separability as $l$ goes to infinity in various regimes. 

However, there are still some limitations that need to be improved in future work. First, our theory is constructed based on Gaussian node features. It may be beneficial to extend the analysis to features with more general distributions. Second, our analysis relies on the assumption of the independence among node features and among edges. Exploring the distribution of nodes and edges with complex dependency relationships could potentially provide more valuable insights. These limitations are beyond the scope of our \myM\ and are left as the future work.

% \newpage
\section*{Impact Statement}
This paper presents work whose goal is to advance the field of Machine Learning. There are many potential societal consequences of our work, none which we feel must be specifically highlighted here.

\section*{Acknowledgements}
This work was supported in part by the National Natural Science Foundation of China (Grant No. 62272020, U20B2069 and 62376088), in part by the State Key Laboratory of Software Development Environment under Grant SKLSDE2023ZX-16, and in part by the Fundamental Research Funds for the Central Universities.

% % In the unusual situation where you want a paper to appear in the
% % references without citing it in the main text, use \nocite
% \nocite{langley00}

\bibliography{heterophily_icml24}
\bibliographystyle{icml2024}

%%%%%%%%%%%%%%%%%%%%%%%%%%%%%%%%%%%%%%%%%%%%%%%%%%%%%%%%%%%%%%%%%%%%%%%%%%%%%%%
%%%%%%%%%%%%%%%%%%%%%%%%%%%%%%%%%%%%%%%%%%%%%%%%%%%%%%%%%%%%%%%%%%%%%%%%%%%%%%%
% APPENDIX
%%%%%%%%%%%%%%%%%%%%%%%%%%%%%%%%%%%%%%%%%%%%%%%%%%%%%%%%%%%%%%%%%%%%%%%%%%%%%%%
%%%%%%%%%%%%%%%%%%%%%%%%%%%%%%%%%%%%%%%%%%%%%%%%%%%%%%%%%%%%%%%%%%%%%%%%%%%%%%%
\newpage
\appendix
\onecolumn
% \section{You \emph{can} have an appendix here.}

% You can have as much text here as you want. The main body must be at most $8$ pages long.
% For the final version, one more page can be added.
% If you want, you can use an appendix like this one.  

% The $\mathtt{\backslash onecolumn}$ command above can be kept in place if you prefer a one-column appendix, or can be removed if you prefer a two-column appendix.  Apart from this possible change, the style (font size, spacing, margins, page numbering, etc.) should be kept the same as the main body.
% %%%%%%%%%%%%%%%%%%%%%%%%%%%%%%%%%%%%%%%%%%%%%%%%%%%%%%%%%%%%%%%%%%%%%%%%%%%%%%%
% %%%%%%%%%%%%%%%%%%%%%%%%%%%%%%%%%%%%%%%%%%%%%%%%%%%%%%%%%%%%%%%%%%%%%%%%%%%%%%%
\section{Theoretical Proofs}
\label{appendix:A}

\subsection{Proof of \cref{theorem:1}}
\label{proof:theorem:1}

\subsubsection{Bayesian Classifier}

\begin{appendixlemma}[Bayesian Classifier]
Given $\left(\boldsymbol{X}, \boldsymbol{A}\right) = {\rm HSBM}\left(n, c, \sigma, \left\{\boldsymbol{\mu}_k\right\}, \boldsymbol{\eta}, \mathbf{M}, \left\{\boldsymbol{\Delta}_i\right\}\right)$, the Bayesian optimal classifier over data $\boldsymbol{X}$ is
\begin{equation}
    h^{*}(\boldsymbol{x})=\underset{k\in[c]}{\operatorname{argmax}}\left(\langle\boldsymbol{x}, \boldsymbol{\mu}_k\rangle + \sigma^2\ln{\eta_{k}}\right).
\end{equation}
\label{appendixlemma:GMM-Bayes}
\end{appendixlemma}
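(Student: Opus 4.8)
The plan is to carry out the standard reduction of the Bayes-optimal classifier for a Gaussian mixture with shared isotropic covariance to a linear discriminant. First I would observe that the topological noise $\{\boldsymbol{\Delta}_i\}$ plays no role here: it perturbs only the edge-generating distributions and hence the adjacency structure, whereas the raw features $\boldsymbol{x}$ are, by construction, drawn class-conditionally from $\mathcal{N}(\boldsymbol{x}; \boldsymbol{\mu}_k, \sigma^2\mathbf{I})$ with prior $\eta_k$. Thus the classification problem over $\boldsymbol{X}$ is exactly a $c$-component Gaussian mixture, and the Bayes-optimal rule is $h^{*}(\boldsymbol{x}) = \operatorname{argmax}_{k\in[c]} \mathbb{P}[y=k \mid \boldsymbol{x}]$.

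Next I would apply Bayes' rule to write $\mathbb{P}[y=k \mid \boldsymbol{x}] \propto \mathcal{N}(\boldsymbol{x}; \boldsymbol{\mu}_k, \sigma^2\mathbf{I})\,\eta_k$, where the evidence $\mathbb{P}[\boldsymbol{x}]$ is dropped because it does not depend on $k$ and therefore cannot affect the argmax. Taking logarithms (a monotone transformation that preserves the maximizer) converts the product into the sum $\ln \mathcal{N}(\boldsymbol{x}; \boldsymbol{\mu}_k, \sigma^2\mathbf{I}) + \ln \eta_k$. Substituting the isotropic Gaussian density and expanding $\|\boldsymbol{x}-\boldsymbol{\mu}_k\|^2 = \|\boldsymbol{x}\|^2 - 2\langle\boldsymbol{x},\boldsymbol{\mu}_k\rangle + \|\boldsymbol{\mu}_k\|^2$, I would discard every term that is constant in $k$: the normalizing constant $-\tfrac{d}{2}\ln(2\pi\sigma^2)$ and the term $\|\boldsymbol{x}\|^2/(2\sigma^2)$.

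The one place where the model assumptions enter decisively is the $\|\boldsymbol{\mu}_k\|^2$ term. Here I would invoke the equal-norm hypothesis $\|\boldsymbol{\mu}_k\|_2 = \|\boldsymbol{\mu}_t\|_2$ for all $k,t$, which makes $\|\boldsymbol{\mu}_k\|^2$ independent of $k$ and hence droppable; this is exactly what collapses the would-be quadratic discriminant into a linear one. The orthogonality assumption $\boldsymbol{\mu}_k^T\boldsymbol{\mu}_t = 0$ is not needed for this lemma, being reserved for the separability computation in \cref{theorem:1}. What remains is $\operatorname{argmax}_k \bigl(\langle\boldsymbol{x},\boldsymbol{\mu}_k\rangle/\sigma^2 + \ln \eta_k\bigr)$, and multiplying by the positive constant $\sigma^2$ yields the claimed form $\operatorname{argmax}_k \bigl(\langle\boldsymbol{x},\boldsymbol{\mu}_k\rangle + \sigma^2\ln \eta_k\bigr)$. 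There is no genuine obstacle in this argument, as it is a textbook Gaussian-mixture derivation; the only thing I would be careful about is the bookkeeping of which terms are $k$-independent, and flagging explicitly that the shared covariance $\sigma^2\mathbf{I}$ together with the equal-norm means is what guarantees the linearity of the optimal boundary.
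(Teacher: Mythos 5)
Your proposal is correct and follows essentially the same route as the paper: both reduce the MAP rule to comparing $\eta_k\,\mathbb{P}[\boldsymbol{x}\mid y=k]$ across classes, expand the isotropic Gaussian density, and use the equal-norm assumption $\|\boldsymbol{\mu}_k\|_2=\|\boldsymbol{\mu}_t\|_2$ to cancel the quadratic terms and obtain the linear discriminant (the paper merely verifies the equivalence of maximizing the posterior and maximizing $\varphi_k=\eta_k\,\mathbb{P}[\boldsymbol{x}\mid y=k]$ by an explicit chain of inequalities rather than by dropping the $k$-independent evidence). Your explicit remarks that the topological noise is irrelevant here and that orthogonality of the means is not needed for this lemma are both accurate.
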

\begin{proof}
    The Bayesian optimal classifier, which is based on the criterion of maximum a posteriori probability, is defined as 
    \begin{equation}
    h^{*}(\boldsymbol{x})=\underset{k\in[c]}{\operatorname{argmax}}\left(\mathbb{P}[y=k \mid \mathbf{x}=\boldsymbol{x}]\right).
    \end{equation}
    Then, according to the Bayes' theorem, the posteriori probability is 
    \begin{equation}
        \begin{aligned}
            \mathbb{P}[y=k \mid \mathbf{x}=\boldsymbol{x}]&=\frac{\mathbb{P}[y=k] \mathbb{P}[\boldsymbol{x} \mid y=k]}{\sum_{t\in[c]} \mathbb{P}[y=t] \mathbb{P}[\boldsymbol{x} \mid y=t]} \\
            &=\frac{1}{1+\sum_{t\neq k}\frac{\eta_t\cdot\mathbb{P}[\boldsymbol{x} \mid y=t]}{\eta_k\cdot\mathbb{P}[\boldsymbol{x} \mid y=k]}}.
        \end{aligned}
    \end{equation} 
    Denoting $\varphi_t = \eta_t\cdot\mathbb{P}[\boldsymbol{x} \mid y=t]$, 
    \begin{equation}
        \begin{aligned}
            \frac{\varphi_t}{\varphi_k} 
            &= \frac{\eta_t\cdot\frac{1}{\sqrt{2\pi}\sigma}\exp\left(-\frac{\left(\boldsymbol{x}-\boldsymbol{\mu}_t\right)^T\left(\boldsymbol{x}-\boldsymbol{\mu}_t\right)}{2\sigma^2}\right)}{\eta_k\cdot\frac{1}{\sqrt{2\pi}\sigma}\exp\left(-\frac{\left(\boldsymbol{x}-\boldsymbol{\mu}_k\right)^T\left(\boldsymbol{x}-\boldsymbol{\mu}_k\right)}{2\sigma^2}\right)} \\
            &= \frac{\eta_t}{\eta_k}\exp\left(\frac{\langle\boldsymbol{x}, \boldsymbol{\mu}_t\rangle-\langle\boldsymbol{x}, \boldsymbol{\mu}_k\rangle}{\sigma^2}\right). \\
        \end{aligned}
    \end{equation}
    For any $k_1, k_2\in[c]$ and $k_1 \neq k_2$, denote $\phi = \sum_{t\neq k_1,k_2}\varphi_t$. Then, 
    \begin{equation}
        \begin{aligned}
            & \mathbb{P}[y=k_1 \mid \mathbf{x}=\boldsymbol{x}] \geq \mathbb{P}[y=k_2 \mid \mathbf{x}=\boldsymbol{x}] \\
            \iff
            & \sum_{t\neq k_2}\frac{\varphi_t}{\varphi_{k_2}} \geq \sum_{t\neq k_1}\frac{\varphi_t}{\varphi_{k_1}} \\
            \iff
            & \frac{\phi+\varphi_{k_1}}{\varphi_{k_2}} \geq \frac{\phi+\varphi_{k_2}}{\varphi_{k_1}} \\
            \iff
            & \varphi_{k_1}^2+\phi \varphi_{k_1} \geq \varphi_{k_2}^2+\phi \varphi_{k_2} \\
            \iff
            & \varphi_{k_1} \geq \varphi_{k_2} \\
            \iff
            & \frac{\eta_{k_1}}{\eta_{k_2}}\exp\left(\frac{\langle\boldsymbol{x}, \boldsymbol{\mu}_{k_1}\rangle-\langle\boldsymbol{x}, \boldsymbol{\mu}_{k_2}\rangle}{\sigma^2}\right) \geq 1 \\
            \iff
            & \langle\boldsymbol{x}, \boldsymbol{\mu}_{k_1}\rangle + \sigma^2\ln{\eta_{k_1}} \geq \langle\boldsymbol{x}, \boldsymbol{\mu}_{k_2}\rangle + \sigma^2\ln{\eta_{k_2}}.
        \end{aligned} 
    \end{equation}
\end{proof}
\subsubsection{Proof of the Part One of \cref{theorem:1}}
\begin{proof}
    For any $i\in[n]$, its features are generated from Gaussian distributions and can be represented as 
    \begin{equation}
        \boldsymbol{X}_{i}=\boldsymbol{\mu}_{\varepsilon_{i}}+\sigma \mathbf{g}_{i},
    \end{equation}
    where $\mathbf{g}_{i} \sim \mathcal{N}(\mathbf{0}, I)$ is the random Gaussian noise. Then, 
    \begin{equation}
        \begin{aligned}
            \mathbb{P}\left[\zeta_i(k)\right] 
            &=
            \mathbb{P}\left[\mathbb{P}[y=\varepsilon_i \mid \mathbf{x}=\boldsymbol{X}_{i}] \geq \mathbb{P}[y=k \mid \mathbf{x}=\boldsymbol{X}_{i}]\right] \\
            &=
            \mathbb{P}\left[\left\langle\boldsymbol{X}_{i}, \boldsymbol{\mu}_{\varepsilon_{i}}\right\rangle+\sigma^2\ln{\eta_{\varepsilon_i}} \geq \left\langle\boldsymbol{X}_{i}, \boldsymbol{\mu}_k\right\rangle + \sigma^2\ln{\eta_{k}}\right] \\
            &=
            \mathbb{P}\left[\gamma^{\prime 2} + \gamma^{\prime}\sigma \hat{\boldsymbol{\mu}}_{\varepsilon_{i}}^T \mathbf{g}_{i} + \sigma^2\ln{\eta_{\varepsilon_i}} \geq \gamma^{\prime}\sigma \hat{\boldsymbol{\mu}}_{k}^T \mathbf{g}_{i}+\sigma^2\ln{\eta_{k}}\right]\\
            &=
            \mathbb{P}\left[\hat{\boldsymbol{\mu}}_{k}^T \mathbf{g}_{i}-\hat{\boldsymbol{\mu}}_{\varepsilon_{i}}^T \mathbf{g}_{i} \leq \frac{\gamma^{\prime 2}+\sigma^2\ln{\eta_{\varepsilon_i}}-\sigma^2\ln{\eta_{k}}}{\gamma^{\prime}\sigma}\right],
        \end{aligned} 
    \end{equation}
    where $\hat{\boldsymbol{\mu}}_{k}=\boldsymbol{\mu}_{k}/||\boldsymbol{\mu}_{k}||$.
    Denote $Z_{\varepsilon_{i}} = \hat{\boldsymbol{\mu}}_{\varepsilon_{i}}^T \mathbf{g}_{i}$ and $Z_k = \hat{\boldsymbol{\mu}}_k^T \mathbf{g}_{i}$. Then, ${Z}_{\varepsilon_{i}},Z_k \sim \mathcal{N}(0, 1)$ and $\mathbb{E}\left[{Z}_{\varepsilon_{i}}{Z}_{k}\right]=0$. Therefore, 
    \begin{equation}
        \begin{aligned}
           \mathbb{P}\left[\zeta_i(k)\right]
            &
            =\Phi\left(\frac{\gamma^{\prime 2}+\sigma^2\ln{\eta_{\varepsilon_i}}-\sigma^2\ln{\eta_{k}}}{\sqrt{2}\gamma^{\prime}\sigma}\right) \\ 
            &
            =\Phi\left(\frac{\gamma}{2\sigma} +\frac{\sigma\ln{\eta_{\varepsilon_i}}-\sigma\ln{\eta_{k}}}{\gamma}\right) \\ 
            &
            =\Phi\left(\frac{\gamma}{2\sigma} +\frac{\sigma}{\gamma}\ln{\frac{\eta_{\varepsilon_i}}{\eta_{k}}}\right).
        \end{aligned}
    \end{equation}
    Thus, for each $t, k\in[c]$,
    \begin{equation}
        \mathbb{E}_{i\in\mathcal{C}_t} \mathbb{P}\left[\zeta_i(k)\right]=\Phi\left(\frac{\gamma}{2\sigma} +\frac{\sigma}{\gamma}\ln\left(\frac{\eta_{t}}{\eta_{k}}\right)\right).
    \end{equation}
\end{proof}

\subsubsection{Proof of the Part Two of \cref{theorem:1}}
\begin{proof}
    Denote $\hat{\eta}_t=\frac{\eta_t}{\eta_t+\eta_k}$ and $\hat{\eta_k}=\frac{\eta_k}{\eta_t+\eta_k}$. Then, we have $0<\hat{\eta_k}<\hat{\eta_t}<1$ and $\hat{\eta_k}+\hat{\eta_t}=1$. $S(t,k)$ can be rewrited as 
    \begin{equation}
        \begin{aligned}
            S(t,k) &= \hat{\eta}_t\mathbb{E}_{i\in\mathcal{C}_t} \mathbb{P}\left[\zeta_i(k)\right] +\hat{\eta}_k \mathbb{E}_{i\in\mathcal{C}_k} \mathbb{P}\left[\zeta_i(t)\right] \\
            &=\hat{\eta}_t \Phi\left(\frac{\gamma}{2\sigma} +\frac{\sigma}{\gamma}\ln{\frac{\hat\eta_{t}}{\hat\eta_{k}}}\right) + \hat{\eta}_k \Phi\left(\frac{\gamma}{2\sigma} -\frac{\sigma}{\gamma}\ln{\frac{\hat\eta_{t}}{\hat\eta_{k}}}\right).
        \end{aligned}
    \end{equation}
    Denote $\alpha=\frac{\gamma}{\sigma} > 0$.
    Then, the question is converted to prove the function
    \begin{equation}
        f\left(\alpha, \hat\eta_t\right) = \hat\eta_t\Phi\left(\frac{\alpha}{2} + \frac{1}{\alpha}\ln{\frac{\hat\eta_{t}}{\hat\eta_{k}}}\right)+\hat\eta_k\Phi\left(\frac{\alpha}{2}-\frac{1}{\alpha}\ln{\frac{\hat\eta_{t}}{\hat\eta_{k}}}\right)
    \end{equation}
    is an increasing function with respect to both $\alpha$ and $\hat\eta_t$. W.o.l.g, let $\hat\eta_t \geq \hat\eta_k$.
    
    \textbf{1.} Firstly, we prove that $f\left(\alpha, \hat\eta_t\right)$ is an increasing function of $\alpha$, i.e., if $\alpha_1 > \alpha_2 > 0$, we have $f\left(\alpha_1, \hat\eta_t\right) > f\left(\alpha_2, \hat\eta_t\right)$ for any $\frac{1}{2}<\hat\eta_t<1$.

    When $\alpha_1 > \alpha_2$, we have 
    \begin{equation}
        \Phi\left(\frac{\alpha_1}{2} + \frac{1}{\alpha_1}\ln{\frac{\hat\eta_t}{\hat\eta_k}}\right) > \Phi\left(\frac{\alpha_2}{2} + \frac{1}{\alpha_1}\ln{\frac{\hat\eta_t}{\hat\eta_k}}\right)
    \end{equation}
    and 
    \begin{equation}
        \Phi\left(\frac{\alpha_1}{2} - \frac{1}{\alpha_1}\ln{\frac{\hat\eta_t}{\hat\eta_k}}\right) > \Phi\left(\frac{\alpha_2}{2} - \frac{1}{\alpha_1}\ln{\frac{\hat\eta_t}{\hat\eta_k}}\right).
    \end{equation}
    Thus, 
    \begin{equation}
        f\left(\alpha_1, \hat\eta_t\right) > \hat\eta_t \Phi\left(\frac{\alpha_2}{2} + \frac{1}{\alpha_1}\ln{\frac{\hat\eta_t}{\hat\eta_k}}\right) + \hat\eta_k \Phi\left(\frac{\alpha_2}{2} - \frac{1}{\alpha_1}\ln{\frac{\hat\eta_t}{\hat\eta_k}}\right).
    \end{equation}
    Then, we need to prove that
    \begin{equation}
         \hat\eta_t \Phi\left(\frac{\alpha_2}{2} + \frac{1}{\alpha_1}\ln{\frac{\hat\eta_t}{\hat\eta_k}}\right) + \hat\eta_k \Phi\left(\frac{\alpha_2}{2} - \frac{1}{\alpha_1}\ln{\frac{\hat\eta_t}{\hat\eta_k}}\right) > f\left(\alpha_2, \hat\eta_t\right),
    \end{equation}
    which is equivalent to prove that
    \begin{equation}
        \hat\eta_k \int_{\frac{\alpha_2}{2} - \frac{1}{\alpha_2}\ln{\frac{\hat\eta_t}{\hat\eta_k}}}^{\frac{\alpha_2}{2} - \frac{1}{\alpha_1}\ln{\frac{\hat\eta_t}{\hat\eta_k}}} \phi\left(y\right) dy- \hat\eta_t \int_{\frac{\alpha_2}{2} + \frac{1}{\alpha_1}\ln{\frac{\hat\eta_t}{\hat\eta_k}}}^{\frac{\alpha_2}{2} + \frac{1}{\alpha_2}\ln{\frac{\hat\eta_t}{\hat\eta_k}}} \phi\left(y\right) dy > 0,
        \label{appendixeq:26}
    \end{equation}
    where $\phi\left(y\right)=\frac{1}{\sqrt{2\pi}}\exp\left(-\frac{y^2}{2}\right)$.
    Denote $z=y+\frac{1}{\alpha_2}\ln{\frac{\hat\eta_t}{\hat\eta_k}} + \frac{1}{\alpha_1}\ln{\frac{\hat\eta_t}{\hat\eta_k}}$. Then, \cref{appendixeq:26} is equivalent to 
    \begin{equation}
        \begin{aligned}
            &\hat\eta_k \int_{\frac{\alpha_2}{2} + \frac{1}{\alpha_1}\ln{\frac{\hat\eta_t}{\hat\eta_k}}}^{\frac{\alpha_2}{2} + \frac{1}{\alpha_2}\ln{\frac{\hat\eta_t}{\hat\eta_k}}} \phi\left(z-\frac{1}{\alpha_2}\ln{\frac{\hat\eta_t}{\hat\eta_k}}-\frac{1}{\alpha_1}\ln{\frac{\hat\eta_t}{\hat\eta_k}}\right) dz - \hat\eta_t \int_{\frac{\alpha_2}{2} + \frac{1}{\alpha_1}\ln{\frac{\hat\eta_t}{\hat\eta_k}}}^{\frac{\alpha_2}{2} + \frac{1}{\alpha_2}\ln{\frac{\hat\eta_t}{\hat\eta_k}}} \phi\left(y\right) dy > 0 \\
            \iff &
            \int_{\frac{\alpha_2}{2} + \frac{1}{\alpha_1}\ln{\frac{\hat\eta_t}{\hat\eta_k}}}^{\frac{\alpha_2}{2} + \frac{1}{\alpha_2}\ln{\frac{\hat\eta_t}{\hat\eta_k}}} \left[\hat\eta_k \phi\left(y-\frac{1}{\alpha_2}\ln{\frac{\hat\eta_t}{\hat\eta_k}}-\frac{1}{\alpha_1}\ln{\frac{\hat\eta_t}{\hat\eta_k}}\right) - \hat\eta_t \phi\left(y\right) \right] dy > 0. 
        \end{aligned}
    \end{equation}
    Denote
    \begin{equation}
        g_1\left(y\right) = \hat\eta_k \phi\left(y-\frac{1}{\alpha_2}\ln{\frac{\hat\eta_t}{\hat\eta_k}}-\frac{1}{\alpha_1}\ln{\frac{\hat\eta_t}{\hat\eta_k}}\right) - \hat\eta_t \phi\left(y\right).
    \end{equation}
    Then, we will prove that $g_1\left(y\right)>0$ for all $y\in\left[\frac{\alpha_2}{2} + \frac{1}{\alpha_2}\ln{\frac{\hat\eta_t}{\hat\eta_k}}, \frac{\alpha_2}{2} + \frac{1}{\alpha_1}\ln{\frac{\hat\eta_t}{\hat\eta_k}}\right]$.
    \begin{equation}
        \begin{aligned}
            & g_1\left(y\right)>0 \\
            \iff &
            \hat\eta_k \phi\left(y-\frac{1}{\alpha_2}\ln{\frac{\hat\eta_t}{\hat\eta_k}}-\frac{1}{\alpha_1}\ln{\frac{\hat\eta_t}{\hat\eta_k}}\right) > \hat\eta_t \phi\left(y\right) \\
            \iff & 
            \ln{\hat\eta_k}-\frac{y^2}{2} - \frac{1}{2}\left(\frac{1}{\alpha_1}\ln{\frac{\hat\eta_t}{\hat\eta_k}} + \frac{1}{\alpha_2}\ln{\frac{\hat\eta_t}{\hat\eta_k}}\right)^2 + \left(\frac{1}{\alpha_1}\ln{\frac{\hat\eta_t}{\hat\eta_k}} + \frac{1}{\alpha_2}\ln{\frac{\hat\eta_t}{\hat\eta_k}}\right)y > \ln{\hat\eta_t}-\frac{y^2}{2} \\
            \iff &
            \left(\frac{1}{\alpha_1}\ln{\frac{\hat\eta_t}{\hat\eta_k}} + \frac{1}{\alpha_2}\ln{\frac{\hat\eta_t}{\hat\eta_k}}\right) y - \frac{1}{2}\left(\frac{1}{\alpha_1}\ln{\frac{\hat\eta_t}{\hat\eta_k}} + \frac{1}{\alpha_2}\ln{\frac{\hat\eta_t}{\hat\eta_k}}\right)^2 - \ln{\frac{\hat\eta_t}{\hat\eta_k}} > 0
        \end{aligned}
    \end{equation}
    Since $y\in\left[\frac{\alpha_2}{2} + \frac{1}{\alpha_2}\ln{\frac{\hat\eta_t}{\hat\eta_k}}, \frac{\alpha_2}{2} + \frac{1}{\alpha_1}\ln{\frac{\hat\eta_t}{\hat\eta_k}}\right]$, we have 
    \begin{equation}
        \begin{aligned}
            &\left(\frac{1}{\alpha_1}\ln{\frac{\hat\eta_t}{\hat\eta_k}} + \frac{1}{\alpha_2}\ln{\frac{\hat\eta_t}{\hat\eta_k}}\right) y - \frac{1}{2}\left(\frac{1}{\alpha_1}\ln{\frac{\hat\eta_t}{\hat\eta_k}} + \frac{1}{\alpha_2}\ln{\frac{\hat\eta_t}{\hat\eta_k}}\right)^2 - \ln{\frac{\hat\eta_t}{\hat\eta_k}} \\
            \geq&
            \left(\frac{\alpha_2}{2\alpha_1}-\frac{1}{2}\right)\ln{\frac{\hat\eta_t}{\hat\eta_k}} + \left(\frac{1}{2\alpha_1^2}-\frac{1}{2\alpha_2^2}\right)\left(\ln{\frac{\hat\eta_t}{\hat\eta_k}}\right)^2\\
            >& 0.
        \end{aligned}
    \end{equation}
    Thus, we have $g_1\left(y\right)>0$ and 
    \begin{equation}
     f\left(\alpha_1, \hat\eta_t\right) > \hat\eta_t \Phi\left(\frac{\alpha_2}{2} + \frac{1}{\alpha_1}\ln{\frac{\hat\eta_t}{\hat\eta_k}}\right) + \hat\eta_k \Phi\left(\frac{\alpha_2}{2} - \frac{1}{\alpha_1}\ln{\frac{\hat\eta_t}{\hat\eta_k}}\right) > f\left(\alpha_2, \hat\eta_t\right),
    \end{equation}
    which means $f\left(\alpha, \hat\eta_t\right)$ is an increasing function with respect to $\alpha$.

    \textbf{2.} Secondly, we prove that $f\left(\alpha, \hat\eta_t\right)$ is also an increasing function with respect to $\hat\eta_t$.
    Denote
    \begin{equation}
        f_1\left(\alpha, \hat\eta_t\right) = \hat\eta_t\int_{\frac{\alpha}{2}}^{\frac{\alpha}{2}+\frac{1}{\alpha}\ln{\frac{\hat\eta_t}{\hat\eta_k}}} \phi\left(y\right) dy - \hat\eta_k\int_{\frac{\alpha}{2}-\frac{1}{\alpha}\ln{\frac{\hat\eta_t}{\hat\eta_k}}}^{\frac{\alpha}{2}}\phi\left(y\right) dy 
    \end{equation}
    Then, 
    \begin{equation}
        \begin{aligned}
            f\left(\alpha, \hat\eta_t\right) & = \hat\eta_t\Phi\left(\frac{\alpha}{2} + \frac{1}{\alpha}\ln{\frac{\hat\eta_{t}}{\hat\eta_{k}}}\right)+\hat\eta_k\Phi\left(\frac{\alpha}{2}-\frac{1}{\alpha}\ln{\frac{\hat\eta_{t}}{\hat\eta_{k}}}\right) \\
            & = \Phi\left(\frac{\alpha}{2}\right) + f_1\left(\alpha, \hat\eta_t\right).
        \end{aligned}
    \end{equation}
    Thus, the question is converted to prove that $f_1\left(\alpha, \hat\eta_t\right)$ is an increasing function with respect to $\hat\eta_t$. Denoting $z=\alpha-y$ and $g_2\left(\hat\eta_t, y\right)=\hat\eta_t\phi\left(y\right) -  \hat\eta_k\phi\left(\alpha-y\right)$, we have 
    \begin{equation}
        \begin{aligned}
            f_1\left(\alpha, \hat\eta_t\right) &= \hat\eta_t\int_{\frac{\alpha}{2}}^{\frac{\alpha}{2}+\frac{1}{\alpha}\ln{\frac{\hat\eta_t}{\hat\eta_k}}} \phi\left(y\right) dy - \hat\eta_k \int_{\frac{\alpha}{2}}^{\frac{\alpha}{2}+\frac{1}{\alpha}\ln{\frac{\hat\eta_t}{\hat\eta_k}}} \phi\left(\alpha-z\right) dz \\
            &=\int_{\frac{\alpha}{2}}^{\frac{\alpha}{2}+\frac{1}{\alpha}\ln{\frac{\hat\eta_t}{\hat\eta_k}}} g_2\left(\hat\eta_t, y\right) dy.
        \end{aligned}
    \end{equation}
    
    Firstly, we prove that $g_2\left(\hat\eta_t, y\right)\geq 0$.
    \begin{equation}
        \begin{aligned}
            & g_2\left(\hat\eta_t, y\right)\geq 0 \\
            \iff &
            \hat\eta_t\phi\left(y\right) - \hat\eta_k\phi\left(\alpha-y\right) \geq 0 \\
            \iff &
            \ln{\hat\eta_t} - \frac{y^2}{2} \geq \ln{\hat\eta_k} - \frac{\left(\alpha-y\right)^2}{2} \\
            \iff &
            \alpha y -\frac{\alpha^2}{2}-\ln{\frac{\hat\eta_t}{\hat\eta_k}}\leq 0
        \end{aligned}
    \end{equation}
    Since for $y\in\left[\frac{\alpha}{2}, \frac{\alpha}{2}+\frac{1}{\alpha}\ln{\frac{\hat\eta_t}{\hat\eta_k}}\right]$,
    \begin{equation}
        \begin{aligned}
            \alpha y -\frac{\alpha^2}{2}-\ln{\frac{\hat\eta_t}{\hat\eta_k}} \leq \alpha (\frac{\alpha}{2}+\frac{1}{\alpha}\ln{\frac{\hat\eta_t}{\hat\eta_k}}) -\frac{\alpha^2}{2}-\ln{\frac{\hat\eta_t}{\hat\eta_k}} = 0,
        \end{aligned}
    \end{equation}
    we have $g_2\left(\hat\eta_t\right)\geq 0$.
    
    Secondly, we prove that $g_2\left(\hat\eta_t^{\prime}\right) > g_2\left(\hat\eta_t\right)$, if $\hat\eta_t^{\prime} > \hat\eta_t$.
    \begin{equation}
        \begin{aligned}
        & g_2\left(\hat\eta_t^{\prime}\right) > g_2\left(\hat\eta_t\right) \\
        \iff & 
         \hat\eta_t^{\prime} \phi\left(y\right) - \hat\eta_k^{\prime} \phi\left(\alpha-y\right) >  \hat\eta_t\phi\left(y\right) - \hat\eta_k\phi\left(\alpha-y\right)\\
         \iff &
         \left(\hat\eta_t^{\prime} - \hat\eta_t\right) \phi\left(y\right) > \left(\hat\eta_k^{\prime} - \hat\eta_k\right) \phi\left(\alpha-y\right)
        \end{aligned}
    \end{equation}
    Since $\hat\eta_t^{\prime} > \hat\eta_t$, we have $\hat\eta_t^{\prime} - \hat\eta_t>0$ and $\hat\eta_k^{\prime} - \hat\eta_k < 0$. Thus, $g_2\left(\hat\eta_t^{\prime}\right) > g_2\left(\hat\eta_t\right)$ is obtained.
    
    Finally, we go back to the monotonicity of $f_1\left(\hat\eta_t\right)$.
    \begin{equation}
        \begin{aligned}
        f_1\left(\hat\eta_t^{\prime}\right) - f_1\left(\hat\eta_t\right) & = \int_{\frac{\alpha}{2}}^{\frac{\alpha}{2}+\frac{1}{\alpha}\ln{\frac{\hat\eta_t^{\prime}}{\hat\eta_k^{\prime}}}} g_2\left(\hat\eta_t^{\prime}\right) - \int_{\frac{\alpha}{2}}^{\frac{\alpha}{2}+\frac{1}{\alpha}\ln{\frac{\hat\eta_t}{\hat\eta_k}}} g_2\left(\hat\eta_t\right) dy \\
        & \geq \int_{\frac{\alpha}{2}}^{\frac{\alpha}{2}+\frac{1}{\alpha}\ln{\frac{\hat\eta_t}{\hat\eta_k}}} g_2\left(\hat\eta_t^{\prime}\right) - \int_{\frac{\alpha}{2}}^{\frac{\alpha}{2}+\frac{1}{\alpha}\ln{\frac{\hat\eta_t}{\hat\eta_k}}} g_2\left(\hat\eta_t\right) dy \\
        & \geq \int_{\frac{\alpha}{2}}^{\frac{\alpha}{2}+\frac{1}{\alpha}\ln{\frac{\hat\eta_t}{\hat\eta_k}}} \left[g_2\left(\hat\eta_t^{\prime}\right) - g_2\left(\hat\eta_t\right)\right] dy \\
        & > 0
        \end{aligned}
    \end{equation}
    Thus, $f_1\left(\hat\eta_t^{\prime}\right)$ is an increasing function with respect to $\hat\eta_t$.
\end{proof}

\subsection{Proof of \cref{theorem:2}}
\subsubsection{Concentration properties}
The following lemmas are derived by utilizing the Chernoff bound \citep{HD-Probability} similarly. These results indicate that, as the number of nodes goes to infinity, the number of nodes within each class, node degrees, and neighbors in each class tend to concentrate around their respective means.

\begin{appendixlemma}[Concentration of the Number of Nodes within Each Class]
    For each class $k\in[c]$, with probability at least $1-1/{\rm Ploy}(n)$, we have 
    \begin{equation}
        \left|\mathcal{C}_{k}\right|=\eta_k n \pm O(\sqrt{n \log n}).
    \end{equation}
\end{appendixlemma}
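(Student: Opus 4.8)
The plan is to recognize $|\mathcal{C}_k|$ as a sum of independent indicators and then invoke a standard concentration estimate, exactly as the sentence preceding the lemma suggests (``utilizing the Chernoff bound''). Since each node $i\in[n]$ is independently assigned to class $k$ with probability $\eta_k$, I would write
\begin{equation}
    |\mathcal{C}_k| = \sum_{i\in[n]} \mathbbm{1}[\varepsilon_i = k],
\end{equation}
which is a sum of $n$ i.i.d.\ Bernoulli$(\eta_k)$ random variables, so $|\mathcal{C}_k|\sim\mathrm{Binomial}(n,\eta_k)$ with $\mathbb{E}[|\mathcal{C}_k|]=\eta_k n$. This reduction is the only modelling input needed; everything after is a tail bound.

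Next I would apply a two-sided Chernoff-type bound. Because each indicator lies in $[0,1]$, Hoeffding's inequality gives the clean form
\begin{equation}
    \mathbb{P}\big[\,\big||\mathcal{C}_k|-\eta_k n\big|\geq t\,\big] \leq 2\exp\!\left(-\frac{2t^2}{n}\right).
\end{equation}
The key step is to set the deviation at the $\sqrt{n\log n}$ scale: taking $t = C\sqrt{n\log n}$ for a constant $C>0$ produces a tail probability of $2\exp(-2C^2\log n)=2n^{-2C^2}$, which falls below any fixed inverse polynomial once $C$ is chosen large enough. This yields $\big||\mathcal{C}_k|-\eta_k n\big|=O(\sqrt{n\log n})$ with probability at least $1-1/\mathrm{Poly}(n)$, as claimed. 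A multiplicative Chernoff bound would work equally well and would make the implicit dependence of the constant on $\eta_k$ explicit (through a factor $C^2/(3\eta_k)$ in the exponent); I prefer the Hoeffding form because it avoids the $\epsilon<1$ regime restriction.

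If the statement is meant to hold simultaneously for all $c$ classes, I would close with a union bound over $k\in[c]$; since $c$ is a fixed constant this only multiplies the failure probability by $c$ and leaves the $1/\mathrm{Poly}(n)$ guarantee intact. I do not expect a genuine obstacle here: this is a textbook concentration estimate, and the single thing to check carefully is that the chosen deviation scale $\sqrt{n\log n}$ drives the exponential tail below the desired inverse polynomial, which it does for any sufficiently large constant $C$ absorbed into the $O(\cdot)$ notation. The same template will then be reused verbatim for the companion lemmas on concentration of node degrees and of per-class neighbor counts.
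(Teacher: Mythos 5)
Your proposal is correct and follows essentially the same route as the paper: both write $|\mathcal{C}_k|$ as a sum of $n$ independent Bernoulli$(\eta_k)$ indicators and apply a standard Chernoff-type tail bound at the deviation scale $\sqrt{n\log n}$ to obtain the $1-1/\mathrm{Poly}(n)$ guarantee (the paper uses the multiplicative Chernoff form with $\delta=\sqrt{C_2\log n/n}$, while you use the additive Hoeffding form, a purely cosmetic difference here). The union bound over the constantly many classes is a sensible finishing touch that the paper leaves implicit.
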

\begin{proof}
    In \myM, $\varepsilon_{1}, \ldots, \varepsilon_{n}$ are random variables independently sampled from the set $[c]$, with probability of $\boldsymbol{\eta}$. Thus, $\mathbbm{1}_{\varepsilon_{i}=k}$ are Bernoulli random variables with parameters $\eta_k$. Then, $\left|\mathcal{C}_{k}\right|=\sum_{i=1}^{n} \mathbbm{1}_{\varepsilon_{i}=k}$ is the sum of independent Bernoulli random variables with the mean of $\mathbb{E}\left[|\mathcal{C}_k|\right] = \eta_k n$. By applying the Chernoff bound \citep{HD-Probability},
    \begin{equation}
        \mathbb{P} \left\{|\mathcal{C}_k-\eta_k n| \geq \delta\eta_k n\right\} \leq 2\exp\left(-C_1\eta_k n\delta^2\right),
    \end{equation}
    for some $C_1 > 0$. Let $\delta = \sqrt{\frac{C_2 \log n}{n}}$, where $C_2 > 0$ is a large constant. Then, we have with a probability at least $1 - 2n^{-C_1^{\prime}}$,
    \begin{equation}
        \left|\mathcal{C}_{k}\right|=\eta_k n \pm C_2^{\prime}\sqrt{n \log n},
    \end{equation}
    where $C_1^{\prime} >0$ and $C_2^{\prime} > 0$.
\end{proof}

\begin{appendixlemma}[Concentration of the Node Degrees]
    For each node $i\in[n]$, its averaged degree is defined as 
    \begin{equation}
            \bar{D}_{t}\triangleq \mathbb{E}\left[{D}_{ii}|\varepsilon_i=t\right] = n\bar{p}_{t},
    \end{equation}
    where $\bar{p}_{\varepsilon_i}=\sum_{k} m_{\varepsilon_ik}\eta_k$ represents the averaged edge connection probability. Then, with probability at least $1-1/{\rm Ploy}(n)$, we have
    \begin{equation}
        {D}_{i i}=n\bar{p}_{\varepsilon_i}\left(1 \pm O\left(\frac{1}{\sqrt{\log n}}\right)\right)=\bar{D}_{\varepsilon_i}\left(1 \pm O\left(\frac{1}{\sqrt{\log n}}\right)\right)
    \end{equation}
    and
    \begin{equation}
        \frac{1}{{D}_{i i}}=\frac{1}{n\bar{p}_{\varepsilon_i}}\left(1 \pm O\left(\frac{1}{\sqrt{\log n}}\right)\right)=\frac{1}{\bar{D}_{\varepsilon_i}}\left(1 \pm O\left(\frac{1}{\sqrt{\log n}}\right)\right).
    \end{equation}
\end{appendixlemma}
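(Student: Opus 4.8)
The plan is to mirror the Chernoff-bound argument used for the concentration of class sizes in the preceding lemma. First I would fix a node $i$ and condition on $\varepsilon_i=t$. Marginalizing over the independent class labels $\varepsilon_j$ of the remaining nodes, each edge indicator satisfies $\mathbb{E}[a_{ij}\mid\varepsilon_i=t]=\sum_{k}\eta_k m_{tk}=\bar p_t$, and the $a_{ij}$ are mutually independent because both the labels $\varepsilon_j$ and the Bernoulli edge draws are independent across $j$. Hence $D_{ii}=\sum_j a_{ij}$ is a sum of independent Bernoulli variables with
\begin{equation}
\mathbb{E}[D_{ii}\mid\varepsilon_i=t]=n\bar p_t=\bar D_t,
\end{equation}
which already yields the claimed formula for $\bar D_t$; the self-loop contributes only an additive $O(1)$ that is negligible against the mean.

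Next I would apply the multiplicative Chernoff bound, obtaining for some absolute constant $C_1>0$
\begin{equation}
\mathbb{P}\left\{\,|D_{ii}-\bar D_t|\geq \delta\,\bar D_t \,\big|\, \varepsilon_i=t\right\}\leq 2\exp\left(-C_1\,\bar D_t\,\delta^2\right).
\end{equation}
The quantitative heart of the proof is the choice of the deviation $\delta=\sqrt{C_2\log n/\bar D_t}$ for a large constant $C_2$, which makes the right-hand side $2n^{-C_1 C_2}=1/{\rm Ploy}(n)$ with a polynomial degree tunable through $C_2$. Here \cref{assumption:2} controls the error order: since $\bar p_t=\omega(\log^2 n/n)$ we have $\bar D_t=\omega(\log^2 n)$, so
\begin{equation}
\delta=\sqrt{\frac{C_2\log n}{\bar D_t}}=O\left(\frac{1}{\sqrt{\log n}}\right),
\end{equation}
matching the stated error term and giving $D_{ii}=\bar D_t\bigl(1\pm O(1/\sqrt{\log n})\bigr)$.

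For the reciprocal statement I would work on the high-probability event just established, where $D_{ii}=\bar D_t(1\pm\delta)$ with $\delta=O(1/\sqrt{\log n})=o(1)$; a first-order expansion then gives $1/D_{ii}=\bar D_t^{-1}(1\pm\delta)^{-1}=\bar D_t^{-1}\bigl(1\pm O(1/\sqrt{\log n})\bigr)$, which is the second display. Finally, I would take a union bound over all $n$ nodes: since each node's failure probability is $1/{\rm Ploy}(n)$ with degree adjustable via $C_2$, choosing $C_2$ large enough absorbs the extra factor of $n$, so all degrees concentrate simultaneously with probability at least $1-1/{\rm Ploy}(n)$.

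The argument is largely routine, so the only genuine care lies in two places. The first is the marginalization that reduces $D_{ii}$ to a clean sum of independent $\mathrm{Bernoulli}(\bar p_t)$ variables, which must be justified before Chernoff can be invoked. The second, and the real crux, is verifying that \cref{assumption:2} is simultaneously strong enough to push the Chernoff deviation down to $O(1/\sqrt{\log n})$ and to keep the tail at $1/{\rm Ploy}(n)$; these two demands constrain $\delta$ in opposite directions, and the density condition $\bar p_t=\omega(\log^2 n/n)$ is precisely what reconciles them.
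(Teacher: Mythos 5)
Your proposal is correct and follows essentially the same route as the paper: write $D_{ii}$ as a sum of independent Bernoulli indicators with mean $n\bar p_{\varepsilon_i}$, apply the multiplicative Chernoff bound, and use \cref{assumption:2} (i.e.\ $\bar D_t=\omega(\log^2 n)$) to make the relative deviation $O(1/\sqrt{\log n})$ while keeping the tail at $1/{\rm Ploy}(n)$. Your additional care about marginalizing over the labels, the union bound over nodes, and the expansion for $1/D_{ii}$ only makes explicit steps the paper leaves implicit.
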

\begin{proof}
    For each node $i\in[n]$, its degree $D_{ii}=\sum_{j\in[n]}a_{ij}$ is a sum of $n$ independent Bernoulli random variables, with the mean of $\mathbb{E}\left[{D}_{ii}\right]=n\bar{p}_{\varepsilon_i}$. Then, by applying the Chernoff bound \citep{HD-Probability}, for $\delta\in\left(0, 1\right]$ we have
    \begin{equation}
        \mathbb{P}\left\{|{D}_{ii}-n\bar{p}_{\varepsilon_i}|\geq \delta n\bar{p}_{\varepsilon_i}\right\} \leq 2\exp\left(-C_1 n\bar{p}_{\varepsilon_i}\delta^2\right),
    \end{equation}
    For some $C_1 > 0$. Note that according to Assumption 1, $d_{\varepsilon_i}=\omega\left(\log^2n/n\right)$. Then, denoting $\delta=O\left(\frac{1}{\sqrt{\log n}}\right)$, with a probability at least $1 - 2n^{-C_1^{\prime}}$, we have
    \begin{equation}
        {D}_{ii} = n\bar{p}_{\varepsilon_i}\left(1\pm O\left(\frac{1}{\sqrt{\log n}}\right)\right)
    \end{equation}
    and 
    \begin{equation}
        \frac{1}{{D}_{i i}}=\frac{1}{n\bar{p}_{\varepsilon_i}}\left(1 \pm O\left(\frac{1}{\sqrt{\log n}}\right)\right), 
    \end{equation}
    where $C_1>0$.
\end{proof}

\begin{appendixlemma}[Concentration of the Number of Neighbors in Each Class]
    For each node $i\in[n]$ and each class $k\in[c]$, with probability at least $1-1/{\rm Ploy}(n)$, we have
    \begin{equation}
        \left|\mathcal{C}_{k} \cap \Gamma_{i}\right|= {D}_{i i} \cdot \hat{m}_{\varepsilon_ik} \left(1 \pm O\left(\frac{1}{\sqrt{\log n}}\right)\right).
    \end{equation}
\end{appendixlemma}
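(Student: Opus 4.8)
The plan is to follow the same Chernoff-bound template used in the two preceding lemmas, but now to condition on the class assignments before invoking the bound on the edges. First I would write $|\mathcal{C}_k \cap \Gamma_i| = \sum_{j \in \mathcal{C}_k} a_{ij}$ and condition on the full vector of class labels $(\varepsilon_1, \ldots, \varepsilon_n)$, equivalently on the sets $\mathcal{C}_0, \ldots, \mathcal{C}_{c-1}$. Under the $\{\boldsymbol{0}\}$ topological noise of \cref{theorem:2}, the incoming edges $a_{ij}$ for $j \in \mathcal{C}_k$ are then independent Bernoulli variables with parameter $m_{\varepsilon_i k}$, so the conditional sum is a sum of $|\mathcal{C}_k|$ (up to the harmless self-term at $j=i$) independent indicators with conditional mean $|\mathcal{C}_k|\, m_{\varepsilon_i k}$.

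Next I would apply the Chernoff bound exactly as in the node-degree lemma, obtaining
\[
\mathbb{P}\!\left\{\, \big|\,|\mathcal{C}_k \cap \Gamma_i| - |\mathcal{C}_k|\, m_{\varepsilon_i k}\,\big| \geq \delta\, |\mathcal{C}_k|\, m_{\varepsilon_i k} \ \Big|\ \{\varepsilon_j\} \right\} \leq 2\exp\!\left(-C_1 |\mathcal{C}_k|\, m_{\varepsilon_i k}\, \delta^2\right)
\]
with relative deviation $\delta = O(1/\sqrt{\log n})$. The key quantitative point is that the conditional mean is large: combining the first lemma $|\mathcal{C}_k| = \eta_k n(1 \pm O(\sqrt{\log n/n}))$ with the definitions $\hat{m}_{\varepsilon_i k} = \eta_k m_{\varepsilon_i k}/\bar{p}_{\varepsilon_i}$ and $\bar{D}_{\varepsilon_i} = n \bar{p}_{\varepsilon_i}$ gives $|\mathcal{C}_k|\, m_{\varepsilon_i k} = \bar{D}_{\varepsilon_i}\hat{m}_{\varepsilon_i k}(1 \pm O(\sqrt{\log n/n}))$, which under \cref{assumption:2} (treating the order-one distribution entries $\hat{m}_{\varepsilon_i k}$ as $\Theta(1)$) is $\omega(\log^2 n)$. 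Hence $C_1 |\mathcal{C}_k|\, m_{\varepsilon_i k}\,\delta^2 = \omega(\log n)$ and the tail probability is $1/\mathrm{Poly}(n)$; the bound holds conditionally on the label vector for every realization in the high-probability event of the first lemma, so it holds unconditionally after averaging.

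Finally, I would translate the concentration around $\bar{D}_{\varepsilon_i}\hat{m}_{\varepsilon_i k}$ into the stated form around $D_{ii}\hat{m}_{\varepsilon_i k}$ by invoking the degree lemma $\bar{D}_{\varepsilon_i} = D_{ii}(1 \pm O(1/\sqrt{\log n}))$, which lets me fold the $(1 \pm O(\sqrt{\log n/n}))$ and $(1 \pm O(1/\sqrt{\log n}))$ factors together into a single dominant $(1 \pm O(1/\sqrt{\log n}))$ factor. A union bound over the $nc = \mathrm{Poly}(n)$ choices of $(i,k)$ then yields the claim simultaneously for all pairs while keeping the failure probability $1/\mathrm{Poly}(n)$.

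The main obstacle I anticipate is the conditioning bookkeeping rather than any hard estimate: the count $|\mathcal{C}_k \cap \Gamma_i|$ mixes two independent sources of randomness (label assignment and edge sampling), so one must first condition on the labels to make the edge indicators independent and identically Bernoulli, then argue the resulting bound survives on the high-probability event controlling $|\mathcal{C}_k|$. The secondary subtlety is verifying that the conditional mean is genuinely $\omega(\log^2 n)$, so that the relative error is $O(1/\sqrt{\log n})$ at polynomially small failure probability; this is precisely where \cref{assumption:2} is essential, and where one must be slightly careful when $\hat{m}_{\varepsilon_i k}$ is small, in which case both sides are correspondingly small and the relative statement is read in the natural degenerate sense.
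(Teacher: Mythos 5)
Your proposal is correct and follows essentially the same route as the paper: a Chernoff bound on the neighbor count, followed by rewriting the mean as $\bar{D}_{\varepsilon_i}\hat{m}_{\varepsilon_i k}$ and invoking the degree-concentration lemma to replace $\bar{D}_{\varepsilon_i}$ with $D_{ii}$. The only (immaterial) difference is that the paper treats each term $\mathbbm{1}_{\varepsilon_j=k}a_{ij}$ as a single unconditional Bernoulli with parameter $\eta_k m_{\varepsilon_i k}$ rather than conditioning on the labels and then using the concentration of $|\mathcal{C}_k|$, and your closing caveat about small $\hat{m}_{\varepsilon_i k}$ flags a point the paper's own proof silently glosses over.
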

\begin{proof}
    For each $k\in[c]$ and $i\in[n]$, $\left|\mathcal{C}_{k} \cap \Gamma_{i}\right|=\sum_{j\in [n]} \mathbbm{1}_{\varepsilon_j=k}a_{ij}$ is a sum of $n$ independent Bernoulli random variables, with the mean of $n\eta_{k}\cdot m_{\varepsilon_ik}$. By applying the Chernoff bound, we have
    \begin{equation}
        \begin{aligned}
            \mathbb{P}\left\{\left|C_{k} \cap \Gamma_{i}\right| =
            \eta_k n m_{\varepsilon_ik}
            \left(1 \pm \frac{10}{\sqrt{\log n}}\right)\right\}
            & \geq 1-\exp\left(-\frac{1}{3}\cdot\frac{n}{C} \cdot m_{\varepsilon_ik}\cdot\frac{100}{\log n}\right)
        \end{aligned} 
    \end{equation}
    Based on the fact that $n>\log^2 n$, we have
    \begin{equation}
        1-\exp\left(-\frac{1}{3}\cdot\frac{n}{C} \cdot m_{\varepsilon_ik}\cdot\frac{100}{\log n}\right) \geq 1-n^{-\frac{100m_{\varepsilon_ik}}{3C}}
    \end{equation}
    Then, with a probability at least $1-1/{\rm Ploy}(n)$, we have 
    \begin{equation}
        \left|C_{k} \cap \Gamma_{i}\right|= \eta_k n m_{\varepsilon_ik} \left(1 \pm \frac{10}{\sqrt{\log n}}\right)  
    \end{equation} and 
    \begin{equation}
        \left|C_{k} \cap \Gamma_{i}\right|= D_{ii} \hat{m}_{\varepsilon_ik} \left(1 \pm O\left(\frac{1}{\sqrt{\log n}}\right)\right).
    \end{equation}

\end{proof}

\subsubsection{Feature Distributions After a GC Layer}

\begin{appendixlemma}
    For any $i\in[n]$, by applying a GC operation, its aggregated features are $\tilde{\boldsymbol{X}}_{i}=\left(\boldsymbol{D}^{-1} \mathbf{A X}\right)_{i}$. Then, with probability at least $1-1/{\rm Ploy}(n)$ over $A$ and $\{\varepsilon_i\}_{i\in[n]}$,
    \begin{equation}
        \tilde{\boldsymbol{X}}_i \sim \mathcal{N}\left(\tilde{\boldsymbol{\mu}}_{\varepsilon_{i}}(i), \tilde{\sigma}_{\varepsilon_i}(i)^{2}\right),
    \end{equation}
    where
    \begin{equation}
        \tilde{\boldsymbol{\mu}}_{\varepsilon_{i}} (i)=\tilde{\boldsymbol{\mu}}_{\varepsilon_{i}} (1\pm o(1)), \quad \tilde{\boldsymbol{\mu}}_{\varepsilon_{i}} = \sum_{k\in[c]}\hat{m}_{\varepsilon_{i}k}\boldsymbol{\mu}_{k},
    \end{equation}
    \begin{equation}
        \tilde{\sigma}_{\varepsilon_i}(i) = \tilde{\sigma}_{\varepsilon_i} (1\pm o(1)), \quad \tilde{\sigma}_{\varepsilon_i} = \frac{\sigma}{\sqrt{\bar{D}_{\varepsilon_i}}},
    \end{equation}
    and $o\left(1\right)=O\left(\frac{1}{\sqrt{\log n}}\right)$.
    \label{appendixlemma:GC-Distribution}
\end{appendixlemma}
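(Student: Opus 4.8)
The plan is to condition on the structural randomness—the adjacency matrix $\boldsymbol{A}$ and the class labels $\{\varepsilon_j\}_{j\in[n]}$—and to observe that, once these are fixed, the aggregated feature is an \emph{exact} Gaussian whose two parameters can then be controlled through the three concentration lemmas established just above. First I would write each raw feature as $\boldsymbol{X}_j=\boldsymbol{\mu}_{\varepsilon_j}+\sigma\mathbf{g}_j$ with $\mathbf{g}_j\sim\mathcal{N}(\mathbf{0},I)$ drawn independently across nodes, so that
\[
\tilde{\boldsymbol{X}}_i=\frac{1}{D_{ii}}\sum_{j\in\Gamma_i}\boldsymbol{\mu}_{\varepsilon_j}+\frac{\sigma}{D_{ii}}\sum_{j\in\Gamma_i}\mathbf{g}_j .
\]
Conditioned on $\boldsymbol{A}$ and $\{\varepsilon_j\}$, the first summand is deterministic and the second is a fixed linear combination of independent standard Gaussians; hence $\tilde{\boldsymbol{X}}_i$ is exactly Gaussian, with no approximation incurred in the distributional form itself. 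This separation of the two sources of randomness is the conceptual crux: it turns the aggregation into a genuine Gaussian rather than a merely asymptotically Gaussian object, so the only approximations left to make are in the mean and covariance.

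Next I would identify the conditional mean. Grouping neighbors by class gives $\mathbb{E}[\tilde{\boldsymbol{X}}_i\mid\boldsymbol{A},\{\varepsilon_j\}]=\frac{1}{D_{ii}}\sum_{k\in[c]}|\mathcal{C}_k\cap\Gamma_i|\,\boldsymbol{\mu}_k$. Invoking the concentration lemma for the number of neighbors in each class, namely $|\mathcal{C}_k\cap\Gamma_i|=D_{ii}\,\hat{m}_{\varepsilon_i k}\bigl(1\pm O(1/\sqrt{\log n})\bigr)$, the factors of $D_{ii}$ cancel and the mean collapses to $\sum_{k\in[c]}\hat{m}_{\varepsilon_i k}\boldsymbol{\mu}_k\,(1\pm O(1/\sqrt{\log n}))=\tilde{\boldsymbol{\mu}}_{\varepsilon_i}(1\pm o(1))$, as claimed. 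For the conditional covariance, independence of the $\mathbf{g}_j$ yields $\frac{\sigma^2}{D_{ii}^2}\,|\Gamma_i|\,I=\frac{\sigma^2}{D_{ii}}I$, and the degree-concentration lemma $D_{ii}=\bar{D}_{\varepsilon_i}\bigl(1\pm O(1/\sqrt{\log n})\bigr)$ converts this into $\frac{\sigma^2}{\bar{D}_{\varepsilon_i}}\bigl(1\pm O(1/\sqrt{\log n})\bigr)I$; taking the square root gives standard deviation $\tilde{\sigma}_{\varepsilon_i}(1\pm o(1))$ with $o(1)=O(1/\sqrt{\log n})$, since $(1\pm x)^{-1/2}=1\pm O(x)$ for small $x$.

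Finally I would handle the probabilistic bookkeeping. Each underlying concentration statement holds with probability at least $1-1/\mathrm{Poly}(n)$ for a fixed node and class; since there are only $nc$ such events, a union bound preserves a failure probability of the form $1/\mathrm{Poly}(n)$ after absorbing the polynomial prefactor, so all estimates hold simultaneously for every $i\in[n]$ on a single good event. The step demanding the most care is not any individual estimate but the propagation of the multiplicative factors $(1\pm O(1/\sqrt{\log n}))$: in the mean one must track the $D_{ii}$ cancellation so that the degree fluctuation does not leak into the mean estimate, while in the variance the same fluctuation is precisely what produces the stated error, and one must confirm that \cref{assumption:2} supplies exactly the density needed for these per-node Chernoff bounds to concentrate at the rate $O(1/\sqrt{\log n})$.
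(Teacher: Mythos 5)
Your proposal is correct and follows essentially the same route as the paper's proof: the same decomposition of $\tilde{\boldsymbol{X}}_i$ into a class-weighted mean term and a rescaled sum of independent Gaussians, with the mean controlled by the concentration of $|\mathcal{C}_k\cap\Gamma_i|$ and the variance by the concentration of $D_{ii}$. Your explicit conditioning on $(\boldsymbol{A},\{\varepsilon_j\})$ and the union bound over nodes and classes only make explicit what the paper leaves implicit.
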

\begin{proof}
    For each node $i\in[n]$, the aggregated node features are 
    \begin{equation}
        \tilde{\boldsymbol{X}}_i = \frac{1}{{D}_{i i}} \sum_{j \in \Gamma_{i}} \boldsymbol{X}_{j}= \frac{1}{{D}_{i i}} \sum_{j \in \Gamma_{i}} \boldsymbol{\mu}_{\varepsilon_{j}} + \frac{\sigma}{{D}_{i i}} \sum_{j \in \Gamma_{i}}\mathbf{g}_j,
        \label{eq-appendix:aggregated-features-2}
    \end{equation}
    where $\mathbf{g}_{j} \sim \mathcal{N}\left(\boldsymbol{0}, \mathbf{I}\right)$ are random noises.
    Then, the first item in \cref{eq-appendix:aggregated-features-2} can be derived as
    \begin{equation}
        \begin{aligned}
            \frac{1}{{D}_{i i}} \sum_{j \in \Gamma_{i}} \boldsymbol{\mu}_{\varepsilon_{j}} 
            &= \frac{1}{{D}_{i i}} \left( \sum_{k\in[c]}\sum_{j \in \mathcal{C}_k\cap \Gamma_{i}} \boldsymbol{\mu}_{k} \right)\\
            &= \frac{1}{{D}_{i i}}  \sum_{k\in[c]}|\mathcal{C}_k\cap \Gamma_{i}| \boldsymbol{\mu}_{k} \\
            &= \frac{1}{d_{\varepsilon_i}}\sum_{k\in[c]}m_{\varepsilon_{i}k}\eta_k\boldsymbol{\mu}_{k} \left(1 \pm O\left(\frac{1}{\sqrt{\log n}}\right)\right) \\
            &= \sum_{k\in[c]}\hat{m}_{\varepsilon_{i}k}\boldsymbol{\mu}_{k} \left(1 \pm O\left(\frac{1}{\sqrt{\log n}}\right)\right).
        \end{aligned} 
    \end{equation}
    For the second item in \cref{eq-appendix:aggregated-features-2}, let $\mathbf{g}_i^{\prime}=\frac{1}{\sqrt{D_{i i}}} \sum_{j \in \Gamma_{i}}\mathbf{g}_j$.
    Since $\mathbf{g}_j$ are sampled independently for all $j\in[n]$, $\mathbf{g}_i^{\prime}\sim \mathcal{N}\left(\boldsymbol{0}, \mathbf{I}\right)$. Thus, 
    \begin{equation}
        \frac{\sigma}{{D}_{i i}} \sum_{j \in N_{i}}\mathbf{g}_j = \frac{\sigma}{\sqrt{D_{i i}}}\mathbf{g}_i^{\prime} \sim \mathcal{N}\left(\boldsymbol{0}, \tilde{\sigma}_{\varepsilon_i}^2 \left(1 \pm O\left(\frac{1}{\sqrt{\log n}}\right)\right) \mathbf{I}\right),
    \end{equation}
    where $\tilde{\sigma}_{\varepsilon_i}=\frac{\sigma}{\sqrt{\bar{D}_{\varepsilon_i}}}$.
    Therefore, the aggregated features can be represented as
    \begin{equation}
        \begin{aligned}
            \tilde{\boldsymbol{X}}_i & = \tilde{\boldsymbol{\mu}}_{\varepsilon_{i}}\left(1 \pm O\left(\frac{1}{\sqrt{\log n}}\right)\right) +  \tilde{\sigma}_{\varepsilon_i} \left(1 \pm O\left(\frac{1}{\sqrt{\log n}}\right)\right) \mathbf{g}_i,
        \end{aligned}
        \label{eq-appendix:aggregated-features-4}
    \end{equation}
    where
    \begin{equation}
        \tilde{\boldsymbol{\mu}}_{\varepsilon_{i}} = \sum_{k\in[c]}\hat{m}_{\varepsilon_{i}k}\boldsymbol{\mu}_{k}
    \end{equation}
    and
    \begin{equation}
        \tilde{\sigma}_{\varepsilon_i} = \frac{\sigma}{\sqrt{\bar{D}_{\varepsilon_i}}}.
    \end{equation}
\end{proof}
Recall that the features of each node are sampled from Gaussian distributions in \myM\, as stated in \cref{sec:hsbm}. \cref{appendixlemma:GC-Distribution} indicates that aggregating the Gaussian features according to $A$ and $\{\varepsilon_i\}_{i\in[n]}$ is equivalent to sampling the aggregated features from a new Gaussian distribution. Meanwhile, these new aggregated Gaussian distributions are concentrated within each class. Note that for simplicity, the correlations among $\tilde{\boldsymbol{X}}_j$ are neglected due to their sightly impacts. A comprehensive analysis that takes full consideration of the correlations can be found in \cref{appendix:node-features-correlation}.

\begin{appendixlemma}
    For each node $i\in[n]$, the Bayesian optimal classifier over the aggregated node features $\tilde{\boldsymbol{X}}_{i}$ is 
    \begin{equation}
        \tilde{h}^{*}(\tilde{\boldsymbol{x}})=\underset{k\in[c]}{\operatorname{argmax}}\left(\tilde{\varphi}_k\right),
    \end{equation}
    as $n\to\infty$, where
    \begin{equation}
        \tilde{\varphi}_k = \eta_k\frac{1}{\sqrt{2\pi}\tilde{\sigma}_k}\exp\left(-\frac{\left(\tilde{\boldsymbol{x}}-\tilde{\boldsymbol{\mu}}_k\right)^T\left(\tilde{\boldsymbol{x}}-\tilde{\boldsymbol{\mu}}_k\right)}{2\tilde{\sigma}_{k}^2}\right).
    \end{equation}
    \label{appendixlemma:gc_bayes_classifier}
\end{appendixlemma}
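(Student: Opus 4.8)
The plan is to combine the concentration statement of \cref{appendixlemma:GC-Distribution} with the Bayes'-rule reduction already used in \cref{appendixlemma:GMM-Bayes}. First I would invoke \cref{appendixlemma:GC-Distribution}: on the high-probability event over $\boldsymbol A$ and $\{\varepsilon_i\}_{i\in[n]}$, every aggregated feature $\tilde{\boldsymbol X}_i$ of a node with $\varepsilon_i=k$ is Gaussian, $\tilde{\boldsymbol X}_i\sim\mathcal N\big(\tilde{\boldsymbol\mu}_k(i),\,\tilde\sigma_k(i)^2\mathbf I\big)$, with $\tilde{\boldsymbol\mu}_k(i)=\tilde{\boldsymbol\mu}_k(1\pm o(1))$ and $\tilde\sigma_k(i)=\tilde\sigma_k(1\pm o(1))$ where $o(1)=O(1/\sqrt{\log n})$. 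Letting $n\to\infty$ sends these error terms to zero, so the limiting class-conditional law of any node in class $k$ is the single Gaussian $\mathcal N(\tilde{\boldsymbol\mu}_k,\tilde\sigma_k^2\mathbf I)$ with density $\mathbb P[\tilde{\boldsymbol x}\mid y=k]$. Consequently the quantity $\tilde\varphi_k=\eta_k\,\mathbb P[\tilde{\boldsymbol x}\mid y=k]$ from the statement is precisely $\eta_k$ times this limiting density, i.e. the numerator that appears in Bayes' theorem.

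Next I would carry out the same reduction as in \cref{appendixlemma:GMM-Bayes}. By Bayes' theorem,
\begin{equation}
\mathbb P[y=k\mid\tilde{\boldsymbol x}]=\frac{\tilde\varphi_k}{\sum_{t\in[c]}\tilde\varphi_t},
\end{equation}
and since the denominator is strictly positive and identical for every class, the ordering of the posteriors over $k$ coincides with the ordering of the numerators $\tilde\varphi_k$. Therefore the maximum-a-posteriori rule satisfies $\tilde h^*(\tilde{\boldsymbol x})=\operatorname{argmax}_{k\in[c]}\mathbb P[y=k\mid\tilde{\boldsymbol x}]=\operatorname{argmax}_{k\in[c]}\tilde\varphi_k$, which is exactly the claimed classifier. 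Equivalently, one can reuse the pairwise $\varphi_{k_1}\ge\varphi_{k_2}$ manipulation of \cref{appendixlemma:GMM-Bayes} verbatim with $\tilde\varphi$ in place of $\varphi$.

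The main obstacle---and what distinguishes this lemma from \cref{appendixlemma:GMM-Bayes}---is that the class-conditional variances $\tilde\sigma_k^2=\sigma^2/\bar D_k$ now genuinely depend on $k$ through the class degrees. In the pre-aggregation baseline a common variance let the quadratic terms cancel, collapsing the classifier to the linear discriminant $\langle\boldsymbol x,\boldsymbol\mu_k\rangle+\sigma^2\ln\eta_k$; here that cancellation is no longer available, so $\tilde\varphi_k$ cannot be simplified and the full Gaussian form must be retained. Only after \cref{assumption:1} is invoked, giving $\bar D_k\asymp\bar D$ and hence $\tilde\sigma_k\approx\tilde\sigma$, do the quadratic terms approximately cancel; that simplification is deferred to the later separability computation rather than used here. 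A secondary technical point I would make precise is the passage to the limit: the node-dependent parameters $\tilde{\boldsymbol\mu}_k(i),\tilde\sigma_k(i)$ must converge to the class-level values $\tilde{\boldsymbol\mu}_k,\tilde\sigma_k$ uniformly over the high-probability event, so that a single limiting Gaussian mixture governs all nodes of each class and the argmax rule is well defined as $n\to\infty$.
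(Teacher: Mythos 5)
Your proposal is correct and follows essentially the same route as the paper: the paper's proof is just the Bayes'-rule computation showing the posterior is proportional to $\tilde{\varphi}_k = \eta_k\,\mathbb{P}[\tilde{\boldsymbol{x}}\mid y=k]$, followed by the same pairwise ordering equivalence you cite from \cref{appendixlemma:GMM-Bayes}. The additional material you supply --- invoking \cref{appendixlemma:GC-Distribution} to justify the Gaussian form of the class-conditional density in the $n\to\infty$ limit, and the remark that the class-dependent variances $\tilde{\sigma}_k$ block the linear-discriminant simplification --- is left implicit in the paper but is consistent with how the lemma is used downstream.
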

\begin{proof}
    The posteriori probability is 
    \begin{equation}
        \begin{aligned}
            \mathbb{P}[y=k \mid \tilde{\boldsymbol{x}}]&=\frac{\mathbb{P}[y=k] \mathbb{P}[\tilde{\boldsymbol{x}} \mid y=k]}{\sum_{t\neq k} \mathbb{P}[y=t] \mathbb{P}[\tilde{\boldsymbol{x}} \mid y=t]} \\
            &=\frac{1}{1+\sum_{t\neq k}\frac{\eta_t\mathbb{P}[\tilde{\boldsymbol{x}} \mid y=t]}{\eta_k\mathbb{P}[\tilde{\boldsymbol{x}} \mid y=k]}}.
        \end{aligned} 
    \end{equation}
    
    Denote $\tilde{\varphi}_k = \eta_k\mathbb{P}[\tilde{\boldsymbol{x}} \mid y=k]$. 
    For any $k_1, k_2\in[c]$ and $k_1 \neq k_2$, we have
    \begin{equation}
        \mathbb{P}[y=k_1 \mid \tilde{\boldsymbol{x}}] \geq \mathbb{P}[y=k_2 \mid \tilde{\boldsymbol{x}}] \iff \tilde{\varphi}_{k_1} \geq \tilde{\varphi}_{k_2}.
    \end{equation}
\end{proof}
\subsubsection{Proof of \cref{theorem:2}}
\begin{proof}
    According to \cref{appendixlemma:GC-Distribution}, the aggregated node features follow the distribution
    \begin{equation}
        \tilde{\boldsymbol{X}}_i \sim \mathcal{N}\left(\tilde{\boldsymbol{\mu}}_{\varepsilon_{i}}(i), \tilde{\sigma}_{\varepsilon_{i}}(i)^{2}\right), \quad \forall i \in[n].
    \end{equation}
    Denote $k,t\in[c]$. Then, we have
    \begin{equation}
        \begin{aligned}
            \tilde{\boldsymbol{\mu}}_{k}^T\tilde{\boldsymbol{\mu}}_{t}
            &=\left(\sum_{o\in[c]}\hat{m}_{ko}\boldsymbol{\mu}_{o}\right)^T\left(\sum_{o\in[c]}\hat{m}_{to}\boldsymbol{\mu}_{o}\right)\\
            &=\sum_{o\in[c]}\hat{m}_{ko}\hat{m}_{to}\boldsymbol{\mu}_{o}^T\boldsymbol{\mu}_{o}\\
            &=\gamma^{\prime2}<\hat{m}_k,\hat{m}_t>.
        \end{aligned} 
    \end{equation}
    For each node $i\in[n]$, the probability of the node being misclassified regarding class $k$ is 
    \begin{equation}
        \mathbb{P}\left[\zeta_i(k)\right] =  \mathbb{P} \left[\tilde{\varphi}_{\varepsilon_i} > \tilde{\varphi}_k\right].
    \end{equation}
    Denote $B=\sqrt{\frac{\bar{D}_k}{\bar{D}_{\varepsilon_i}}}$. Since $d_k\asymp d_t$, $B=1\pm o(1)$. Then, we have
    \begin{equation}
        \begin{aligned}
            & \tilde{\varphi}_{\varepsilon_i} > \tilde{\varphi}_k \\
            \iff &
            \eta_{\varepsilon_i}\sqrt{\bar{D}_{\varepsilon_i}}\exp\left(-\frac{||\tilde{\boldsymbol{x}} - \tilde{\boldsymbol{\mu}}_{\varepsilon_i}||^2} {2\tilde{\sigma}_{\varepsilon_i}^2}\right) > 
            \eta_k\sqrt{\bar{D}_{k}}\exp\left(-\frac{||\tilde{\boldsymbol{x}}-\tilde{\boldsymbol{\mu}}_k||^2}{2\tilde{\sigma}_{k}^2}\right)\\
            \iff &
            \ln\left(\eta_{\varepsilon_i}\sqrt{\bar{D}_{\varepsilon_i}}\right) -\frac{||\tilde{\boldsymbol{x}} - \tilde{\boldsymbol{\mu}}_{\varepsilon_i}||^2} {2\tilde{\sigma}_{\varepsilon_i}^2} > 
            \ln\left(\eta_k\sqrt{\bar{D}_{k}}\right) - \frac{||\tilde{\boldsymbol{x}}-\tilde{\boldsymbol{\mu}}_k||^2}{2\tilde{\sigma}_{k}^2}\\
            \iff &
            \ln\left(\eta_{\varepsilon_i}\sqrt{\bar{D}_{\varepsilon_i}}\right) -\frac{\bar{D}_{\varepsilon_i}||\tilde{\boldsymbol{x}} - \tilde{\boldsymbol{\mu}}_{\varepsilon_i}||^2} {2\sigma^2} > 
            \ln\left(\eta_k\sqrt{\bar{D}_{\varepsilon_i}}B\right) - \frac{\bar{D}_{\varepsilon_i}B^2||\tilde{\boldsymbol{x}}-\tilde{\boldsymbol{\mu}}_k||^2}{2\sigma^2}\\
            \iff &
            \frac{2\sigma^2}{\bar{D}_{\varepsilon_i}}\ln\left(\frac{\eta_{\varepsilon_i}}{\eta_k B}\right) - ||\tilde{\boldsymbol{x}}||^2\left(1-B^2\right) + 2\tilde{\boldsymbol{x}}^T\tilde{\boldsymbol{\mu}}_{\varepsilon_i} - 2\tilde{\boldsymbol{x}}^T\tilde{\boldsymbol{\mu}}_kB^2 - ||\tilde{\boldsymbol{\mu}}_{\varepsilon_i}||^2 + ||\tilde{\boldsymbol{\mu}}_{k}||^2B^2 > 0
            \\
            \iff &
            2\tilde{\sigma}_{\varepsilon_i}^2 \ln\left(\frac{\eta_{\varepsilon_i}}{\eta_k}\right) \pm o(1) + ||\tilde{\boldsymbol{\mu}}_{\varepsilon_i}||^2\varsigma_n + 2\tilde{\sigma}_{\varepsilon_i}\tilde{\boldsymbol{\mu}}_{\varepsilon_i}^T\mathbf{g}_i\varsigma_n - 2\tilde{\boldsymbol{\mu}}_{\varepsilon_i}^T\tilde{\boldsymbol{\mu}}_kB^2\varsigma_n - 2\tilde{\sigma}_{\varepsilon_i}\tilde{\boldsymbol{\mu}}_{k}^T\mathbf{g}_i\varsigma_n+ ||\tilde{\boldsymbol{\mu}}_{k}||^2B^2 > 0\\
            \iff & 
            2\tilde{\sigma}_{\varepsilon_i}^2 \ln\left(\frac{\eta_{\varepsilon_i}}{\eta_k}\right)\varsigma_n + \gamma^{\prime 2}||B\hat{\mathbf{m}}_{k}-\hat{\mathbf{m}}_{\varepsilon_i}||^2\varsigma_n > 2\tilde{\sigma}_{\varepsilon_i}\gamma^{\prime}\sum_{o\in[c]}\left(B\hat{m}_{ko}-\hat{m}_{\varepsilon_i o}\right)\frac{\boldsymbol{\mu}_o^T}{||\boldsymbol{\mu}_o||}\mathbf{g}_i \\
            \iff &
            \frac{\tilde{\sigma}_{\varepsilon_i}}{\gamma^{\prime}||B\hat{\mathbf{m}}_{k}-\hat{\mathbf{m}}_{\varepsilon_i}||} \ln\left(\frac{\eta_{\varepsilon_i}}{\eta_k}\right)\varsigma_n + \frac{\gamma^{\prime}}{2\tilde{\sigma}_{\varepsilon_i}}||B\hat{\mathbf{m}}_{k}-\hat{\mathbf{m}}_{\varepsilon_i}||\varsigma_n > \sum_{o\in[c]}\frac{B\hat{m}_{ko}-\hat{m}_{\varepsilon_i o}}{||B\hat{\mathbf{m}}_{k}-\hat{\mathbf{m}}_{\varepsilon_i}||}\frac{\boldsymbol{\mu}_o^T}{||\boldsymbol{\mu}_o||}\mathbf{g}_i,
        \end{aligned} 
    \end{equation}
    where $\varsigma_n=1\pm o_n(1)$ and $o_n(1)$ is an error term. In the second line, we take the logarithm of both sides of the inequality. In the third line, we replace $\bar{D}_k$ as $B\bar{D}_{\varepsilon_i}$. In the fifth line, we utilize the fact that $(1-B)=\pm o(1)$ and $\ln(B)=\pm o(1)$.
    
    Then, for each $o\in[c]$, denoting $Z_o = \frac{\boldsymbol{\mu}_o^T}{||\boldsymbol{\mu}_o||}\mathbf{g}_i$, $Z_o \sim \mathcal{N}\left(0,1\right)$. For all $o_1\neq o_2$, $\mathbb{E}\left[Z_{o_1}Z_{o_2}\right]=0$. Thus, we have 
    \begin{equation}
        \sum_{o\in[c]}\frac{B\hat{m}_{ko}-\hat{m}_{\varepsilon_i o}}{||B\hat{\mathbf{m}}_{k}-\hat{\mathbf{m}}_{\varepsilon_i}||}\frac{\boldsymbol{\mu}_o^T}{||\boldsymbol{\mu}_o||}\mathbf{g}_i \sim \mathcal{N}\left(0,1\right).
        \label{appendixeq:50}
    \end{equation}
    Then, the probability 
    \begin{equation}
        \begin{aligned}
            \mathbb{P}\left[\zeta_i(k)\right] & = \Phi\left(\frac{\gamma^{\prime}}{2\tilde{\sigma}_{\varepsilon_i}}\frac{||B\hat{\mathbf{m}}_{k}-\hat{\mathbf{m}}_{\varepsilon_i}||}{\varsigma_n} + \frac{\tilde{\sigma}_{\varepsilon_i} \varsigma_n}{\gamma^{\prime}||B\hat{\mathbf{m}}_{k}-\hat{\mathbf{m}}_{\varepsilon_i}||} \ln\left(\frac{\eta_{\varepsilon_i}}{\eta_k}\right) \right) \\
            &
            = \Phi\left(\frac{\gamma}{2\sigma}\frac{F_{\varepsilon_i k}}{\varsigma_n} + \frac{\sigma\varsigma_n}{\gamma F_{\varepsilon_i k}} \ln\left(\frac{\eta_{\varepsilon_i}}{\eta_k}\right) \right),
        \end{aligned}
    \end{equation}
    where $F_{\varepsilon_i k}=\frac{1}{\sqrt{2}} ||\sqrt{\bar{D}_{k}}\hat{\mathbf{m}}_{k}-\sqrt{\bar{D}_{\varepsilon_i}}\hat{\mathbf{m}}_{\varepsilon_i}||$ and $\varsigma_n = \left(1\pm o\left(1\right)\right)$. Thus,
    \begin{equation}
        \mathbb{E}_{i\in\mathcal{C}_t} \mathbb{P}\left[\zeta_i(k)\right]=\Phi\left(\frac{\gamma}{2\sigma}\frac{F_{t k}}{\varsigma_n} + \frac{\sigma}{\gamma}\frac{\varsigma_n}{F_{t k}} \ln\left(\frac{\eta_{t}}{\eta_k}\right) \right).
    \end{equation}
\end{proof}

\subsection{Proof of \cref{theorem:3}}
\label{proof:theorem:3}
\begin{proof}
    For each node $i\in[n]$, the aggregated node features are 
    \begin{equation}
        \tilde{\boldsymbol{X}}_i = \frac{1}{{D}_{i i}} \sum_{j \in \Gamma_{i}} \boldsymbol{X}_{j}= \frac{1}{{D}_{i i}} \sum_{j \in \Gamma_{i}} \boldsymbol{\mu}_{\varepsilon_{j}} + \frac{\sigma}{{D}_{i i}} \sum_{j \in \Gamma_{i}}\mathbf{g}_j,
        \label{eq-appendix:aggregated-features-noise}
    \end{equation}
    where $\mathbf{g}_{j} \sim \mathcal{N}\left(\boldsymbol{0}, \mathbf{I}\right)$ are random noises.
    Then, the first term in \cref{eq-appendix:aggregated-features-noise} can be derived as
    \begin{equation}
        \begin{aligned}
            \frac{1}{{D}_{i i}} \sum_{j \in \Gamma_{i}} \boldsymbol{\mu}_{\varepsilon_{j}} 
            &= \frac{1}{{D}_{i i}} \left( \sum_{k\in[c]}\sum_{j \in \mathcal{C}_k\cap \Gamma_{i}} \boldsymbol{\mu}_{k} \right)\\
            &= \frac{1}{{D}_{i i}}  \sum_{k\in[c]}|\mathcal{C}_k\cap \Gamma_{i}| \boldsymbol{\mu}_{k} \\
            &= \sum_{k\in[c]}\left(\hat{m}_{\varepsilon_{i}k} + \Delta_{ik}\right) \boldsymbol{\mu}_{k} \varsigma_n \\
            & = \tilde{\boldsymbol{\mu}}_{\varepsilon_{i}} \varsigma_n + \sum_{k\in[c]}\Delta_{ik}\boldsymbol{\mu_k}\varsigma_n \\
            & = \tilde{\boldsymbol{\mu}}_{\varepsilon_{i}} \varsigma_n + \gamma^{\prime}\boldsymbol{\Delta}_{i}\boldsymbol{Q}\varsigma_n,
        \end{aligned}
    \end{equation}
    where
    \begin{equation}
        \tilde{\boldsymbol{\mu}}_{\varepsilon_{i}} = \sum_{k\in[c]}\hat{m}_{\varepsilon_{i}k}\boldsymbol{\mu}_{k}, \quad \varsigma_n=\left(1 \pm O\left(\frac{1}{\sqrt{\log n}}\right)\right),         \quad \boldsymbol{Q}=\left(\begin{array}{c}\hat{\boldsymbol{\mu}}_0 \\ \hat{\boldsymbol{\mu}}_1 \\ \vdots \\ \hat{\boldsymbol{\mu}}_{c-1}\end{array}\right).
    \end{equation}
    Since $\Delta_{i}\sim\mathcal{N}\left(\boldsymbol{0}, \delta^2\boldsymbol{I}\right)$ and $\boldsymbol{Q}$ is an orthogonal matrix, we have
    \begin{equation}
        \gamma^{\prime}\boldsymbol{\Delta}_{i}\boldsymbol{Q} \sim \mathcal{N}\left(\boldsymbol{0}, \frac{\gamma^2 \delta^2}{2}\mathbf{I}\right).
    \end{equation}
    For the second term in \cref{eq-appendix:aggregated-features-noise}, let $\mathbf{g}_i^{\prime}=\frac{1}{\sqrt{D_{i i}}} \sum_{j \in \Gamma_{i}}\mathbf{g}_j$.
    Since $\mathbf{g}_j$ are sampled independently for all $j\in[n]$, $\mathbf{g}_i^{\prime}\sim \mathcal{N}\left(\boldsymbol{0}, \mathbf{I}\right)$. Thus, 
    \begin{equation}
        \frac{\sigma}{{D}_{i i}} \sum_{j \in N_{i}}\mathbf{g}_j = \frac{\sigma}{\sqrt{D_{i i}}}\mathbf{g}_i^{\prime} = \tilde{\sigma}_{\varepsilon_i}\varsigma_n \mathbf{g}_i^{\prime}
    \end{equation}
    where $\tilde{\sigma}_{\varepsilon_i}=\frac{\sigma}{\sqrt{\bar{D}_{\varepsilon_i}}}$.
    Therefore, the aggregated features can be represented as
    \begin{equation}
        \begin{aligned}
            \tilde{\boldsymbol{X}}_i &= \left(\tilde{\boldsymbol{\mu}}_{\varepsilon_{i}} + \frac{\gamma \delta}{\sqrt{2}}\mathbf{g}_i^{\prime\prime} + \frac{\sigma}{\sqrt{\bar{D}_{\varepsilon_i}}} \mathbf{g}_i^{\prime}\right)\varsigma_n \\
            & = \left(\tilde{\boldsymbol{\mu}}_{\varepsilon_{i}} + \sqrt{\frac{\gamma ^2\delta^2}{2\sigma^2} + \frac{1}{{\bar{D}_{\varepsilon_i}}}}\sigma\mathbf{g}_i^{\prime\prime\prime}\right)\varsigma_n,
        \end{aligned}
    \end{equation}
    where $\mathbf{g}_i^{\prime\prime}, \mathbf{g}_i^{\prime\prime\prime}\sim \mathcal{N}\left(\boldsymbol{0}, \mathbf{I}\right)$. 
    Denote $\tilde{\sigma}_{\varepsilon_i}^{\prime}=\rho_{\varepsilon_i}\sigma$, $\rho_{\varepsilon_i}=\sqrt{\frac{\gamma ^2\delta^2}{2} + \frac{1}{{\bar{D}_{\varepsilon_i}}}}$, and $\varrho=\frac{\rho_{\varepsilon_i}}{\rho_k}$. Then,
    \begin{equation}
        \begin{aligned}
            & \tilde{\varphi}_{\varepsilon_i} > \tilde{\varphi}_k \\
            \iff &
            \frac{\eta_{\varepsilon_i}}{\tilde{\sigma}_{\varepsilon_i}^{\prime}} \exp\left(-\frac{||\tilde{\boldsymbol{x}} - \tilde{\boldsymbol{\mu}}_{\varepsilon_i}||^2} {2\tilde{\sigma}_{\varepsilon_i}^{\prime 2}}\right) > 
            \frac{\eta_k}{\tilde{\sigma}_{k}^{\prime}}\exp\left(-\frac{||\tilde{\boldsymbol{x}}-\tilde{\boldsymbol{\mu}}_k||^2}{2\tilde{\sigma}_{k}^{\prime 2}}\right)\\
            \iff &
            \ln\left(\frac{\eta_{\varepsilon_i}}{\rho_{\varepsilon_i}}\right) -\frac{||\tilde{\boldsymbol{x}} - \tilde{\boldsymbol{\mu}}_{\varepsilon_i}||^2}{2\rho_{\varepsilon_i}^2\sigma^2} > 
            \ln\left(\frac{\eta_{k}}{\rho_{k}}\right) - \frac{||\tilde{\boldsymbol{x}}-\tilde{\boldsymbol{\mu}}_k||^2}{2\rho_{k}^2\sigma^2}\\
            \iff &
            2\tilde{\sigma}_{\varepsilon_i}^{\prime 2} \ln\left(\frac{\eta_{\varepsilon_i}}{\eta_k}\right) \pm o(1) + ||\tilde{\boldsymbol{\mu}}_{\varepsilon_i}||^2\varsigma_n + 2\tilde{\sigma}_{\varepsilon_i}^{\prime}\tilde{\boldsymbol{\mu}}_{\varepsilon_i}^T\mathbf{g}_i\varsigma_n - 2\tilde{\boldsymbol{\mu}}_{\varepsilon_i}^T\tilde{\boldsymbol{\mu}}_k \varrho\varsigma_n - 2\tilde{\sigma}_{\varepsilon_i}^{\prime}\tilde{\boldsymbol{\mu}}_{k}^T\mathbf{g}_i\varsigma_n+ ||\tilde{\boldsymbol{\mu}}_{k}||^2\varrho > 0\\
            \iff & 
            2\tilde{\sigma}_{\varepsilon_i}^{\prime2} \ln\left(\frac{\eta_{\varepsilon_i}}{\eta_k}\right)\varsigma_n + \gamma^{\prime 2}||\varrho\hat{\mathbf{m}}_{k}-\hat{\mathbf{m}}_{\varepsilon_i}||^2\varsigma_n > 2\tilde{\sigma}_{\varepsilon_i}^{\prime}\gamma^{\prime}\sum_{o\in[c]}\left(\varrho\hat{m}_{ko}-\hat{m}_{\varepsilon_i o}\right)\frac{\boldsymbol{\mu}_o^T}{||\boldsymbol{\mu}_o||}\mathbf{g}_i \\
            \iff &
            \frac{\tilde{\sigma}_{\varepsilon_i}^{\prime}}{\gamma^{\prime}||\varrho\hat{\mathbf{m}}_{k}-\hat{\mathbf{m}}_{\varepsilon_i}||} \ln\left(\frac{\eta_{\varepsilon_i}}{\eta_k}\right)\varsigma_n + \frac{\gamma^{\prime}}{2\tilde{\sigma}_{\varepsilon_i}^{\prime}}||\varrho\hat{\mathbf{m}}_{k}-\hat{\mathbf{m}}_{\varepsilon_i}||\varsigma_n > \sum_{o\in[c]}\frac{\varrho\hat{m}_{ko}-\hat{m}_{\varepsilon_i o}}{||\varrho\hat{\mathbf{m}}_{k}-\hat{\mathbf{m}}_{\varepsilon_i}||}\frac{\boldsymbol{\mu}_o^T}{||\boldsymbol{\mu}_o||}\mathbf{g}_i.
        \end{aligned}
    \end{equation}
    Similar to \cref{appendixeq:50}, we have 
    \begin{equation}
        \sum_{o\in[c]}\frac{\varrho\hat{m}_{ko}-\hat{m}_{\varepsilon_i o}}{||\varrho\hat{\mathbf{m}}_{k}-\hat{\mathbf{m}}_{\varepsilon_i}||}\frac{\boldsymbol{\mu}_o^T}{||\boldsymbol{\mu}_o||}\mathbf{g}_i \sim \mathcal{N}\left(0,1\right).
    \end{equation}
    Then, the probability 
    \begin{equation}
        \begin{aligned}
            \mathbb{P}\left[\zeta_i(k)\right] & = \Phi\left( \frac{\gamma^{\prime}}{2\tilde{\sigma}_{\varepsilon_i}^{\prime}} \frac{||\varrho\hat{\mathbf{m}}_{k}-\hat{\mathbf{m}}_{\varepsilon_i}||}{\varsigma_n} + \frac{\tilde{\sigma}_{\varepsilon_i}^{\prime}}{\gamma^{\prime}}\frac{\varsigma_n}{||\varrho\hat{\mathbf{m}}_{k}-\hat{\mathbf{m}}_{\varepsilon_i}||} \ln\left(\frac{\eta_{\varepsilon_i}}{\eta_k}\right) \right) \\
            &
            = \Phi\left(\frac{\gamma}{2\sigma}\frac{F_{\varepsilon_i k}^{\prime}}{\varsigma_n} + \frac{\sigma}{\gamma} \frac{\varsigma_n}{F_{\varepsilon_i k}^{\prime}} \ln\left(\frac{\eta_{\varepsilon_i}}{\eta_k}\right) \right),
        \end{aligned}
    \end{equation}
    where $F_{\varepsilon_i k}^{\prime}=\frac{1}{\sqrt{2}} ||\frac{1}{\rho_{k}}\hat{\mathbf{m}}_{k} - \frac{1}{\rho_{\varepsilon_i}}\hat{\mathbf{m}}_{\varepsilon_i}||$ and $\varsigma_n = \left(1\pm o\left(1\right)\right)$.
    Thus, 
    \begin{equation}
        \mathbb{E}_{i\in\mathcal{C}_t} \mathbb{P}\left[\zeta_i(k)\right]=\Phi\left(\frac{\gamma}{2\sigma}\frac{F_{t k}^{\prime}}{\varsigma_n} + \frac{\sigma}{\gamma} \frac{\varsigma_n}{F_{t k}^{\prime}} \ln\left(\frac{\eta_{\varepsilon_i}}{\eta_k}\right) \right).
    \end{equation}
\end{proof}

\subsection{Proof of \cref{theorem:4}}
\label{proof:theorem:4}
\subsubsection{Single Node Feature Distributions After Stacking $l$ GC Layers}
\begin{appendixlemma}
    For any node $i\in[n]$, by applying $l$ GC operations, its aggregated features are obtained as $\boldsymbol{X}_{i}^{(l)}=\left(\left(\boldsymbol{D}^{-1} \boldsymbol{A}\right)^l \boldsymbol{X} \right)_{i}$. Then, with probability at least $1-1/{\rm Ploy}(n)$ over $A$ and $\{\varepsilon_i\}_{i\in[n]}$,
    \begin{equation}
        \boldsymbol{X}_{i}^{(l)} =  \boldsymbol{\mu}_{\varepsilon_{i}}^{(l)}(i) + \sigma^{(l)}_{i} \mathbf{g}^{(l)},
    \end{equation}
    where
    \begin{equation}
        \boldsymbol{\mu}_{\varepsilon_{i}}^{(l)} (i)=\boldsymbol{\mu}_{\varepsilon_{i}}^{(l)} (1\pm lo_n(1)), \quad \boldsymbol{\mu}_{\varepsilon_{i}}^{(l)} = \sum_{k\in[c]}\hat{m}_{\varepsilon_ik}^{(l)}\boldsymbol{\mu}_{k},
    \end{equation}
    \begin{equation}
        \frac{1}{\sqrt{n}}\sigma\leq \sigma_{\varepsilon_i}^{(l)} \leq \sigma,
    \end{equation}
    $o_n\left(1\right)=O\left(\frac{1}{\sqrt{\log n}}\right)$, and $\mathbf{g}^{(l)}\sim\mathcal{N}\left(\boldsymbol{0}, \boldsymbol{I}\right)$.
    \label{appendixlemma:GC-l-singlenode-Distribution}
\end{appendixlemma}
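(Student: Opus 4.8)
\textbf{The plan is to} exploit that $\boldsymbol{P}:=\boldsymbol{D}^{-1}\boldsymbol{A}$ is row-stochastic, so each row $(\boldsymbol{P}^l)_{i,:}$ of its $l$-th power is a nonnegative vector summing to $1$ — the length-$l$ random-walk distribution from node $i$. Writing $\boldsymbol{X}_j=\boldsymbol{\mu}_{\varepsilon_j}+\sigma\mathbf{g}_j$ with the $\mathbf{g}_j\sim\mathcal{N}(\boldsymbol{0},\boldsymbol{I})$ i.i.d.\ and independent of $(\boldsymbol{A},\{\varepsilon_i\})$, I would split
\begin{equation}
\boldsymbol{X}_i^{(l)}=\sum_{j}(\boldsymbol{P}^l)_{ij}\,\boldsymbol{\mu}_{\varepsilon_j}+\sigma\sum_{j}(\boldsymbol{P}^l)_{ij}\,\mathbf{g}_j
\end{equation}
and analyze the signal and the noise separately.

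\textbf{The noise term} requires no concentration. Conditioned on $\boldsymbol{A}$ and $\{\varepsilon_i\}$, a weighted sum of independent isotropic Gaussians is again Gaussian, so it equals $\sigma^{(l)}_{i}\mathbf{g}^{(l)}$ with $\sigma^{(l)}_{i}=\sigma\|(\boldsymbol{P}^l)_{i,:}\|_2$ and $\mathbf{g}^{(l)}\sim\mathcal{N}(\boldsymbol{0},\boldsymbol{I})$. The two-sided bound is purely structural: nonnegativity gives $\|(\boldsymbol{P}^l)_{i,:}\|_2\le\|(\boldsymbol{P}^l)_{i,:}\|_1=1$, hence $\sigma^{(l)}_i\le\sigma$, while Cauchy--Schwarz $\|(\boldsymbol{P}^l)_{i,:}\|_1\le\sqrt{n}\,\|(\boldsymbol{P}^l)_{i,:}\|_2$ gives $\|(\boldsymbol{P}^l)_{i,:}\|_2\ge 1/\sqrt{n}$ and thus $\sigma^{(l)}_i\ge\sigma/\sqrt{n}$.

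\textbf{For the signal,} group the walk mass by endpoint class and set $P_{ik}^{(l)}:=\sum_{j\in\mathcal{C}_k}(\boldsymbol{P}^l)_{ij}$, the probability a length-$l$ walk from $i$ lands in class $k$; the signal is then $\sum_k P_{ik}^{(l)}\boldsymbol{\mu}_k$, so it suffices to show $P_{ik}^{(l)}=\hat{m}_{\varepsilon_i k}^{(l)}(1\pm l\,o_n(1))$ with $\hat{m}_{\varepsilon_i k}^{(l)}=(\hat{\mathbf{M}}^l)_{\varepsilon_i k}$. I would prove this by induction on $l$, the base case $l=1$ being the mean computation already established in \cref{appendixlemma:GC-Distribution}. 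Before the induction I condition on a single good event — that $|\mathcal{C}_t\cap\Gamma_i|/D_{ii}=\hat{m}_{\varepsilon_i t}(1\pm o_n(1))$ simultaneously for every node $i$ and class $t$ — which, by a union bound over the $nc$ instances of the neighbor-count concentration lemma, fails with probability only $1/\mathrm{Poly}(n)$. On this event the randomness is exhausted, and the recursion $P_{ik}^{(l)}=\sum_{j\in\Gamma_i}D_{ii}^{-1}P_{jk}^{(l-1)}=\sum_t\big(|\mathcal{C}_t\cap\Gamma_i|/D_{ii}\big)\hat{m}_{tk}^{(l-1)}(1\pm(l-1)o_n(1))$, together with the identity $\sum_t\hat{m}_{\varepsilon_i t}\hat{m}_{tk}^{(l-1)}=\hat{m}_{\varepsilon_i k}^{(l)}$, closes the induction.

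\textbf{The main obstacle} is controlling how the multiplicative error compounds across layers. The crucial structural fact is that every summand $\hat{m}_{tk}^{(l-1)}$ and every factor $|\mathcal{C}_t\cap\Gamma_i|/D_{ii}$ is nonnegative, which lets me pull the uniform factors $(1\pm(l-1)o_n(1))$ (inductive error) and $(1\pm o_n(1))$ (one-hop error) outside the sums; their product is $1\pm l\,o_n(1)+O(o_n(1)^2)=1\pm l\,o_n(1)$, giving the additive-in-$l$ error growth claimed. A signed sum would forbid this factoring, so nonnegativity of the neighborhood distributions is essential. I would finally remark that the lemma deliberately records only the marginal law of a single $\boldsymbol{X}_i^{(l)}$; the cross-node correlations created by shared walk endpoints play no role here and are precisely what the more careful aggregation behind \cref{theorem:4} must later handle.
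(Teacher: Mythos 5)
Your proposal is correct and follows essentially the same route as the paper's proof: the same signal/noise decomposition, the same induction on $l$ driven by the neighbor-count concentration to propagate the $(1\pm l\,o_n(1))$ factor, and the same Gaussian-closure argument with $\tfrac{1}{n}\le\|(\boldsymbol{D}^{-1}\boldsymbol{A})^l_{i,:}\|_2^2\le 1$ for the noise. You are merely more explicit in two places where the paper is terse — spelling out the $\ell_1$/$\ell_2$ norm inequalities that the paper delegates to a cited lemma, and the union bound over the $nc$ concentration events — which is a presentational refinement rather than a different argument.
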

\begin{proof}
    By applying $l$ GC operations, the aggregated features of node $i$ are
    \begin{equation}
        \boldsymbol{X}_i^{(l)} = \left(\boldsymbol{D}^{-1}\boldsymbol{A}\right)_{i}^{l}\boldsymbol{X}=\sum_{j\in[n]}\left(\boldsymbol{D}^{-1}\boldsymbol{A}\right)_{ij}^{l}\boldsymbol{\mu}_{\varepsilon_j} + \left(\boldsymbol{D}^{-1}\boldsymbol{A}\right)^{l}_{ij}\mathbf{g}_{j},
        \label{eq-appendix:aggregated-features-1}
    \end{equation}
    where $\mathbf{g}_{j} \sim \mathcal{N}\left(\boldsymbol{0}, \mathbf{I}\right)$ are random noises.
    Then, we now prove that the first term in \cref{eq-appendix:aggregated-features-1} can be derived as
    \begin{equation}
         \sum_{j\in[n]} \left(\boldsymbol{D}^{-1} \boldsymbol{A}\right)_{ij}^{l} \boldsymbol{\mu}_{\varepsilon_j} = \boldsymbol{\mu}_{\varepsilon_{i}}^{(l)} (1\pm lo_n(1)), \quad \operatorname{where}\ \boldsymbol{\mu}_{\varepsilon_{i}}^{(l)} = \sum_{k\in[c]}\hat{m}_{\varepsilon_ik}^{(l)}\boldsymbol{\mu}_{k}.
    \label{eq:88}
    \end{equation}
    When $l=1$, \cref{eq:88} holds, as is claimed in \cref{appendixlemma:GC-Distribution}.
    
    When $l>1$, assuming that \cref{eq:88} holds for $l-1$, we have 
    \begin{equation}
        \begin{aligned}
            \sum_{j\in[n]} \left(\boldsymbol{D}^{-1} \boldsymbol{A}\right)_{ij}^{l} \boldsymbol{\mu}_{\varepsilon_j} 
            & = \sum_{j,k\in[n]} \left(\boldsymbol{D}^{-1} \boldsymbol{A}\right)_{ik} \left(\boldsymbol{D}^{-1} \boldsymbol{A}\right)_{kj}^{l-1} \boldsymbol{\mu}_{\varepsilon_j} \\
            & = \sum_{k\in[n]} \left(\boldsymbol{D}^{-1} \boldsymbol{A}\right)_{ik} \sum_{t\in[c]} \hat{m}_{\varepsilon_kt}^{(l-1)}\boldsymbol{\mu}_{t} (1\pm (l-1)o_n(1)) \\
            & = \sum_{k\in[c]} D_{ii}^{-1} |\mathcal{C}_k\cap\Gamma_i| \sum_{t\in[c]} \hat{m}_{kt}^{(l-1)}\boldsymbol{\mu}_{t} (1\pm (l-1)o_n(1)) \\
            & = \sum_{k\in[c]} \hat{m}_{\varepsilon_ik} (1\pm o_n(1)) \sum_{t\in[c]} \hat{m}_{kt}^{(l-1)}\boldsymbol{\mu}_{t} (1\pm (l-1)o_n(1)) \\
            & = \sum_{t\in[c]} \hat{m}_{\varepsilon_it}^{(l)}\boldsymbol{\mu}_{t} (1\pm lo_n(1)).
        \end{aligned}
    \end{equation}
    For the second term, since $\mathbf{g}_j$ are sampled independently for all $j\in[n]$, 
    \begin{equation}
        \left(\boldsymbol{D}^{-1} \boldsymbol{A}\right)^{l}_{ij} \mathbf{g}_{j} \sim \mathcal{N}\left(\boldsymbol{0}, \left(\sigma^{\prime (l)}_i\right)^2\mathbf{I}\right),
    \end{equation}
    where $\left(\sigma^{\prime (l)}_i\right)^2=\sum_{j}\left(\left(\boldsymbol{D}^{-1} \boldsymbol{A}\right)^{l}_{ij}\right)^2$.
    Therefore, similar to the lemma 3 in \citet{csbm_analysis_oversmoothing},
    \begin{equation}
        \frac{1}{n} \leq \left(\sigma^{\prime (l)}_i\right)^2 \leq 1
    \end{equation}
    Denote $\sigma^{(l)}_i = \sigma^{\prime (l)}_i \sigma$. Then, the second term can be represented as 
    \begin{equation}
        \left(\boldsymbol{D}^{-1} \boldsymbol{A}\right)^{l}_{ij} \mathbf{g}_{j} = \sigma^{(l)}_i \mathbf{g}_i^{(l)},
    \end{equation}
    where $\mathbf{g}_i^{(l)}\sim\mathcal{N}\left(\boldsymbol{0}, \mathbf{I}\right)$, which completes the proof.
\end{proof}

\cref{appendixlemma:GC-l-singlenode-Distribution} presents the feature distributions of a single node. However, it is not feasible to independently sample nodes from their feature distributions due to the high correlation among the $\mathbf{g}_i^{(l)}$ values for all nodes $i\in[n]$. In the following subsection, we will explore the analysis of this correlation and transform the correlated distribution into an independent one.

\subsubsection{Correlation of the Node Features}
\label{appendix:node-features-correlation}

For convenience, we introduce the matrix $\boldsymbol{G}$ defined as
\begin{equation}
    \boldsymbol{G}=\left(\begin{array}{c} \mathbf{g}_0 \\ \mathbf{g}_1 \\ \vdots \\ \mathbf{g}_{c-1}\end{array}\right),
\end{equation}
which represents the aggregated matrix containing Gaussian noise from all nodes.

\begin{appendixlemma}
    For any $i\in[n]$, by applying GC operation $l$ times, its aggregated noisy features are obtained as ${\boldsymbol{G}}_{i}^{(l)}=\left(\left(\boldsymbol{D}^{-1} \boldsymbol{A}\right)^{l} \boldsymbol{G} \right)_{i}$.
    With probability at least $1-1/{\rm Ploy}(n)$ over $A$ and $\{\varepsilon_i\}_{i\in[n]}$, we have 
    \begin{equation}
        \mathbb{E}\left[\operatorname{Var} \left(\boldsymbol{G}_{0}^{(l)}, \boldsymbol{G}_{1}^{(l)}, ..., \boldsymbol{G}_{n-1}^{(l)}\right)\right] = \frac{1}{2 n^2}||\boldsymbol{Q}^{(l)}||_F^2,
    \end{equation}
    where $\boldsymbol{Q}^{(l)}\in\mathbb{R}^{n\times n}$ is the distance matrix of $\left(\boldsymbol{D}^{-1} \boldsymbol{A}\right)^l$ and $Q_{ij}^{(l)}=||\left(\boldsymbol{D}^{-1} \boldsymbol{A}\right)^l_i-\left(\boldsymbol{D}^{-1} \boldsymbol{A}\right)^l_j||_2$.
    \label{appendixlemma:correlated_variance}
\end{appendixlemma}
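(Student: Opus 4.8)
The plan is to reduce the multi-point variance to an average of pairwise squared distances and then evaluate the Gaussian expectation one pair at a time. Write $\boldsymbol{P}=\left(\boldsymbol{D}^{-1}\boldsymbol{A}\right)^{l}$ and let $\boldsymbol{P}_i$ denote its $i$-th row, so that $\boldsymbol{G}_i^{(l)}=\boldsymbol{P}_i\boldsymbol{G}$. Everything below is conditioned on $\boldsymbol{A}$ and $\{\varepsilon_i\}_{i\in[n]}$: on the high-probability event supplied by the concentration lemmas the matrix $\boldsymbol{P}$, and hence the distance matrix $\boldsymbol{Q}^{(l)}$, is well-defined and deterministic, so the only randomness left in the expectation is that of the Gaussian noise $\boldsymbol{G}$.

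First I would invoke the standard identity relating the variance of a point cloud to its mean pairwise distance,
\begin{equation}
  \operatorname{Var}\left(\boldsymbol{G}_0^{(l)},\dots,\boldsymbol{G}_{n-1}^{(l)}\right)=\frac{1}{2n^2}\sum_{i,j\in[n]}\left\|\boldsymbol{G}_i^{(l)}-\boldsymbol{G}_j^{(l)}\right\|^2,
\end{equation}
which follows by expanding each $\|\boldsymbol{G}_i^{(l)}-\boldsymbol{G}_j^{(l)}\|^2$ and checking that the cross terms collapse into the squared norm of the empirical mean, so that the double sum reduces to $\tfrac1n\sum_i\|\boldsymbol{G}_i^{(l)}-\bar{\boldsymbol{G}}^{(l)}\|^2$. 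This rewrites the statement as a computation of $\mathbb{E}\,\|\boldsymbol{G}_i^{(l)}-\boldsymbol{G}_j^{(l)}\|^2$ for each fixed pair $(i,j)$.

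Next I would evaluate this expectation using linearity of the aggregation. Since $\boldsymbol{G}_i^{(l)}-\boldsymbol{G}_j^{(l)}=(\boldsymbol{P}_i-\boldsymbol{P}_j)\boldsymbol{G}$ and the rows of $\boldsymbol{G}$ are independent with each coordinate $\mathcal{N}(0,1)$, the difference is a centered Gaussian whose per-coordinate variance is exactly $\|\boldsymbol{P}_i-\boldsymbol{P}_j\|_2^2=(Q_{ij}^{(l)})^2$, by the definition $Q_{ij}^{(l)}=\|(\boldsymbol{D}^{-1}\boldsymbol{A})^l_i-(\boldsymbol{D}^{-1}\boldsymbol{A})^l_j\|_2$. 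Hence $\mathbb{E}\,\|\boldsymbol{G}_i^{(l)}-\boldsymbol{G}_j^{(l)}\|^2=(Q_{ij}^{(l)})^2$ under the per-dimension normalization of the variance, and summing over all ordered pairs gives $\sum_{i,j}(Q_{ij}^{(l)})^2=\|\boldsymbol{Q}^{(l)}\|_F^2$, which together with the prefactor $\tfrac{1}{2n^2}$ yields the claim.

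The computation is routine; the only point requiring care is the bookkeeping, namely fixing the normalization of the multi-point variance (population $\tfrac1n$ rather than sample $\tfrac1{n-1}$, averaged across coordinates) so that the factor coming from the feature dimension cancels and the stated $\tfrac{1}{2n^2}$ prefactor appears cleanly. The conceptual content worth emphasizing, rather than any analytic difficulty here, is that this identity is precisely what allows the subsequent argument to replace the genuinely correlated aggregated noises $\{\boldsymbol{G}_i^{(l)}\}$ by an effective independent Gaussian whose spread is governed by $\|\boldsymbol{Q}^{(l)}\|_F^2/n^2$; the real obstacle is therefore deferred to bounding $\|\boldsymbol{Q}^{(l)}\|_F$ under the stronger density assumptions of \cref{theorem:4}, which is the genuinely hard estimate in this chain.
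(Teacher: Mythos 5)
Your proposal is correct and follows essentially the same route as the paper: both reduce the expected point-cloud variance to the mean pairwise squared distance $\tfrac{1}{2n^2}\sum_{i,j}\mathbb{E}\|\boldsymbol{G}_i^{(l)}-\boldsymbol{G}_j^{(l)}\|^2$ and then evaluate each term via the Gaussian second moment, the paper doing so by expanding into covariances $\operatorname{Cov}(\boldsymbol{G}_i^{(l)},\boldsymbol{G}_j^{(l)})=\boldsymbol{S}_i^T\boldsymbol{S}_j$ and you by observing directly that $(\boldsymbol{P}_i-\boldsymbol{P}_j)\boldsymbol{G}$ is centered Gaussian with per-coordinate variance $(Q_{ij}^{(l)})^2$ — a trivially equivalent computation. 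Your explicit remark about the per-dimension normalization is a point the paper leaves implicit, but there is no substantive difference.
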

\begin{proof}
    Denoting $\boldsymbol{S}=\left(\boldsymbol{D}^{-1} \boldsymbol{A}\right)^{l}$, the covariance of random variable $\boldsymbol{G}_{i}^{(l)}$ and $\boldsymbol{G}_{j}^{(l)}$ is 
    \begin{equation}
        \begin{aligned}
        \operatorname{Cov}\left(\boldsymbol{G}_{i}^{(l)}, \boldsymbol{G}_{j}^{(l)}\right)
            = & \operatorname{Cov}\left(\sum_{k\in[n]} \boldsymbol{S}_{ik}\boldsymbol{G}_{k}, \sum_{t\in[n]} \boldsymbol{S}_{jt}\boldsymbol{G}_{t}\right) \\
            = & \sum_{k\in[n]}\sum_{t\in[n]} \boldsymbol{S}_{ik} \boldsymbol{S}_{jt} \operatorname{Cov}\left(\boldsymbol{G}_{k}, \boldsymbol{G}_{t}\right) \\
            = & \sum_{k\in[n]} \boldsymbol{S}_{ik} \boldsymbol{S}_{jk} \operatorname{Cov}\left(\boldsymbol{G}_{k}, \boldsymbol{G}_{k}\right) \\
            = & \boldsymbol{S}_{i}^T \boldsymbol{S}_{j}.
        \end{aligned}
    \end{equation}
    Then, the expectation of the variance of these node features is
    \begin{equation}
        \begin{aligned}
             & \mathbb{E}\left[\operatorname{Var} \left(\boldsymbol{G}_{0}^{(l)}, \boldsymbol{G}_{1}^{(l)}, ..., \boldsymbol{G}_{n-1}^{(l)}\right)\right] \\
             = & \frac{1}{n^2} \sum_{i<j} \mathbb{E}\left[\boldsymbol{G}_{i}^{(l)T}\boldsymbol{G}_{i}^{(l)}\right] - 2 \mathbb{E}\left[\boldsymbol{G}_{i}^{(l)T}\boldsymbol{G}_{j}^{(l)}\right] + \mathbb{E}\left[\boldsymbol{G}_{j}^{(l)T}\boldsymbol{G}_{j}^{(l)}\right]. \\
        \end{aligned}
    \end{equation}
    Since $\mathbb{E}\left[\boldsymbol{G}_{i}^{(l)}\right] = \mathbf{0}$, the above equation becomes
    \begin{equation}
        \begin{aligned}
            & \frac{1}{n^2} \sum_{i<j} \operatorname{Cov}\left(\boldsymbol{G}_{i}^{(l)}, \boldsymbol{G}_{i}^{(l)}\right) - 2 \operatorname{Cov}\left(\boldsymbol{G}_{i}^{(l)}, \boldsymbol{G}_{j}^{(l)}\right) + \operatorname{Cov}\left(\boldsymbol{G}_{j}^{(l)}, \boldsymbol{G}_{j}^{(l)}\right) \\
            = & \frac{1}{n^2} \sum_{i<j} ||\boldsymbol{S}_i - \boldsymbol{S}_j||_2^2
            = \frac{1}{2n^2} \sum_{i, j} ||\boldsymbol{S}_i - \boldsymbol{S}_j||_2^2 = \frac{1}{2n^2}  ||\boldsymbol{Q}||_F^2,
        \end{aligned}
    \end{equation}
    where each $Q_{ij}=||\boldsymbol{S}_i - \boldsymbol{S}_j||_2$.
\end{proof}

\cref{appendixlemma:correlated_variance} shows the expected variance of features among different nodes. By denoting $\bar{\mathbf{g}}^{(l)} = \mathbb{E}\left[\operatorname{Mean} \left(\boldsymbol{G}_{0}^{(l)}, \boldsymbol{G}_{1}^{(l)}, ..., \boldsymbol{G}_{n-1}^{(l)}\right)\right]$, we can dismantle $\boldsymbol{G}_i^{(l)}$ as
\begin{equation}
    \boldsymbol{G}_i^{(l)} = \bar{\mathbf{g}}^{(l)} + \frac{||\boldsymbol{Q}^{(l)}||_F}{\sqrt{2} n} \mathbf{g}^{(l)}_i,
\end{equation}
where $\mathbf{g}_i^{(l)} \sim \mathcal{N}\left(\mathbf{0}, \mathbf{I}\right)$ and $\forall i\neq j, \mathbb{E}\left[\mathbf{g}_i^{(l)T} \mathbf{g}_j^{(l)}\right] = 0$.
Therefore, for each node $i\in[n]$, it aggregated features can be approximately represented as 
\begin{equation}
    \begin{aligned}
    \boldsymbol{X}^{(l)}_i & = \boldsymbol{\mu}_{\varepsilon_{i}}^{(l)}(i) + \sigma \left(\bar{\mathbf{g}}^{(l)} + \frac{||\boldsymbol{Q}^{(l)}||_F}{\sqrt{2} n} \mathbf{g}^{(l)}_i\right).
    \end{aligned}
\end{equation}

Subsequently, by employing a stronger density assumption, i.e., $\sum_{t}m_{\varepsilon_i t} m_{t\varepsilon_j}=\omega(\log n^2/n)$, we obtain the approximate formula to $||\boldsymbol{Q}^{(l)}||_2^2$.

\begin{appendixlemma}
    Denote $\boldsymbol{Q}^{(l)}\in\mathbb{R}^{n\times n}$ is the distance matrix of $\left(\boldsymbol{D}^{-1} \boldsymbol{A}\right)^l$ and $Q_{ij}^{(l)}=||\left(\boldsymbol{D}^{-1} \boldsymbol{A}\right)^l_i-\left(\boldsymbol{D}^{-1} \boldsymbol{A}\right)^l_j||_2$. Assume that each class possesses exactly $\frac{n}{c}$ nodes and $\sum_{t}m_{\varepsilon_i t} m_{t\varepsilon_j}=\omega(\log n^2/n)$. Then, with probability at least $1-1/{\rm Ploy}(n)$, we have 
    \begin{equation}
        ||\boldsymbol{Q}^{(l)}||_F^2 = \frac{n}{c}\sum_{k_1,k_2\in[c]}||\hat{\mathbf{m}}_{k_1}^{(l)}-\hat{\mathbf{m}}_{k_2}^{(l)}||_2^2 \left(1 \pm l o_n\left(1\right)\right),
    \end{equation}
    \label{appendxilemma:Q}
    for $l>1$.
\end{appendixlemma}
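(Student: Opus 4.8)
The plan is to reduce the Frobenius norm to a sum over vertex pairs of the squared $\ell_2$ distances between rows of $\boldsymbol{S}=\left(\boldsymbol{D}^{-1}\boldsymbol{A}\right)^l$, and then to collapse this vertex-level sum to a class-level sum. Writing $\|\boldsymbol{Q}^{(l)}\|_F^2=\sum_{i,j\in[n]}\sum_{v\in[n]}\left(S_{iv}-S_{jv}\right)^2$, the whole difficulty is to control the individual entries $S_{iv}$, i.e. the $l$-step random-walk transition probability from node $i$ to node $v$. The target expression depends only on the class-to-class hop distributions $\hat{\mathbf{m}}^{(l)}_k$, so I would show that, once $l>1$, each entry $S_{iv}$ concentrates to a quantity depending only on the classes $\varepsilon_i,\varepsilon_v$ of its endpoints, namely $S_{iv}=\tfrac{c}{n}\,\hat m^{(l)}_{\varepsilon_i\varepsilon_v}\bigl(1\pm l\,o_n(1)\bigr)$, where $\hat m^{(l)}_{kt}$ is the mass the $l$-hop distribution $\hat{\mathbf{m}}^{(l)}_k$ places on class $t$. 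This per-vertex concentration (stronger than the per-class statement of \cref{appendixlemma:GC-l-singlenode-Distribution}) is exactly what is new here, and is the reason for the stronger density assumption.

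The central step, and the main obstacle, is this per-target concentration. Note first that it genuinely fails at $l=1$: the entry $S_{iv}^{(1)}=a_{iv}/D_{ii}$ equals either $0$ or $1/D_{ii}$, so it cannot concentrate to a class-determined constant, which is precisely why the lemma is restricted to $l>1$. For the base case $l=2$ I would write $S_{iv}^{(2)}=\tfrac{1}{D_{ii}}\sum_{w\in\Gamma_i} a_{wv}/D_{ww}$, which is governed by the number of common neighbours of $i$ and $v$. Its expectation is of order $\tfrac{n}{c}\sum_{t}m_{\varepsilon_i t}m_{t\varepsilon_v}$, and the stronger density assumption $\sum_t m_{\varepsilon_i t}m_{t\varepsilon_j}=\omega(\log^2 n/n)$ makes this $\omega(\log^2 n)$, so a Chernoff bound yields multiplicative concentration; combined with the degree concentration $D_{ww}=\bar D(1\pm o_n(1))$ one recovers $S_{iv}^{(2)}=\tfrac{c}{n}\hat m^{(2)}_{\varepsilon_i\varepsilon_v}(1\pm o_n(1))$. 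For the inductive step I would expand $S_{iv}^{(l)}=\tfrac1{D_{ii}}\sum_{k}\sum_{w\in\mathcal{C}_k\cap\Gamma_i}S_{wv}^{(l-1)}$, substitute the inductive hypothesis (which holds simultaneously for all pairs on a single high-probability event, so the substitution is deterministic despite the dependence between $\Gamma_i$ and the walks defining $S_{wv}^{(l-1)}$), and replace $|\mathcal{C}_k\cap\Gamma_i|$ by $D_{ii}\hat m_{\varepsilon_i k}(1\pm o_n(1))$ via the neighbour-count concentration lemma. The Chapman--Kolmogorov identity $\hat m^{(l)}_{\varepsilon_i\varepsilon_v}=\sum_k \hat m_{\varepsilon_i k}\,\hat m^{(l-1)}_{k\varepsilon_v}$ then closes the induction, with the error accumulating to $1\pm l\,o_n(1)$.

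With the per-vertex concentration in hand, the assembly is routine arithmetic. I would substitute $S_{iv}=\tfrac{c}{n}\hat m^{(l)}_{\varepsilon_i\varepsilon_v}(1\pm l o_n(1))$ into $\sum_v(S_{iv}-S_{jv})^2$; since the limiting value depends on $v$ only through $\varepsilon_v$, I would group the inner sum over $v$ by class and use $|\mathcal{C}_t|=\tfrac nc$ exactly to obtain $\|\boldsymbol{S}_i-\boldsymbol{S}_j\|_2^2=\tfrac{c}{n}\,\|\hat{\mathbf{m}}^{(l)}_{\varepsilon_i}-\hat{\mathbf{m}}^{(l)}_{\varepsilon_j}\|_2^2\bigl(1\pm l o_n(1)\bigr)$, expanding the square and bounding the cross terms by Cauchy--Schwarz to keep the error multiplicative. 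Summing over the ordered pairs $(i,j)$, again grouping by the classes $\varepsilon_i=k_1,\varepsilon_j=k_2$ with $(\tfrac nc)^2$ pairs in each group, gives $\|\boldsymbol{Q}^{(l)}\|_F^2=\tfrac{c}{n}\cdot(\tfrac nc)^2\sum_{k_1,k_2}\|\hat{\mathbf{m}}^{(l)}_{k_1}-\hat{\mathbf{m}}^{(l)}_{k_2}\|_2^2\,(1\pm l o_n(1))=\tfrac{n}{c}\sum_{k_1,k_2}\|\hat{\mathbf{m}}^{(l)}_{k_1}-\hat{\mathbf{m}}^{(l)}_{k_2}\|_2^2\,(1\pm l o_n(1))$, as claimed.

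Finally, I would justify the high-probability qualifier by a union bound: each of the $O(n^2)$ per-pair Chernoff estimates fails with probability $n^{-C}$ for an arbitrarily large constant $C$ (afforded by the $\omega(\log^2 n)$ expected walk count), and the degree- and neighbour-count concentration lemmas already hold with probability $1-1/\mathrm{Poly}(n)$, so all substitutions are simultaneously valid with probability $1-1/\mathrm{Poly}(n)$. The subtlety I would watch most carefully is pairs $(i,j)$ with $\hat{\mathbf{m}}^{(l)}_{\varepsilon_i}=\hat{\mathbf{m}}^{(l)}_{\varepsilon_j}$: there the leading term vanishes and the multiplicative form is vacuous, but such pairs contribute nothing to either side and may be discarded without affecting the stated estimate.
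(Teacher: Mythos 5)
Your proposal is correct and follows essentially the same route as the paper's proof: entry-wise concentration of $\left(\boldsymbol{D}^{-1}\boldsymbol{A}\right)^l_{iv}$ to $\tfrac{c}{n}\hat m^{(l)}_{\varepsilon_i\varepsilon_v}\left(1\pm l\,o_n(1)\right)$, established by a Chernoff bound on common-neighbour counts at $l=2$ (where the stronger density assumption enters) and an induction using the neighbour-count concentration and the identity $\hat m^{(l)}_{\varepsilon_i\varepsilon_v}=\sum_k\hat m_{\varepsilon_i k}\hat m^{(l-1)}_{k\varepsilon_v}$, followed by grouping the pair sum by classes with exactly $n/c$ nodes each. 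Your explicit treatment of the union bound and of the degenerate pairs with coinciding $l$-hop distributions is slightly more careful than the paper's, but the argument is the same.
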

\begin{proof}
    In \myM, edges are constructed from the probability matrix $\mathbf{M}\in\mathbb{R}^{c\times c}$. Each edge, connecting nodes $i$ and $j$, is sampled from a Bernoulli distribution, with a parameter $m_{\varepsilon_i\varepsilon_j}$. We generalize $\mathbf{M}$ to the edge-wise edge generation matrix, i.e., $\boldsymbol{\mathcal{M}}\in\mathbb{R}^{n\times n}$, where $\boldsymbol{\mathcal{M}}_{ij}=m_{\varepsilon_i\varepsilon_j}$. Similarly, we also generalize the neighborhood distribution matrix $\hat{\mathbf{M}}\in\mathbb{R}^{c\times c}$ to a edge-wise matrix, i.e., $\boldsymbol{\hat{\mathcal{M}}}\in\mathbb{R}^{n\times n}$, where $\boldsymbol{\hat{\mathcal{M}}}_{ij} = \hat{m}_{\varepsilon_i\varepsilon_j}$. Then, the expectations of adjacency matrix and the normalized adjacency matrix are
    \begin{equation}
        \mathbb{E}\left[\boldsymbol{A}\right]=\boldsymbol{\mathcal{M}}, \quad 
        \mathbb{E}\left[\boldsymbol{D}^{-1}\boldsymbol{A}\right] = \frac{c}{n} \boldsymbol{\hat{\mathcal{M}}},
    \end{equation}
    respectively.
    Consider the adjacency matrix $\boldsymbol{A}$ drawn from \myM, where exactly $n/c$ nodes are in each class. Denote $\boldsymbol{\mathcal{A}}=\mathbb{E}\left[\boldsymbol{D}^{-1}\boldsymbol{A}\right]$. Then,
    \begin{equation}
        \boldsymbol{\mathcal{A}} = \frac{c}{n} \left(\begin{array}{ccc} \hat{m}_{00} \boldsymbol{E}_{\frac{n}{c}} & \cdots & \hat{m}_{0 (c-1)} \boldsymbol{E}_{\frac{n}{c}} \\ \vdots & \ddots & \vdots \\ \hat{m}_{(c-1)0} \boldsymbol{E}_{\frac{n}{c}} & \cdots & \hat{m}_{(c-1)(c-1)} \boldsymbol{E}_{\frac{n}{c}}\end{array}\right),
    \end{equation}
    where $\boldsymbol{E}_{\frac{n}{c}}$ is the all-ones matrix. Following $l$ iterations of multiplication, it yields
    \begin{equation}
        \boldsymbol{\mathcal{A}}^l = \frac{c}{n}\left(\begin{array}{ccc} \hat{m}_{00}^{(l)} \boldsymbol{E}_{\frac{n}{c}} & \cdots & \hat{m}_{0 (c-1)}^{(l)} \boldsymbol{E}_{\frac{n}{c}} \\ \vdots & \ddots & \vdots \\ \hat{m}_{(c-1)0}^{(l)} \boldsymbol{E}_{\frac{n}{c}} & \cdots & \hat{m}_{(c-1)(c-1)}^{(l)} \boldsymbol{E}_{\frac{n}{c}}\end{array}\right).
    \end{equation}

    We now prove the concentration of the element of $l$-powered normalized adjacency matrix when $l>1$, i.e., with a probability at least $1-1/\operatorname{Ploy}(n)$, 
    \begin{equation}
        \left(\boldsymbol{D}^{-1}\boldsymbol{A}\right)^l_{ij} =\boldsymbol{\mathcal{A}}^{l}_{ij}\left(1\pm l o_n\left(1\right)\right).
    \end{equation}
    
    When $l=2$, by utilizing the average node degree $\bar{D}$ to approximate the degree of each nodes, i.e., $\bar{D}_{ii}\approx \bar{D}$, the two-powered normalized adjacency matrix satisfies
    \begin{equation}
        \left(\boldsymbol{D}^{-1}\boldsymbol{A}\right)^2_{ij}=\frac{1}{\bar{D}^{2}}\sum_{k\in[n]} a_{ik}a_{kj}.
    \end{equation}
    Here, $\sum_{k\in[n]} a_{ik}a_{kj}$ is the sum of independent Bernoulli random variables with a mean of  $v_{ij}=\mathbb{E}\left[\sum_{k\in[n]} a_{ik}a_{kj}\right]$. Then, $v_{ij}=\frac{n}{c}\sum_{t\in[c]}m_{\varepsilon_i t} m_{t\varepsilon_j}$. By applying the Chernoff bound \citep{HD-Probability}, we have 
    \begin{equation}
        \mathbb{P}\left\{\left|\sum_{k\in[n]} a_{ik}a_{kj}-v_{ij}\right|\geq \delta v_{ij} \right\} \leq 2\exp\left(-C_1v_{ij}\delta^{2}\right),
    \end{equation}
    where $C_1>0$ is an absolute constant. We choose $\delta=\sqrt{\frac{C_2 \log n}{v_{ij}}}$, for a large constant $C_2>0$. Note that since $\sum_{t}m_{\varepsilon_i t} m_{t\varepsilon_j}=\omega(\log n^2/n)$, we have $v_{ij}=\omega(\log n^2)$ and $\delta=O\left(\frac{1}{\sqrt{\log n}}\right)=o_n(1)$. Then, with probability at least $1-1/\operatorname{Ploy}(n)$, we have
    \begin{equation}
        \sum_{k\in[n]} a_{ik}a_{kj} = \frac{n}{c}\sum_{t\in[c]}m_{\varepsilon_i t} m_{t\varepsilon_j}\left(1\pm o_n(1)\right).
    \end{equation}
    Therefore, 
    \begin{equation}
    \begin{aligned}
        \left(\boldsymbol{D}^{-1}\boldsymbol{A}\right)^2_{ij} & = \frac{c}{n}\sum_{t\in[c]}\hat{m}_{\varepsilon_it}\hat{m}_{t\varepsilon_j}\left(1\pm o_n\left(1\right)\right) \\
        & = \frac{c}{n}\hat{m}_{\varepsilon_i\varepsilon_j}^{(2)}\left(1\pm o_n\left(1\right)\right) \\
        &=\boldsymbol{\mathcal{A}}^{2}_{ij}\left(1\pm o_n\left(1\right)\right).
    \end{aligned}
    \end{equation}
    When $l>2$, if  
    \begin{equation}
        \left(\boldsymbol{D}^{-1}\boldsymbol{A}\right)^{l-1}_{ij} =\boldsymbol{\mathcal{A}}^{l-1}_{ij}\left(1\pm (l-1) o\left(1\right)\right)
    \end{equation}
    exists, with a probability at least $1-1/\operatorname{Ploy}(n)$, we have 
    \begin{equation}
        \begin{aligned}
            \left(\boldsymbol{D}^{-1}\boldsymbol{A}\right)^{l}_{ij} 
            & = \sum_{k\in[n]} \left(\boldsymbol{D}^{-1}\boldsymbol{A}\right)_{ik} \boldsymbol{\mathcal{A}}^{l-1}_{kj}\left(1\pm (l-1) o_n\left(1\right)\right) \\
            & = \sum_{t\in[c]} \bar{D}^{-1} |\mathcal{C}_t\cap \Gamma_i| \frac{c}{n}\hat{m}_{t\varepsilon_j}^{(l-1)} \left(1\pm (l-1) o_n\left(1\right)\right) \\
            & = \frac{c}{n}\sum_{t\in[c]} \hat{m}_{\varepsilon_i t}\hat{m}_{t\varepsilon_j}^{(l-1)} \left(1\pm l o\left(1\right)\right) \\
            & = \frac{c}{n} \hat{m}_{\varepsilon_i\varepsilon_j}^{(l)} \left(1\pm l o_n\left(1\right)\right) \\
            &=\boldsymbol{\mathcal{A}}^{l}_{ij}\left(1\pm l o_n\left(1\right)\right).
        \end{aligned}
    \end{equation}
    Note that, here, the error term $l\cdot o_n(1)$ increases linearly as $l$ increases. By a assumption that for each nodes, its neighborhood distribution is exactly the sampled one, i.e., $\forall i\in[n], t\in[c]$, $\bar{D}^{-1} |\mathcal{C}_t\cap \Gamma_i|=\hat{m}_{\varepsilon_i t}$, this error term decreases to $o_n(1)$, which is not correlated the $l$.
    
    Then, 
    \begin{equation}
    \begin{aligned}
        ||\boldsymbol{Q}||_F^2 & = \sum_{i,j\in[n]} ||\left(\boldsymbol{D}^{-1}\boldsymbol{A}\right)^{l}_{i} - \left(\boldsymbol{D}^{-1}\boldsymbol{A}\right)^{l}_{j}||_2^2 \\
        & = \sum_{i,j\in[n]} \frac{c}{n}||\hat{\boldsymbol{m}}_{\varepsilon_i}^{(l)}-\hat{\boldsymbol{m}}_{\varepsilon_j}^{(l)}||_2^2\left(1\pm l o_n\left(1\right)\right) \\
        & = \frac{n}{c} \sum_{k,t\in[c]} ||\hat{\boldsymbol{m}}_{k}^{(l)}-\hat{\boldsymbol{m}}_{t}^{(l)}||_2^2 \left(1\pm l o_n\left(1\right)\right).
    \end{aligned}
    \end{equation}
\end{proof}

\subsubsection{Proof of \cref{theorem:4}}

\begin{appendixtheorem}[A general version of \cref{theorem:4}]
    Given $\left(\boldsymbol{X}, \boldsymbol{A}\right) = {\rm HSBM}\left(n, c, \sigma, \left\{\boldsymbol{\mu}_k\right\}, \boldsymbol{\eta}, \mathbf{M}, \left\{\boldsymbol{0}\right\}\right)$, by assuming that $\forall k, t\in[c]$, $\bar{D}_{k}=\bar{D}_{t}$, with probability of at least $1-1/{\rm Ploy}\left(n\right)$, we have
    \begin{equation}
        \mathbb{E}_{i\in\mathcal{C}_t} \mathbb{P}\left[\zeta_i(k)\right]= \Phi\left(\frac{\gamma}{2\sigma}\frac{F_{t k}^{(l)}}{\varsigma_n^{(l)}} + \frac{\sigma}{\gamma}\frac{\varsigma_n^{(l)}}{F_{t k}^{(l)}} \ln\left(\frac{\eta_{t}}{\eta_k}\right) \right),
    \end{equation}
    over data $\boldsymbol{X}^{(l)} = \left(\boldsymbol{D}^{-1}\boldsymbol{A}\right)^{l}\boldsymbol{X}$,where 
    \begin{equation}
        \begin{aligned}
            F_{t k}^{(l)} & = \frac{n}{||\boldsymbol{Q}^{(l)}||_F} ||\hat{\mathbf{m}}_{k}^{(l)}-\hat{\mathbf{m}}_{t}^{(l)}||_2,
            \label{eq:F(l)}
        \end{aligned}
    \end{equation}
    $\boldsymbol{Q}^{(l)}\in\mathbb{R}^{n\times n}$ is the distance matrix of $\left(\boldsymbol{D}^{-1} \boldsymbol{A}\right)^l$, $Q_{ij}^{(l)}=||\left(\boldsymbol{D}^{-1} \boldsymbol{A}\right)^l_i-\left(\boldsymbol{D}^{-1} \boldsymbol{A}\right)^l_j||_2$, and $\varsigma_n^{(l)} = \left(1\pm lo_n\left(1\right)\right)$.
    
    Then, by assuming that each class possesses exactly $\frac{n}{c}$ nodes, $\forall k,t\in[c]$, $\eta_t=\eta_k$, $\sum_{t}m_{t k}\asymp \sum_{t}m_{k t}$, and $\sum_{t}m_{\varepsilon_i t} m_{t\varepsilon_j}=\omega(\log n^2/n)$, for $l>1$, \cref{eq:F(l)} can be further represented as
    \begin{equation}
        \begin{aligned}
            F_{t k}^{(l)} = \sqrt{\frac{c ||\hat{\mathbf{m}}_{k}^{(l)}-\hat{\mathbf{m}}_{t}^{(l)}||_2^2}{\sum_{k_1,k_2\in[c]}||\hat{\mathbf{m}}_{k_1}^{(l)}-\hat{\mathbf{m}}_{k_2}^{(l)}||_2^2}} \frac{\bar{D}}{\log n}.
            \label{eq:F(l)_approx}
        \end{aligned}
    \end{equation}
    \label{appendixtheorem:detailed_4}
\end{appendixtheorem}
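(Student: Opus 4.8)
The plan is to follow the proof of \cref{theorem:2} almost verbatim, with two substitutions: the one-hop aggregated quantities are replaced by their $l$-hop counterparts, and the per-node noise scale is replaced by the inter-node dispersion induced by the correlations of the aggregated features. First I would apply \cref{appendixlemma:GC-l-singlenode-Distribution} to write $\boldsymbol{X}^{(l)}_i = \boldsymbol{\mu}^{(l)}_{\varepsilon_i}(i) + \sigma\boldsymbol{G}^{(l)}_i$ with $\boldsymbol{\mu}^{(l)}_k = \sum_{o\in[c]}\hat{m}^{(l)}_{ko}\boldsymbol{\mu}_o$ and $\boldsymbol{\mu}^{(l)}_{\varepsilon_i}(i) = \boldsymbol{\mu}^{(l)}_{\varepsilon_i}(1\pm l o_n(1))$. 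Then, using \cref{appendixlemma:correlated_variance} and the ensuing decomposition $\boldsymbol{G}^{(l)}_i = \bar{\mathbf{g}}^{(l)} + \frac{||\boldsymbol{Q}^{(l)}||_F}{\sqrt{2}\,n}\mathbf{g}^{(l)}_i$, I would split the noise into a component $\bar{\mathbf{g}}^{(l)}$ common to every node and an approximately independent per-node deviation of standard deviation $\sigma^{(l)} = \sigma\,||\boldsymbol{Q}^{(l)}||_F/(\sqrt{2}\,n)$.

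The crucial observation is that $\sigma\bar{\mathbf{g}}^{(l)}$ is shared by all nodes and hence enters both class-conditional feature distributions identically; because the two-class separability of \cref{definition:separability} is invariant under a common translation of a node together with the two competing class centers, this shared term cancels in every pairwise comparison. What remains is exactly the configuration of \cref{theorem:2}, with the aggregated means replaced by $\boldsymbol{\mu}^{(l)}_k$, the noise replaced by $\sigma^{(l)}$, and the degree ratio $B = \sqrt{\bar{D}_k/\bar{D}_{\varepsilon_i}}$ equal to $1$ because we now assume $\bar{D}_k = \bar{D}_t$. Reusing the \cref{appendixlemma:gc_bayes_classifier} comparison $\tilde{\varphi}_{\varepsilon_i} > \tilde{\varphi}_k$, projecting the residual noise onto the direction $\hat{\mathbf{m}}^{(l)}_k - \hat{\mathbf{m}}^{(l)}_{\varepsilon_i}$ (which is standard normal exactly as in \cref{appendixeq:50}), I would obtain
\begin{equation}
\mathbb{E}_{i\in\mathcal{C}_t}\mathbb{P}\left[\zeta_i(k)\right] = \Phi\left(\frac{\gamma}{2\sigma}\frac{F^{(l)}_{tk}}{\varsigma^{(l)}_n} + \frac{\sigma}{\gamma}\frac{\varsigma^{(l)}_n}{F^{(l)}_{tk}}\ln\left(\frac{\eta_t}{\eta_k}\right)\right),
\end{equation}
where $F^{(l)}_{tk} = n\,||\hat{\mathbf{m}}^{(l)}_k - \hat{\mathbf{m}}^{(l)}_t||_2/||\boldsymbol{Q}^{(l)}||_F$ follows from substituting $\gamma' = \gamma/\sqrt{2}$ and $\sigma^{(l)} = \sigma\,||\boldsymbol{Q}^{(l)}||_F/(\sqrt{2}\,n)$, and $\varsigma^{(l)}_n = 1\pm l o_n(1)$ is inherited from \cref{appendixlemma:GC-l-singlenode-Distribution}. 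This establishes \cref{eq:F(l)}.

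For the explicit form I would specialize to equal class sizes $n/c$ and $\eta_t = \eta_k$, and substitute \cref{appendxilemma:Q}, namely $||\boldsymbol{Q}^{(l)}||_F^2 = \frac{n}{c}\sum_{k_1,k_2\in[c]}||\hat{\mathbf{m}}^{(l)}_{k_1} - \hat{\mathbf{m}}^{(l)}_{k_2}||_2^2\,(1\pm l o_n(1))$ for $l>1$. This immediately turns $F^{(l)}_{tk}$ into $\sqrt{n}\,\sqrt{c\,||\hat{\mathbf{m}}^{(l)}_k - \hat{\mathbf{m}}^{(l)}_t||_2^2 / \sum_{k_1,k_2}||\hat{\mathbf{m}}^{(l)}_{k_1} - \hat{\mathbf{m}}^{(l)}_{k_2}||_2^2}$; the stronger density assumption $\sum_t m_{\varepsilon_i t}m_{t\varepsilon_j} = \omega(\log^2 n/n)$, equivalently $\bar{p}_k = \omega(\log n/\sqrt{n})$, is precisely what makes this concentration hold, and at the boundary of this regime one has $\bar{D}/\log n = \sqrt{n}$, which is how the $\sqrt{n}$ prefactor is recast as $\bar{D}/\log n$ to give \cref{eq:F(l)_approx}. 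The auxiliary hypothesis $\sum_t m_{tk}\asymp\sum_t m_{kt}$ is used only to keep the degree normalization uniform across classes, so that the row concentration of $(\boldsymbol{D}^{-1}\boldsymbol{A})^l$ underlying \cref{appendxilemma:Q} holds simultaneously for all rows.

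The main obstacle is the correlation bookkeeping of the first two paragraphs. Unlike the single-layer analysis, the aggregated noises $\boldsymbol{G}^{(l)}_i$ are strongly correlated, so the naive picture of one independent Gaussian per node is invalid; the dispersion that actually governs separability is the inter-node variance $\frac{1}{2n^2}||\boldsymbol{Q}^{(l)}||_F^2$ from \cref{appendixlemma:correlated_variance}, not the per-node variance of \cref{appendixlemma:GC-l-singlenode-Distribution}. The delicate points are therefore justifying rigorously that the shared component is irrelevant to every pairwise boundary and that the residual per-node deviations may be treated as independent standard normals once projected, while simultaneously controlling the error $1\pm l o_n(1)$ that accumulates linearly in $l$ through the inductions in \cref{appendixlemma:GC-l-singlenode-Distribution} and \cref{appendxilemma:Q}. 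Once these are in place, the remainder is a transcription of the \cref{theorem:2} computation.
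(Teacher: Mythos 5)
Your proposal matches the paper's own proof essentially step for step: the same decomposition of the aggregated noise into a shared component $\sigma\bar{\mathbf{g}}^{(l)}$ plus an independent per-node deviation of scale $\sigma\,\|\boldsymbol{Q}^{(l)}\|_F/(\sqrt{2}\,n)$ via \cref{appendixlemma:correlated_variance}, the same cancellation of the shared term inside the likelihood-ratio comparison, the same transcription of the \cref{theorem:2} computation with $B=1$, and the same substitution of \cref{appendxilemma:Q} followed by recasting the $\sqrt{n}$ prefactor as $\bar{D}/\log n$ at the boundary of the density regime. No meaningful differences to report.
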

\begin{proof}
     According to \cref{appendix:node-features-correlation}, the aggregated features can be reformulated as
    \begin{equation}
        \boldsymbol{X}^{(l)}_i = \boldsymbol{\mu}_{\varepsilon_{i}}^{(l)}(i) + \sigma \bar{\mathbf{g}}^{(l)} + \bar{\sigma} \mathbf{g}^{(l)}_i,
    \end{equation}
    where $\bar{\sigma} = \frac{||\boldsymbol{Q}^{(l)}||_F}{\sqrt{2} n}\sigma$, $\mathbf{g}_i^{(l)} \sim \mathcal{N}\left(\mathbf{0}, \mathbf{I}\right)$, and $\forall i\neq j, \mathbb{E}\left[\mathbf{g}_i^{(l)T} \mathbf{g}_j^{(l)}\right] = 0$. For each node $i\in[n]$, the probability of node is misclassified into category $k$ is 
    \begin{equation}
        \mathbb{P}\left[\zeta_i(k)\right] =  \mathbb{P} \left[\varphi_{\varepsilon_i}^{(l)} > \varphi_k^{(l)}\right],
    \end{equation}
    where $\varphi_k^{(l)} = \eta_t\cdot\mathbb{P}[\boldsymbol{X}^{(l)}_i \mid y=k]$. Then, 
    \begin{equation}
        \begin{aligned}
            & \varphi_{\varepsilon_i}^{(l)} > \varphi_k^{(l)} \\
            \iff &
            \eta_{\varepsilon_i} \exp\left(-\frac{||\boldsymbol{X}_i^{(l)} - \sigma\bar{\mathbf{g}}^{(l)} - \boldsymbol{\mu}_{\varepsilon_i}^{(l)}||^2} {2\bar{\sigma}^{2}}\right) > 
            \eta_k\exp\left(-\frac{||\boldsymbol{X}_i^{(l)} - \sigma\bar{\mathbf{g}}^{(l)} - \boldsymbol{\mu}_k^{(l)}||^2}{2\bar{\sigma}^{2}}\right)\\
            \iff &
            \ln\eta_{\varepsilon_i} -\frac{||\boldsymbol{X}_i^{(l)} - \sigma\bar{\mathbf{g}}^{(l)} - \boldsymbol{\mu}_{\varepsilon_i}^{(l)}||^2} {2\bar{\sigma}^{2}} > 
            \ln\eta_{k} - \frac{||\boldsymbol{X}^{(l)} - \sigma\bar{\mathbf{g}}^{(l)} - \boldsymbol{\mu}_{k}^{(l)}||^2} {2\bar{\sigma}^{2}}\\
            \iff &
            2\bar{\sigma}^{2} \ln\left(\frac{\eta_{\varepsilon_i}}{\eta_k}\right) + ||\boldsymbol{\mu}_{\varepsilon_i}^{(l)}||^2\varsigma_n^{(l)} + 2\bar{\sigma}\boldsymbol{\mu}_{\varepsilon_i}^{(l)T}\bar{\mathbf{g}}_i^{(l)}\varsigma_n^{(l)} - 2\boldsymbol{\mu}_{\varepsilon_i}^{(l)T}\boldsymbol{\mu}_k^{(l)}\varsigma_n^{(l)} - 2\bar{\sigma}^{2}\boldsymbol{\mu}_{k}^{(l)T}\bar{\mathbf{g}}_i^{(l)}\varsigma_n^{(l)}+ ||\boldsymbol{\mu}_{k}^{(l)}||^2 > 0\\
            \iff & 
            2\bar{\sigma}^{2} \ln\left(\frac{\eta_{\varepsilon_i}}{\eta_k}\right)\varsigma_n^{(l)} + \gamma^{\prime 2}||\hat{\mathbf{m}}_{k}^{(l)} -\hat{\mathbf{m}}_{\varepsilon_i}^{(l)}||^2\varsigma_n^{(l)} > 2\bar{\sigma} \gamma^{\prime}\sum_{o\in[c]}\left(\hat{m}_{ko}^{(l)}-\hat{m}_{\varepsilon_i o}^{(l)}\right)\frac{\boldsymbol{\mu}_o^T}{||\boldsymbol{\mu}_o||}\bar{\mathbf{g}}_i^{(l)} \\
            \iff &
            \frac{\bar{\sigma}}{\gamma^{\prime}||\hat{\mathbf{m}}_{k}^{(l)}-\hat{\mathbf{m}}_{\varepsilon_i}^{(l)}||} \ln\left(\frac{\eta_{\varepsilon_i}}{\eta_k}\right)\varsigma_n^{(l)} + \frac{\gamma^{\prime}}{2\bar{\sigma}} ||\hat{\mathbf{m}}_{k}^{(l)} - \hat{\mathbf{m}}_{\varepsilon_i}^{(l)}|| \varsigma_n^{(l)} > \sum_{o\in[c]}\frac{\hat{m}_{ko}^{(l)}-\hat{m}_{\varepsilon_i o}^{(l)}}{||\hat{\mathbf{m}}_{k}^{(l)} - \hat{\mathbf{m}}_{\varepsilon_i}^{(l)}||}\frac{\boldsymbol{\mu}_o^T}{||\boldsymbol{\mu}_o||}\mathbf{g}_i.
        \end{aligned}
    \end{equation}
    Similar to \cref{appendixeq:50}, we have 
    \begin{equation}
        \sum_{o\in[c]}\frac{\hat{m}_{ko}^{(l)}-\hat{m}_{\varepsilon_i o}^{(l)}}{||\hat{\mathbf{m}}_{k}^{(l)} - \hat{\mathbf{m}}_{\varepsilon_i}^{(l)}||}\frac{\boldsymbol{\mu}_o^T}{||\boldsymbol{\mu}_o||}\mathbf{g}_i \sim \mathcal{N}\left(0,1\right).
    \end{equation}
    Then, the probability 
    \begin{equation}
        \begin{aligned}
            \mathbb{P}\left[\zeta_i(k)\right] & = \Phi\left( \frac{\gamma^{\prime}}{2\bar{\sigma}} ||\hat{\mathbf{m}}_{k}^{(l)} - \hat{\mathbf{m}}_{\varepsilon_i}^{(l)}|| \varsigma_n^{(l)} + \frac{\bar{\sigma}}{\gamma^{\prime}||\hat{\mathbf{m}}_{k}^{(l)}-\hat{\mathbf{m}}_{\varepsilon_i}^{(l)}||} \ln\left(\frac{\eta_{\varepsilon_i}}{\eta_k}\right)\varsigma_n^{(l)}\right) \\
            &
            = \Phi\left(\frac{\gamma}{2\sigma}\frac{F_{\varepsilon_i k}^{(l)}}{\varsigma_n^{(l)}} + \frac{\sigma}{\gamma} \frac{\varsigma_n^{(l)}}{F_{\varepsilon_i k}^{(l)}} \ln\left(\frac{\eta_{\varepsilon_i}}{\eta_k}\right) \right),
        \end{aligned}
    \end{equation}
    where 
    \begin{equation}
        \begin{aligned}
            F_{t k}^{(l)} & = \frac{n}{||\boldsymbol{Q}^{(l)}||_F} ||\hat{\mathbf{m}}_{k}^{(l)}-\hat{\mathbf{m}}_{t}^{(l)}||_2
        \end{aligned}
    \end{equation}
    and $\varsigma_n^{(l)} = \left(1\pm lo_n\left(1\right)\right)$.
    
    By assuming that each class possesses exactly $\frac{n}{c}$ nodes and $\sum_{t}m_{\varepsilon_i t} m_{t\varepsilon_j}=\omega(\log n^2/n)$, the above formula can be further represented as
    \begin{equation}
        \begin{aligned}
            F_{t k}^{(l)} = \sqrt{\frac{nc||\hat{\mathbf{m}}_{k}^{(l)}-\hat{\mathbf{m}}_{t}^{(l)}||_2^2}{\sum_{k_1,k_2\in[c]}||\hat{\mathbf{m}}_{k_1}^{(l)}-\hat{\mathbf{m}}_{k_2}^{(l)}||_2^2}},
            \label{eq:119}
        \end{aligned}
    \end{equation}
    for $l>1$, according to \cref{appendxilemma:Q}. Since $\sum_{t}m_{t k}\asymp \sum_{t}m_{k t}$, for each $k_1, k_2\in[c]$, $\frac{1}{c^2}(\sum_{t}m_{t k_2})(\sum_{t}m_{k_1 t}) \geq \sum_{t}m_{k_1 t}m_{t k_2} = \omega \left(\log n^2 / n\right)$. Then, we have $\bar{p}_k = \omega\left(\log n / \sqrt{n} \right)$ and $\bar{D}=\omega \left(\log n\sqrt{n}\right)$. Then, \cref{eq:119} can be represented as
    \begin{equation}
        \begin{aligned}
            F_{t k}^{(l)} = \sqrt{\frac{c||\hat{\mathbf{m}}_{k}^{(l)}-\hat{\mathbf{m}}_{t}^{(l)}||_2^2}{\sum_{k_1,k_2\in[c]}||\hat{\mathbf{m}}_{k_1}^{(l)}-\hat{\mathbf{m}}_{k_2}^{(l)}||_2^2}} \frac{\bar{D}}{\log n},
            \label{eq:120}
        \end{aligned}
    \end{equation}
    Note that the $O_n(1)$ error term can be combined with $\varsigma_n$.
    Then, we have 
    \begin{equation}
        \mathbb{E}_{i\in\mathcal{C}_t} \mathbb{P}\left[\zeta_i(k)\right]= \Phi\left(\frac{\gamma}{2\sigma}\frac{F_{t k}^{(l)}}{\varsigma_n^{(l)}} + \frac{\sigma}{\gamma} \frac{\varsigma_n^{(l)}}{F_{t k}^{(l)}} \ln\left(\frac{\eta_{t}}{\eta_k}\right) \right).
    \end{equation}
\end{proof}

\subsection{Proof of \cref{proposition:1}}
\begin{proof}
    When $\hat{\mathbf{M}}$ is non-singular, denote $\hat{\mathbf{M}}\hat{\mathbf{M}}^{-1}=I$. Then, according to the compatibility of matrix norms, we have 
    \begin{equation}
        ||\hat{\mathbf{M}}^{-1}||_F \geq c/||\hat{\mathbf{M}}||_F > 0
    \end{equation}
    and
    \begin{equation}
        ||\hat{\mathbf{m}}_{k}^{(l)}-\hat{\mathbf{m}}_{t}^{(l)}||_2  \geq ||\hat{\mathbf{m}}_{k}-\hat{\mathbf{m}}_{t}||_2 ||\hat{\mathbf{M}}^{-1}||_F^{1-l} > 0.
    \end{equation}
    Thus, we obtain that each separability satisfies
    \begin{equation}
        0 < F_{tk}^{(l)}.
    \end{equation}
    Then, since 
    \begin{equation}
        \frac{\log n}{\sqrt{c} \bar{D}} \sum_{t, k\in[c]} F^{(l)}_{tk} \geq \sum_{t, k\in[c]} \left(\frac{\log n}{\sqrt{c} \bar{D}} F^{(l)}_{tk}\right)^2 = 1,
    \end{equation}
    we have
    \begin{equation}
    \sum_{t, k\in[c]} F^{(l)}_{tk} \geq \sqrt{c} \bar{D} / \log n.
    \end{equation}
\end{proof}

\section{Additional Results for Synthetic Data}

\subsection{More visualizations with different $a$}
\label{appendix:more_visualization_with_different_a}

\begin{figure}[h]
    \begin{minipage}{\textwidth}
        \centering
        \includegraphics[trim=5.5cm 0cm 5cm 0cm, clip, width=\textwidth]{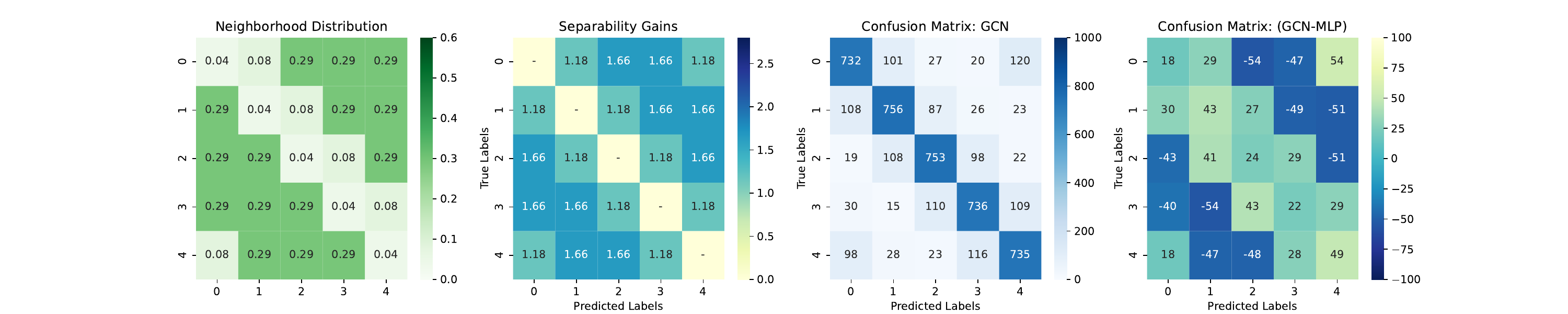}
        \caption{Example of mixed heterophily with $a=0.04$. The accuracy of GCN is 74.24.}
        \label{fig:heterophily_0.04}
    \end{minipage}%
    \vspace{0.1cm}
    \begin{minipage}{\textwidth}
        \centering
        \includegraphics[trim=5.5cm 0cm 5cm 0cm, clip, width=\textwidth]{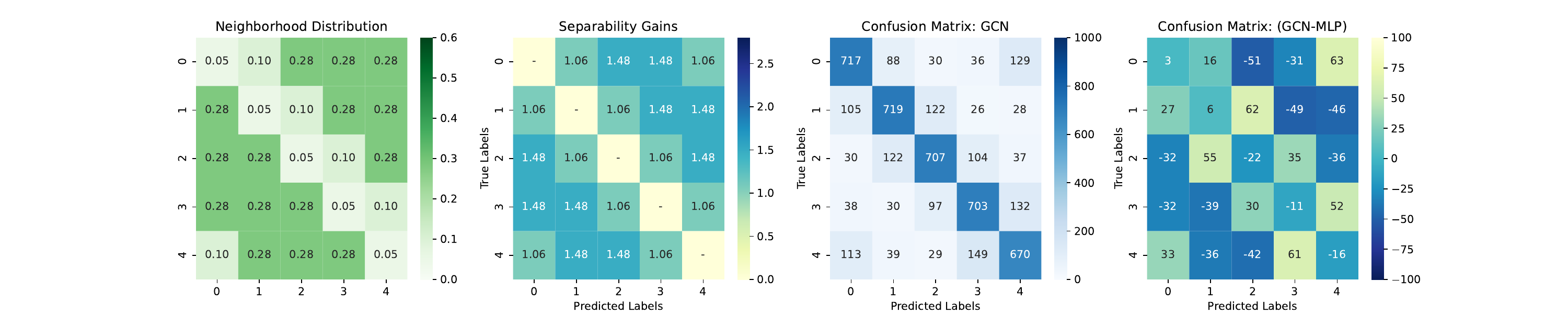}
        \caption{Example of mixed heterophily with $a=0.05$. The accuracy of GCN is 70.32.}
        \label{fig:heterophily_0.05}
    \end{minipage}
\end{figure}

Here, we present additional results for different values of $a$. Recall that the family of heterophily patterns is constructed based on the parameter $a$, where $\hat{\mathbf{M}}_{ij}$ is $a$ when $j=i$, $2a$ when $j=i+i$, and $\frac{1}{3}-a$ otherwise. Thus, $a$ is chosen from the range of $[0, \frac{1}{3}]$. Specifically, we select six representative heterophily patterns to elaborate the effect of different heterophily patterns, where $a\in[0.04, 0.05, 0.1, 0.12, 0.26, 0.3]$. Note that $\varsigma_n\approx 1.2$ in these heterophily patterns.

\cref{fig:heterophily_0.04,fig:heterophily_0.05} depict two instances of mixed heterophily with values of $a$ equal to $0.04$ and $0.05$, respectively. It is evident that these two patterns exhibit very similar neighborhood distributions, resulting in similar separability gains. When $a=0.04$, nodes within classes $1$ and $2$ possess a smaller gain, i.e., $1.18$, while nodes within classes $1$ and $3$ possess a larger gain, i.e., $1.66$. By considering the $\varsigma_n\approx 1.2$, this implies that nodes within class $1$ and $2$ are more indistinguishable, while nodes within class $1$ and $3$ are more distinguishable. This presumption can be verified from the differences in the confusion matrices of GCN and MLP. As can be observed, when a GC operation is applied, the number of nodes belonging to class $1$ but misclassified into class $2$ increases by $27$, while the number of nodes misclassified into class $3$ decreases by $49$. When $a=0.05$, the separability gains are slightly smaller than those when $a=0.04$. This results in the overall non-diagonal elements of the differences in the confusion matrices having relatively larger values compared to the situation when $a=0.04$.

\begin{figure}[h]
    \begin{minipage}{\textwidth}
        \centering
        \includegraphics[trim=5.5cm 0cm 5cm 0cm, clip, width=\textwidth]{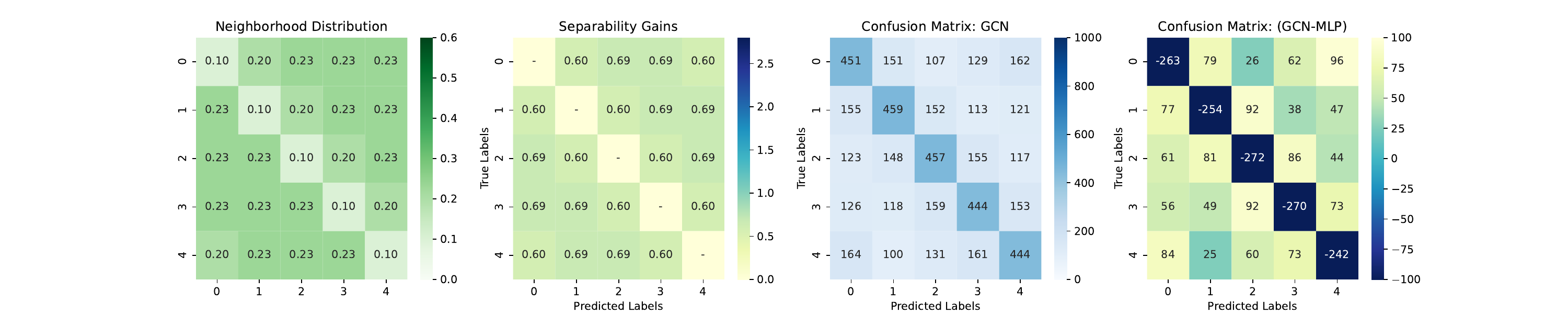}
        \caption{Example of bad heterophily with $a=0.1$. The accuracy of GCN is 45.10.}
        \label{fig:heterophily_0.1}
    \end{minipage}%
    \vspace{0.1cm}
    \begin{minipage}{\textwidth}
        \centering
        \includegraphics[trim=5.5cm 0cm 5cm 0cm, clip, width=\textwidth]{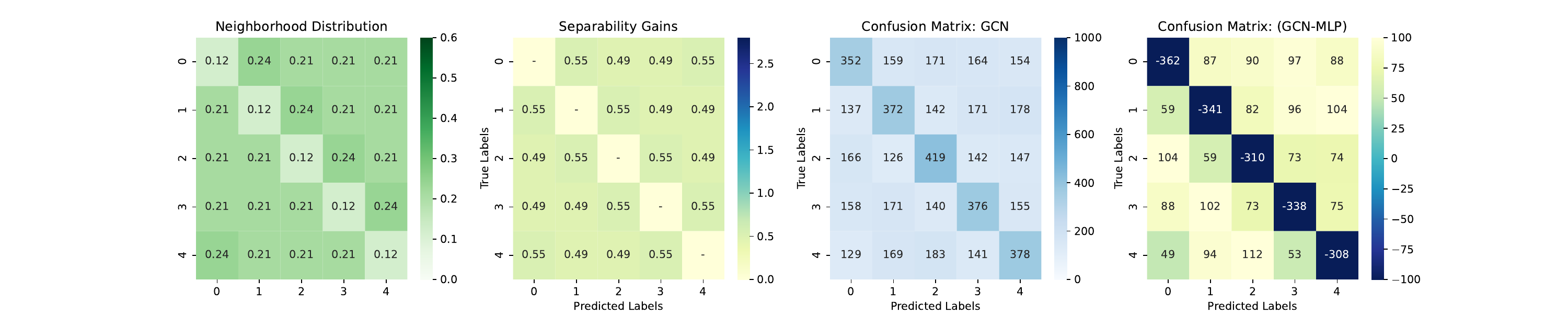}
        \caption{Example of bad heterophily with $a=0.12$. The accuracy of GCN is 37.94.}
        \label{fig:heterophily_0.12}
    \end{minipage}
\end{figure}

\cref{fig:heterophily_0.1,fig:heterophily_0.12} show two examples of bad heterophily with $a=0.1$ and $a=0.12$, where the neighborhood distributions are similar across different classes. Their separability gains are all smaller than $\varsigma_n$. Therefore, after applying a GC operation, the separability of each class pair degrades. The graph with $a=0.12$ possesses smaller separability gains than the graph with $a=0.1$, leading to a larger damage to the separability. For example, when applying a GC operation, the number of nodes belonging to class $0$ that are misclassified to class $1$ increases $79$ and $87$ when $a=0.1$ and $a=0.12$, respectively.

\begin{figure}[h]
    \begin{minipage}{\textwidth}
        \centering
        \includegraphics[trim=5.5cm 0cm 5cm 0cm, clip, width=\textwidth]{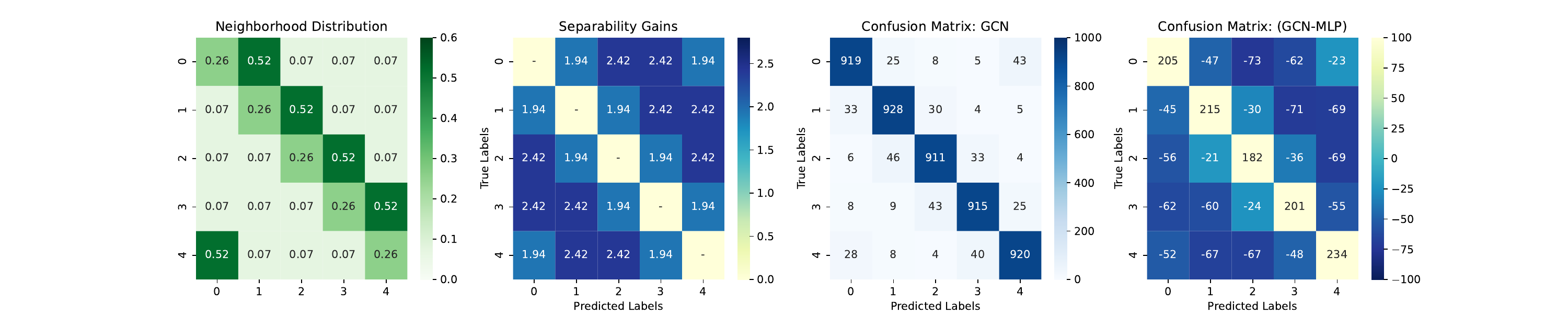}
        \caption{Example of good heterophily with $a=0.26$. The accuracy of GCN is 91.86.}
        \label{fig:heterophily_0.26}
    \end{minipage}%
    \vspace{0.1cm}
    \begin{minipage}{\textwidth}
        \centering
        \includegraphics[trim=5.5cm 0cm 5cm 0cm, clip, width=\textwidth]{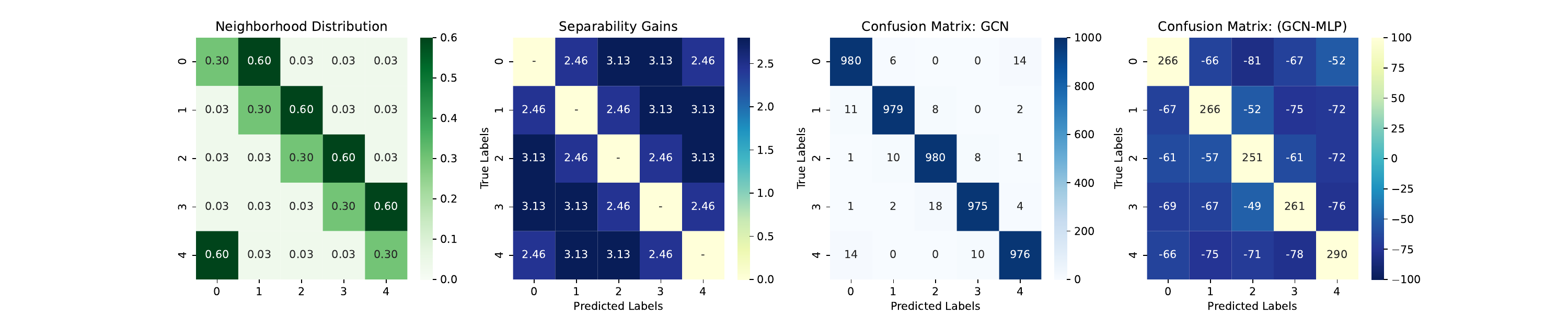}
        \caption{Example of good heterophily with $a=0.3$. The accuracy of GCN is 97.80.}
        \label{fig:heterophily_0.3}
    \end{minipage}
\end{figure}

\cref{fig:heterophily_0.26,fig:heterophily_0.3} present two examples of good heterophily with $a=0.26$ and $a=0.3$, where the neighborhood distributions differ significantly across different classes. In both cases, their separability gains exceed $\varsigma_n$. Consequently, by applying a GC operation, the separability between all pairs of classes increases, resulting in an overall accuracy improvement. The graph generated with $a=0.3$ exhibits an even more distinct neighborhood distribution. As a result, the gains from the GC operation are more substantial, as evidenced by the differences in the confusion matrix.

In summary, the above observations further verify the effectiveness of \cref{theorem:1} and shed light on the underlying mechanism driving the influence of heterophily.

\subsection{More Heterophily Patterns}
\label{appendix:more_heterophily_patterns}
The synthetic graphs in \cref{section:synthetic_data} are all generated from a family of heterophily patterns with parameter $a$. Here, we validate our theory on two more families of heterophily patterns.

\textbf{Homophilous Family} is a family of homophily patterns, where the diagonal elements possess different values from the other elements. It neighborhood distributions are
\begin{equation}
    \hat{\mathbf{M}} = \left(
    \begin{array}{lllll}
        a_1 & b_1 & b_1 & b_1 & b_1 \\ 
        b_1 & a_1 & b_1 & b_1 & b_1 \\
        b_1 & b_1 & a_1 & b_1 & b_1 \\ 
        b_1 & b_1 & b_1 & a_1 & b_1 \\ 
        b_1 & b_1 & b_1 & b_1 & a_1 \\ 
    \end{array}\right),
\end{equation}
where $a_1\in[0,1]$ and $b_1 = \left(1-a_1\right)/4$.

\textbf{Group Family} is a family of group heterophily patterns. There are two groups of classes, i.e., group $(0,1)$ and group $(2,3,4)$. The intra-group classes group possess similar neighborhood distribution, while inter-group classes possess different neighborhood distribution. It neighborhood distributions are
\begin{equation}
    \hat{\mathbf{M}} = \left(
    \begin{array}{lllll}
        a_2 & b_2 & c_2 & c_2 & c_2 \\ 
        b_2 & a_2 & c_2 & c_2 & c_2 \\
        d_1 & d_1 & a_1 & b_1 & b_1 \\ 
        d_1 & d_1 & b_1 & a_1 & b_1 \\ 
        d_1 & d_1 & b_1 & b_1 & a_1 \\ 
    \end{array}\right),
\end{equation}
where $a_2\in[0,0.2]$, $b=a+0.2$, $c=(0.8-2*a)/3$, and $d=(0.6-3*a)/2$.

\begin{figure}[h]
  \subfigure[Results with homophilous family.]{
    \includegraphics[trim=0cm 0cm 0cm 0cm, clip, width=0.49\columnwidth]{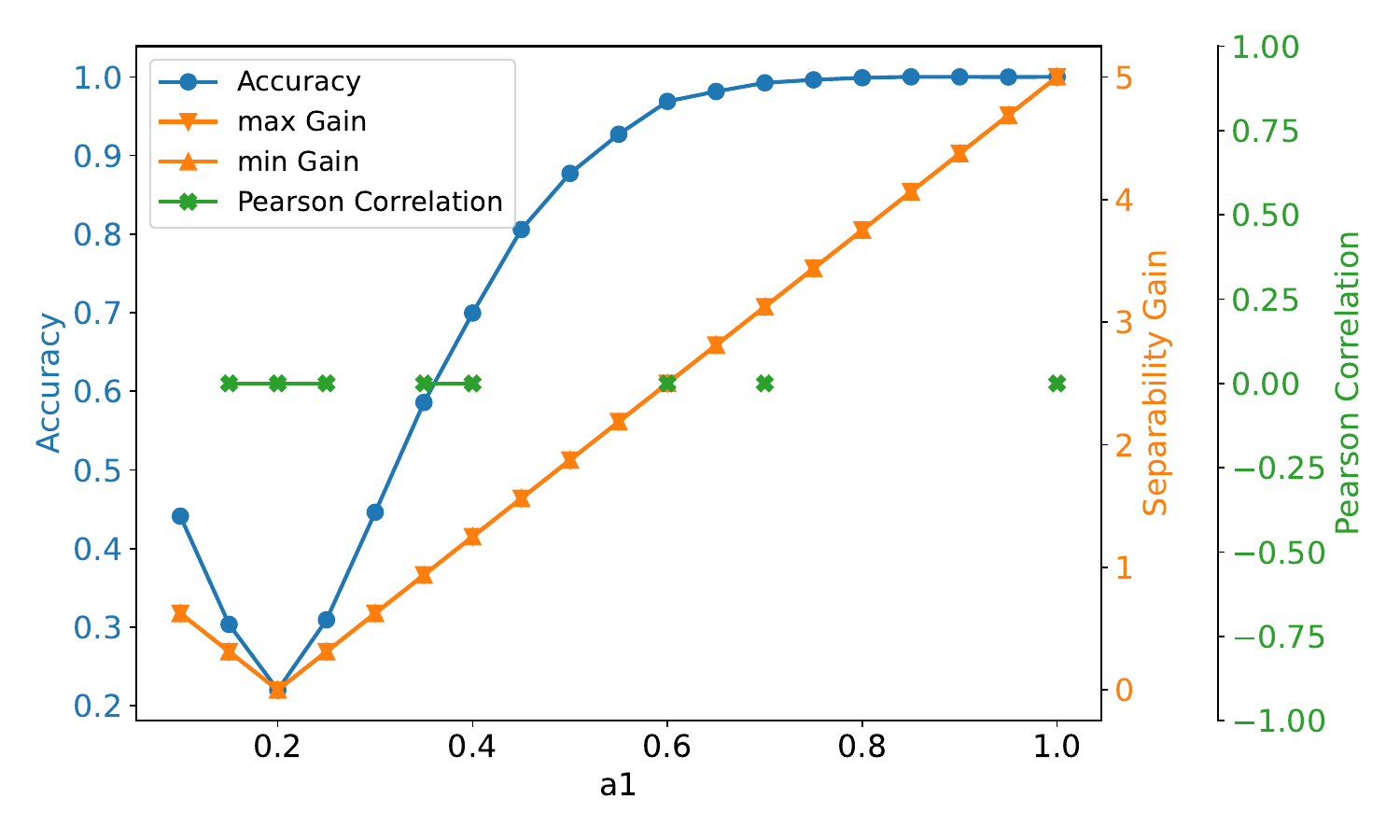}
    \label{fig:heterophily_accs_1}
  }
  \subfigure[Results with group family.]{
    \includegraphics[trim=0cm 0cm 0cm 0cm, clip, width=0.49\textwidth]{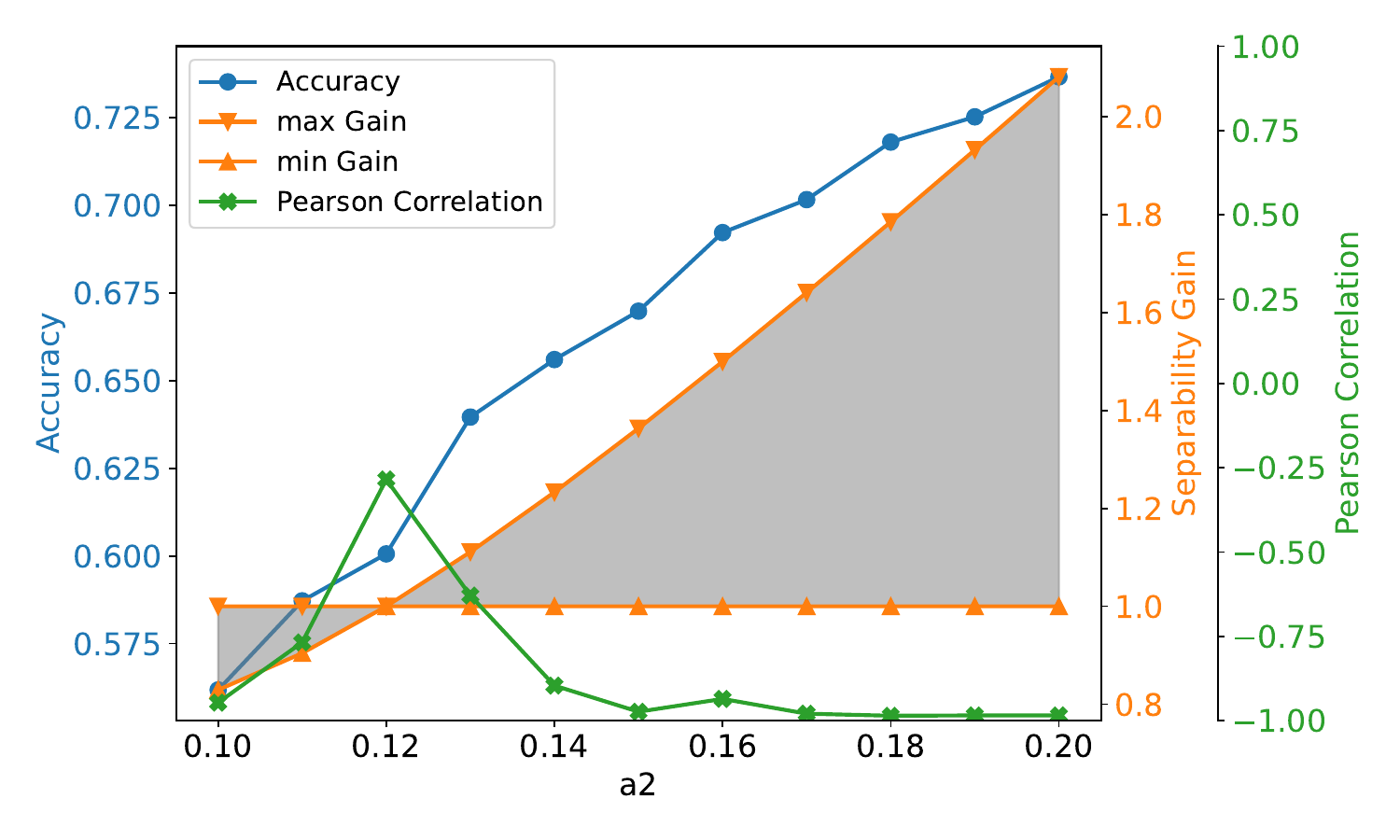}
    \label{fig:heterophily_accs_2}
  }
  \caption{Results of additional heterophily patterns.}
  \label{fig:appendix_heterophily_accs}
\end{figure}

\cref{fig:appendix_heterophily_accs} show the results of the above two families of heterophily patterns, with different $a_1$ and $a_2$. In \cref{fig:heterophily_accs_1}, curves of maximum gain and minimum gain coincide, which indicates the separability gains are all the same. Therefore, the Pearson correlation between separability gains and confusion matrix gains is meaningless under such circumstance. Similarly, a same situation occurs in \cref{fig:heterophily_accs_2}, when $a_2=0.12$. Overall, the separability gains are positively correlated with the accuracy. The separability gains and confusion matrix gains are highly negative correlated when the separability gains are not with one value.

\subsection{Large-Scale Synthetic Graphs}
\label{appendix:large_scale_synthetic_graphs}

Here, we verify our theory on the large-scale synthetic graphs generated from our \myM\ model. Similarly, we adopt the settings in \cref{section:synthetic_data} with the exception of assigning the number of nodes as $n=100,000$. Since the averaged node degree for each class is still $25$, the expected number of
edges in these large-scale graphs is $2,500,000$. As can be observed in \cref{fig:heterophily_accs_large_scale_graphs}, the results of the large-scale synthetic graphs are similar to those in \cref{fig:heterophily_accs}, where the number of nodes is only $1000$. This observation further validates the effectiveness of our theory.

\begin{figure}[h]
\centering
\includegraphics[trim=0cm 0cm 0cm 0cm, clip, width=0.59\columnwidth]{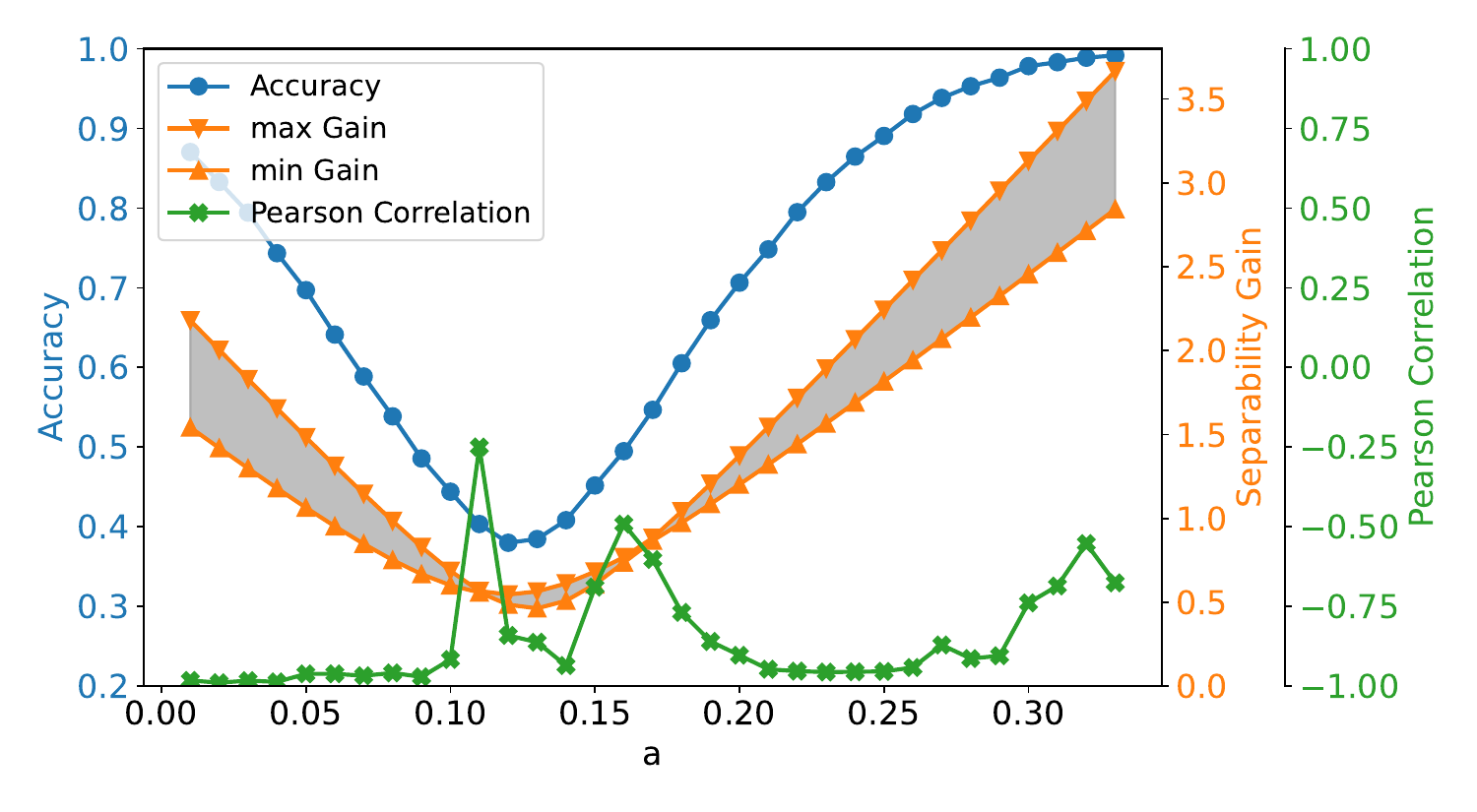}
\caption{Results on large-scale synthetic data with the number of nodes as $n=100,000$.}
\label{fig:heterophily_accs_large_scale_graphs}
\end{figure}

\subsection{Precision Problem with Stacking Multiple GC Operations}
\label{appendix:precision_problem_with_l_GCs}
% # 1.0000001
% # 1.0000000000000002
% # 1.0000000000000000001
To verify the \textit{precision limitation} issue, we introduce two additional data formats: float32 and float128. Note that in our development environment, float32, float64, and float128 can accurately represent the numbers up to 7, 16, and 19 digits, respectively. \footnote{This part of the code is implemented in \textit{NumPy} (https://numpy.org/doc/stable/index.html), where \textit{np.float128} provides only as much precision as \textit{np.longdouble}, 64 bits in our development environment.}

% Note that in the compatible

\begin{figure}[h!]
\subfigure[Results on the implementation of float32.]{
    \includegraphics[trim=0cm 0cm 0cm 0cm, clip, width=0.49\textwidth]{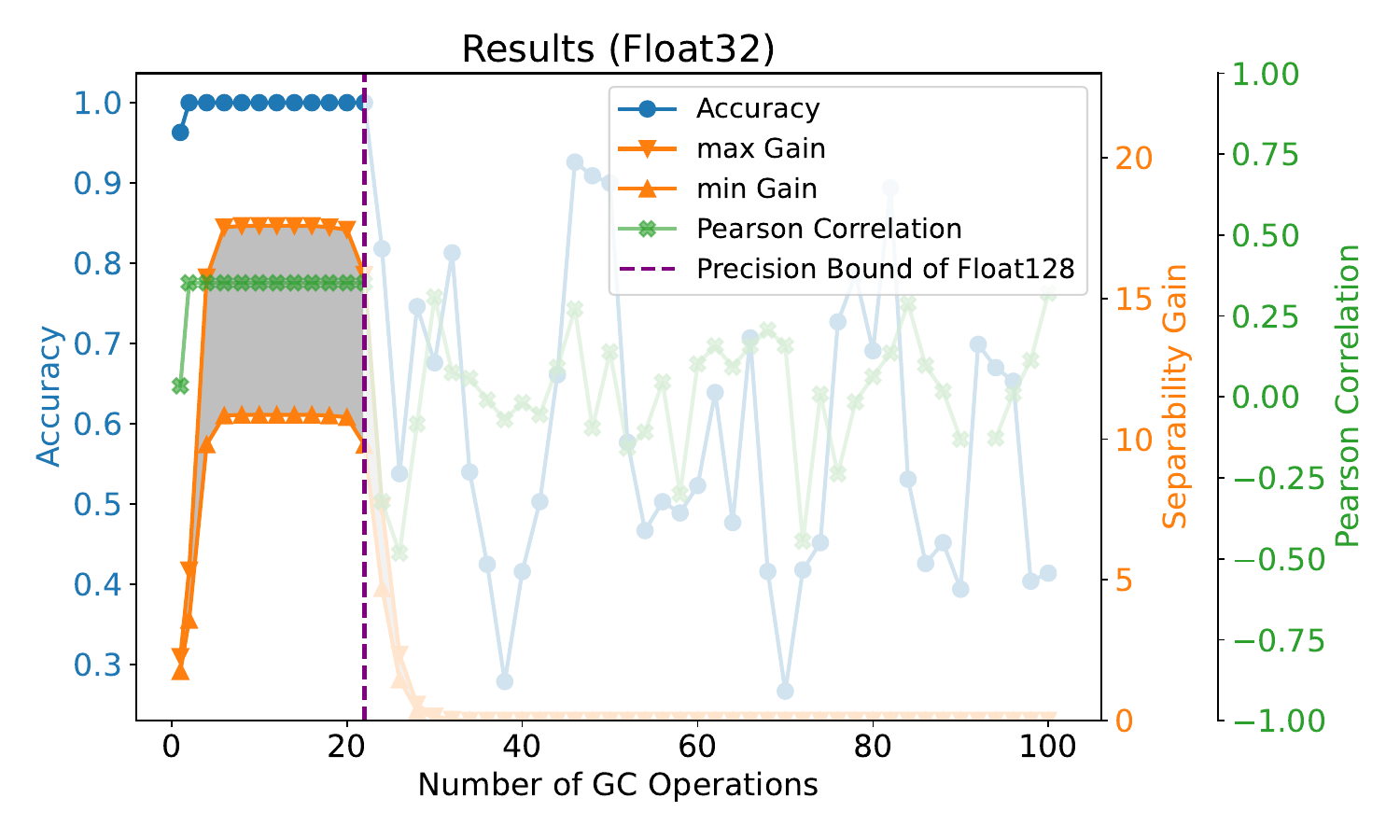}
    \label{fig:layer_accs_fp32}
  }
  \subfigure[Results on the implementation of float128.]{
    \includegraphics[trim=0cm 0cm 0cm 0cm, clip, width=0.49\columnwidth]{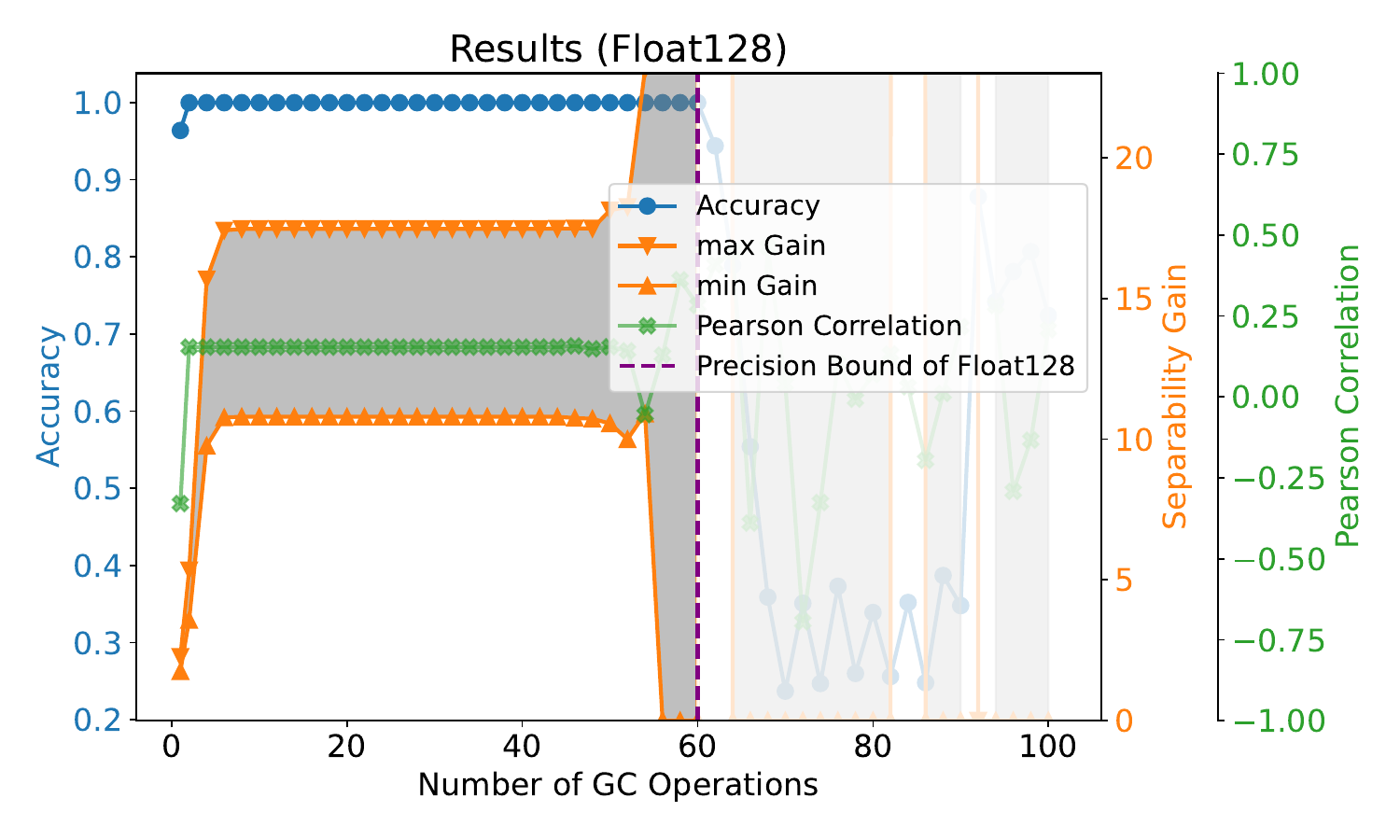}
    \label{fig:layer_accs_fp128}
  }
  \caption{Results with stacking multiple GC operations, implemented on different data formats.}
\end{figure}

As can be observed in \cref{fig:layer_accs_fp64,fig:layer_accs_fp32,fig:layer_accs_fp128}, both of their accuracy curves and separability gain regions exhibit similar behavior. They maintain the similar high values during an initial period and then suddenly drop at some point. However, they experience this sudden drop at different numbers of GC operations: 22 for float32, 50 for float64, and 60 for float128. This observation suggests that the suddenly decrement of accuracy is caused by the \textit{precision limitation} of the corresponding data format.

\begin{figure}[h!]
  \centering
  \subfigure[Precision problem of Float32 data format.]{
    \includegraphics[trim=0cm 0cm 0cm 0cm, clip, width=0.45\textwidth]{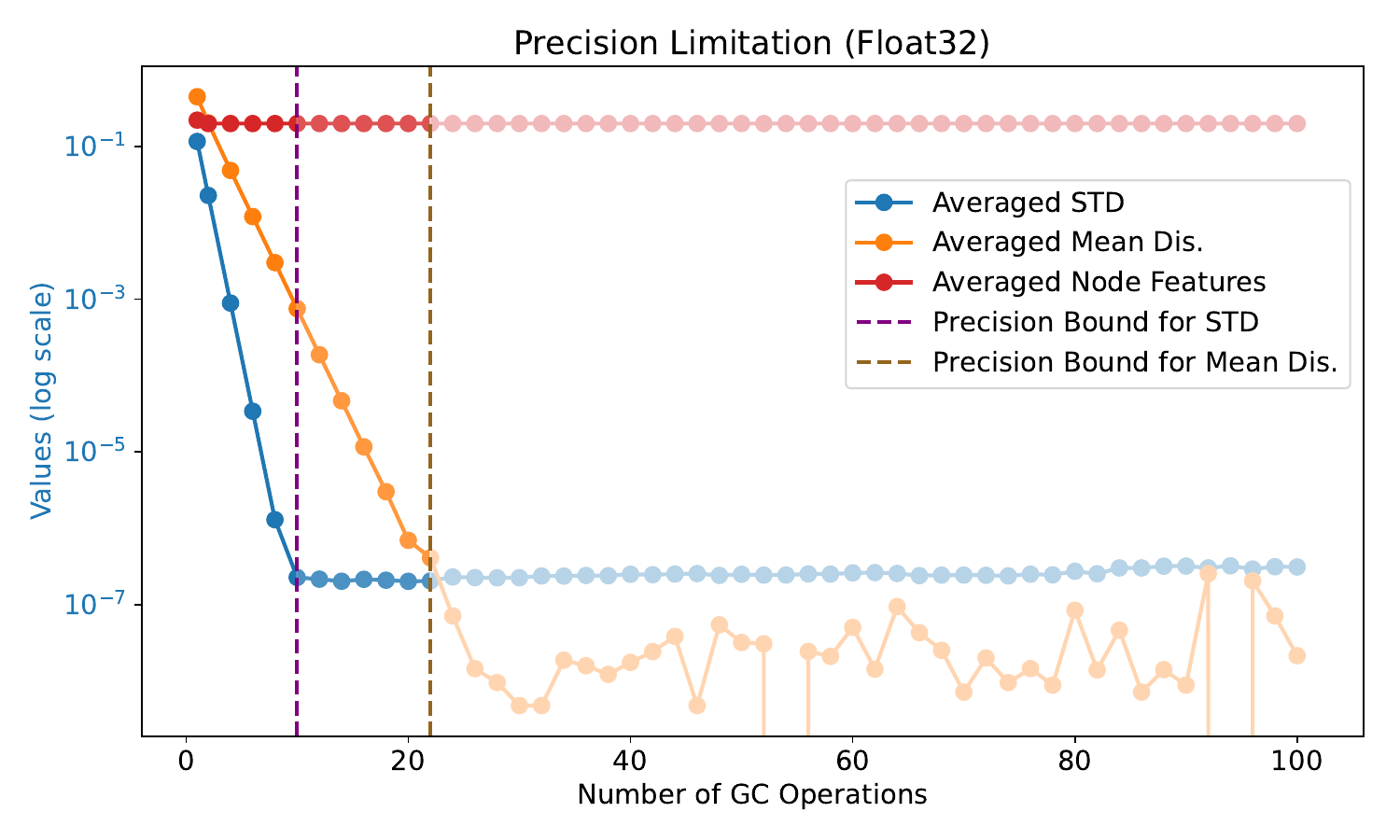}
    \label{fig:layer_accs_fp32_precision}
  }
  \subfigure[Precision problem of Float64 data format.]{
    \includegraphics[trim=0cm 0cm 0cm 0cm, clip, width=0.45\columnwidth]{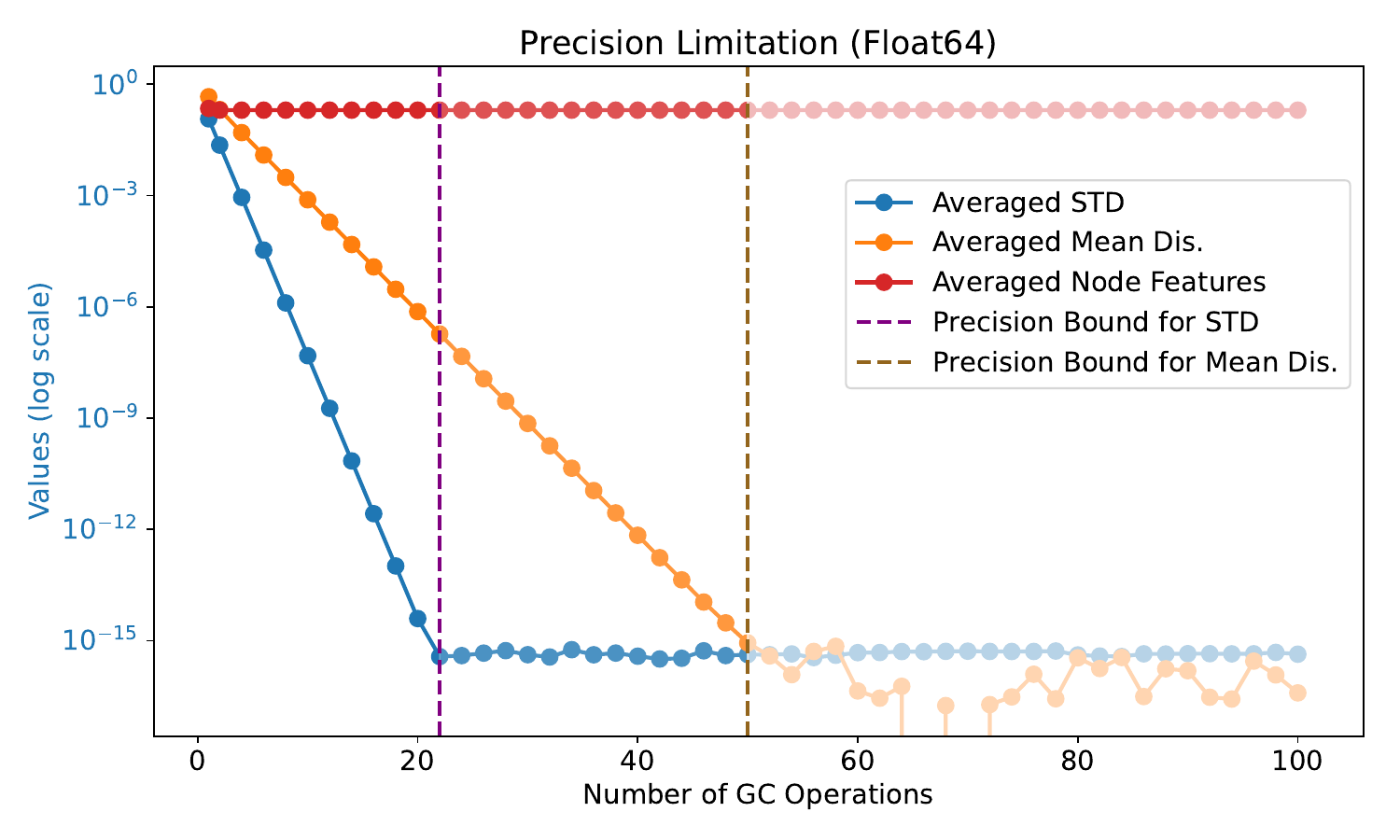}
    \label{fig:layer_accs_fp64_precision}
  }
  \subfigure[Precision problem of Float128 data format.]{
    \includegraphics[trim=0cm 0cm 0cm 0cm, clip, width=0.45\columnwidth]{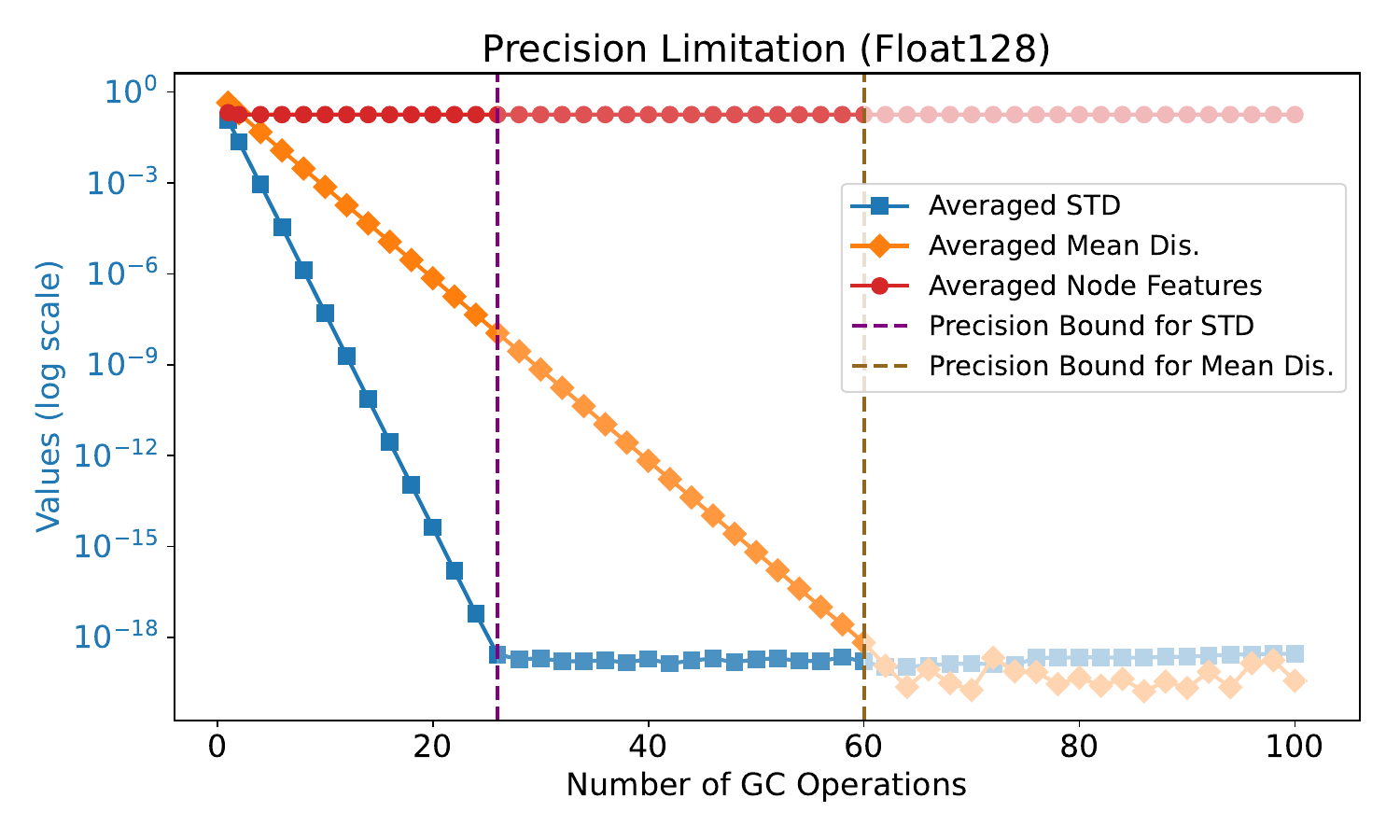}
    \label{fig:layer_accs_fp128_precision}
  }
  \caption{Precision bounds for different data formats. 
  % Here, averaged STD represents the standard variance calculated among different dimension of features, i.e., $\tilde{\sigma}^{(l)}$; average Mean Distance represents the mean of the distance of $\tilde{\gamma}^{(l)}=||\tilde{\boldsymbol{\mu}}_i^{(l)} - \tilde{\boldsymbol{\mu}}_i^{(l)}||$. Note that $\frac{\tilde{\gamma}^{(l)}}{2\tilde{\sigma}^{(l)}}$ represents the approximate separability when stacking $l$ GC operations.
  }
  \label{fig:layer_accs_precision_bounds}
\end{figure}

For a deeper investigation, \cref{fig:layer_accs_precision_bounds} visualizes the variations in three variables under the implementation of different data formats. In \cref{fig:layer_accs_precision_bounds}, averaged STD represents the standard variance calculated among different dimension of features, i.e., $\tilde{\sigma}^{(l)}$; average Mean Distance represents the average of the distance of the feature within different classes $\tilde{\gamma}^{(l)}\approx \operatorname{Mean}\left(||\tilde{\boldsymbol{\mu}}_i^{(l)} - \tilde{\boldsymbol{\mu}}_i^{(l)}||\right)$. Reminding that for Gaussian node features, $\frac{\tilde{\gamma}^{(l)}}{2\tilde{\sigma}^{(l)}}$ can represents the separability when stacking $l$ GC operations as stated in \cref{theorem:1}).

Under the implementations of different data formats, the $\tilde{\sigma}^{(l)}$ and $\tilde{\gamma}^{(l)}$ are decreasing with approximate the same rate during an initial period.
Then, for the results of float64, both the standard variance does not decline when their value approach approximately $10^{-15}$, i.e., when the number of GC operations grows to 50, both of them cannot be precisely represented. Therefore, as is expected, the accuracy of GCN decreases suddenly. When we increase the precision of floating-point data format to float128, the difference of the node features can be obtained until they degrade to $10^{-18}$. Then, the point of descent can be postponed. On the contrary, when utilizing float32, the difference cannot be obtained when they degrade to $10^{-7}$.

Therefore, this observation demonstrates that as the number of GC operations increases, although the nodes remain distinguishable, there will be a sudden decrease in accuracy, due to the \textit{precision problem} associated with the data format.

\section{Additional Results for Real-world Data}
\label{appendix:additional_results_for_realworld_data}
\subsection{Dataset Statistics}
\label{appendix:dataset_statistics}

We employ eight node classification datasets to verify the effectiveness of our theory, including two citation network (i.e. Cora \citep{planetoid} and Arxiv-year \citep{heterophily_benchmark_WWWworkshop22}), two Wikipedia networks (i.e. Chameleon and Squirrel) \citep{geom-gcn}, a co-occurrence network (i.e. Actor \citep{geom-gcn}), a co-purchasing network (i.e., Amazon-ratings \citep{heterophily_benchmark_iclr23}), a crowdsourcing co-working network(i.e., Workers \citep{heterophily_benchmark_iclr23}), and a patent network(i.e., Snap-patents) \citep{heterophily_benchmark_iclr23}. Their statistics are provided in \cref{table:statistics-1,table:statistics-2}. Their homophily ratios are calculated following \cite{geom-gcn}. For each dataset, we randomly selected 60\%/20\%/20\% nodes to construct the training/validation/testing sets.

\begin{table}[h]
\centering
\caption{Dataset Statistics (Part One).}
\begin{tabular}{ccccccc}
\toprule
\multicolumn{1}{l}{} & Nodes & Edges  & node features & classes & avg degree & homophily ratio \\
\midrule
Cora                 & 2708  & 10556  & 1433          & 7       & 3.90       & 0.83           \\
Chameleon            & 2277  & 62792  & 2325          & 5       & 27.55      & 0.25           \\
Actor                & 7600  & 30019  & 932           & 5       & 3.93       & 0.16           \\
Amazon-ratings       & 24492 & 93050  & 300           & 5       & 7.60       & 0.32           \\
Squirrel             & 5201  & 396846 & 2089          & 5       & 76.27      & 0.22           \\ 
Workers              & 11758  & 519000 & 10          & 2       & 44.14      & 0.59          \\ 
Arxiv-year           & 169343  & 1166243 & 128          & 5       & 6.89      & 0.26           \\
Snap-patents         & 2923922  & 13975791 & 269          & 5       & 4.78      & 0.22           \\
\bottomrule
\end{tabular}
\label{table:statistics-1}
\end{table}

\begin{table}[h]
\centering
\caption{Dataset Statistics (Part Two).}
\begin{tabular}{ccc}
\toprule
\multicolumn{1}{l}{} & avg degree (class)                         & nodes (class)                       \\
\midrule
Cora                 & [4.35, 4.74, 4.36, 3.46, 3.73, 3.64, 3.66] & [351, 217, 418, 818, 426, 298, 180] \\
Chameleon            & [21.94, 24.76, 27.32, 26.50, 39.17]        & [456, 460, 453, 521, 387]           \\
Actor                & [4.13, 3.90, 3.95, 3.77, 4.01]             & [ 853, 1337, 1630, 1815, 1965]      \\
Amazon-ratings       & [3.72, 3.97, 3.82, 3.56, 3.15]             & [6560, 9010, 5678, 2183, 1061]      \\
Squirrel             & [46.38, 61.86, 73.97, 88.06, 111.13]       & [1042, 1040, 1039, 1040, 1040]  \\ 
Workers              & [40.45, 74.27]                             & [9192, 2566]  \\ 
Arxiv-year           & [9.71, 18.95, 14.21,  7.64,  3.63]         & [31971, 21189, 37781, 29799, 48603]  \\ 
Snap-patents         & [4.80, 7.15, 7.84, 6.48, 3.13]             & [559377, 609501, 551116, 579876, 624052]  \\ 
\bottomrule
\end{tabular}
\label{table:statistics-2}
\end{table}

\newpage
\subsection{Detailed Settings}
\label{appendix:detailed_settings_of_realworld_data}

We employ a two-layered MLP for the baseline network and incorporate one GC operation to the last layer to construct the corresponding GCN.
The cross-entropy loss is utilized as the loss function, and Adam \cite{adam} optimizer is employed. We conduct the same grid search to find the best group of hyperparameters. The number of hidden neurons, learning rate, weight decay rate, and dropout rate are obtained by the grid-search strategy. These hyperparameters are search in: 
\begin{itemize}
    \item number of hidden neurons: $\operatorname{hidden} \in [16, 32, 64, 128, 256]$;
    \item learning rate: $\operatorname{lr} \in [0.001, 0.005, 0.01]$;
    \item weight decay rate: $\operatorname{wd} \in [0, 1e-5, 5e-4, 1e-4]$;
    \item dropout rate: $\operatorname{dropout} \in [0, 0.2, 0.5]$.
\end{itemize}

\subsection{Visualizations of the results on Real-World Data}
\label{appendix:visualizations_of_realworld_data}

Here, we present detailed visualizations of the neighborhood distribution, topological noise, separability gains, and the learned confusion matrices on these employed real-world datasets for a comprehensive understanding.

\begin{figure}[h]
\vspace{-0.3cm}
  \subfigure[Neighborhood Distribution]{
    \includegraphics[trim=0cm 0cm 0cm 0cm, clip, width=0.31\columnwidth]{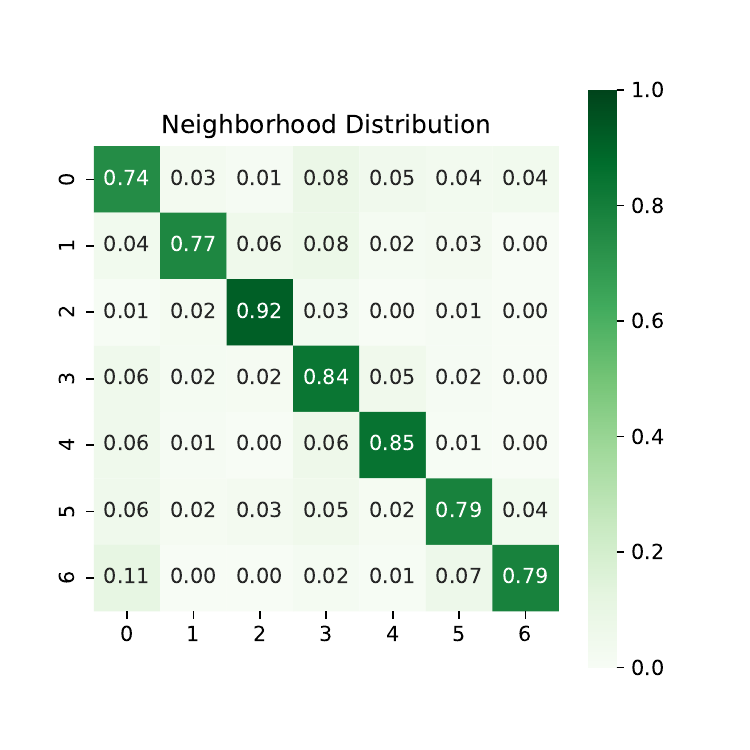}
    \label{fig:dataset_cora_ND}
  }
  \subfigure[Topological Noise]{
    \includegraphics[trim=0cm 0cm 0cm 0cm, clip, width=0.31\textwidth]{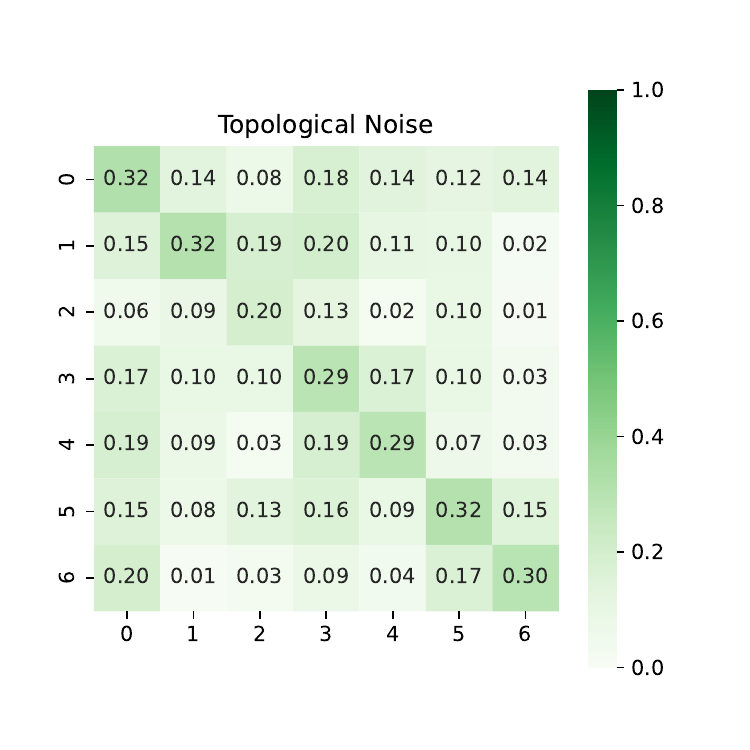}
    \label{fig:dataset_cora_TN}
  }
  \subfigure[Separability Gains]{
    \includegraphics[trim=0cm 0cm 0cm 0cm, clip, width=0.31\columnwidth]{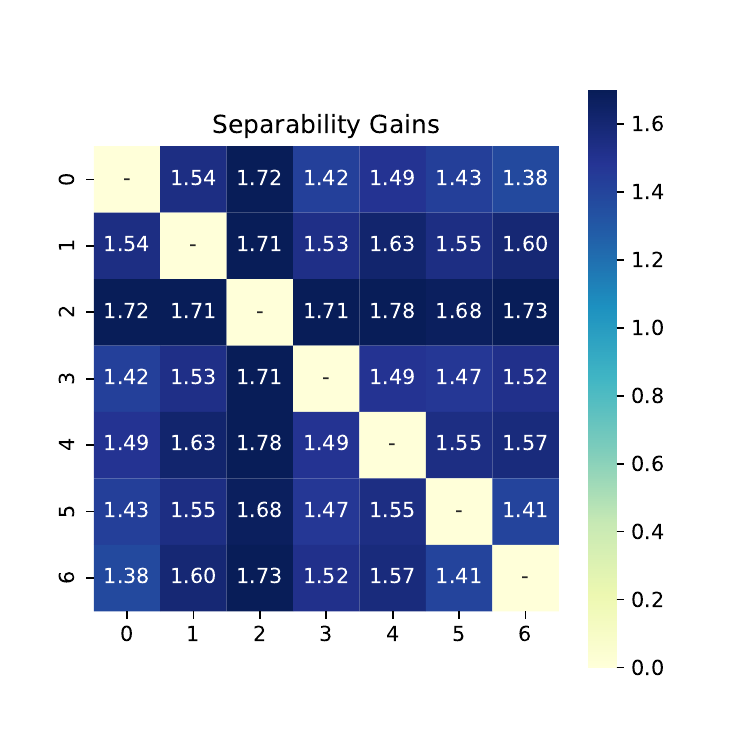}
    \label{fig:dataset_cora_F}
  }
  \subfigure[Confusion Matrix of MLP]{
    \includegraphics[trim=0cm 0cm 0cm 0cm, clip, width=0.31\textwidth]{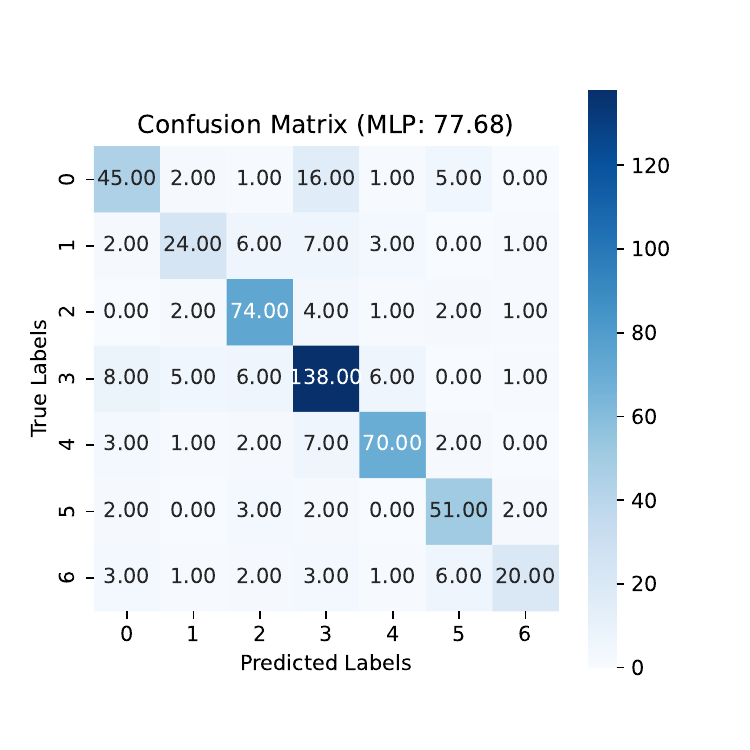}
    \label{fig:dataset_cora_cm_MLP}
  }
  \subfigure[Confusion Matrix of GCN]{
    \includegraphics[trim=0cm 0cm 0cm 0cm, clip, width=0.31\textwidth]{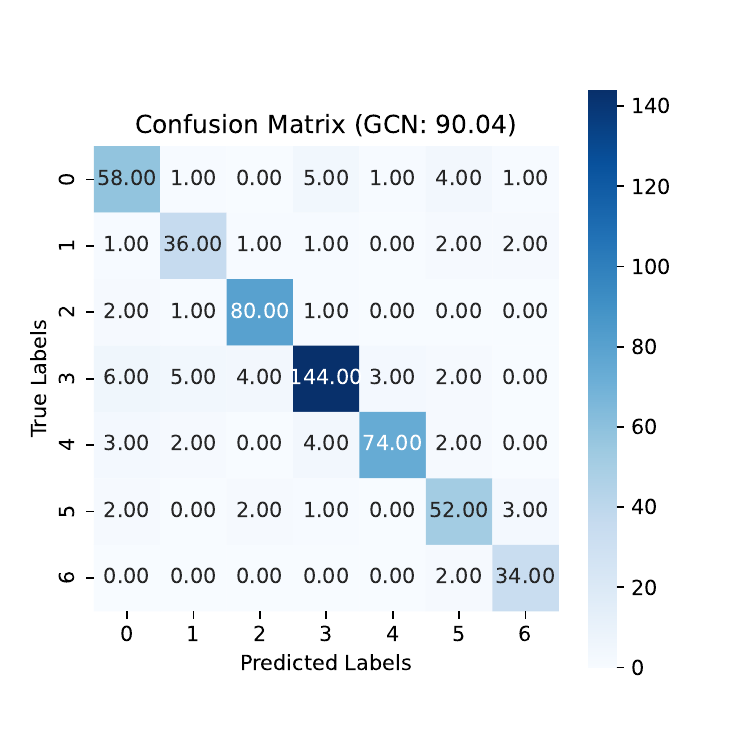}
    \label{fig:dataset_cora_cm_GCN}
  }
  \subfigure[Confusion Matrix difference]{
    \includegraphics[trim=0cm 0cm 0cm 0cm, clip, width=0.31\textwidth]{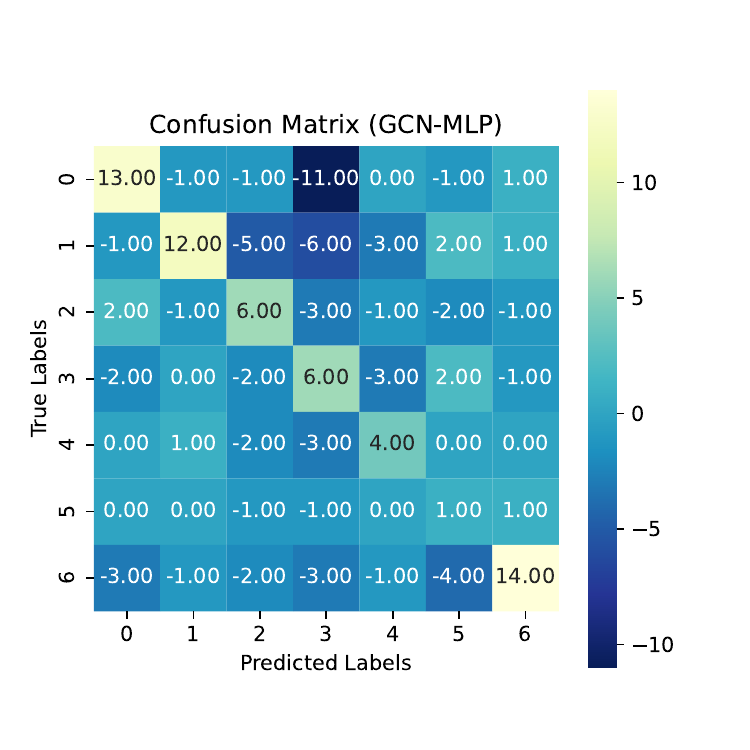}
    \label{fig:dataset_cora_cm_diff}
  }
  \caption{Results for Cora.}
  \label{fig:dataset_cora}
\end{figure}

\textbf{Discussion on Cora.} \cref{fig:dataset_cora} presents the visualization of results for Cora, a typical homophilous dataset. As can be observed, its neighborhood distribution matrix is a diagonally dominant matrix, where the diagonal elements consistently possess large values. Besides, Cora exhibits relatively low topological noise. Therefore, the separability gain for each class pair is significant. According to the difference in the confusion matrix of GCN and MLP, we can observe that it effectively reduces the number of misclassified nodes across nearly all class pairs.

\begin{figure}[h]
\vspace{-0.3cm}
  \subfigure[Neighborhood Distribution]{
    \includegraphics[trim=0cm 0cm 0cm 0cm, clip, width=0.31\columnwidth]{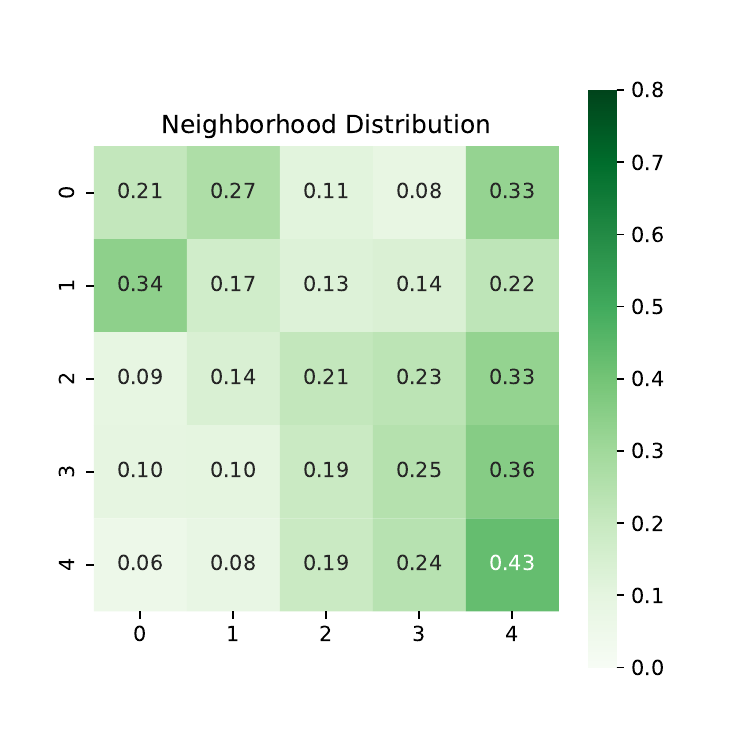}
    \label{fig:dataset_chameleon_ND}
  }
  \subfigure[Topological Noise]{
    \includegraphics[trim=0cm 0cm 0cm 0cm, clip, width=0.31\textwidth]{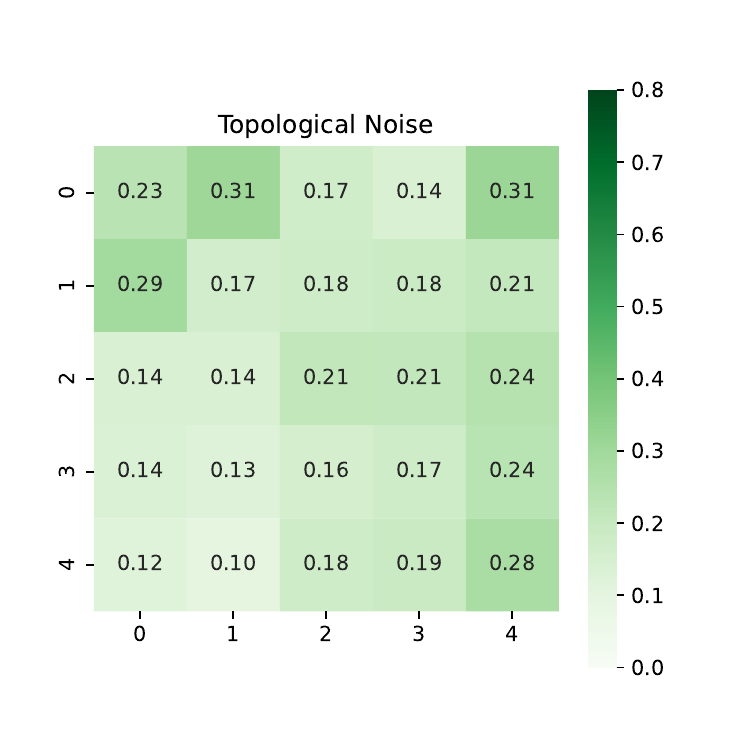}
    \label{fig:dataset_chameleon_TN}
  }
  \subfigure[Separability Gains]{
    \includegraphics[trim=0cm 0cm 0cm 0cm, clip, width=0.31\columnwidth]{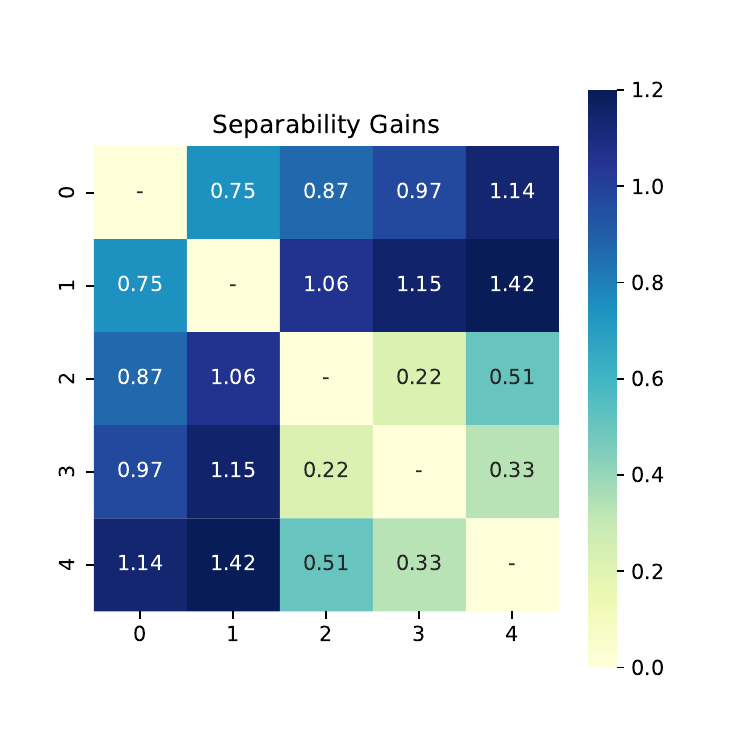}
    \label{fig:dataset_chameleon_F}
  }
  \subfigure[Confusion Matrix of MLP]{
    \includegraphics[trim=0cm 0cm 0cm 0cm, clip, width=0.31\textwidth]{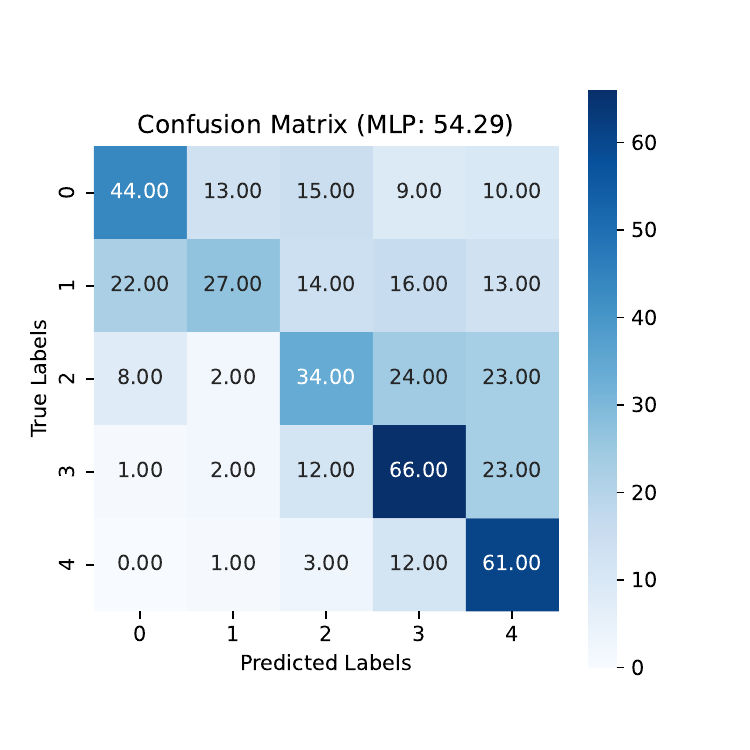}
    \label{fig:dataset_chameleon_cm_MLP}
  }
  \subfigure[Confusion Matrix of GCN]{
    \includegraphics[trim=0cm 0cm 0cm 0cm, clip, width=0.31\textwidth]{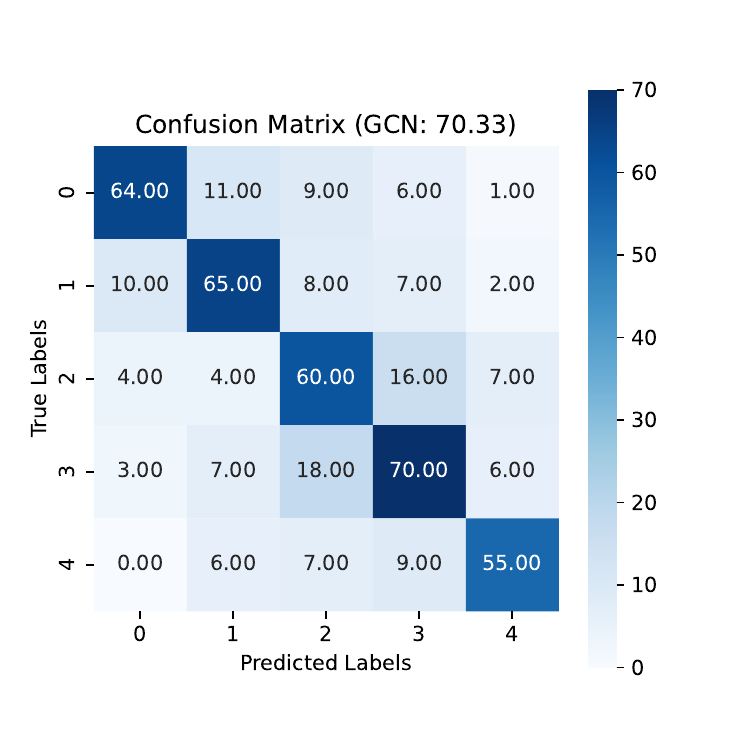}
    \label{fig:dataset_chameleon_cm_GCN}
  }
  \subfigure[Confusion Matrix difference]{
    \includegraphics[trim=0cm 0cm 0cm 0cm, clip, width=0.31\textwidth]{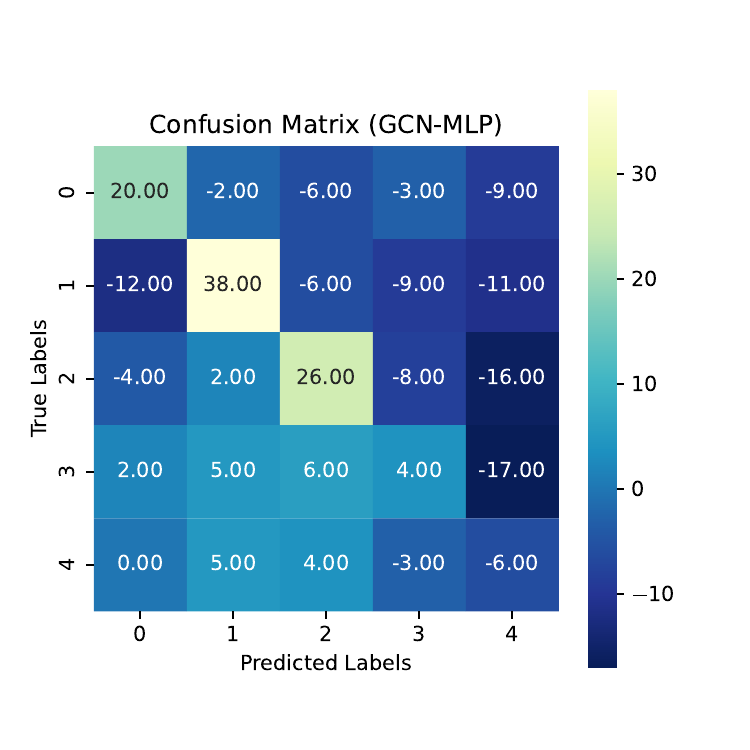}
    \label{fig:dataset_chameleon_cm_diff}
  }
  \caption{Results for Chameleon.}
  \label{fig:dataset_chameleon}
\end{figure}

\textbf{Discussion on Chameleon.} \cref{fig:dataset_chameleon} presents the results for Chameleon, a typical heterophilous dataset. Based on its neighborhood distribution matrix, it exhibits similarities to the group heterophily patterns shown in \cref{appendix:more_heterophily_patterns}. Chameleon is divided into two class groups, namely $(0,1)$ and $(2,3,4)$, where intra-group classes share similar neighborhood distributions, while inter-group classes have distinct ones. This characteristic is reflected in its separability gains matrix.

As claimed in \cref{theorem:1}, when nodes within different class possess varying proportions, the class with a larger number of samples experience more significant benefits, while the class with fewer samples derive fewer benefits and may even encounter a decrease in the classification accuracy. However, their collective impact consistently aligns with the corresponding separability gain. Therefore, when examining the difference matrix as shown in \cref{fig:dataset_chameleon_cm_diff}, it is essential to consider the symmetric elements $(i, j)$ and $(j, i)$ together to assess the separability gain effect between classes $i$ and $j$.

As observed, the classification results for all class pairs possess positive gains. Even in the case of the pair $(2, 3)$, which has a separability gain of $0.22$, there is a positive classification gain (2 more samples are successfully classified). Thus, Chameleon can be regarded as exhibiting a favorable heterophily pattern with the $\varsigma_n\approx 0.2$.

\newpage

\begin{figure}[h]
  \subfigure[Neighborhood Distribution]{
    \includegraphics[trim=0cm 0cm 0cm 0cm, clip, width=0.31\columnwidth]{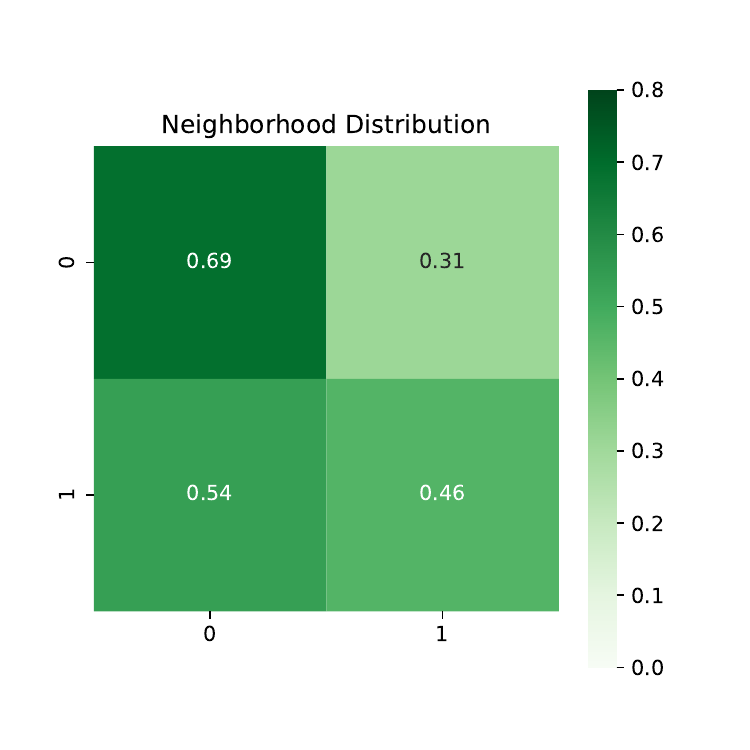}
    \label{fig:dataset_workers_ND}
  }
  \subfigure[Topological Noise]{
    \includegraphics[trim=0cm 0cm 0cm 0cm, clip, width=0.31\textwidth]{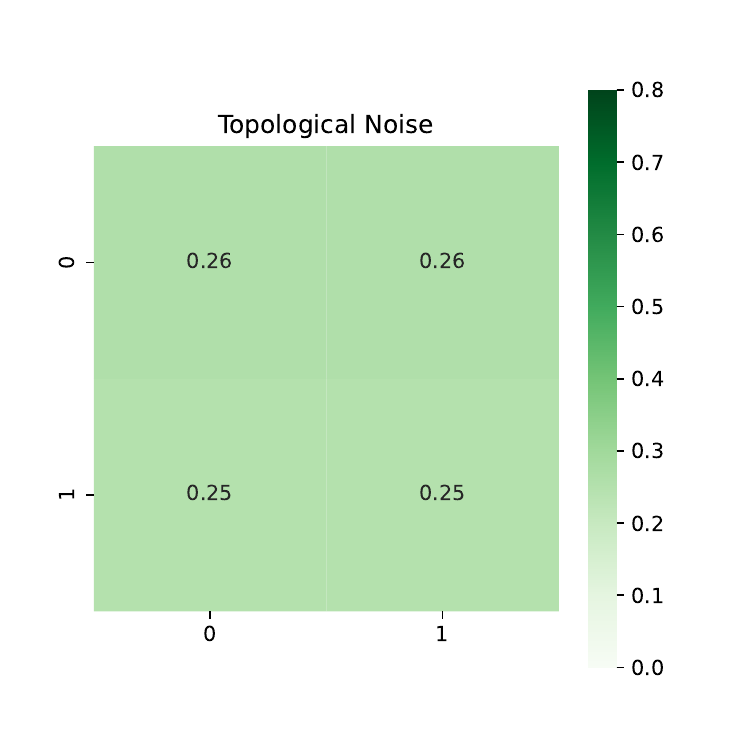}
    \label{fig:dataset_workers_TN}
  }
  \subfigure[Separability Gains]{
    \includegraphics[trim=0cm 0cm 0cm 0cm, clip, width=0.31\columnwidth]{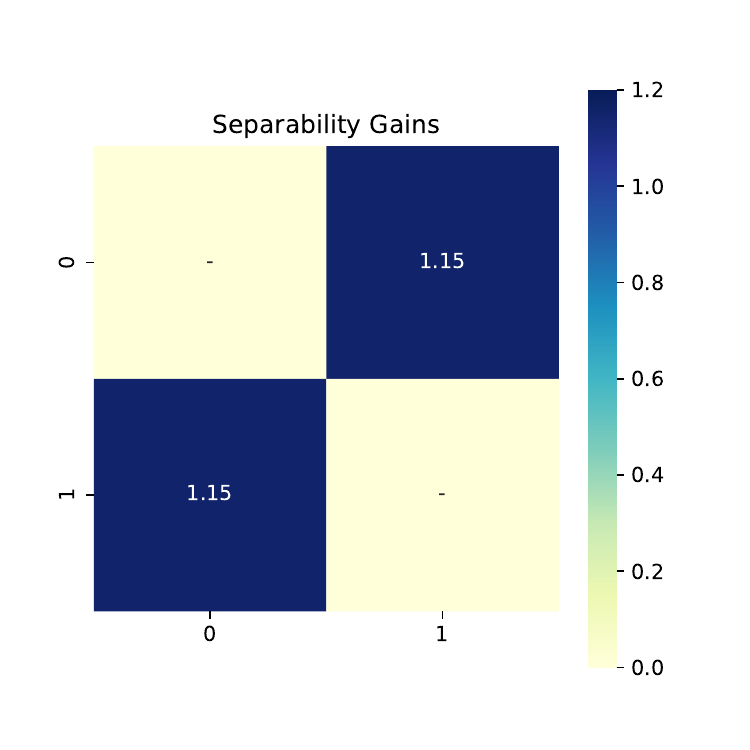}
    \label{fig:dataset_workers_F}
  }
  \subfigure[Confusion Matrix of MLP]{
    \includegraphics[trim=0cm 0cm 0cm 0cm, clip, width=0.31\textwidth]{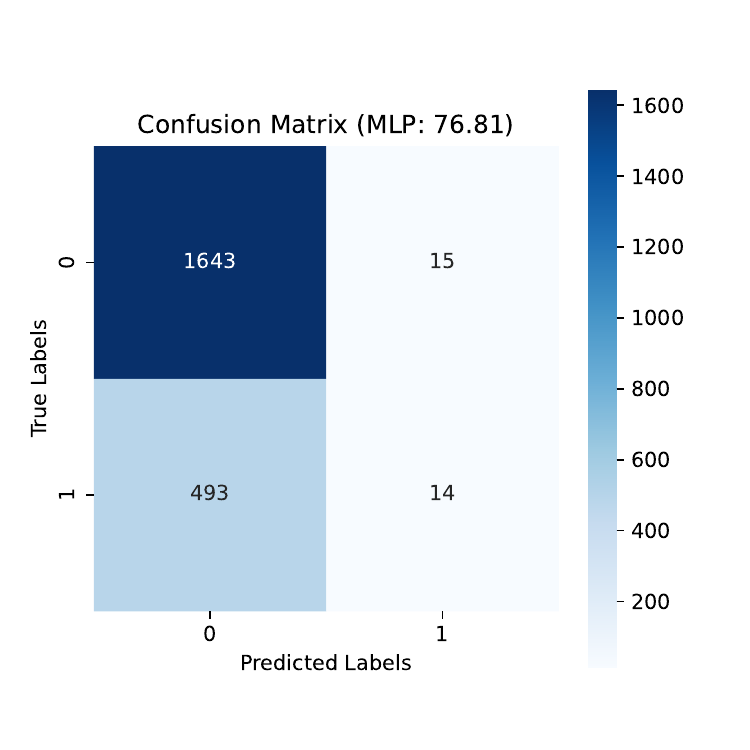}
    \label{fig:dataset_workers_cm_MLP}
  }
  \subfigure[Confusion Matrix of GCN]{
    \includegraphics[trim=0cm 0cm 0cm 0cm, clip, width=0.31\textwidth]{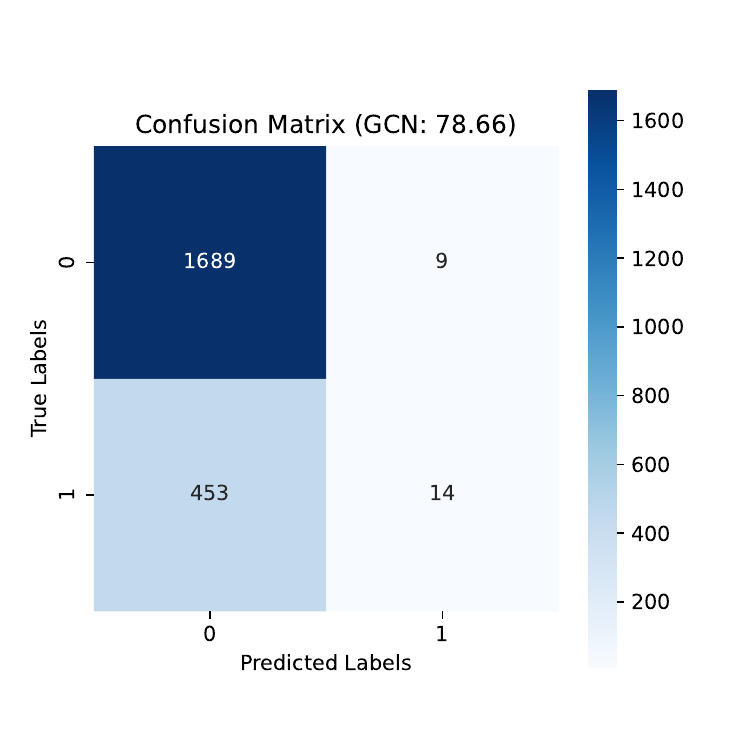}
    \label{fig:dataset_workers_cm_GCN}
  }
  \subfigure[Confusion Matrix difference]{
    \includegraphics[trim=0cm 0cm 0cm 0cm, clip, width=0.31\textwidth]{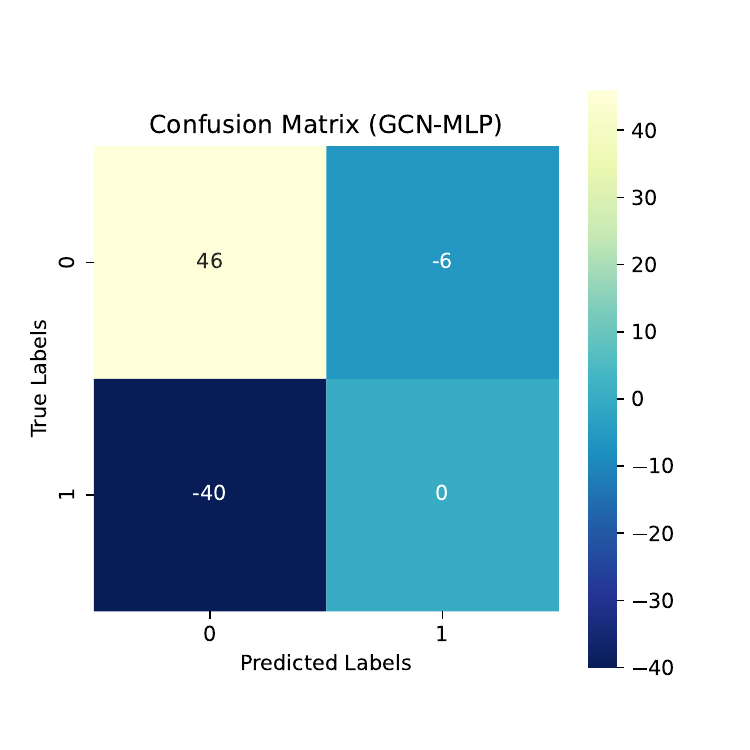}
    \label{fig:dataset_workers_cm_diff}
  }
  \caption{Results for Workers.}
  \label{fig:dataset_workers}
\end{figure}

\textbf{Discussion on Workers.} \cref{fig:dataset_workers} illustrates the results for Workers, a heterophilous dataset with only two node categories. As can be observed, in Workers, nodes within the two classes exhibit similar neighborhood distribution. However, due to the large averaged degree (approximately 44.14), the separability gain for the two classes is still significant.

As shown in \cref{fig:dataset_workers_cm_diff}, the differences in the confusion matrix between GCN and MLP align with the corresponding separability gains shown in \cref{fig:dataset_workers_F}. Note that, as shown in \cref{table:statistics-2}, the number of nodes within class $0$ ($9192$) is larger than the number of nodes within class $1$ ($2566$). Therefore, both the MLP and GCN tend to classify nodes into class $0$. According to our \cref{theorem:1,theorem:2}, the class with a larger number of samples, i.e., class 0, experience more significant benefits. Thus, the elements $(1,0)$ exhibits a larger decrease, while $(0, 1)$ possesses a smaller one.

\newpage

\begin{figure}[h]
  \subfigure[Neighborhood Distribution]{
    \includegraphics[trim=0cm 0cm 0cm 0cm, clip, width=0.31\columnwidth]{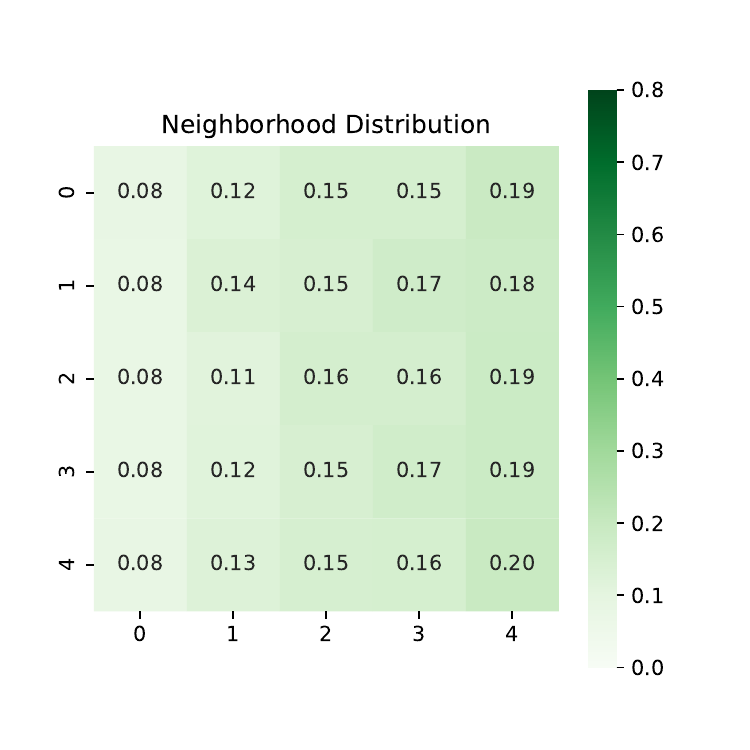}
    \label{fig:dataset_actor_ND}
  }
  \subfigure[Topological Noise]{
    \includegraphics[trim=0cm 0cm 0cm 0cm, clip, width=0.31\textwidth]{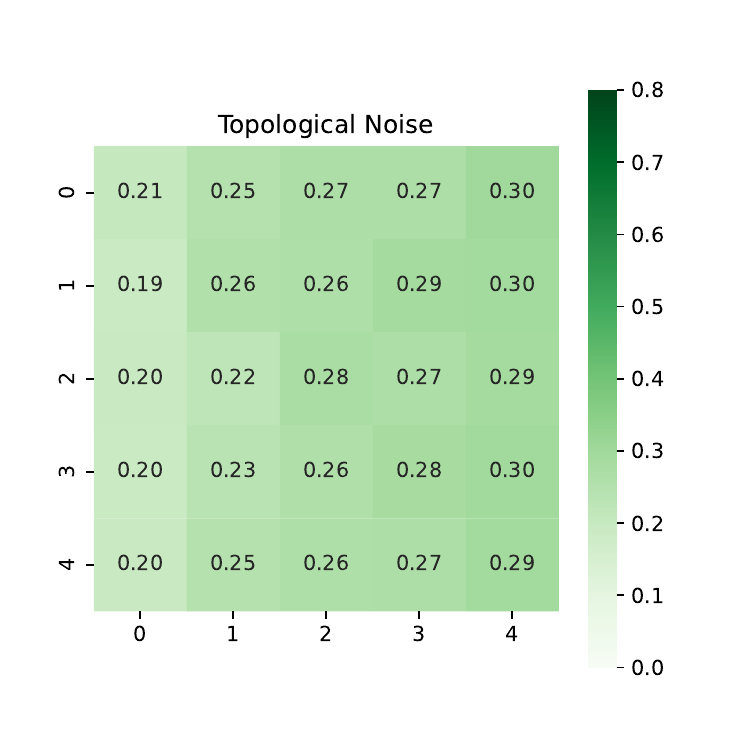}
    \label{fig:dataset_actor_TN}
  }
  \subfigure[Separability Gains]{
    \includegraphics[trim=0cm 0cm 0cm 0cm, clip, width=0.31\columnwidth]{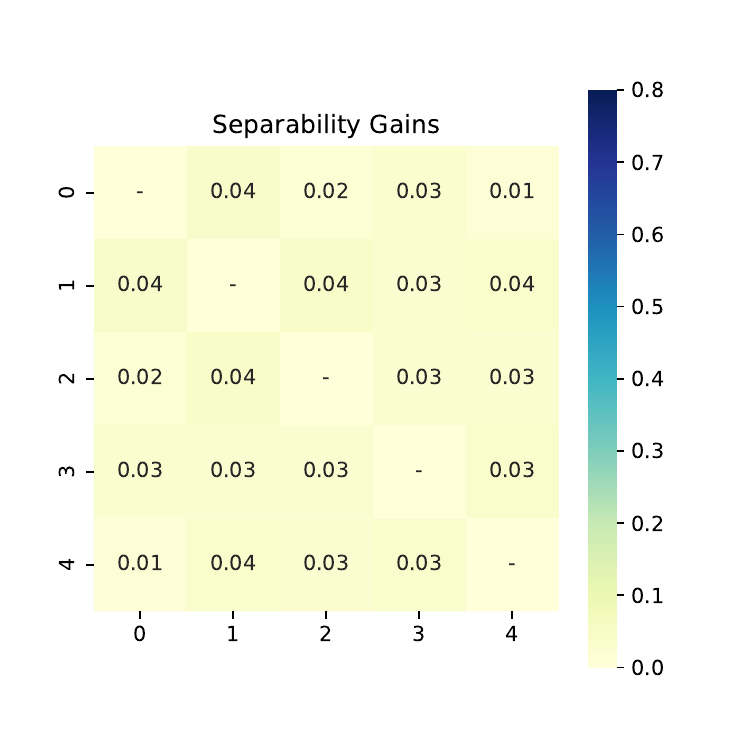}
    \label{fig:dataset_actor_F}
  }
  \subfigure[Confusion Matrix of MLP]{
    \includegraphics[trim=0cm 0cm 0cm 0cm, clip, width=0.31\textwidth]{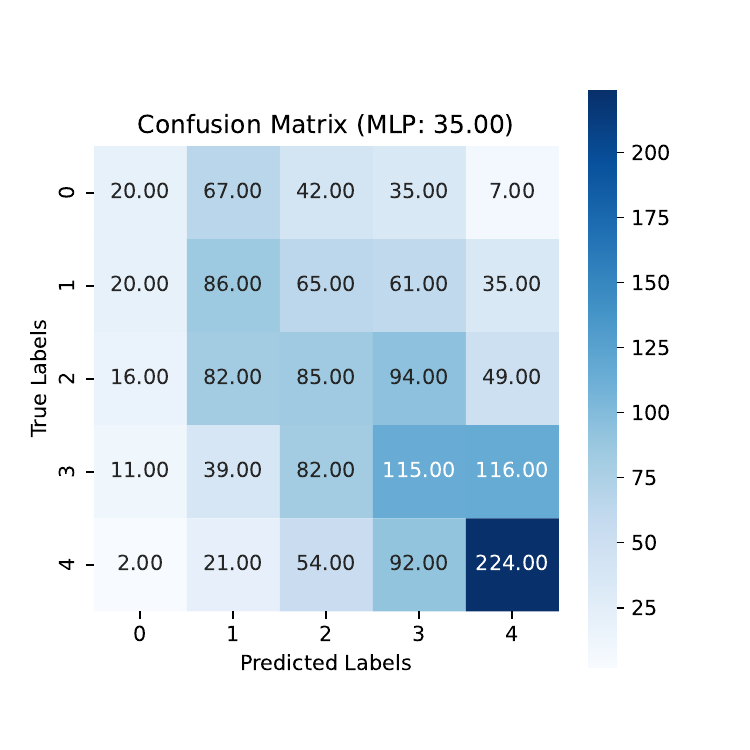}
    \label{fig:dataset_actor_cm_MLP}
  }
  \subfigure[Confusion Matrix of GCN]{
    \includegraphics[trim=0cm 0cm 0cm 0cm, clip, width=0.31\textwidth]{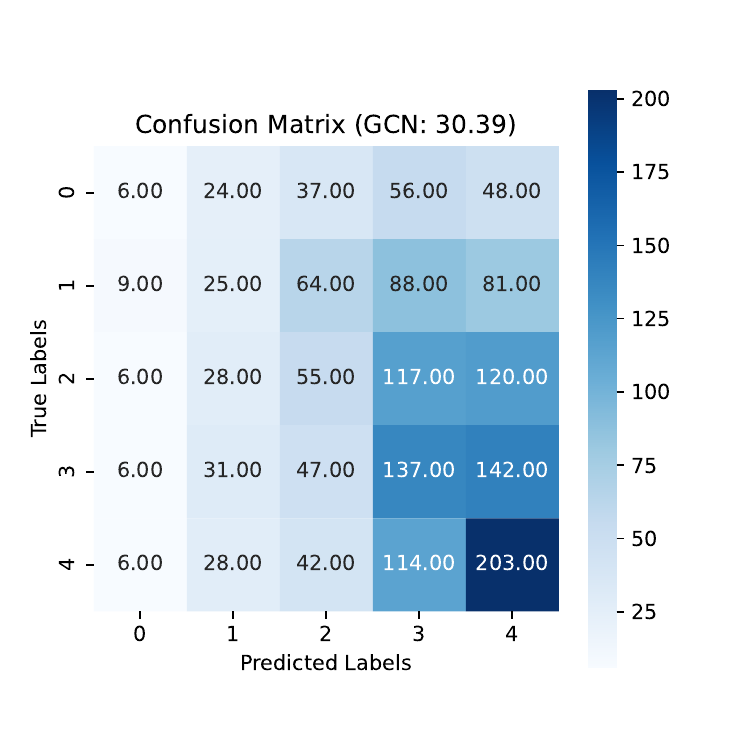}
    \label{fig:dataset_actor_cm_GCN}
  }
  \subfigure[Confusion Matrix difference]{
    \includegraphics[trim=0cm 0cm 0cm 0cm, clip, width=0.31\textwidth]{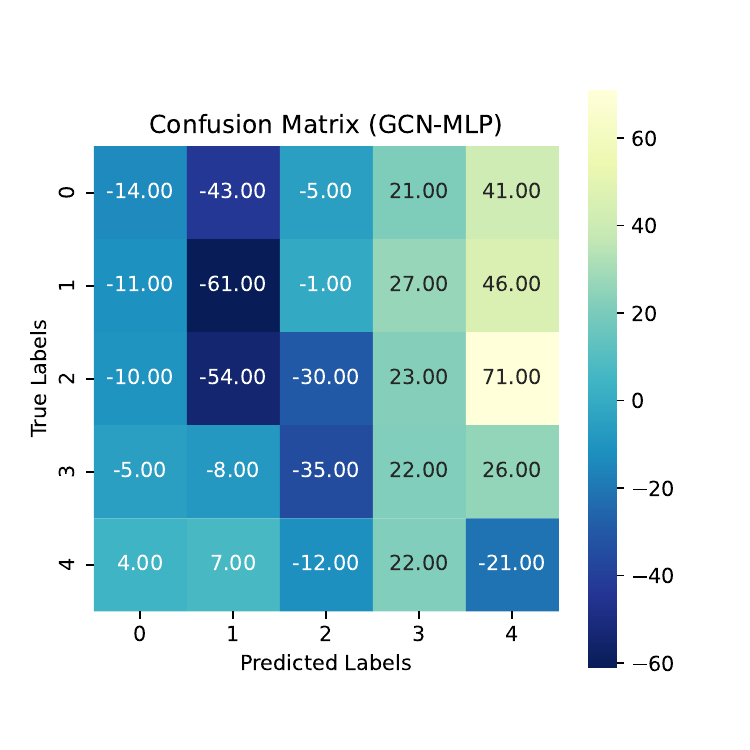}
    \label{fig:dataset_actor_cm_diff}
  }
  \caption{Results for Actor.}
  \label{fig:dataset_actor}
\end{figure}

\textbf{Discussion on Actor.} \cref{fig:dataset_actor} illustrates the results for Actor, a heterophilous dataset. As can be observed, in Actor, different classes possess similar neighborhood distributions, and its topological noise is significant. Therefore, its separability gains are minimal, as shown in \cref{fig:dataset_actor_F}, leading to a deterioration in classification performance when utilizing GCN. Thus, Actor exhibits a bad heterophily pattern.

\cref{eq:theorem2_S} in \cref{theorem:2} suggests that when the separability gains are less than $\varsigma_n$, classes with more nodes experience a more significant decrease in separability. In contrast, classes with fewer nodes may experience a slight decline or even an increase in separability. As observed in \cref{fig:dataset_actor_cm_diff}, although GCN exhibits worse overall results compared to MLP, it misclassifies more nodes into classes $3$ and $4$, while misclassifying fewer nodes into the other three classes. This observation is consistent with the results of \cref{theorem:2}.

\newpage

\begin{figure}[h]
  \subfigure[Neighborhood Distribution]{
    \includegraphics[trim=0cm 0cm 0cm 0cm, clip, width=0.31\columnwidth]{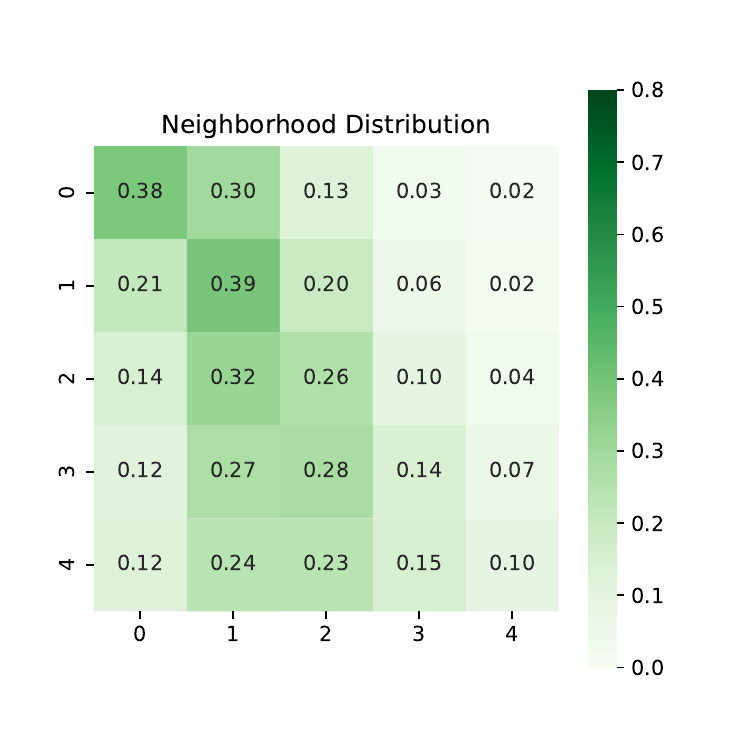}
    \label{fig:dataset_amazon_ratings_ND}
  }
  \subfigure[Topological Noise]{
    \includegraphics[trim=0cm 0cm 0cm 0cm, clip, width=0.31\textwidth]{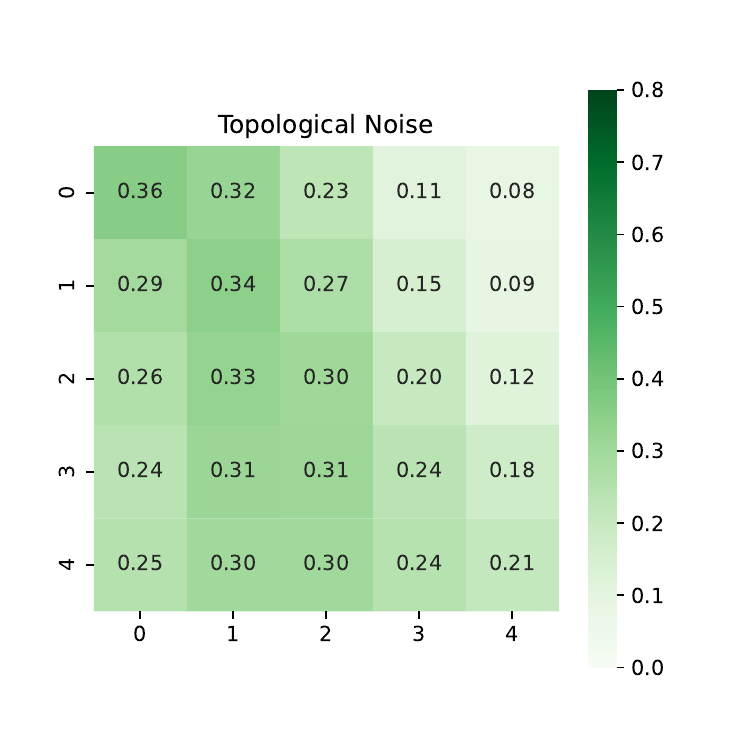}
    \label{fig:dataset_amazon_ratings_TN}
  }
  \subfigure[Separability Gains]{
    \includegraphics[trim=0cm 0cm 0cm 0cm, clip, width=0.31\columnwidth]{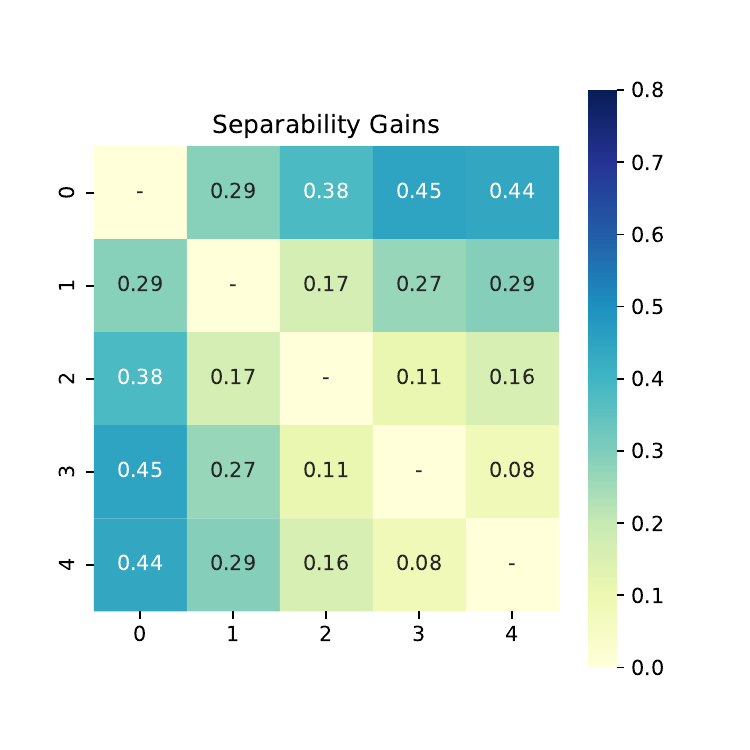}
    \label{fig:dataset_amazon_ratings_F}
  }
  \subfigure[Confusion Matrix of MLP]{
    \includegraphics[trim=0cm 0cm 0cm 0cm, clip, width=0.31\textwidth]{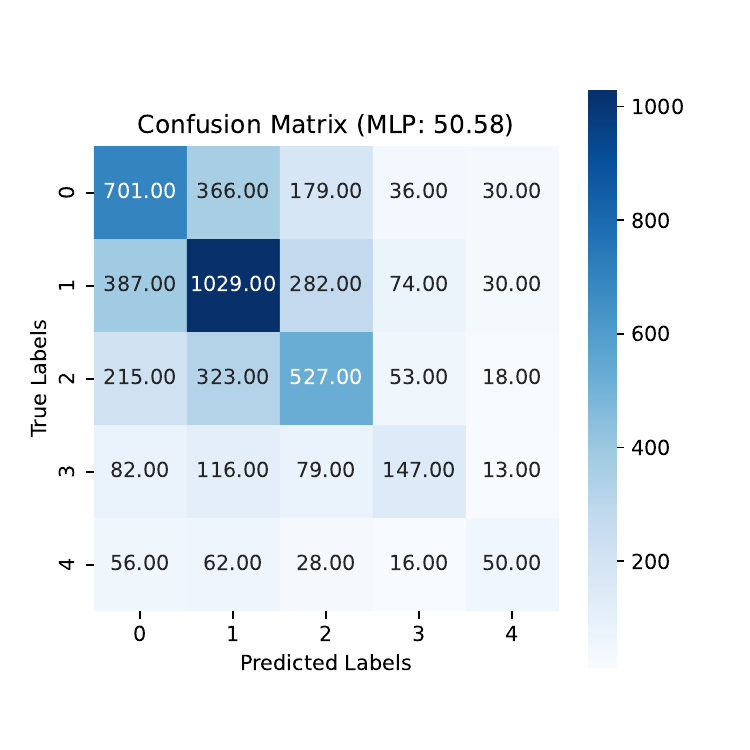}
    \label{fig:dataset_amazon_ratings_cm_MLP}
  }
  \subfigure[Confusion Matrix of GCN]{
    \includegraphics[trim=0cm 0cm 0cm 0cm, clip, width=0.31\textwidth]{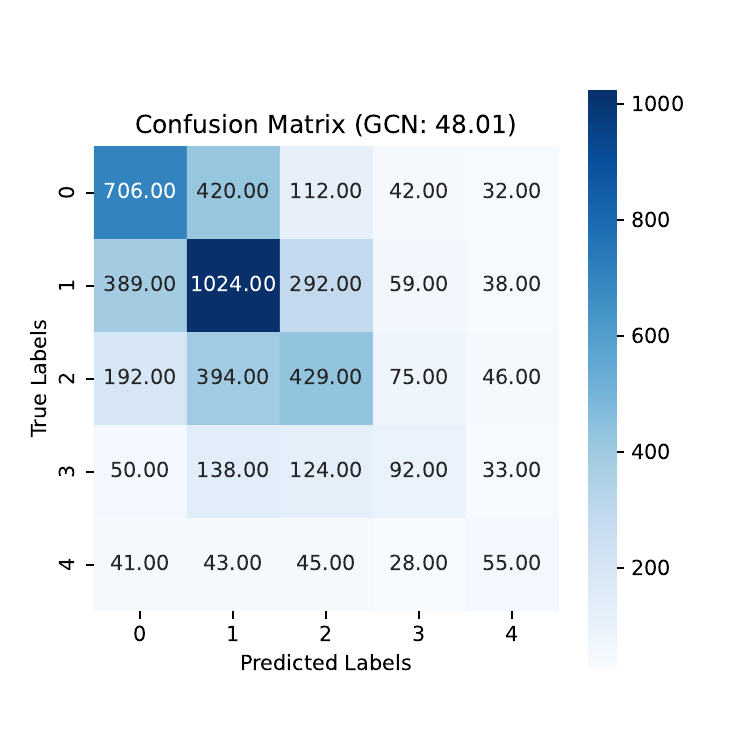}
    \label{fig:dataset_amazon_ratings_cm_GCN}
  }
  \subfigure[Confusion Matrix difference]{
    \includegraphics[trim=0cm 0cm 0cm 0cm, clip, width=0.31\textwidth]{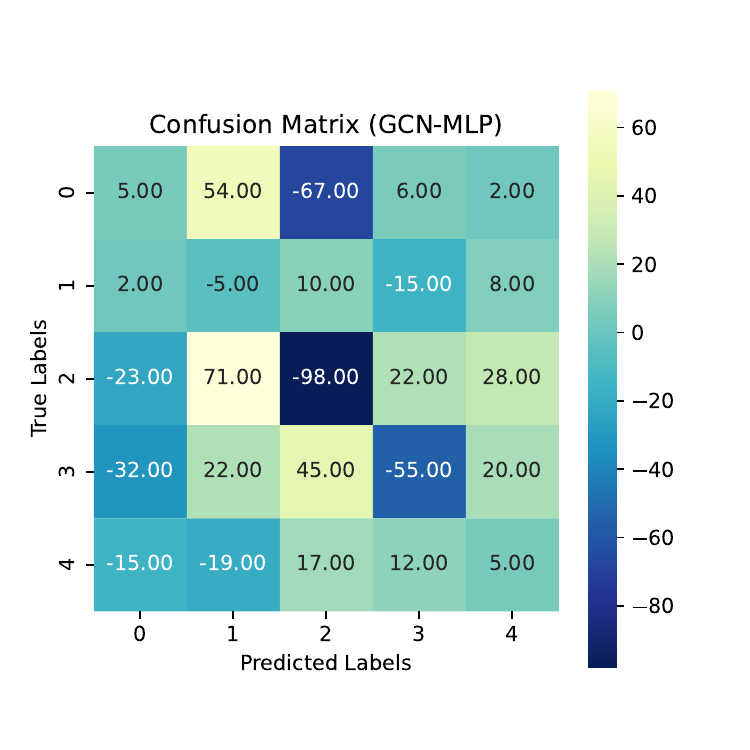}
    \label{fig:dataset_amazon_ratings_cm_diff}
  }
  \caption{Results for Amazon-ratings.}
  \label{fig:dataset_amazon_ratings}
\end{figure}

\textbf{Discussion on Amazon-ratings.} \cref{fig:dataset_amazon_ratings} visualizes the results of Amazon-ratings, a heterophilous dataset. As can be observed from \cref{fig:dataset_amazon_ratings_F}, by assuming that $\varsigma_n\approx 0.25$, the topological information should boost the classification between class pairs $(0,3)$, $(0,4)$, $(0,2)$, $(0,1)$, $(1,4)$, and $(1,3)$, while damage the others. Therefore, Amazon-ratings possesses a mixed heterophily pattern.

As shown in \cref{fig:dataset_amazon_ratings_cm_diff}, the differences in the confusion matrix between GCN and MLP largely align with the corresponding separability gains shown in \cref{fig:dataset_amazon_ratings_F}. The only exception is the class pair $(0,1)$, whose separability should increase but experiences a decrease. This discrepancy may be attributed to the distribution of node features. As evident in \cref{fig:dataset_amazon_ratings_cm_MLP}, samples within classes $0$ and $1$ appear to possess similar node features. In general, the results of Amazon-ratings are consistent with our theoretical findings in \cref{theorem:2}.

\newpage

\begin{figure}[h]
  \subfigure[Neighborhood Distribution]{
    \includegraphics[trim=0cm 0cm 0cm 0cm, clip, width=0.31\columnwidth]{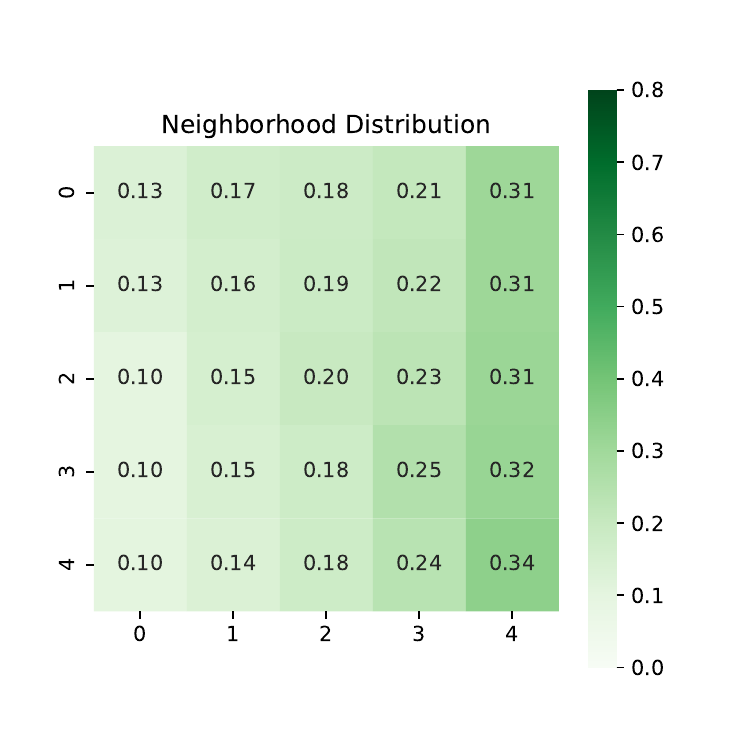}
    \label{fig:dataset_squirrel_ND}
  }
  \subfigure[Topological Noise]{
    \includegraphics[trim=0cm 0cm 0cm 0cm, clip, width=0.31\textwidth]{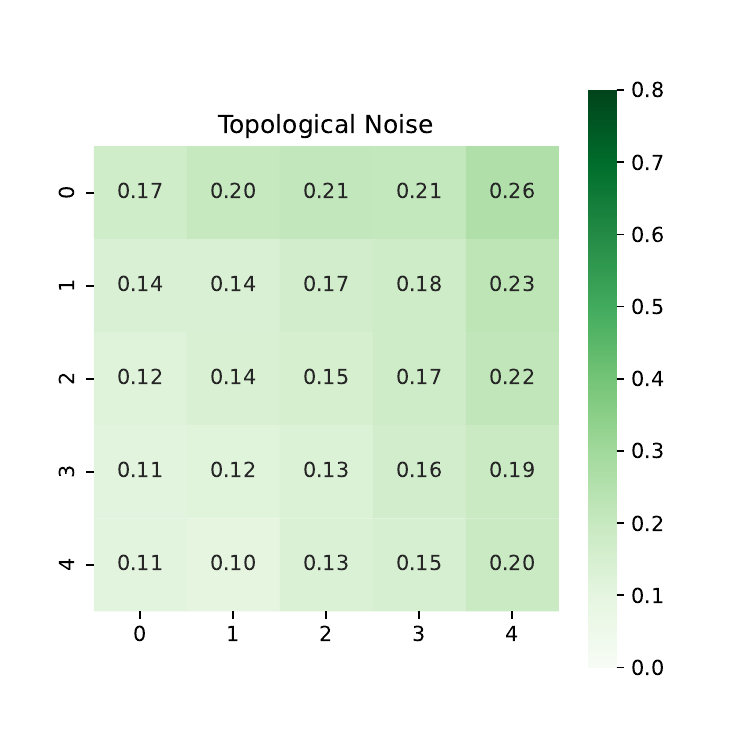}
    \label{fig:dataset_squirrel_TN}
  }
  \subfigure[Separability Gains]{
    \includegraphics[trim=0cm 0cm 0cm 0cm, clip, width=0.31\columnwidth]{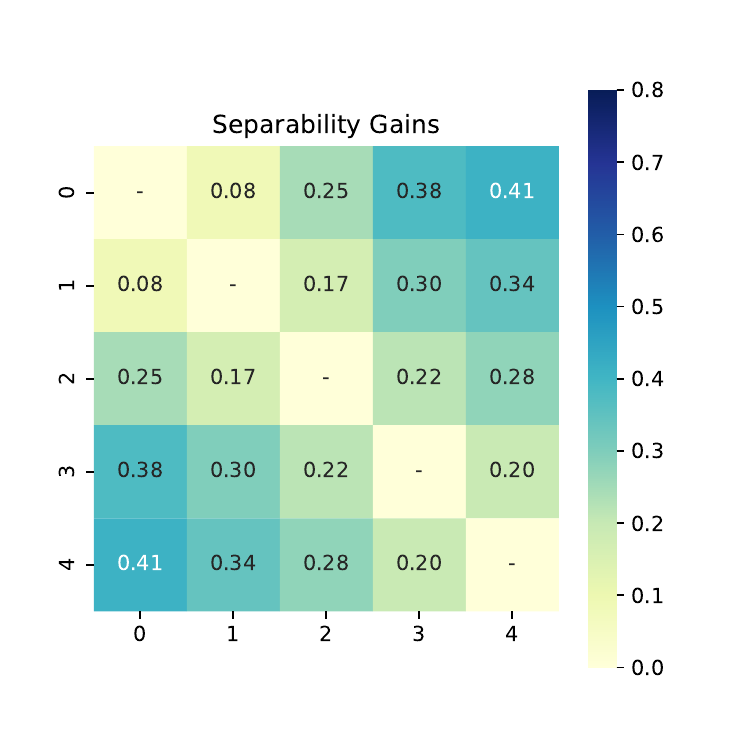}
    \label{fig:dataset_squirrel_F}
  }
  \subfigure[Confusion Matrix of MLP]{
    \includegraphics[trim=0cm 0cm 0cm 0cm, clip, width=0.31\textwidth]{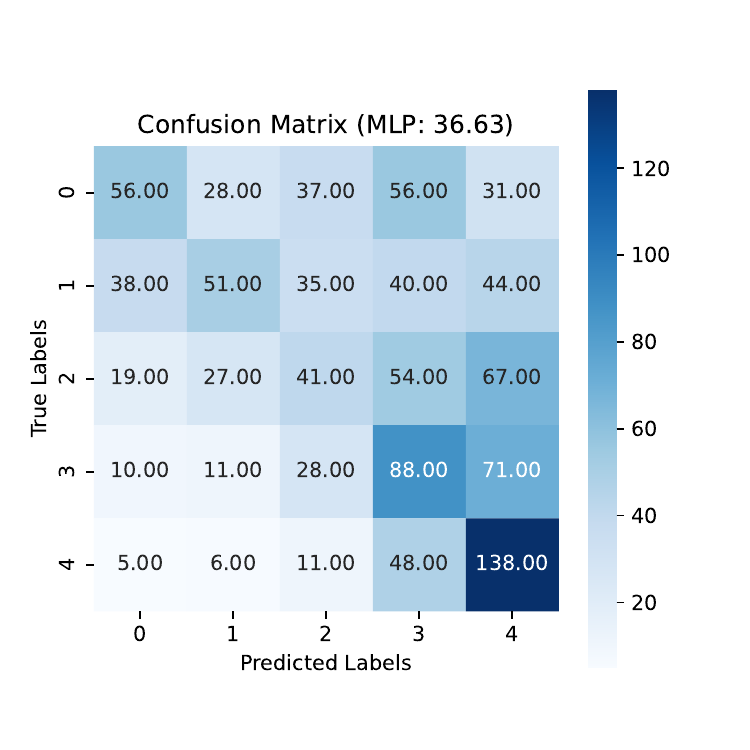}
    \label{fig:dataset_squirrel_cm_MLP}
  }
  \subfigure[Confusion Matrix of GCN]{
    \includegraphics[trim=0cm 0cm 0cm 0cm, clip, width=0.31\textwidth]{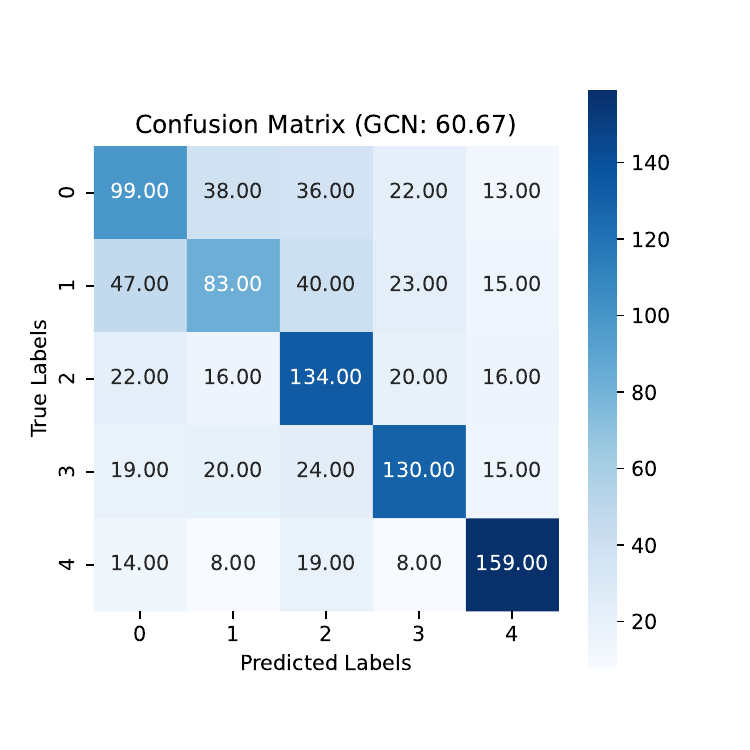}
    \label{fig:dataset_squirrel_cm_GCN}
  }
  \subfigure[Confusion Matrix difference]{
    \includegraphics[trim=0cm 0cm 0cm 0cm, clip, width=0.31\textwidth]{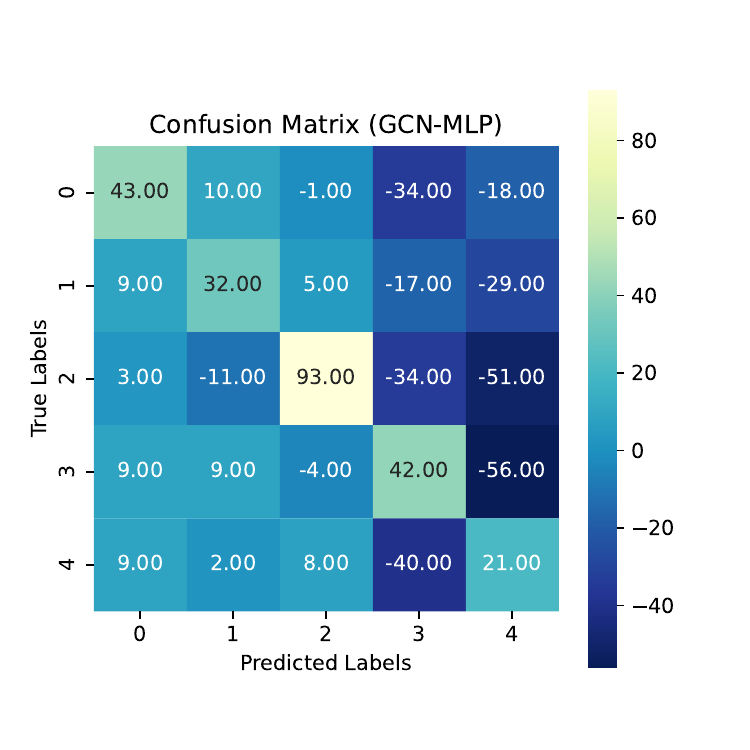}
    \label{fig:dataset_squirrel_cm_diff}
  }
  \caption{Results for Squirrel.}
  \label{fig:dataset_squirrel}
\end{figure}

\textbf{Discussion on Squirrel.} \cref{fig:dataset_squirrel} illustrates the results of Squirrel, a heterophilous dataset. As can be observed from \cref{fig:dataset_squirrel_ND}, Squirrel possess very similar neighborhood distributions across different classes. However, as indicated in \cref{table:statistics-2}, Squirrel has a notably high averaged degree of 76.27. Therefore, it achieves relatively significant separability gains. By considering the $\varsigma_n\approx 0.12$, Squirrel exhibits a mixed heterophily pattern.

Based on the differences in the confusion matrix between GCN and MLP, as presented in \cref{fig:dataset_squirrel_cm_diff}, when using GCN, it enhances the classification between every pair of classes except for pair $(0,1)$. This is because pair $(0,1)$ has a very small separability gain of 0.08, which is smaller than $\varsigma_n$. This observation aligns with our theoretical results in \cref{theorem:2}.

\newpage

\begin{figure}[h]
  \subfigure[Neighborhood Distribution]{
    \includegraphics[trim=0cm 0cm 0cm 0cm, clip, width=0.31\columnwidth]{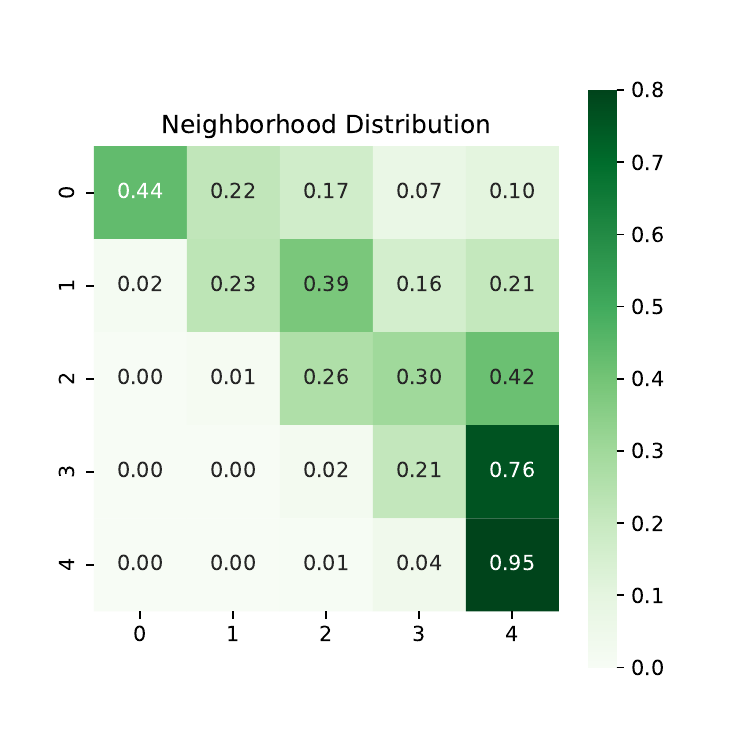}
    \label{fig:dataset_arxiv_year_ND}
  }
  \subfigure[Topological Noise]{
    \includegraphics[trim=0cm 0cm 0cm 0cm, clip, width=0.31\textwidth]{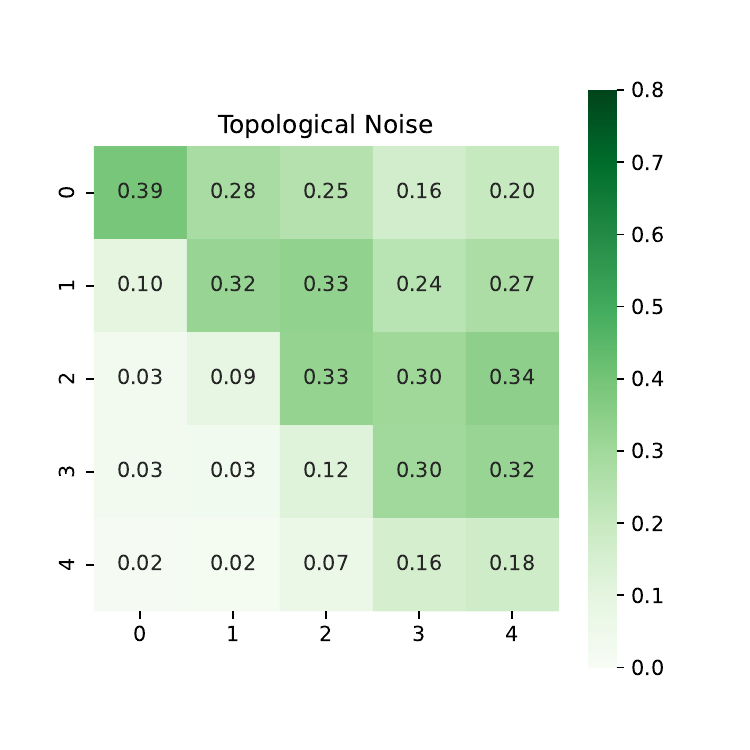}
    \label{fig:dataset_arxiv_year_TN}
  }
  \subfigure[Separability Gains]{
    \includegraphics[trim=0cm 0cm 0cm 0cm, clip, width=0.31\columnwidth]{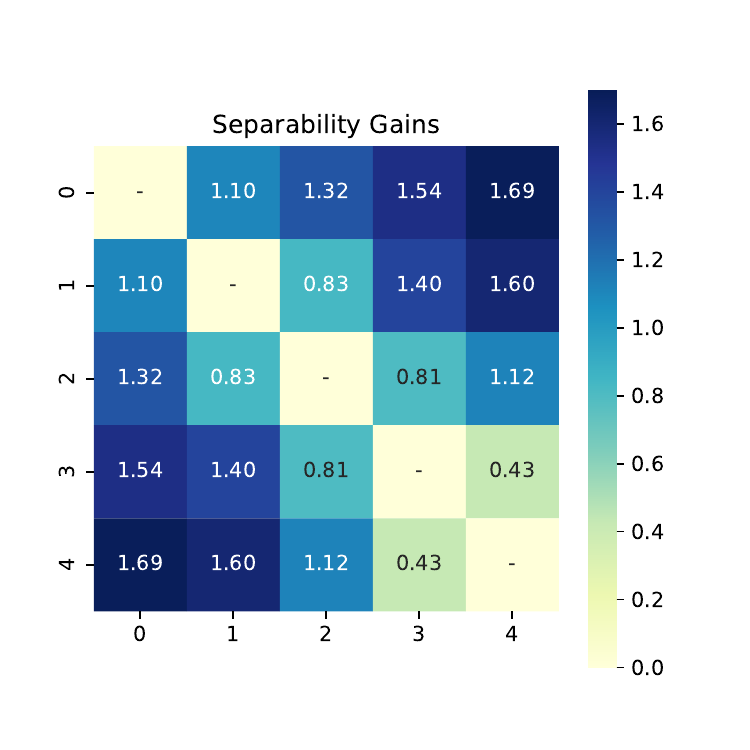}
    \label{fig:dataset_arxiv_year_F}
  }
  \subfigure[Confusion Matrix of MLP]{
    \includegraphics[trim=0cm 0cm 0cm 0cm, clip, width=0.31\textwidth]{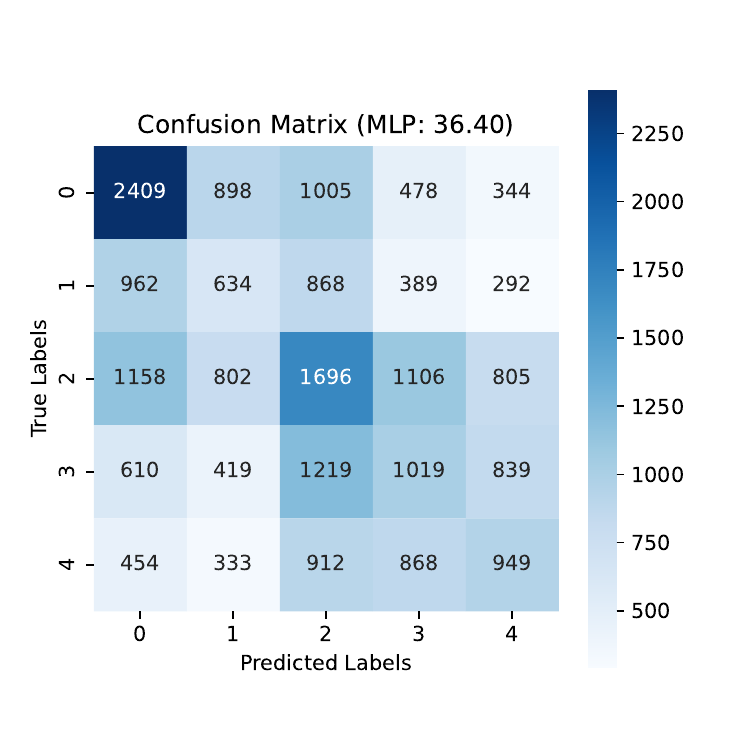}
    \label{fig:dataset_arxiv_year_cm_MLP}
  }
  \subfigure[Confusion Matrix of GCN]{
    \includegraphics[trim=0cm 0cm 0cm 0cm, clip, width=0.31\textwidth]{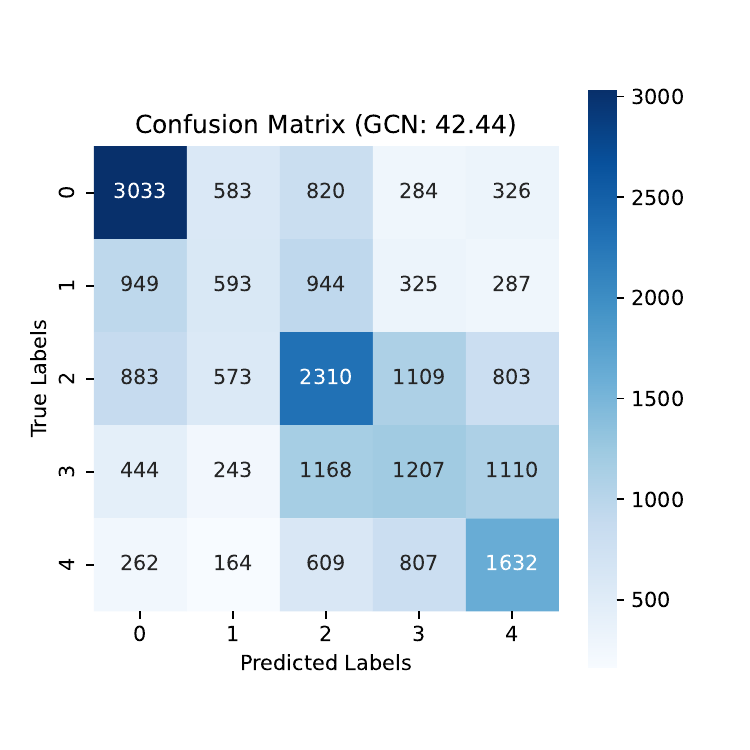}
    \label{fig:dataset_arxiv_year_cm_GCN}
  }
  \subfigure[Confusion Matrix difference]{
    \includegraphics[trim=0cm 0cm 0cm 0cm, clip, width=0.31\textwidth]{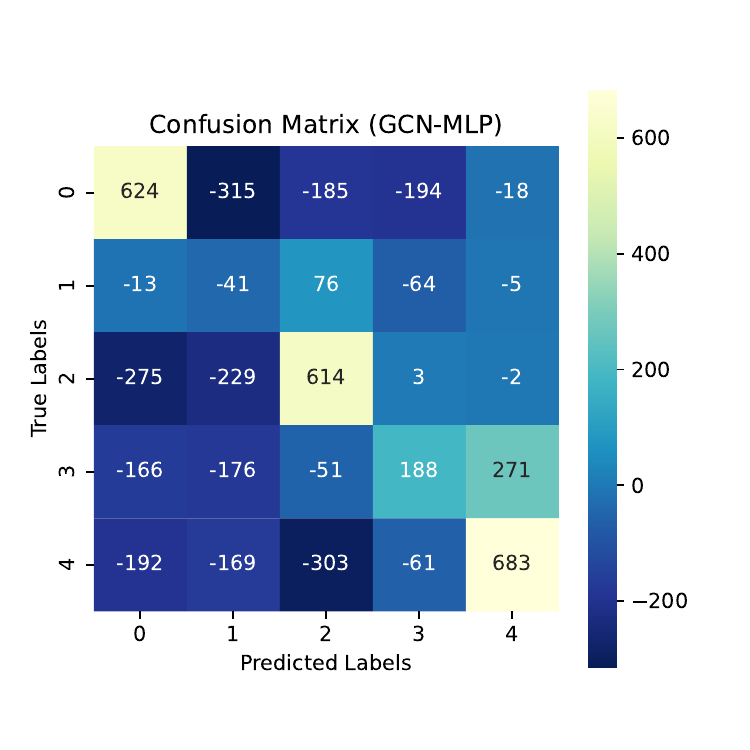}
    \label{fig:dataset_arxiv_year_cm_diff}
  }
  \caption{Results for Arxiv-year.}
  \label{fig:dataset_arxiv_year}
\end{figure}

\textbf{Discussion on Arxiv-year.} \cref{fig:dataset_arxiv_year} visualizes the results of Arxiv-year, a large-scale heterophilous dataset, which possesses 169,343 nodes and 1,166,243 edges. As can be observed from \cref{fig:dataset_arxiv_year_F}, by assuming that $\varsigma_n\approx 0.5$, the topological information should damage the classification between class pair $(3,4)$, while boost the others. Therefore, Arxiv-year exhibits a mixed heterophily pattern.

In \cref{fig:dataset_arxiv_year_cm_diff}, we jointly consider the symmetric elements $(i,j)$ and $(j,i)$ together to assess the separability gain effect between classes $i$ and $j$, according to \cref{theorem:1}. As can be observed, the differences in the confusion matrix between GCN and MLP align with the corresponding separability gains shown in \cref{fig:dataset_arxiv_year_F}, i.e., only the class pair $(3,4)$ exhibits an increment of the misclassified nodes. This observation in the large-scale real-world dataset validates the effectiveness of our theory.

\newpage

\begin{figure}[h]
  \subfigure[Neighborhood Distribution]{
    \includegraphics[trim=0cm 0cm 0cm 0cm, clip, width=0.31\columnwidth]{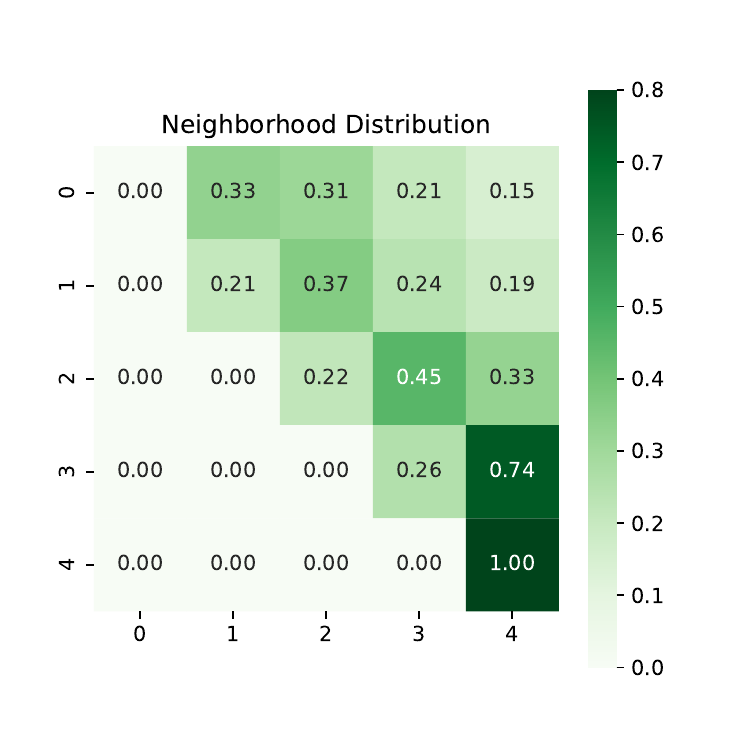}
    \label{fig:dataset_snap_patents_ND}
  }
  \subfigure[Topological Noise]{
    \includegraphics[trim=0cm 0cm 0cm 0cm, clip, width=0.31\textwidth]{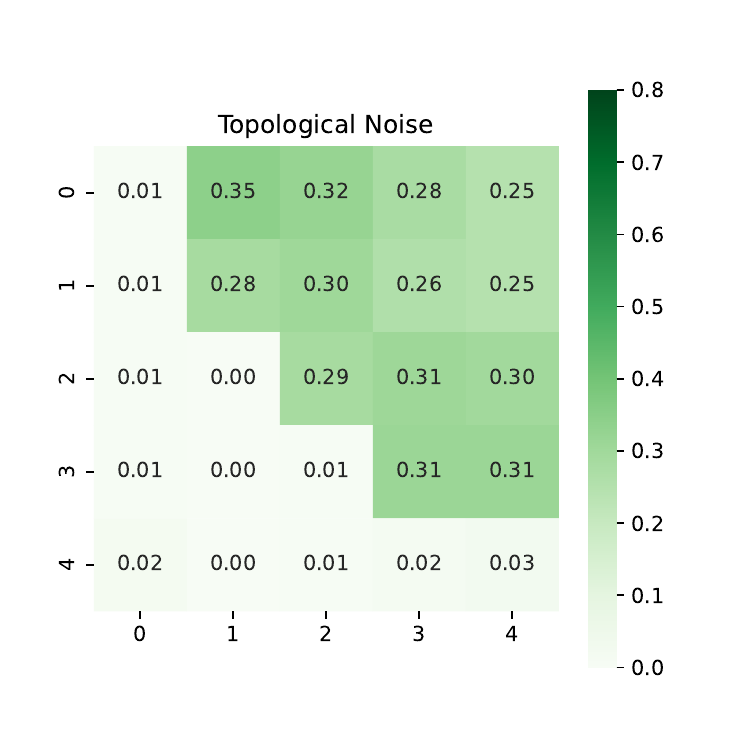}
    \label{fig:dataset_snap_patents_TN}
  }
  \subfigure[Separability Gains]{
    \includegraphics[trim=0cm 0cm 0cm 0cm, clip, width=0.31\columnwidth]{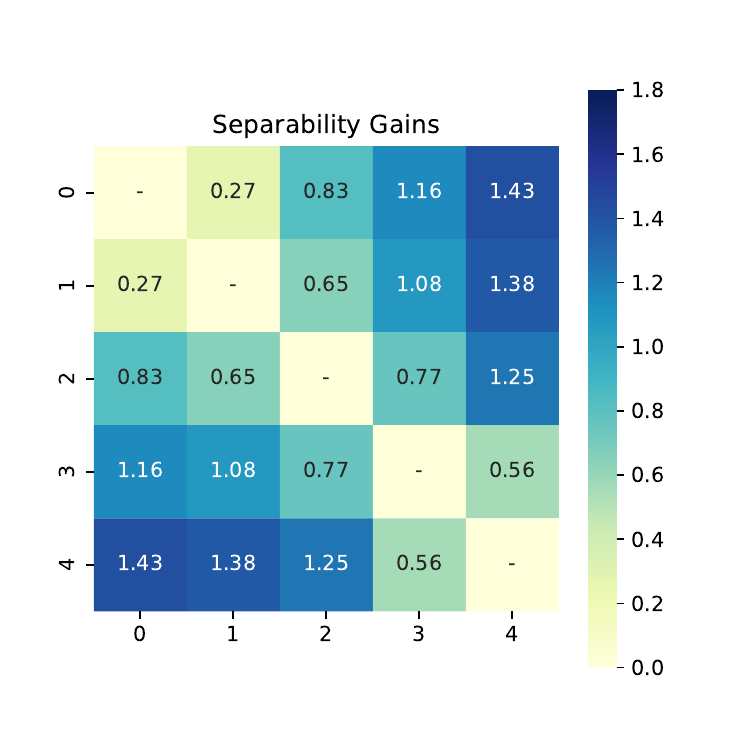}
    \label{fig:dataset_snap_patents_F}
  }
  \subfigure[Confusion Matrix of MLP]{
    \includegraphics[trim=0cm 0cm 0cm 0cm, clip, width=0.31\textwidth]{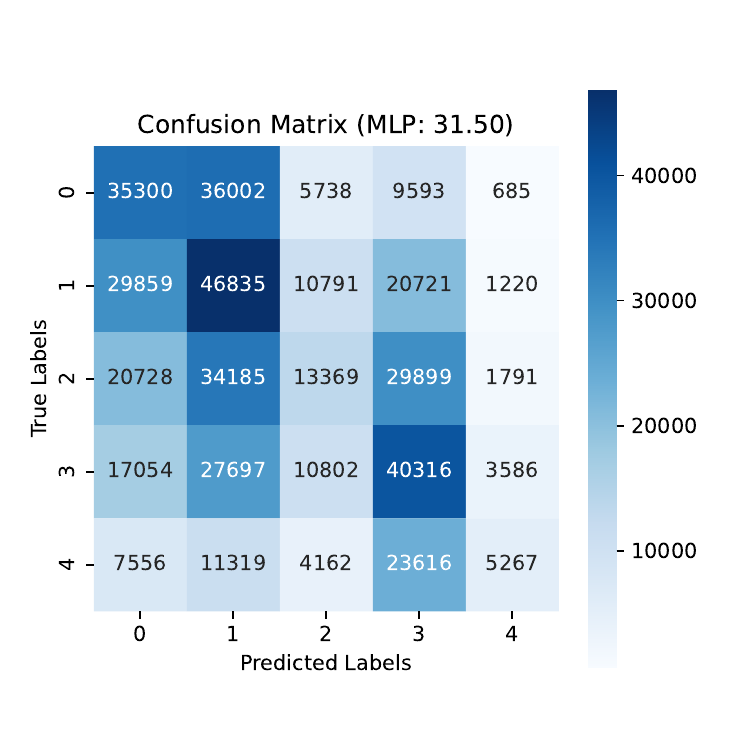}
    \label{fig:dataset_snap_patents_cm_MLP}
  }
  \subfigure[Confusion Matrix of GCN]{
    \includegraphics[trim=0cm 0cm 0cm 0cm, clip, width=0.31\textwidth]{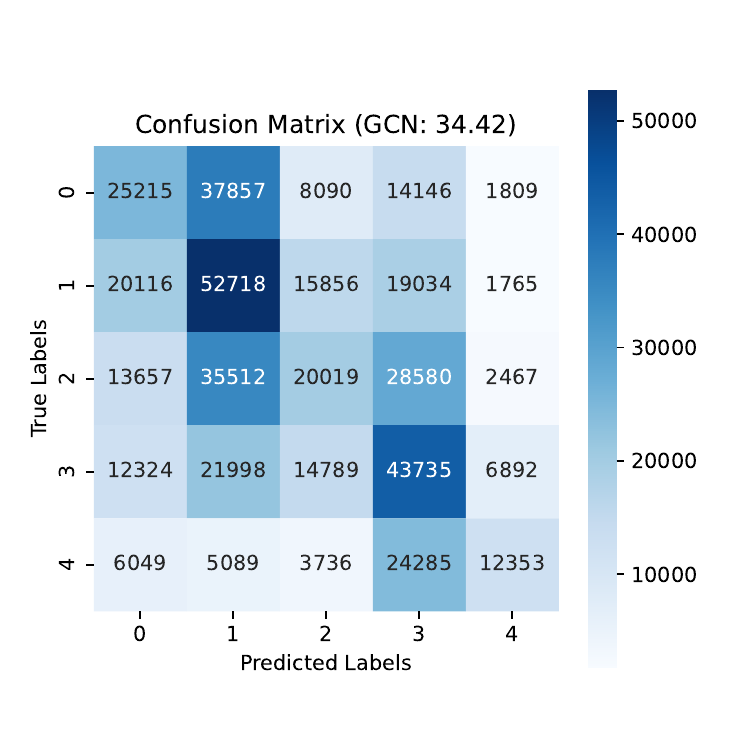}
    \label{fig:dataset_snap_patents_cm_GCN}
  }
  \subfigure[Confusion Matrix difference]{
    \includegraphics[trim=0cm 0cm 0cm 0cm, clip, width=0.31\textwidth]{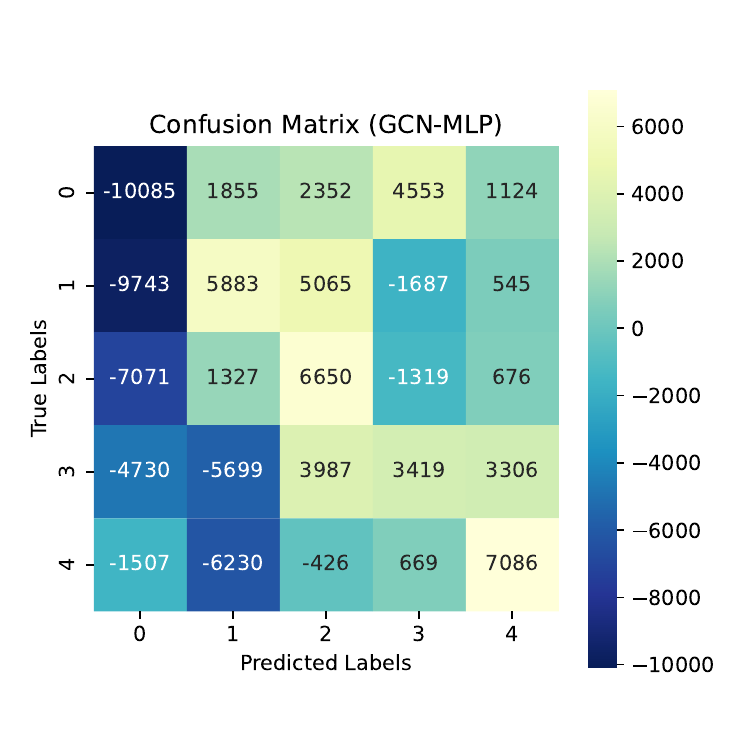}
    \label{fig:dataset_snap_patents_cm_diff}
  }
  \caption{Results for Snap-Patents.}
  \label{fig:dataset_snap_patents}
\end{figure}

\textbf{Discussion on Snap-patents.} \cref{fig:dataset_snap_patents} visualizes the results of Snap-patents, a large-scale heterophilous dataset with 2,923,922 nodes and 13,975,791 edges. As can be observed from \cref{fig:dataset_snap_patents_F}, by assuming that $\varsigma_n\approx 0.9$, the topological information should damage the classification between class pairs $(0,1)$, $(1,2)$, $(2,3)$, and $(3,4)$, while boost the others. Therefore, Snap-patents possesses a mixed heterophily pattern.

As shown in \cref{fig:dataset_snap_patents_cm_diff}, the differences in the confusion matrix between GCN and MLP largely align with the corresponding separability gains shown in \cref{fig:dataset_snap_patents_F}. The only exception is the class pair $(0,1)$, whose separability should decrease but experiences an increase. This discrepancy may be attributed to the distribution of node features. As evident in \cref{fig:dataset_snap_patents_cm_MLP}, MLP tends to classify nodes into class $0$, $1$, and $3$, which indicates that the distribution of nodes features (learned node embeddings) are not similar to Gaussian features, leading to this sightly inconsistency. In general, the results of Snap-patents are largely consistent with our theoretical findings in \cref{theorem:2}.

\section{Neural Network Instance of Bayes Classifier}
Here, we show that there exists a one-layered fully connected network, which can model the Bayes classifier for the raw node features $\boldsymbol{X}$ or the aggregated node features $\tilde{\boldsymbol{X}}$. Therefore, our theoretical results can be achievable through MLPs.
\label{appendix:instance_of_Bayes_classifier}
\begin{appendixproposition}[Instance for the Raw Features]
Given $\left(\boldsymbol{X}, \boldsymbol{A}\right) = {\rm HSBM}\left(n, c, \sigma, \left\{\boldsymbol{\mu}_k\right\}, \boldsymbol{\eta}, \mathbf{M}, \left\{\boldsymbol{\Delta}_i\right\}\right)$, the Fully Connected Network $\boldsymbol{Y}=\operatorname{softmax}(\boldsymbol{XW}_1 + \boldsymbol{b}_1)$ is an instance of the Bayes classifier over $\boldsymbol{X}$, which stated in \cref{appendixlemma:GMM-Bayes}, where the parameter matrix
\begin{equation}
    \boldsymbol{W}_1=({\boldsymbol{\mu}}_0^T \quad{\boldsymbol{\mu}}_1^T \quad \cdots \quad {\boldsymbol{\mu}}_{c-1}^T) 
    \label{eq:130}
\end{equation}
and
\begin{equation}
    \boldsymbol{b}_1 = \sigma^2(\ln \eta_{0}, \ln \eta_{1}, \cdots, \ln \eta_{c-1}).
\end{equation}
\label{appendixproposition:bayes_instance_raw}
\end{appendixproposition}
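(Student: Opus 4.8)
The plan is to verify directly that the decision rule induced by the proposed network coincides with the Bayes-optimal rule $h^{*}$ derived in \cref{appendixlemma:GMM-Bayes}. The network produces a logit vector $\boldsymbol{z}=\boldsymbol{x}\boldsymbol{W}_1+\boldsymbol{b}_1$ and then applies the softmax, so its predicted label for an input $\boldsymbol{x}$ is $\operatorname*{argmax}_{k\in[c]}[\operatorname{softmax}(\boldsymbol{z})]_k$. I would therefore split the argument into two short steps: (i) compute the $k$-th logit explicitly from the stated parameters, and (ii) invoke the order-preserving property of the softmax to reduce the network's argmax to the argmax of the logits.

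First I would substitute the parameters. Since the columns of $\boldsymbol{W}_1$ in \cref{eq:130} are the class means $\boldsymbol{\mu}_k$, the $k$-th coordinate of $\boldsymbol{x}\boldsymbol{W}_1$ equals $\langle\boldsymbol{x},\boldsymbol{\mu}_k\rangle$, and adding the $k$-th entry of the bias $\boldsymbol{b}_1$ yields
\[
  z_k=\langle\boldsymbol{x},\boldsymbol{\mu}_k\rangle+\sigma^2\ln\eta_k .
\]
This is exactly the scoring function inside the argmax of \cref{appendixlemma:GMM-Bayes}, so the logit vector already encodes the Bayes discriminant as a purely linear function of $\boldsymbol{x}$.

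Second, I would observe that softmax is order-preserving coordinate-wise, i.e.\ $z_k\ge z_t\iff[\operatorname{softmax}(\boldsymbol{z})]_k\ge[\operatorname{softmax}(\boldsymbol{z})]_t$, because it composes the common positive normalization $\sum_t\exp(z_t)$ with the strictly increasing exponential. Consequently $\operatorname*{argmax}_{k}[\operatorname{softmax}(\boldsymbol{z})]_k=\operatorname*{argmax}_{k}z_k=\operatorname*{argmax}_{k}\bigl(\langle\boldsymbol{x},\boldsymbol{\mu}_k\rangle+\sigma^2\ln\eta_k\bigr)=h^{*}(\boldsymbol{x})$, which establishes the claim.

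The main point to handle carefully — rather than a genuine obstacle — is being explicit about the sense in which the network \emph{instantiates} the Bayes classifier. The logits equal $\sigma^2$ times the true log-posterior scores, so $\operatorname{softmax}(\boldsymbol{z})$ matches the posterior $\mathbb{P}[y=k\mid\boldsymbol{x}]$ only up to the temperature scaling $\sigma^2$; this rescaling leaves the argmax, hence the induced label, unchanged. I would also note that the reduction in \cref{appendixlemma:GMM-Bayes} already discarded the class-independent term $\|\boldsymbol{x}\|^2/(2\sigma^2)$ and, using the equal-norm assumption $\|\boldsymbol{\mu}_k\|=\|\boldsymbol{\mu}_t\|$, the constant $\|\boldsymbol{\mu}_k\|^2/(2\sigma^2)$, which is precisely why a linear logit with no quadratic term suffices to realize the Bayes rule exactly.
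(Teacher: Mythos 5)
Your proposal is correct and follows essentially the same route as the paper's proof: substitute the parameters to obtain the logits $\langle\boldsymbol{x},\boldsymbol{\mu}_k\rangle+\sigma^2\ln\eta_k$, then use the order-preserving property of softmax to identify the network's argmax with the Bayes rule of \cref{appendixlemma:GMM-Bayes}. Your extra remark about the discarded class-independent terms and the equal-norm assumption is a welcome clarification but does not change the argument.
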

\begin{proof}
    Given $\boldsymbol{W}_1$ in \cref{eq:130}, we have
    \begin{equation}
        \boldsymbol{X W}_1 + \boldsymbol{b}_1 =\left(\begin{array}{cccc}
         \left\langle\mathbf{X}_{1}, \boldsymbol{\mu}_0\right\rangle + \sigma^2\ln \eta_{0}& \left\langle\mathbf{X}_{1}, \boldsymbol{\mu}_1\right\rangle + \sigma^2\ln \eta_{1} & \cdots& \left\langle\mathbf{X}_{1}, \boldsymbol{\mu}_{c-1}\right\rangle + \sigma^2\ln \eta_{c-1}\\ 
         \left\langle\mathbf{X}_{2}, \boldsymbol{\mu}_0\right\rangle + \sigma^2\ln \eta_{0} & \left\langle\mathbf{X}_{2}, \boldsymbol{\mu}_1\right\rangle + \sigma^2\ln \eta_{1}& \cdots& \left\langle\mathbf{X}_{2}, \boldsymbol{\mu}_{c-1}\right\rangle + \sigma^2\ln \eta_{c-1}\\
        \vdots & \vdots & & \vdots \\ 
        \left\langle\mathbf{X}_{n}, \boldsymbol{\mu}_0\right\rangle + \sigma^2\ln \eta_{0} & \left\langle\mathbf{X}_{n}, \boldsymbol{\mu}_1\right\rangle + \sigma^2\ln \eta_{1}& \cdots& \left\langle\mathbf{X}_{n}, \boldsymbol{\mu}_{c-1}\right\rangle + \sigma^2\ln \eta_{c-1}\\
        \end{array}
        \right)
    \end{equation}
    For each node $i\in[n]$, for any $k_1, k_2\in[c]$ and $k_1\neq k_2$, if 
    \begin{equation}
        \left\langle\boldsymbol{X}_{i}, \boldsymbol{\mu}_{k_1}\right\rangle + \sigma^2\ln \eta_{k_1} \geq \left\langle\boldsymbol{X}_{i}, \boldsymbol{\mu}_{k_2}\right\rangle + \sigma^2\ln \eta_{k_2},
    \end{equation}
    we have
    \begin{equation}
        \boldsymbol{Y}_{ik_1} \geq \boldsymbol{Y}_{ik_2}.
    \end{equation}
    Therefore, the prediction of this fully connected network is 
    \begin{equation}
        \operatorname{pred} \left(\boldsymbol{X}_{i}\right) = \underset{k} {\operatorname{argmax}} \left(\boldsymbol{Y}_{ik}\right) =\underset{k} {\operatorname{argmax}} \left( \langle \boldsymbol{X}_i, \boldsymbol{\mu}_k \rangle + \sigma^2\ln \eta_{k}\right),
    \end{equation}
    which completed the proof.
\end{proof}

\begin{appendixproposition}[Instance for the Aggregated Features]
Given $\left(\boldsymbol{X}, \boldsymbol{A}\right) = {\rm HSBM}\left(n, c, \sigma, \left\{\boldsymbol{\mu}_k\right\}, \boldsymbol{\eta}, \mathbf{M}, \left\{\boldsymbol{0}\right\}\right)$, when $\forall k,t\in[c]$, $\bar{D}_{k}=\bar{D}_{t}$, the Fully Connected Network $\mathbf{Y}=\operatorname{softmax}(\tilde{\boldsymbol{X}}\boldsymbol{W}_2 + \boldsymbol{b}_2)$ is an instance of the Bayes classifier over $\tilde{\boldsymbol{X}}=\boldsymbol{D}^{-1}\boldsymbol{A}\boldsymbol{X}$, which is stated in \cref{appendixlemma:gc_bayes_classifier}, where the parameter matrix
\begin{equation}
    \mathbf{W}_2=(\tilde{\boldsymbol{\mu}}_0^T \quad \cdots \quad \tilde{\boldsymbol{\mu}}_{c-1})
    \label{eq:135}
\end{equation}
and 
\begin{equation}
    \boldsymbol{b}_1 = (\tilde{\sigma}^2\ln \eta_{0} - \frac{1}{2}\langle\tilde{\boldsymbol{\mu}}_{0}, \tilde{\boldsymbol{\mu}}_{0}\rangle, \quad\cdots,\quad \ln \eta_{c-1} -  \frac{1}{2}\langle\tilde{\boldsymbol{\mu}}_{c-1}, \tilde{\boldsymbol{\mu}}_{c-1}\rangle).
\end{equation}
\label{appendixproposition:bayes_instance_GC}
\end{appendixproposition}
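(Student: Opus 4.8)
The plan is to mirror the argument already given for the raw features in \cref{appendixproposition:bayes_instance_raw}, adapting it to the aggregated distribution of \cref{appendixlemma:gc_bayes_classifier}. The starting point is the Bayes discriminant $\tilde{\varphi}_k = \eta_k \frac{1}{\sqrt{2\pi}\tilde{\sigma}_k}\exp\left(-\frac{||\tilde{\boldsymbol{x}}-\tilde{\boldsymbol{\mu}}_k||^2}{2\tilde{\sigma}_k^2}\right)$, and the goal is to show that the class maximizing $\tilde{\varphi}_k$ coincides with the class maximizing the $k$-th logit $\langle\tilde{\boldsymbol{x}},\tilde{\boldsymbol{\mu}}_k\rangle + \tilde{\sigma}^2\ln\eta_k - \frac{1}{2}||\tilde{\boldsymbol{\mu}}_k||^2$ produced by the affine layer $\tilde{\boldsymbol{X}}\boldsymbol{W}_2+\boldsymbol{b}_2$. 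Once this identity of $\operatorname{argmax}$ is established, the monotonicity of softmax transfers the maximizing index to the network's prediction, as in \cref{appendixproposition:bayes_instance_raw}.

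First I would invoke the hypothesis $\bar{D}_k=\bar{D}_t$ for all $k,t\in[c]$. By \cref{appendixlemma:GC-Distribution} the aggregated variance is $\tilde{\sigma}_k = \sigma/\sqrt{\bar{D}_k}$, so the equal-degree assumption forces a common value $\tilde{\sigma}_k=\tilde{\sigma}$ across all classes. This is the crucial simplification: the normalizing prefactor $\frac{1}{\sqrt{2\pi}\tilde{\sigma}_k}$ becomes class-independent and can be discarded from the $\operatorname{argmax}$. Next I would take logarithms, which are monotone and hence argmax-preserving, reducing $\operatorname{argmax}_k \tilde{\varphi}_k$ to $\operatorname{argmax}_k\left(\ln\eta_k - \frac{||\tilde{\boldsymbol{x}}-\tilde{\boldsymbol{\mu}}_k||^2}{2\tilde{\sigma}^2}\right)$. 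Expanding the squared norm produces the three terms $-||\tilde{\boldsymbol{x}}||^2$, $2\langle\tilde{\boldsymbol{x}},\tilde{\boldsymbol{\mu}}_k\rangle$, and $-||\tilde{\boldsymbol{\mu}}_k||^2$; the $||\tilde{\boldsymbol{x}}||^2$ term is identical for every $k$ and drops out. Multiplying the remainder by the positive constant $\tilde{\sigma}^2$ (again argmax-preserving) yields exactly $\langle\tilde{\boldsymbol{x}},\tilde{\boldsymbol{\mu}}_k\rangle + \tilde{\sigma}^2\ln\eta_k - \frac{1}{2}||\tilde{\boldsymbol{\mu}}_k||^2$, which is precisely the $k$-th column of $\tilde{\boldsymbol{X}}\boldsymbol{W}_2+\boldsymbol{b}_2$ under the prescribed $\boldsymbol{W}_2$ and $\boldsymbol{b}_2$. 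Reading off the row-wise argmax for each node $i\in[n]$ then identifies $\operatorname{pred}(\tilde{\boldsymbol{X}}_i)$ with $\tilde{h}^{*}(\tilde{\boldsymbol{X}}_i)$.

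The only genuine obstacle is conceptual rather than computational. For a general HSBM the aggregated features carry class-dependent variances $\tilde{\sigma}_k$, in which case the Bayes rule contains the term $||\tilde{\boldsymbol{x}}||^2/\tilde{\sigma}_k^2$ whose coefficient varies with $k$; the optimal boundary is then genuinely quadratic (a QDA boundary) and cannot be realized by a single affine layer followed by softmax. The equal-degree hypothesis $\bar{D}_k=\bar{D}_t$ is exactly what collapses these quadratic terms into one class-independent constant, so I would emphasize that this assumption is the precise condition permitting a one-layer instance. Finally I would note, as in the preceding proofs, that we argue with the idealized concentrated means $\tilde{\boldsymbol{\mu}}_k$ and common variance $\tilde{\sigma}$ supplied by \cref{appendixlemma:gc_bayes_classifier}, absorbing the $1\pm o_n(1)$ sampling fluctuations from \cref{appendixlemma:GC-Distribution} into the limiting statement.
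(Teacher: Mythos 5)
Your proof is correct and follows essentially the same route as the paper: use $\bar{D}_k=\bar{D}_t$ to obtain a common $\tilde{\sigma}$, take logarithms, expand the squared norm so that $\|\tilde{\boldsymbol{x}}\|^2$ cancels, and identify the remaining linear expression with the logits of $\tilde{\boldsymbol{X}}\boldsymbol{W}_2+\boldsymbol{b}_2$, concluding as in the raw-features case. Your added observation that the equal-degree assumption is precisely what collapses the class-dependent quadratic (QDA) terms into a realizable affine boundary is a nice explicit justification of a point the paper leaves implicit.
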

\begin{proof}
    When $\bar{D}_k = \bar{D}_t$, we have $\tilde{\sigma} = \tilde{\sigma}_k$, for all $k\in[c]$. Then, 
    \begin{equation}
        \tilde{\varphi}_{k_1} \geq \tilde{\varphi}_{k_2} \iff \langle\boldsymbol{x}, \tilde{\boldsymbol{\mu}}_{k_1}\rangle + \tilde{\sigma}^2\ln{\eta_{k_1}} - \frac{1}{2}\langle\tilde{\boldsymbol{\mu}}_{k_1}, \tilde{\boldsymbol{\mu}}_{k_1}\rangle \geq \langle \boldsymbol{x}, \tilde{\boldsymbol{\mu}}_{k_2}\rangle + \tilde{\sigma}^2\ln{\eta_{k_2}} - \frac{1}{2}\langle\tilde{\boldsymbol{\mu}}_{k_2}, \tilde{\boldsymbol{\mu}}_{k_2}\rangle.
    \end{equation}
    Similar to the proof of \cref{appendixproposition:bayes_instance_raw}, we can complete the proof.
\end{proof}

\section{Significance of two-class Separabilities}
\label{appendix:significance_of_two_classes_separability}

\begin{appendixproposition}[Upper Bound of the Overall Error Rate]
The overall error rate is bounded as
\begin{equation}
    \mathbb{E}_{i\in[n]} \mathbb{P} \left[\boldsymbol{X}_{i}\ {\rm is\ misclassified} \right] \leq \sum_{t\in[c]}\sum_{k \in[c], k<t} \left(\eta_t + \eta_k\right)\left(1-S\left(t, k\right)\right),
\end{equation}
where $1-S(t,k)\in [0, 1]$ can be viewed as the error rate when only considering the classification of classes $t$ and $k$.
\label{appendixproposition:upper_bound}
\end{appendixproposition}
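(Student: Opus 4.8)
The plan is to reduce the overall one-vs-all Bayes error to a sum of pairwise errors via a union bound, and then to recognize each pairwise term as $(\eta_t+\eta_k)(1-S(t,k))$.

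First I would characterize the misclassification event for a single node. Since the Bayes classifier of \cref{appendixlemma:GMM-Bayes} assigns $h^{*}(\boldsymbol{X}_i)=\operatorname{argmax}_{k}\mathbb{P}[y=k\mid\boldsymbol{X}_i]$, node $i$ is classified correctly exactly when it wins every pairwise comparison against its true class $\varepsilon_i$, i.e. when $\zeta_i(k)$ holds for all $k\neq\varepsilon_i$. Consequently the misclassification event is the union $\bigcup_{k\neq\varepsilon_i}\neg\zeta_i(k)$, where $\neg\zeta_i(k)$ denotes the complement of $\zeta_i(k)$. Ties among the posteriors occur with probability zero because the features are continuous Gaussians, so the choice of tie-breaking rule is immaterial.

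Second, I would apply the union bound, then decompose the expectation over the random class assignment via $\mathbb{E}_{i\in[n]}[\,\cdot\,]=\sum_{t\in[c]}\eta_t\,\mathbb{E}_{i\in\mathcal{C}_t}[\,\cdot\,]$:
\begin{equation}
\mathbb{P}\left[\boldsymbol{X}_i\ \text{is misclassified}\right]
= \mathbb{P}\left[\bigcup_{k\neq\varepsilon_i}\neg\zeta_i(k)\right]
\leq \sum_{k\neq\varepsilon_i}\left(1-\mathbb{P}[\zeta_i(k)]\right),
\end{equation}
\begin{equation}
\mathbb{E}_{i\in[n]}\mathbb{P}\left[\boldsymbol{X}_i\ \text{is misclassified}\right]
\leq \sum_{t\in[c]}\sum_{k\neq t}\eta_t\left(1-\mathbb{E}_{i\in\mathcal{C}_t}\mathbb{P}[\zeta_i(k)]\right).
\end{equation}

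Third, I would regroup the ordered double sum over pairs $(t,k)$ with $t\neq k$ into unordered pairs with $k<t$, pairing the $(t,k)$ term with the $(k,t)$ term. For each such pair the combined contribution is
\begin{equation}
\eta_t\left(1-\mathbb{E}_{i\in\mathcal{C}_t}\mathbb{P}[\zeta_i(k)]\right)+\eta_k\left(1-\mathbb{E}_{i\in\mathcal{C}_k}\mathbb{P}[\zeta_i(t)]\right)
=(\eta_t+\eta_k)\left(1-S(t,k)\right),
\end{equation}
which follows from \cref{definition:separability} after expanding $(\eta_t+\eta_k)S(t,k)=\eta_t\mathbb{E}_{i\in\mathcal{C}_t}\mathbb{P}[\zeta_i(k)]+\eta_k\mathbb{E}_{i\in\mathcal{C}_k}\mathbb{P}[\zeta_i(t)]$. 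Summing over all unordered pairs yields the claimed bound. The computation is otherwise routine; the only point requiring care is the first step, namely justifying that the one-vs-all misclassification event coincides with the union of the pairwise defeat events $\neg\zeta_i(k)$, so that the union bound is the natural passage from pairwise to overall error. The slack in the inequality is exactly the probability that a node loses to two or more competing classes simultaneously, which is why an inequality rather than an equality appears.
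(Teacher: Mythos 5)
Your proposal is correct and follows essentially the same route as the paper's own proof: write the misclassification event as the union of the pairwise defeat events $\zeta_i^c(k)$, apply the union bound, and regroup the ordered double sum into unordered pairs to recover $(\eta_t+\eta_k)(1-S(t,k))$ via \cref{definition:separability}. Your added remarks on tie-breaking and on the slack of the union bound are fine but not needed beyond what the paper does.
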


\begin{proof}
    \begin{equation}
        \begin{aligned}
            \mathbb{E}_{i\in[n]} \mathbb{P}\left[\boldsymbol{X}_{i}\ {\rm is\ misclassified} \right] 
            & = \sum_{k\in[c]} \eta_k \mathbb{E}_{i\in\mathcal{C}_k} \mathbb{P}\left[\boldsymbol{X}_{i}\ {\rm is\ misclassified} |\varepsilon_i=k\right] \\
            & = \sum_{k\in[c]} \eta_k \mathbb{E}_{i\in\mathcal{C}_k} \mathbb{P}\left[\cup_{t\neq k} \zeta_{i}^c (t)\right] \\
            & \leq \sum_{k \in[c]} \eta_k \sum_{t\neq k} \mathbb{E}_{i\in\mathcal{C}_k} \mathbb{P}\left[\zeta_{i}^c (t)\right] \\
            & = \sum_{t\in[c]}\sum_{k \in[c], k<t} \left(\eta_t + \eta_k\right)\left(1-S\left(t, k\right)\right),
        \end{aligned}
    \end{equation}
    where $\zeta_{i}^c(t)$ is the complement of $\zeta_{i}(t)$.
\end{proof}

\cref{appendixproposition:upper_bound} illustrates that the overall error rate of the node classification is bounded by the weighted sum of the error rates of the two-class subtasks. Increasing the separability degrades the upper bound of the overall error rate, while decreasing the separability increases its upper bound. Thus, we can get the following corollary for the good/bad heterophily patterns.

\begin{appendixcorollary}
    When applying a GC operation, good heterophily patterns degrade the upper bound of overall error rate, while bad heterophily patterns increase it.
\label{appendixcorollary:1}
\end{appendixcorollary}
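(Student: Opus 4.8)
The plan is to assemble \cref{appendixcorollary:1} from three ingredients already in hand: the signal-to-noise rescaling of \cref{theorem:2}, the monotonicity in \cref{theorem:1}, and the error-rate bound of \cref{appendixproposition:upper_bound}. First I would observe that, for every pair $t\neq k$, the post-GC term in \cref{eq:theorem2_S} has exactly the functional form of the baseline term in \cref{eq:partial_s_tk}, but with $\frac{\gamma}{\sigma}$ replaced by the rescaled quantity $\alpha'_{tk}=\frac{\gamma}{\sigma}\cdot\frac{F_{tk}}{\varsigma_n}$, since $\frac{\gamma}{2\sigma}\frac{F_{tk}}{\varsigma_n}=\frac{\alpha'_{tk}}{2}$ and $\frac{\sigma}{\gamma}\frac{\varsigma_n}{F_{tk}}=\frac{1}{\alpha'_{tk}}$. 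Because $F_{tk}=F_{kt}$ is symmetric in its indices while the class priors $\eta_t,\eta_k$ are untouched by the convolution, both summands of $S(t,k)$ in \cref{eq:s_tk} are rescaled by the same factor; using $\ln\frac{\eta_t}{\eta_k}=\ln\frac{\hat\eta_t}{\hat\eta_k}$, the separability after one GC operation therefore equals the baseline separability function evaluated at the rescaled argument, namely $S^{\mathrm{GC}}(t,k)=f\!\left(\alpha'_{tk},\hat\eta_t\right)$ with $f$ as in the proof of \cref{theorem:1}, whereas the baseline is $S(t,k)=f\!\left(\frac{\gamma}{\sigma},\hat\eta_t\right)$.

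Second, I would invoke the monotonicity established in the second part of \cref{theorem:1}, which says $f(\alpha,\hat\eta_t)$ is strictly increasing in $\alpha$. For a good heterophily pattern, \cref{definition:2} forces $F_{tk}>\varsigma_n$ for \emph{every} pair, so $\alpha'_{tk}>\frac{\gamma}{\sigma}$ and hence $S^{\mathrm{GC}}(t,k)>S(t,k)$ uniformly; for a bad pattern, $F_{tk}<\varsigma_n$ for every pair, giving $\alpha'_{tk}<\frac{\gamma}{\sigma}$ and $S^{\mathrm{GC}}(t,k)<S(t,k)$ uniformly. Consequently each complementary term $1-S(t,k)$ decreases (resp. increases) simultaneously across all pairs.

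Third, since the weights $\eta_t+\eta_k$ appearing in \cref{appendixproposition:upper_bound} are nonnegative and unaffected by the convolution, a termwise comparison of the sums $\sum_{t}\sum_{k<t}(\eta_t+\eta_k)\bigl(1-S(t,k)\bigr)$ before and after the GC operation immediately shows that the upper bound on the overall error rate decreases under a good pattern and increases under a bad pattern, which is precisely the assertion.

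The only point requiring genuine care is not any single estimate but the bookkeeping of the high-probability qualifier: \cref{theorem:2} holds with probability $1-1/\mathrm{Poly}(n)$ and carries an error term $\varsigma_n=1\pm o_n(1)$, so I would take a union bound over the $\binom{c}{2}$ class pairs to guarantee that the rescaling identity $S^{\mathrm{GC}}(t,k)=f(\alpha'_{tk},\hat\eta_t)$ holds simultaneously for all pairs on a single event of probability $1-1/\mathrm{Poly}(n)$, and on that event the common threshold $\varsigma_n$ makes the strict comparisons $F_{tk}\gtrless\varsigma_n$ of \cref{definition:2} well defined. Everything else reduces to direct substitution into the already-proven monotonicity and the already-proven error bound, so I expect no further analytic difficulty.
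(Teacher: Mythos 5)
Your proposal is correct and follows essentially the same route as the paper: the paper's proof simply cites \cref{appendixproposition:upper_bound} and asserts that good (resp.\ bad) patterns increase (resp.\ decrease) each $S(t,k)$, whereas you additionally spell out why that assertion holds, by rewriting \cref{eq:theorem2_S} as the baseline expression evaluated at the rescaled argument $\frac{\gamma}{\sigma}\cdot\frac{F_{tk}}{\varsigma_n}$ and invoking the monotonicity from the second part of \cref{theorem:1}. This added detail (together with the union bound over class pairs) is a faithful elaboration of the reasoning the paper leaves implicit in its discussion of \cref{theorem:2} and \cref{definition:2}, not a different argument.
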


\begin{proof}
    According to \cref{appendixproposition:upper_bound},
    \begin{equation}
        \mathbb{E}_{i\in[n]} \mathbb{P}\left[\boldsymbol{X}_{i}\ {\rm is\ misclassified} \right]  = \sum_{t\in[c]}\sum_{k \in[c], k<t} \left(\eta_t + \eta_k\right)\left(1-S\left(t, k\right)\right).
    \end{equation}
    Good heterophily patterns increase each $S(t,k)$, thereby reducing this upper bound; whereas bad heterophily patterns decrease each $S(t,k)$, thereby increasing this upper bound.
\end{proof}

\section{Utilizations and Suggestions on Graphs Learning}
\label{appendix:utilizations_and_suggestions}

Our results may serve for two practical utilizations.

Firstly, when GNNs are integrated into specific applications, our theoretical insights can offer guidance to the construction of their graphs and help to understand the performance with their specific heterophily patterns. 

Secondly, our work introduces a fresh perspective on heterophily and over-smoothing, coupled with a novel GNN analytical framework, which may further boost the development of innovative methods. For example, several possible insights are as follows.

\begin{itemize}
    \item According to \cref{theorem:2}, different GNN aggregators may successfully classify different subsets of nodes. Therefore, when learning on graphs with heterophily, it may be beneficial to adaptively select specific aggregators for individual nodes, e.g. \citet{acm-gnn,javaloy2022learnable}. Furthermore, this result can also explain why the combination of ego- and neighbor-embeddings can establish a stable and competitive baseline across diverse heterophily patterns \cite{H2GCN,heterophily_benchmark_iclr23}.
    \item According to \cref{theorem:3}, the inconsistency in neighborhood distribution poses a potential challenge when handling heterophilious graphs. Therefore, it may be beneficial to mitigate the influence of topological noise within each class, probably through the design of an attention scheme.
    \item According to \cref{theorem:4}, GNN with varying numbers of layers may successfully classify different subsets of nodes. Therefore, it may be beneficial to explore information from high-order neighborhoods, e.g., \citet{H2GCN}.
\end{itemize}

Besides, we believe that our theoretical results may provide additional inspiration to the community for designing heterophily-specific GNNs, enabling them to successfully handle graphs with diverse heterophily patterns.

\end{document}